
\documentclass[reqno,12pt,oneside]{report} 

\usepackage{rac}         
\usepackage[intlimits]{amsmath} 
\usepackage{amsxtra}     
\usepackage{amsthm}
\usepackage{amssymb}
\usepackage{amsfonts}
\usepackage{bm}
\usepackage{graphicx}    
\usepackage{rotating}
\usepackage{lscape}
\usepackage{color}
\usepackage{xcolor}
\usepackage[ruled,algochapter]{algorithm2e}
\usepackage{algorithmic}

\usepackage{physics}
\usepackage{epsfig}
\usepackage{enumerate}
\usepackage{enumitem}
\usepackage{subcaption}
\graphicspath{{Intro/Figures/}{Chap2/Figures/}{Chap3/Figures/}{Chap4/Figures/}{Chap5/Figures/}}
\usepackage{verbatim}
\usepackage{natbib}      
\usepackage[printonlyused]{acronym} 
\usepackage[hidelinks]{hyperref}
\usepackage{multirow}
\usepackage{multicol}
\usepackage{subcaption}
\usepackage{booktabs}
\usepackage{tabularx}
\usepackage{makecell}
\usepackage{tikz}
\usepackage{booktabs}
\newcommand\headercell[1]{%
   \smash[b]{\begin{tabular}[t]{@{}c@{}} #1 \end{tabular}}}
\usetikzlibrary{bayesnet}
\usetikzlibrary{positioning}
\usepackage{chngcntr} 
\usepackage[normalem]{ulem} 
\usepackage{setspace}    
\doublespacing           %


\theoremstyle{plain}
\newtheorem{theorem}{Theorem}

\newtheorem{lemma}[theorem]{Lemma}

\newtheorem{remark}{Remark}[chapter]

\theoremstyle{definition}
\newtheorem{definition}[theorem]{Definition}

\newcounter{numexample}[section]


\numberwithin{theorem}{section}     
\numberwithin{figure}{chapter}

\numberwithin{table}{chapter}




\makeatletter
\def\cleardoublepage{\clearpage\if@twoside \ifodd\c@page\else
\hbox{}
\thispagestyle{empty}
\newpage
\if@twocolumn\hbox{}\newpage\fi\fi\fi}
\makeatother


\newcommand{\RNum}[1]{\uppercase\expandafter{\romannumeral #1\relax}}

\newcommand*{\tensor}[1]{\bm{\mathcal{#1}}}
\DeclareMathOperator{\vecto}{vec}
\newcommand*{\mat}[1]{\mathbf{#1}}


\newcommand{\be}{\begin{eqnarray}} \newcommand{\ee}{\end{eqnarray}}
\newcommand{\ben}{\begin{eqnarray*}} \newcommand{\een}{\end{eqnarray*}}
\newcommand{\bc}{\begin{center}} \newcommand{\ec}{\end{center}}
\newcommand{\bt}{\begin{tabbing}} \newcommand{\et}{\end{tabbing}}

\newcommand{\Reals}{\mbox{\rm I\kern-.25em R}}  
\newcommand{\Integers}{\mbox{\rm Z\kern-.25em Z}}  

\newcommand{\Var}{\mbox{\rm Var}}
\newcommand{\cov}{{\mathrm{Cov}}}

\newcommand{\argmin}{{\mbox{\rm argmin}}}

\def\Reals{{\mathbb{R}}}

\def\bSigma{{\boldsymbol{\Sigma}}}

\def\cov{{}\mathrm{cov}}

\def\diag{{\mathrm{diag}}}

\def\bU{{\mathbf{U}}}
\def\bu{{\mathbf{u}}}
\def\bS{{\mathbf{S}}}

\def\bA{{\mathbf{A}}}
\def\bB{{\mathbf{B}}}

\def\bD{{\mathbf{D}}}

\def\bF{{\mathbf{F}}}

\def\bH{{\mathbf{H}}}

\def\bK{{\mathbf{K}}}

\def\bR{{\mathbf{R}}}
\def\bS{{\mathbf{S}}}

\def\bU{{\mathbf{U}}}

\def\bx{{\mathbf{x}}}

\def\by{{\mathbf{y}}}

\def\bff{{\mathbf{f}}}

\def\bfv{{\mathbf{v}}}
\def\bfw{{\mathbf{w}}}

\def\bA{{\mathbf{A}}}
\def\bB{{\mathbf{B}}}

\def\bD{{\mathbf{D}}}

\def\bF{{\mathbf{F}}}

\def\bH{{\mathbf{H}}}

\def\bK{{\mathbf{K}}}

\def\bR{{\mathbf{R}}}
\def\bS{{\mathbf{S}}}

\def\bU{{\mathbf{U}}}

\def\calR{{\mathcal{R}}}

\def\tr{{\mathrm{tr}}}

\def\bR{{\mathbf{R}}}




\def\bff{{\mathbf{f}}}

\def\bfv{{\mathbf{v}}}
\def\bfw{{\mathbf{w}}}

\def\bSigma{{\boldsymbol{\Sigma}}}
\def\bOmega{{\boldsymbol{\Omega}}}


\def\bbE{{\mathbb{E}}}

\def\bbR{{\mathbb{R}}}


\def\calR{{\mathcal{R}}}

\PassOptionsToPackage{numbers, compress}{natbib}

\begin{document}

\bibliographystyle{agu04}    

\titlepage{Interpretable and Scalable Graphical Models for Complex Spatio-temporal Processes}{Yu Wang}{Doctor of Philosophy}
{Statistics}{2022}
{
 Assistant Professor Yang Chen, Co-Chair\\
 Professor Alfred O. Hero III, Co-Chair\\
 Assistant Professor Walter Dempsey\\
 Dr. Earl Lawrence, Los Alamos National Laboratory}

\initializefrontsections
%
%
%
\mycopyrightpage{Yu Wang}{wayneyw@umich.edu}{ORCID iD: 0000-0002-6287-4710}

\makeatletter
\if@twoside \setcounter{page}{4} \else \setcounter{page}{1} \fi
\makeatother
 


\startacknowledgementspage

First and foremost, I would like to express my deepest gratitude to my Ph.D. advisors Dr. Alfred Hero and Dr. Yang Chen.  
This dissertation would not have been possible without their continuous guidance, support, and encouragement. 
Al is an exemplary scholar, who is always hardworking and dedicated to research. I am constantly amazed by his sharpness on research and his deep insights about many different topics, ranging from physics, applied math, to statistics, and computer science.
I am deeply indebted to him for devoting so much time and energy to mentoring me and guiding me through the transition from a student to a researcher. 
His passion for research have profoundly influenced me from both professional and personal perspectives. 
I am also fortunate enough to work with Yang, an inspirational advisor, who is incredibly generous with her ideas and time; and a caring mentor, who constantly offers her support and empathy.  
Our meetings and discussions have been a great source of inspiration and greatly shaped my way of approaching research problems.

I am also very grateful to have Dr. Walter Dempsey and Dr. Earl Lawrence serving on my  doctoral dissertation committee and providing me with invaluable comments. 
I was fortunate to collaborate with Walter on topic models and important problems in the public health domain. Walter's  enthusiasm for research has left a lot of positive impacts on me. I first met Earl during his visit to the department as a distinguished alumni speaker. Later, I had the opportunity to work with him at LANL on distributed dimensionality reduction and applications on space weather. His constant support and humor make all the research meetings there and my overall experience at LANL enjoyable. 

My thanks also go to staff members at the University of Michigan, Department of Statistics, who have helped me over the past few years. 
In particular, I want to thank Judy, Jean, Bebe, Virggie, Andrea, Gina, and many others, who always patiently helped with my questions and warmly welcomed me into the office with big smiles. 

Additionally, I would like to express my gratitude and appreciation to Dr. Jim Zidek and Dr. Nhu Le at the University of British Columbia in Canada. My research career started with Jim and Nhu, who are both brilliant researchers and caring mentors. Jim has always been a role model to me, and without him, I would not have gone this far in this journey.

To all my friends that I made and all the people that I met throughout the Ph.D. studies: This journey would not have been so rewarding without you! 
Shout out to everyone in our research labs, especially Byoung and Zeyu from the Hero Group that I was fortunate to collaborate with; Leo who organized those fun board games; and Chengcheng, Cheng, Yangyi, and Ziping in my cohort. It was a great fun to spend time with you, and I have learned a lot from our  interactions.
Last but not least, I would like to thank my parents, Xiaoqing Tan the duck, Kitty \& Bunny the cats, and Larry the chinchilla for their unwavering support and unconditioned love. 

\label{Acknowledgements}

\tableofcontents     

\listoffigures       
\listoftables        
\listofappendices    

\mystartabstractpage
This thesis focuses on data that has complex spatio-temporal structure and on probabilistic graphical models that learn the structure in an interpretable and scalable manner. We target two research areas of interest: Gaussian graphical models for tensor-variate data and summarization of complex time-varying texts using topic models. This work advances the state-of-the-art in several directions. First, it introduces a new class of tensor-variate Gaussian graphical models via the Sylvester tensor equation. Second, it develops an optimization technique based on a fast-converging proximal alternating linearized minimization method, which scales tensor-variate Gaussian graphical model estimations to modern big-data settings. Third, it connects Kronecker-structured (inverse) covariance models with spatio-temporal partial differential equations (PDEs) and introduces a new framework for ensemble Kalman filtering that is capable of tracking chaotic physical systems. Fourth, it proposes a modular and interpretable framework for unsupervised and weakly-supervised probabilistic topic modeling of time-varying data that combines generative statistical models with computational geometric methods. Throughout, practical applications of the methodology are considered using real datasets. This includes brain-connectivity analysis using EEG data, space weather forecasting using solar imaging data, longitudinal analysis of public opinions using Twitter data, and mining of mental health related issues using TalkLife data. We show in each case that the graphical modeling framework introduced here leads to improved interpretability, accuracy, and scalability.



\label{Abstract}

\startthechapters 

\chapter{Introduction}
\label{ch:intro}
Complex, structured data is ubiquitous in both industrial and academic settings and has elicited a commensurate interest in utilizing such information to assist in inference and decision making. Often, there exists simpler and interpretable underlying structure that can be exploited to make inference and summarization procedures more tractable. For large datasets, in particular, it is imperative to consider the data in the context of its structure to develop parsimonious models that represent the intrinsic form of the data well and provide computationally efficient, theoretically grounded inference procedures. On one hand, searching for such structures can help to summarize the data in a more interpretable manner and find relevant attributes of the data of interest that might otherwise go undetected. On the other hand, for some datasets the structure is explicit, and thus requires careful consideration when reasoning about modeling decisions. 

Moreover, despite the fact that machine learning models have recently demonstrated great success in learning the above-mentioned complex structures that enable them to make predictions about unobserved data, the ability to interpret what a model has learned is yet to be determined and has been receiving an increasing amount of attention~\citep{rudin2022interpretable,murdoch2019definitions,du2019techniques,doshi2017towards,rudin2019stop,papernot2018deep, tan2022tree, tan2022rise}. In particular, \citet{murdoch2019definitions} recently introduced a unified PDR (predictive, descriptive, relevant) framework for discussing interpretations of machine learning and statistical models in general, and categorized existing techniques into model-based and post-hoc categories, with subgroups including sparsity, modularity, and simulatability. In this dissertation, the focus is on data that has temporal or spatio-temporal structure and on problems that benefit from the application of spatio-temporal based inference algorithms. In both cases, we target the overarching desiderata described in the PDR interpretability framework and introduce statistical methods that improve the overall (predictive and descriptive) accuracy and relevancy through both model-based and post-hoc approaches. Specifically, we attempt to advance two research areas. First, Gaussian graphical models for tensor-valued data is studied, and we develop a sparse multiway representation of constituent spatial and temporal processes, which enables a decomposable (i.e., spatial and temporal) and scalable framework for analyzing tensor data, especially that generated from complex dynamical systems. Second, a framework for topic modeling of time-varying texts is developed. The framework breaks previously (computationally and statistically) intractable approaches into tractable modules and utilizes computational geometric methods for extracting various (stable) forms of information from the fitted model. Overall, we improve interpretability, scalability, and accuracy throughout the full life cycle of a data science problem with complex structure. Below, these two research areas are briefly introduced that form the backbone of this thesis.


\section{Gaussian Graphical Models for Tensor-valued Data}
Estimating conditional independence patterns of multivariate data has long been a topic of interest for statisticians. In the past decade, researchers have focused on imposing sparsity on the precision matrix (inverse covariance matrix) to develop efficient estimators in the high-dimensional statistics regime where sample size is much less than the dimension of each sample ($N \ll d$). The success of the $\ell_1$-penalized method for estimating conditional dependencies was demonstrated in \citet{meinshausen2006high} and \citet{friedman2008sparse} for the multivariate Gaussian setting. Contributing to this success is the underlying graphical structure (see Figure~\ref{fig:ggm}) that facilitates simple interpretation and ties the statistical model to the mathematical field of graph theory~\citep{lauritzen1996graphical}.

\begin{figure}[!tbh]
    \centering
    \includegraphics[width=\textwidth]{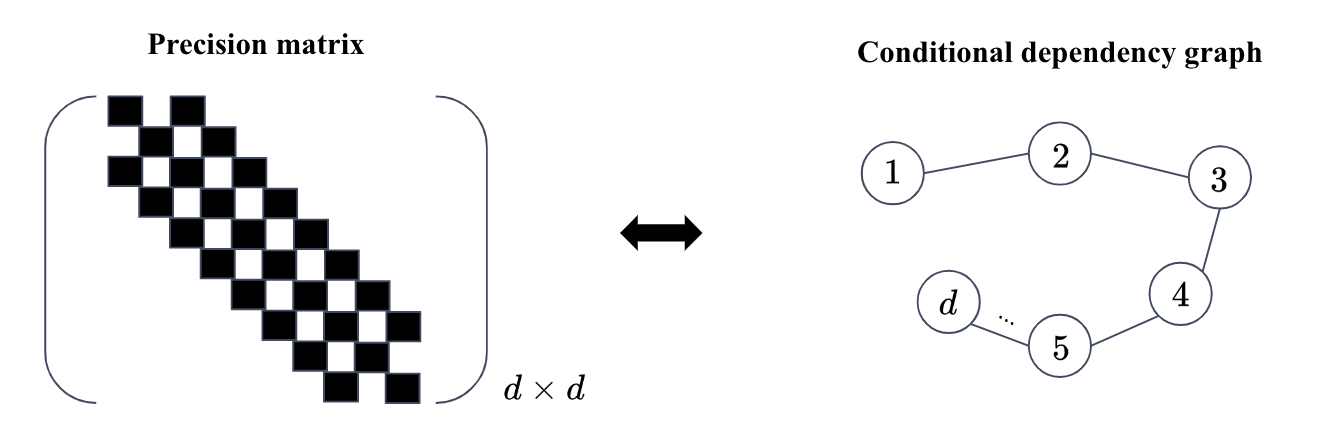}
    \caption{For multivariate Gaussian variables, the conditional dependence structure encoded in the precision matrix (left) can be represented by a simple chain graph (right).}
    \label{fig:ggm}
\end{figure}

This success has naturally led researchers to generalize these methods to multiway tensor-valued data. Such generalizations are of benefit for many applications, including for example, the estimation of brain connectivity in neuroscience, reconstruction of molecular networks, and detecting anomalies in social networks over time. The first generalizations of multivariate analysis to the tensor-variate settings were presented by \citet{dawid1981some}, where the matrix-variate (a.k.a. two-dimensional tensor) distribution was first introduced to model the dependency structures among both rows and columns. \citet{dawid1981some} extended the multivariate setting by rewriting the tensor-variate data as a vectorized (vec) representation of the tensor samples $\tensor{X} \in \bbR^{d_1 \times \cdots \times d_k}$ and analyzing the overall precision matrix $\mat{\Omega} \in \bbR^{d \times d}$, where $d = \prod_{k=1}^K d_k$. Even for a two-dimensional tensor $\tensor{X} \in \bbR^{d_1 \times d_2}$, the computation complexity and sample complexity is high since the number of parameters in the precision matrix grows quadratically as $d^2$. Therefore, in the regime of tensor-variate data, unstructured precision matrix estimation has posed challenges due to the large number of samples needed for accurate structure recovery. 

To address the sample complexity challenges, sparsity can be imposed on the precision matrix $\mat{\Omega}$ by using a sparse Kronecker product (KP) or Kronecker sum (KS) decomposition of $\mat{\Omega}$, where each decomposed factor has an underlying graphical representation like Figure~\ref{fig:ggm} that can be modeled, estimated, and interpreted separately. The earliest and most popular form of sparse structured precision matrix estimation represents $\mat{\Omega}$ as the KP of smaller precision matrices, which corresponds to a separable structure across different modes of a data tensor (see Figure~\ref{fig:multiway_patternedcov}). \citet{tsiligkaridis2013convergence} and \citet{zhou2014gemini} proposed to model the precision matrix as a sparse KP of the precision matrices along each mode of the tensor in the form $\bm \Omega =  \bm{\Psi}_1 \otimes \cdots \otimes \bm \Psi_K$. The KP structure on the precision matrix has the nice property that the corresponding covariance matrix is also a KP. \citet{zhou2014gemini} provides a theoretical framework for estimating the $\mat{\Omega}$ under KP structure and showed that the precision matrices can be estimated from a single instance under the matrix-variate normal distribution. \citet{lyu2019tensor} extended the KP structured model to tensor-valued data, and provided new theoretical insights into such models. An alternative, called the Bigraphical Lasso, was proposed by \citet{kalaitzis2013bigraphical} to model conditional dependency structures of precision matrices by using a KS representation $\bm \Omega = \bm \Psi_1 \oplus \bm \Psi_2 = (\bm \Psi_1 \otimes \mat I) + (\mat I \otimes \bm \Psi_2)$. On the other hand, \citet{rudelsonzhou17errinvardependent} and \cite{park2017non} studied the KS structure on the covariance matrix $\bm \Sigma = \mat A \oplus \mat B$ which corresponds to errors-in-variables models. More recently, \citet{greenewald2019tensor} proposed a model that generalized the KS structure to model tensor-valued data, called the TeraLasso. As shown in their paper, compared to the KP structure, KS structure on the precision matrix leads to a different type of separability on the covariance matrix that provides a more parsimonious representation.

\begin{figure}
    \centering
    \includegraphics[width=\textwidth]{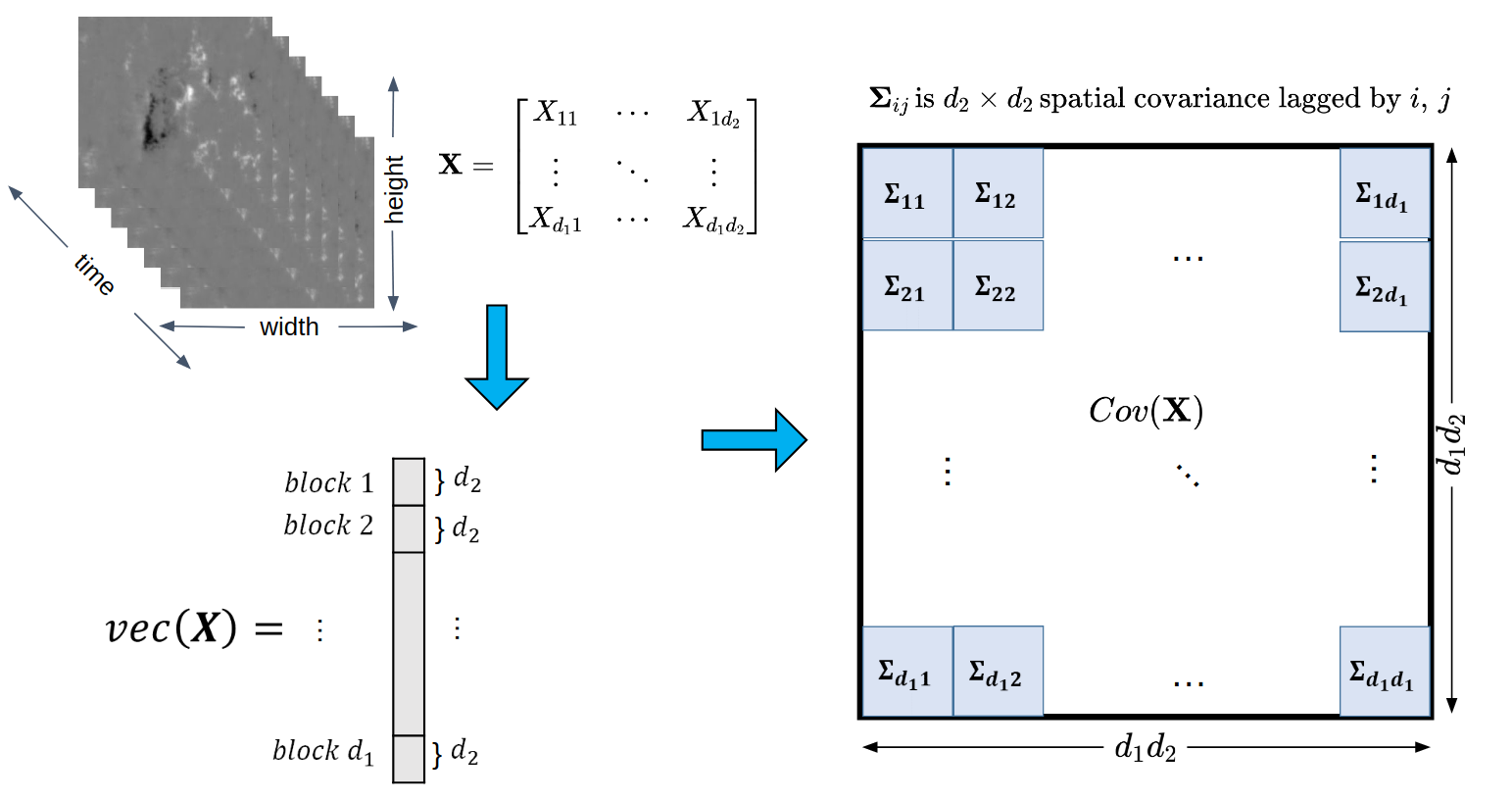}
    \caption{Multiway data results in a patterned covariance. This structure can be exploited by assuming similar patterns within each block, such as those assumed in the Kronecker product models.}
    \label{fig:multiway_patternedcov}
\end{figure}

Despite being modeling choices, the KP and KS structures admit their own pros and cons. The KP model admits a simple stochastic representation, which defines a generating process for the underlying data. Unlike the KP model, the KS model does not lead to a natural generative interpretation. From another perspective, Kronecker structures can be characterized by the product graphs of the individual components. In particular, \citet{kalaitzis2013bigraphical} first motivated the KS structure on the precision matrix by relating Kronecker sum of matrices to the associated Cartesian product graph. Thus, the overall structure of $\mat{\Omega}$ naturally leads to a parsimonious model that brings the individual components together. The KP, however, corresponds to the direct tensor product of the individual graphs and leads to a denser dependency structure in the precision matrix \citet{greenewald2019tensor}. Chapter~\ref{ch:syglasso} proposes a new Kronecker-structured graphical model that admits a natural stochastic representation for precision matrices associated with tensor data. The resulted Gaussian graphical model strikes a balance between the KP- and KS- structured models. The new model poses additional challenge in computation, Chapter~\ref{ch:sgpalm} proposes an estimation algorithm that utilizes state-of-the-art optimization technique and scales the method to modern big data applications. Chapter~\ref{ch:enkf} studies the connection between multiway Gaussian graphical models and second-order representation of spatio-temporal partial differential equations (PDE) and introduces an Kalman filtering framework for model-based physics-informed data assimilation.

\section{Dynamic Topic Models}
Probabilistic topic model is a suite of algorithms that aim to automatically discover and annotate large collections of documents that contain useful information with thematic labels. Topic modeling algorithms are statistical methods that analyze the words of the original texts to discover the themes that run through them, how those themes are connected to each other, and how they change over time. One such model that has been very successful is the Latent Dirichlet Allocation (LDA) model~\citep{blei2003latent}, which infers the topics (i.e., thematic information) in a corpus by assuming an underlying generative process whereby the documents are created, so that one may infer, or reverse engineer, it. The LDA model posits that documents are represented as random mixtures over latent topics, where each topic is characterized by a distribution over all the words. The complete probabilistic structure can be represented by a simple graphical model shown in Figure~\ref{fig:lda_plate}.

\begin{figure}[thb!]
\centering
    \begin{tikzpicture}[x=1.5cm,y=1.3cm]
      \node[obs]                   (W)      {$w_{d,n}$}; %
      \node[latent, above=of W]    (Z)      {$z_{d,n}$}; %
      \node[latent, above=of Z]     (theta) {$\theta_d$};
      \node[latent, left=of W]    (beta)   {$\beta_k$};
      
      \node[const, above=of theta] (alpha)  {$\alpha$}; %
      \node[const, left=of beta] (eta) {$\eta$};
    
      \factor[left=of W]     {W-f}     {above:Multi} {} {} ; %
      \factor[above=of Z]     {Z-f}     {left:Multi} {} {} ; %
      \factor[above=of theta]   {theta-f}   {left:Dir} {} {} ; %
      \factor[left=of beta]   {beta-f}   {above:Dir} {} {} ; %

      \factoredge {Z} {W} {};
      \factoredge {alpha}  {theta-f} {theta}; %
      \factoredge {theta}  {Z-f} {Z}; %
      \factoredge {eta} {beta-f} {beta}; %
      \factoredge {beta}  {W-f} {W}; %
    
      \plate {plate1} { %
        (W) %
        (Z) %
      } {$N$}; %
      \plate {} { %
        (plate1) %
        (theta)%
      } {$D$} ; %
      \plate {} { %
        (beta) %
      } {$K$} ; %
    \end{tikzpicture} 
\caption{Graphical representation of the standard Latent Dirichlet Allocation (LDA) model. Here nodes are random variables; edges indicate dependence through probability distributions (e.g., Dirichlet or multinomial). Shaded nodes are observed; unshaded nodes are latent. Plates indicate replicated variables.}
\label{fig:lda_plate}
\end{figure}
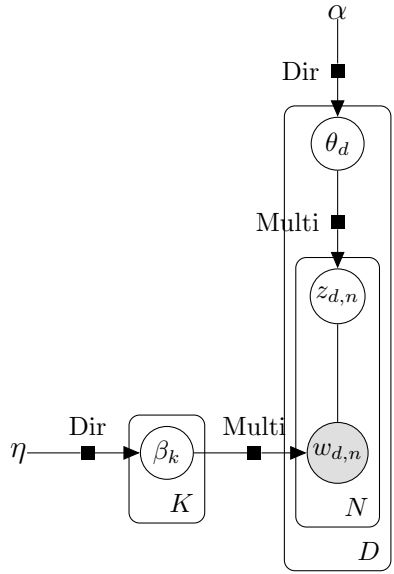

Numerous extensions of the original LDA model have been proposed to handle more complex data that exhibits serial dependencies. In particular, \citet{blei2006dynamic} proposed a Dynamic Topic Model (DTM) that models time-varying corpus (e.g., archive of articles published on the Science journal from 1990 to 2020), and the alignment among topics across time steps is captured by a Kalman filter procedure with a Markov assumption where the state (of topics) at time $t + 1$ is independent of all other history given the state at time $t$. \citet{wang2006topics} deals with similar data and introduced a non-Markov continuous-time model called the Topics-over-Time (TOT), which captures changes in the occurrence (and co-occurrence conditioned on time) of the topics themselves, not changes in the word distribution of each topic. \citet{wang2012continuous} further improved the DTM with a continuous time variant called cDTM that uses Brownian motion to model the latent topics in a sequential collection of documents, where a topic is a pattern of word use that is expected to evolve over the course of the collection. 

All the methods mentioned above try to build certain dynamical or flexible structures explicitly into the probabilistic model. Besides the fact that these methods generally rely on complex stochastic process specifications to model temporal or other dependency structures, they all suffer from the following: 1. natural interpretation comes at the cost of correct model assumption: DTM and its variants achieve interpretability under the assumption of model being correct, which is restrictive as complicated real world applications tend not to follow the models perfectly and any abrupt change in the data makes modeling results hard to interpret; 2. computational instability: as many of these methods rely on either expensive MCMC sampling schemes or variational approximations as inference algorithms, they face the issue of getting trapped into local minimum/maximum of their objectives, which makes the results hard to interpret with confidence; 3. there is inherent inflexibility to diverse dynamical structures, as most methods are developed for handling specific temporal dynamics and are not able to capture all types (e.g., abnormality, clustering, etc) of variations jointly. In Chapter~\ref{ch:gdtm}, a scalable and interpretable framework is developed that attempts to overcome those issues in traditional dynamic topic models. Additionally, the proposed temporal topic modeling approach is extended to incorporate side information via weak supervision.

\section{Outline and Contributions}
This section lists the chapters and corresponding contributions in this dissertation. Each chapter aims to be a self contained exposition on a specific topic; as a result, some introductory material for particular chapters are similar in scope.

Chapter~\ref{ch:syglasso} describes a structured Gaussian graphical model for tensor-valued data. Here, we consider the underlying generating process of the data to be governed by a Sylvester equation. We show that this leads to a Kronecker sum structural assumption on the square root factor of the precision matrix of the data. The resulted modeling approach is able to simultaneously improve robustness, richness, and interpretability of existing Kronecker-structured models. This chapter is based on \citet{wang2020sylvester} and was published in the \textit{Proceedings of the $23^{\text{rd}}$ International Conference on Artificial Intelligence and Statistics}.

Chapter~\ref{ch:sgpalm} tackles a challenging optimization problem posed by the Sylvester graphical model. An algorithm based on the proximal alternating linearized minimization is proposed to estimate generating factors of the model. State-of-the-art convergence rate is achieved and a comprehensive convergence analysis is done via recent development of optimization theory. Practically, we apply the new procedure to astrophysics-related application in solar flare prediction, where we model the solar magnetogram and atmosphere as Guassian Markov Random Field (GMRF) induced by a Sylvester-structured precision matrix. The utility of the estimated precision matrix is demonstrated via a linear prediction of evolution of the solar active regions. The chapter is based on the work of~\citet{wang2021sg} that was published in the \textit{Proceedings of the $38^{\text{th}}$ International Conference on Machine Learning}.

Chapter~\ref{ch:enkf} connects Kronecker-structured (inverse) covariance modeling and spatio temporal PDEs via the ensemble Kalman filter (EnKF) framework for data assimilation. A new EnKF algorithm is introduced and the emergence of sparsity and multiway structures in second-order statistical characterizations of dynamical processes governed by PDEs is studied. We demonstrate promises of the new approach for tracking complex spatio-temporal systems. The chapter is based on the work of \citet{wang2021multiway} presented in the \textit{Workshop on Machine Learning and the Physical Sciences at the $35^{\text{th}}$ Conference on Neural Information Processing Systems} and the joint work with Zeyu Sun, Dogyoon Song, and Alfred Hero that was under revision at \textit{Statistics Surveys}. Additionally, a \texttt{Julia} software package called \texttt{TensorGraphicalModels}~\citep{WANG2022100308} has been
developed to accompany this work.

Chapter~\ref{ch:gdtm} introduces a simple and modular approach for modeling time-varying texts that combines standard LDA, shortest path algorithms on neighborhood graphs, and geometric embedding. This approach enables interpretation and visualization of latent thematic information that are intrinsically temporally dependent. We demonstrate that the framework is able to capture perceptually natural temporal trajectories of latent topics with minimal modeling assumptions. Further, we show that the framework is able to incorporate side information (e.g., labels) via weak supervision. Two important applications are considered: analysis of Twitter data for understanding COVID-19 related public discourse; and analysis of TalkLife data for understanding mental health related issues and aiding early detection and intervention. The work is partially based on the work of \citet{wang2021Geometry} published in the \textit{Harvard Data Science Review}.


\chapter{The Sylvester Graphical Lasso}
\label{ch:syglasso}
In this chapter we introduce the~\textit{Sylvester graphical lasso} (SyGlasso) that captures multiway dependencies present in tensor-valued data. The model is based on the Sylvester equation that defines a generative model. The proposed model complements the tensor graphical lasso \citep{greenewald2019tensor} which imposes a Kronecker sum model for the inverse covariance matrix, by providing an alternative Kronecker sum model that is generative and interpretable. The interpretability follows from the Sylvester generative model on which SyGlasso is based: the model is exact for any observation process that is a solution of a diffusion-based partial differential equation. A nodewise regression approach is adopted for estimating the conditional independence relationships among variables. The statistical convergence of the method is established, and empirical studies are provided to demonstrate the recovery of meaningful conditional dependency graphs. We apply the SyGlasso to an electroencephalography (EEG) study to compare the brain connectivity of alcoholic and nonalcoholic subjects. We demonstrate that our model can simultaneously estimate both the brain connectivity and its temporal dependencies.

\section{Introduction}\label{sec:syglasso-intro}
To address the sample complexity challenges that arise in modern multivariate analysis of tensor-variate data, sparsity can be imposed on the second order information - the covariance $\mat{\Sigma}$ or the inverse covariance $\mat{\Omega}$ - by using a sparse Kronecker product (KP) or Kronecker sum (KS) decomposition of $\mat{\Sigma}$ or $\mat{\Omega}$. The earliest and most popular form of sparse structured precision matrix estimation approaches represent $\mat{\Omega}$ as the KP of smaller precision matrices, which means that the resulting $\mat{\Sigma}$ also composes of KP of smaller covariance matrices due to the property of KP. \citet{tsiligkaridis2013convergence, zhou2014gemini, lyu2019tensor} have developed estimation and statistical inference procedures under the KP structure and showed that the underlying true precision matrix can be estimated efficiently with high-dimensional consistency guarantees with single matrix or tensor sample. Alternatively, \citet{kalaitzis2013bigraphical, greenewald2019tensor} propose to model conditional dependency structures of precision matrices by using a KS representation. \citet{rudelsonzhou17errinvardependent, park2017non} studied the KS structure on the covariance matrix which corresponds to errors-in-variables models.

\textbf{KP vs KS:} One of the advantages of the KP model is that it admits a simple stochastic representation as $\mat{X}=\mat{C}^{-1}\mat{Z}\mat{D}^{-1}$, where $\mat{A}=\mat{C}\mat{C}^T, \mat{B}=\mat{D}\mat{D}^T$, and $\mat{Z}$ is white Gaussian. It can be shown using properties of KP that $\mat{X} \sim \mathcal{N}(0,(\mat{A} \otimes \mat{B})^{-1})$.  Unlike the KP model, the KS model does not have a simple stochastic representation. From another perspective, the Kronecker structures can be characterized by different types of product graphs of the individual component graphs. Specifically, \citet{kalaitzis2013bigraphical} relates $(\mat{\Psi}_1 \oplus \cdots \oplus \mat{\Psi}_K)$ to the associated Cartesian product graph. As a result, the overall number of edges (active conditional dependencies) is additive in the number of edges in the individual graphs. The KP, however, corresponds to the direct tensor product of the individual graphs and leads to a denser dependency structure in the precision matrix, as the number of overall edges is multiplicative in the number of individual edges~\footnote{From ~\citet{greenewald2019tensor} KS (Cartesian product graph) edges: $\sum_{k=1}^K|E_k|\prod_{i \neq k}|V_i|$; KP (direct product graph) edges: $\frac{1}{2}\prod_{k=1}^K(2|E_k|+|V_k|)-\prod_{k=1}^K|V_k|$; where $E_k$ and $V_k$ denote the edge and vertex sets, respectively for component $k$.}. 

\textbf{The Sylvester Graphical Lasso (SyGlasso):} We propose a \textit{Sylvester structured graphical model} to estimate precision matrices associated with tensor data. Similar to the KP- and KS-structured graphical models, we simultaneously learn $K$ graphs along each mode of the tensor data. However, instead of a KS or KP model for the precision matrix, the Sylvester structured graphical model uses a KS model for the square root factor of the precision matrix. The model is estimated by joint sparse regression models that impose sparsity on the individual components $\bm\Psi_k$ for $k=1, \dots, K$. The Sylvester model reduces to a squared KS representation for the precision matrix $\mat{\Omega} = (\mat{\Psi}_1 \oplus \cdots \oplus \mat{\Psi}_K)^2$, which is motivated by a stochastic representation of multivariate data with such a precision matrix. SyGlasso is the first KS-based graphical lasso model that admits a stochastic representation (i.e., Sylvester). Thus, our proposed SyGlasso puts the KS representations on similar ground as the KP representations in terms of interpretablility.

\subsection{Notations}
We adopt the notations used by \citet{kolda2009tensor}. A $K$-th order tensor is denoted by boldface Euler script letters, e.g, $\tensor{X} \in \bbR^{m_1 \times \dots \times m_K}$. $\tensor{X}$ reduces to a vector for $K=1$ and to a matrix for $K=2$. The $(i_1,\dots, i_K)$-th element of $\tensor{X}$ is denoted by $\tensor{X}_{i_1,\dots, i_K}$, and we define the vectorization of $\tensor{X}$ to be $\vecto(\tensor{X}) := (\tensor{X}_{1,1,\dots,1},\tensor{X}_{2,1,\dots,1},\dots,\tensor{X}_{m_1,1,\dots,1},\tensor{X}_{1,2,\dots,1},$ $\dots,\tensor{X}_{m_1,m_2,\dots,m_k})^T \in \bbR^p$ with $p=\prod_{k=1}^K m_k$.

There are several tensor algebra concepts that we recall. A fiber is the higher order analogue of the row and column of matrices. It is obtained by fixing all but one of the indices of the tensor, e.g., the mode-$k$ fiber of $\tensor{X}$ is $\tensor{X}_{i_1,\dots,i_{k-1},:,i_{k+1},\dots,i_K}$. Matricization, also known as unfolding, is the process of transforming a tensor into a matrix. The mode-$k$ matricization of a tensor $\tensor{X}$, denoted by $\tensor{X}_{(k)}$, arranges the mode-$k$ fibers to be the columns of the resulting matrix. It is possible to multiply a tensor by a matrix -- the $k$-mode product of a tensor $\tensor{X} \in \bbR^{m_1 \times \dots \times m_K}$ and a matrix $\mat{A} \in \bbR^{J \times m_k}$, denoted as $\tensor{X} \times_k \mat{A}$, is of size $m_1 \times \dots \times m_{k-1} \times J \times m_{k+1} \times \dots m_k$. Its entry is defined as $(\tensor{X} \times_k \mat{A})_{i_1,\dots,i_{k-1},j,i_{k+1},\dots,i_K} := \sum_{i_k=1}^{m_k} \tensor{X}_{i_1,\dots,i_K} A_{j,i_k}$. In addition, for a list of matrices $\{\mat{A}_1,\dots,\mat{A}_K\}$ with $\mat{A}_k \in \bbR^{m_k \times m_k}$, $k=1,\dots,K$, we define $\tensor{X} \times \{\mat{A}_1,\dots,\mat{A}_K\} := \tensor{X} \times_1 \mat{A}_1 \times_2 \dots \times_K \mat{A}_K$. Lastly, we define the $K$-way Kronecker product as $\bigotimes_{k=1}^K \bm\Psi_k = \bm\Psi_1 \otimes \cdots \otimes \bm\Psi_K$, and the equivalent notation for the Kronecker sum as $\bigoplus_{k=1}^K \bm\Psi_k = \bm\Psi_1 \oplus \dots \oplus \bm\Psi_K = \sum_{k=1}^K \mat I_{[m_{k+1:K}]} \otimes \bm\Psi_k \otimes \mat I_{[m_{1:k-1}]}$, where $\mat I_{[m_{k:\ell}]} = \mat I_{m_k} \otimes \dots \otimes \mat I_{m_\ell}$.

\subsection{Outline}
We briefly outline the structure of this chapter. Section~\ref{sec:syglasso-method} introduces the SyGlasso method in details. Section~\ref{sec:syglasso-thm} studies the statistical convergence of the SyGlasso. Section~\ref{sec:syglasso-experiments} provides numerical illustrations of the method using synthetic data. Section~\ref{sec:syglasso-eeg} provides numerical illustrations of the method using real data that arises from Solar flare prediction problems. Section~\ref{sec:syglasso-conclusion} concludes the chapter.

\section{Sylvester Graphical Lasso}\label{sec:syglasso-method}
Let a random tensor $\tensor{X} \in \bbR^{m_1 \times \dots \times m_K}$ be generated by the following representation:
\begin{equation}\label{eqn:tensor_sylvester}
    \tensor{X} \times_1 \mat{\Psi}_1 + \cdots + \tensor{X} \times_K \mat{\Psi}_K = \tensor{T},
\end{equation} 
where $\mat{\Psi}_k \in \bbR^{m_k \times m_k}, k=1,\dots,K$ are sparse symmetric positive definite matrices and $\tensor{T}$ is a random tensor of the same order as $\tensor{X}$. Equation \eqref{eqn:tensor_sylvester} is known as the Sylvester tensor equation. The equation often arises in finite difference discretization of linear partial equations in high dimension \citep{bai2003hermitian} and discretization of separable PDEs \citep{kressner2010krylov,grasedyck2004existence}. When $K=2$ it reduces to the Sylvester matrix equation $\mat{\Psi_1} \mat{X} + \mat{X} \mat{\Psi_2}^T = \mat{T}$ which has wide application in control theory, signal processing and system identification (see, for example \citet{golub1979hessenberg} and references therein).

It is not difficult to verify that the Sylvester representation \eqref{eqn:tensor_sylvester} is equivalent to the following system of linear equations:
\begin{equation}\label{eqn:linear_tensor_sylvester}
  \left( \bigoplus_{k=1}^K \bm\Psi_k \right ) \vecto(\tensor{X}) = \vecto(\tensor{T}),
\end{equation}
If $\tensor{T}$ is a random tensor such that $\vecto(\tensor{T})$ has zero mean and identity covariance, it follows from \eqref{eqn:linear_tensor_sylvester} that any $\tensor{X}$ generated from the stochastic relation \eqref{eqn:tensor_sylvester} satisfies $\bbE \vecto(\tensor{X}) = \mat{0}$ and $\mat{\Sigma} = \mat{\Omega}^{-1} := \bbE \vecto(\tensor{X}) \vecto(\tensor{X})^T = \left ( \bigoplus_{k=1}^K \mat{\Psi}_k \right)^{-2}$. In particular, when $\vecto(\tensor{T}) \sim \mathcal{N}(\mat{0},\mat{I}_m)$, we have that $\vecto(\tensor{X}) \sim \mathcal{N}\left (\mat{0}, \left ( \bigoplus_{k=1}^K \mat{\Psi}_k \right)^{-2} \right)$.

This paper proposes a procedure for estimating $\mat{\Omega}$ with 
$N$ independent copies of the tensor data $\{\tensor{X}^i\}_{i=1}^N$ that are generated from \eqref{eqn:tensor_sylvester}. For the rest of the paper, we assume that the last mode of the data tensor corresponds to the observations mode. For example, when $K=2$, $\tensor{X} \in \bbR^{m_1 \times m_2 \times N}$ is the matrix-variate data with $N$ observations. Our goal is to estimate the $K$ precision matrices $\{ \mat{\Psi_k} \}_{k=1}^K$ each of which describes the conditional independence of $k$-th data dimension. The resulting precision matrix is $\mat{\Omega} = \left ( \bigoplus_{k=1}^K \mat{\Psi}_k \right)^2$. By rewriting \eqref{eqn:linear_tensor_sylvester} element-wise, we first observe that
\begin{equation}
\label{eqn:elementwise_tensor_sylvester}
\begin{aligned}
    & \left( \sum_{k=1}^K (\mat{\Psi}_k)_{i_k,i_k} \right) \tensor{X}_{i_{[1:K]}} \\
    & = -\sum_{k=1}^K \sum_{j_k \neq i_k} (\mat{\Psi}_k)_{i_k,j_k} \tensor{X}_{i_{[1:k]},j_k,i_{[k+1:K]}} + \tensor{T}_{i_{[1:K]}}.
\end{aligned}
\end{equation} 
Note that the left-hand side of \eqref{eqn:elementwise_tensor_sylvester} involves only the summation of the diagonals of the $\mat{\Psi}$'s and the right-hand side is composed of columns of $\bm\Psi$'s that exclude the diagonal terms. Equation \eqref{eqn:elementwise_tensor_sylvester} can be interpreted as an autogregressive model relating the $(i_1,\dots,i_K)$-th element of the data tensor (scaled by the sum of diagonals) to other elements in the fibers of the data tensor. The columns of $\mat{\Psi}'s$ act as regression coefficients. The formulation in \eqref{eqn:elementwise_tensor_sylvester} naturally leads us to consider a pseudolikelihood-based estimation procedure \citep{besag1977efficiency} for estimating $\bm\Omega$. It is known that inference using pseudo-likelihood is consistent and enjoys the same $\sqrt{N}$ convergence rate as the MLE in general \citep{varin2011overview}. This procedure can also be more robust to model misspecification. Specifically, we define the sparse estimate of the underlying precision matrices along each axis of the data as the solution of the following convex optimization problem:
\begin{equation}
\label{eqn:syglasso_objective}
  \begin{aligned}
    & \min_{\substack{\mat{\Psi}_k \in \bbR^{m_k \times m_k}\\k=1,\dots K}} -N \sum_{i_1,\dots,i_K} \log \tensor{W}_{i_{[1:K]}} \\ 
    & \qquad + \frac{1}{2} \sum_{i_1,\dots,i_K} \|(I) + (II)\|_2^2 + \sum_{k=1}^K P_{\lambda_k}(\mat{\Psi}_k).
  \end{aligned} \
\end{equation}
where $P_{\lambda_k}(\cdot)$ is a penalty function indexed by the tuning parameter $\lambda_k$ and 
\begin{align*}
  (I) & = \tensor{W}_{i_{[1:K]}}\tensor{X}_{i_{[1:K]}} \\
  (II) & = \sum_{k=1}^K \sum_{j_k \neq i_k} (\mat{\Psi}_k)_{i_k,j_k} \tensor{X}_{i_{[1:k]},j_k,i_{[k+1:K]}},
\end{align*}
with $\tensor{W}_{i_{[1:K]}} := \sum_{k=1}^K (\mat{\Psi}_k)_{i_k,i_k}$. Here we focus on the $\ell_1$-norm penalty, i.e., $P_{\lambda_k}(\mat{\Psi}_k) = \lambda_k \|\mat{\Psi}_k\|_{1,\text{off}}$.

The optimization problem \eqref{eqn:syglasso_objective} can be put into the following matrix form:
\begin{equation}\label{eqn:syglasso_objective_matrix}
    \begin{aligned}
    \min_{\substack{\mat{\Psi}_k \in \bbR^{m_k \times m_k}\\ k=1,\dots K}} 
    & -\frac{N}{2} \log|(\text{diag}(\mat{\Psi}_1) \oplus \dots \oplus \text{diag}(\mat{\Psi}_K))^2| \\ \nonumber
    + & \frac{N}{2} \tr(\mat{S}(\mat{\Psi}_1 \oplus \dots \oplus \mat{\Psi}_K)^2) + \sum_{k=1}^K P_{\lambda_k}(\mat{\Psi}_k) \nonumber
    \end{aligned}
\end{equation}
where $\diag(\mat{\Psi}_k) \in \bbR^{m_k \times m_k}$ is a matrix of the diagonal entries of $\mat{\Psi}_k$ and $\mat{S} \in \bbR^{m \times m}$ is the sample covariance matrix, i.e., $\mat{S}=\frac{1}{N} \vecto(\tensor{X})^T \vecto(\tensor{X})$. Note that the pseudolikelihood above approximates the $\ell_1$-penalized Gaussian negative log-likelihood in the log-determinant term by including only the Kronecker sum of the diagonal matrices instead of the Kronecker sum of the full matrices. Further discussion of pseudolikelihood- and likelihood-based approaches for (inverse) covariance estimations can be found in \citet{khare2015convex}.

We also note that when $K=1$ the objective \eqref{eqn:syglasso_objective} reduces to the objective of the CONCORD estimator \citep{khare2015convex}, and is similar to those of SPACE \citep{peng2009partial} and Symmetric lasso \citep{friedman2010applications}. Our framework is a generalization of these methods to higher order tensor-valued data, when the Sylvester representation \eqref{eqn:tensor_sylvester} holds.

\begin{remark}
In our formulation $\mat{\Omega}=(\bigoplus_{k=1}^K \mat \Psi_k)^2$ does not uniquely determine $\{\mat{\Psi}_k\}_{k=1}^K$ due to the trace ambiguity: scaled identity factors can be added to/subtracted from the $\mat{\Psi}_k's$ without changing the matrix $\bm\Omega$. To address this non-identifiability, we rewrite the overall precision matrix $\mat{\Omega}$ as
\begin{equation*}
\begin{aligned}
  \mat{\Omega}  = \left( \bigoplus_{k=1}^K \mat{\Psi}_k \right )^2
   = \left( \bigoplus_{k=1}^K \mat{\Psi}_k^{\text{off}} + \bigoplus_{k=1}^K \diag(\mat{\Psi}_k) \right )^2,
\end{aligned}
\end{equation*} where $\mat{\Psi}_k^{\text{off}}=\mat{\Psi}_k-\text{diag}(\mat{\Psi}_k)$, and estimate the diagonal and off-diagonal entries $\mat{\Psi}_k$'s separately. This allows us to reconstruct the overall precision matrix $\mat{\Omega}$ when $\bm\Psi_k^{\text{off}}$ is penalized with an $\ell_1$ penalty.
\end{remark}

\subsection{Estimation of the graphical model}
Let $Q_N(\tensor{W},\{\mat{\Psi}_k^{\text{off}}\}_{k=1}^K)$ denote the objective function defined in \eqref{eqn:syglasso_objective}. Here, $\tensor{W}=\bigoplus_{k=1}^K \text{diag}(\mat{\Psi}_k)$. We adopt a convergent alternating minimization approach \citep{khare2014convergence} that cycles between optimizing $\mat{\Psi}_k$ and $\tensor{W}$ while fixing other parameters.
In particular, for $1 \leq k \leq K$, $1 \leq i_k < j_k \leq m_k$, define
\begin{equation}
\label{eq:sylvester_tensor_update}
\begin{aligned}
    T_{i_kj_k}(\mat{\Psi}_k^{\text{off}}) & = \argmin_{\substack{(\Tilde{\mat{\Psi}}_l)_{m,n}=(\mat{\Psi}_l)_{m,n} \\ \forall (l,m,n) \neq (k,i_k,j_k)}} 
    Q_N(\Tilde{\tensor{W}},\{\Tilde{\mat{\Psi}}_k^{\text{off}}\}_{k=1}^K)\\
    T(\tensor{W}) & = \quad \; \argmin_{\substack{\Tilde{\mat{\Psi}}_k^{\text{off}}=\mat{\Psi}_k^{\text{off}} \\ \forall k}} 
    \quad \; \; Q_N(\Tilde{\tensor{W}},\{\Tilde{\mat{\Psi}}_k^{\text{off}}\}_{k=1}^K).
\end{aligned}
\end{equation}
For each $(k,i_k,j_k)$, $T_{i_kj_k}(\mat{\Psi}_k^{\text{off}})$ 
updates the $(i_k,j_k)$-th entry with the minimizer of $Q_N(\tensor{W},\{\mat{\Psi}\}_{k=1}^K)$ with respect to $(\mat{\Psi}_k)_{i_kj_k}^{\text{off}}$ holding all other variables constant. Similarly, $T(\tensor{W})$ updates $\tensor{W}_{i_{[1:K]}}$ with the solution of $\min Q_N(\tensor{W},\{\mat{\Psi}\}_{k=1}^K)$ with respect to $\tensor{W}_{i_{[1:K]}}$ holding all other variables constant. The closed form updates $T_{i_kj_k}(\mat{\Psi}_k^{\text{off}})$ and $T(\tensor{W})$ are detailed in Appendix~\ref{app:syglasso}.

\begin{algorithm}
\caption{Nodewise SyGlasso}
\label{alg:nodewise_tensor_lasso}
  \SetAlgoLined
  \KwIn{Standardized data $\tensor{X}$, penalty parameter $\lambda_k$}
  \KwOut{$\{\hat{\mat{\Psi}}_k\}_{k=1}^K$, $\hat{\mat{\Omega}}=\left( \bigoplus_{k=1}^K \hat{ \mat{\Psi}}_k \right )^2$}
  Initialize $\{\hat{\mat{\Psi}}_k^{(0)}\}_{k=1}^K$, $\hat{\mat{\Omega}}^{(0)}=\left( \bigoplus_{k=1}^K \hat{ \mat{\Psi}}_k^{(0)} \right )^2$ \\
  \While{not converged}{
  \texttt{\#} \textit{Update off-diagonal elements}\;
    \For{$k \leftarrow 1,\dots,K$}{
      \For{$i_k \leftarrow 1,\dots,m_k-1$}{
        \For{$j_k \leftarrow i_k+1,\dots,m_k$}{
          $(\hat{\mat{\Psi}}_k^{\text{(t+1)}})_{i_k,j_k} \leftarrow (T_{i_k,j_k}(\mat{\Psi}_k^{\text{(t)}}))_{i_k,j_k}$\;
          \hfill from \eqref{eqn:update_offdiag} in Appendix~\ref{app:syglasso}
        }
      }
    } 
    \texttt{\#} \textit{Update diagonal elements}\;
    $\hat{\tensor{W}}^{\text{(t+1)}} \leftarrow T(\tensor{W}^{\text{(t)}})$ from \eqref{eqn:update_diag} in Appendix~\ref{app:syglasso}
  } 
\end{algorithm}

\section{Large Sample Properties}\label{sec:syglasso-thm}
We show that under suitable conditions, the Sylvester graphical lasso (SyGlasso) estimator (Algorithm \ref{alg:nodewise_tensor_lasso}) achieves both model selection consistency and estimation consistency. As in other studies \citep{khare2015convex, peng2009partial}\footnote{When $K=1$ it is possible to relax this assumption to require only accurate estimates of the diagonals, see \citet{khare2015convex, peng2009partial} for details.}, for the convergence analysis we make standard assumptions that the diagonal of $\mat{\Omega}$ is known. We analyze the theoretical properties of the SyGlasso under the assumption that $\tensor{W}$ is given. In practice, we can estimate $\tensor{W}$ using Algorithm \ref{alg:nodewise_tensor_lasso}, and if the diagonals of each individual $\mat{\Psi}_k$ are desired, we can incorporate any available prior knowledge of the variation along each data dimension.

We estimate $\{\mat{\Psi}_k^{\text{off}}\}_{k=1}^K$ by solving the following $\ell_1$ penalized problem:
\begin{equation}
    \min_{\bm{\beta}} L_N \Big(\tensor{W},\bm{\beta},\tensor{X}\Big) + \sum_{k=1}^K \lambda_k \|\mat{\Psi}_k\|_{1,\text{off}},
\end{equation} 
where $L_N \Big(\tensor{W},\bm{\beta},\tensor{X}\Big):=\frac{1}{N}\sum_{s=1}^{N} L\Big(\tensor{W},\bm{\beta},\tensor{X}^s \Big)$, with
\begin{equation}
\begin{aligned}
    L\Big(\tensor{W},\bm{\beta},\tensor{X}^s \Big) & = - N \sum_{i_{[1:K]}} \log \tensor{W}_{i_{[1:K]}} \\ 
    & \qquad \qquad + \frac{1}{2} \sum_{i_1,\dots,i_K} ((I) + (II))^2. \\ 
\end{aligned}
\vspace{-10pt}
\end{equation} 
where
\begin{align*}
  & (I)  = \tensor{W}_{i_{[1:K]}}\tensor{X}_{i_{[1:K]}} \\
  & (II)  = \sum_{k=1}^K \sum_{j_k \neq i_k} (\mat{\Psi}_k)_{i_k,j_k} \tensor{X}_{i_{[1:k-1]},j_k,i_{[k+1:K]}}\\
  & \bm{\beta}  = ((\mat{\Psi}_1)_{1,2},(\mat{\Psi}_1)_{1,3},\dots,(\mat{\Psi}_1)_{1,m_1},\dots,(\mat{\Psi}_k)_{m_k-1,m_k})^T
\end{align*} and $\bm \beta$ denotes the off-diagonal entries of all $\mat{\Psi}_k's$. 

We first state the regularity conditions needed for establishing convergence of the SyGlasso estimator. Let $\mathcal{A}_{k}:=\{(i,j):(\mat{\Psi}_k)_{i,j} \neq 0, i \neq j\}$ and $q_{k}:=|\mathcal{A}_{k}|$ for $k=1,\dots,K$ be the true edge set and the number of edges, respectively. Let $\mathcal{A} = \cup_{k=1}^K \mathcal{A}_{k}$. We use $\bar{\bm\beta}, \bar{\bm\Omega}, \bar{\tensor{W}}$ to emphasize that they are the true values of the corresponding parameters.

\noindent \textbf{(A1 - Subgaussianity)} The data $\{\tensor{X}^s\}_{s=1}^N$ are i.i.d subgaussian random tensors, that is, $\vecto(\tensor{X}^s) \sim \mat{x}$, where $\mat{x}$ is a subgaussian random vector in $\mathbb{R}^p$, i.e., there exist a constant $c>0$, such that for every $\mat{a} \in \mathbb{R}^p$, $\mathbb{E}e^{\mat{a}^T \mat{x}} \leq e^{c\mat{a}^T \bar{\mat{\Sigma}} \mat{a}}$, and there exist $\rho_j > 0$ such that $\mathbb{E}e^{tx_j^2} \leq K$ whenever $|t| < \rho_j$, for $1 \leq j \leq p$.

\noindent \textbf{(A2 - Bounded eigenvalues)} There exist constants $0 < \Lambda_{\min} \leq \Lambda_{\max} < \infty$, such that the minimum and maximum eigenvalues of $\mat{\Omega}$ are bounded with $\lambda_{\min}(\bar{\mat{\Omega}}) = (\sum_{k=1}^K \lambda_{\max}(\mat{\Psi}_k))^{-2} \geq \Lambda_{\min}$ and $\lambda_{\max}(\bar{\mat{\Omega}}) = (\sum_{k=1}^K \lambda_{\min}(\mat{\Psi}_k))^{-2} \leq \Lambda_{\max}$.

\noindent \textbf{(A3 - Incoherence condition)} There exists a constant $\delta < 1$ such that for $k=1,\dots,K$ and all $(i,j) \in \mathcal{A}_{k}$
\begin{equation*}
    |\bar{L}_{ij,\mathcal{A}_{k}}^{''}(\bar{\tensor{W}},\bar{\bm{\beta}})[\bar{L}_{\mathcal{A}_{k},\mathcal{A}_{k}}^{''}(\bar{\tensor{W}},\bar{\bm{\beta}})]^{-1} \text{sign}(\bar{\bm{\beta}}_{\mathcal{A}_{k}})| \leq \delta,
\end{equation*} where for each $k$ and $1 \leq i < j \leq m_k$, $1 \leq k < l \leq m_k$,
\begin{equation*}
    \bar{L}_{ij,kl}^{''}(\bar{\tensor{W}},\bar{\bm{\beta}}) := E_{\bar{\tensor{W}},\bar{\bm{\beta}}} \Bigg(\frac{\partial^2 L(\tensor{W},\bm{\beta},\tensor{X})}{\partial(\mat{\Psi}_k)_{i,j} \partial(\mat{\Psi}_k)_{k,l}}|_{\tensor{W}=\bar{\tensor{W}},\bm{\beta}=\bar{\bm{\beta}}} \Bigg).
\end{equation*}

Note that conditions analogous to (A3) have been used in \citet{meinshausen2006high} and \citet{peng2009partial} to establish high-dimensional model selection consistency of the nodewise graphical lasso in the case of $K=1$. \citet{zhao2006model} show that such a condition is almost necessary and sufficient for model selection consistency in lasso regression, and they provide some examples when this condition is satisfied.

Inspired by \citet{meinshausen2006high} and \citet{peng2009partial} we prove the following properties:
\begin{enumerate} 
    \vspace{-8pt}
    \item Theorem~\ref{thm:restricted_problem} establishes estimation consistency and sign consistency for the nodewise SyGlasso restricted to the true support, i.e., $\bm{\beta}_{\mathcal{A}^c}=0$,
    \item Theorem~\ref{thm:edge_selection} shows that no wrong edge is selected with probability tending to one,
    \item Theorem~\ref{thm:consistency} establishes consistency result of the nodewise SyGlasso.
\end{enumerate}

\begin{theorem}\label{thm:restricted_problem}
  Suppose that conditions (A1-A2) are satisfied. Suppose further that $\lambda_{N,k}=O(\sqrt{\frac{m_k\log p}{N}})$ for all $k$ and $N > O(\max_k q_k m_k \log p)$ as $N \rightarrow \infty$. Then there exists a constant $C(\bar{\bm{\beta}})$, such that for any $\eta>0$, the following hold with probability at least $1-O(\exp(-\eta \log p))$:
  \begin{itemize}
      \item There exists a global minimizer $\hat{\bm{\beta}}_{\mathcal{A}}$ of the restricted SyGlasso problem:
      \begin{equation}\label{eqn:restricted_problem}
          \min_{\bm{\beta}:\bm{\beta}_{\mathcal{A}^c}=0} L_N(\bar{\tensor{W}},\bm{\beta},\tensor{X}) + \sum_{k=1}^K \lambda_k \|\mat{\Psi}_k\|_{1,\text{off}}.
      \end{equation}
      \item (Estimation consistency) Any solution $\hat{\bm{\beta}}_{\mathcal{A}}$ of \eqref{eqn:restricted_problem} satisfies:
      \begin{equation*}
          \|\hat{\bm{\beta}}_{\mathcal{A}} - \bm{\beta}_{\mathcal{A}}\|_2 \leq C(\bar{\bm{\beta}})\sqrt{K}\max_{k}\sqrt{q_{k}}\lambda_{N,k}.
      \end{equation*}
      \item (Sign consistency) If further the minimal signal strength: $\min_{(i,j) \in \mathcal{A}_{k}}|(\mat{\Psi}_k)_{i,j}| \geq 2C(\bar{\bm{\beta}})\sqrt{K}\max_{k}\sqrt{q_{k}}\lambda_{N,k}$ for each $k$, then sign($\hat{\bm{\beta}}_{\mathcal{A}_{k}}$)=sign($\bar{\bm{\beta}}_{\mathcal{A}_{k}}$). 
  \end{itemize}
  
\end{theorem}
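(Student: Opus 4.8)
The plan is to follow the primal localization (``witness'') strategy of \citet{meinshausen2006high} and \citet{peng2009partial}, adapted to the tensor pseudo-likelihood loss. Since $\tensor{W}$ is held fixed at $\bar{\tensor{W}}$, the restricted objective $G_N(\bm{\beta}) := L_N(\bar{\tensor{W}},\bm{\beta},\tensor{X}) + \sum_{k=1}^K \lambda_k\|\mat{\Psi}_k\|_{1,\text{off}}$ in \eqref{eqn:restricted_problem} is a convex function of $\bm{\beta}$ on the subspace $\{\bm{\beta}_{\mathcal{A}^c}=0\}$, in fact a quadratic plus an $\ell_1$ term; hence a global minimizer over that subspace exists, which settles the first bullet. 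To obtain the error bound it then suffices to show that $G_N$ strictly increases on the boundary of a small ball around $\bar{\bm{\beta}}$: writing $\bm{\beta} = \bar{\bm{\beta}} + r\bm{u}$ with $\bm{u}$ supported on $\mathcal{A}$ and $\|\bm{u}\|_2 = 1$, I would show that for $r = C(\bar{\bm{\beta}})\sqrt{K}\max_k\sqrt{q_k}\lambda_{N,k}$ one has $G_N(\bar{\bm{\beta}}+r\bm{u}) - G_N(\bar{\bm{\beta}}) > 0$ uniformly over such $\bm{u}$ with probability $1-O(\exp(-\eta\log p))$. Convexity then forces any minimizer of \eqref{eqn:restricted_problem} to lie in the ball, which is the estimation-consistency bound; sign consistency is immediate afterwards, because $\|\hat{\bm{\beta}}_{\mathcal{A}} - \bar{\bm{\beta}}_{\mathcal{A}}\|_\infty \le \|\hat{\bm{\beta}}_{\mathcal{A}} - \bar{\bm{\beta}}_{\mathcal{A}}\|_2 \le C(\bar{\bm{\beta}})\sqrt{K}\max_k\sqrt{q_k}\lambda_{N,k}$, which the minimal-signal hypothesis makes smaller than half of $\min_{(i,j)\in\mathcal{A}_k}|(\mat{\Psi}_k)_{i,j}|$, ruling out any sign change.

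Because the loss is exactly quadratic in $\bm{\beta}$ (the $\log\tensor{W}$ term does not involve $\bm{\beta}$ and cancels in the difference, and the Hessian is constant in $\bm{\beta}$), the expansion is exact:
\begin{equation*}
  G_N(\bar{\bm{\beta}}+r\bm{u}) - G_N(\bar{\bm{\beta}}) = r\,\nabla L_N(\bar{\bm{\beta}})^\top\bm{u} \;+\; \tfrac{r^2}{2}\,\bm{u}^\top \nabla^2 L_N(\bar{\bm{\beta}})\,\bm{u} \;+\; \sum_{k=1}^K \lambda_k\big(\|\bar{\mat{\Psi}}_k + r\bm{u}^{(k)}\|_{1,\text{off}} - \|\bar{\mat{\Psi}}_k\|_{1,\text{off}}\big).
\end{equation*}
I would bound the three terms in turn. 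The pseudo-score is unbiased at the truth, $\bbE\,\nabla L(\bar{\tensor{W}},\bar{\bm{\beta}},\tensor{X}) = 0$, so subgaussianity (A1) plus a union bound over the $O(p^2)$ coordinates gives $\|\nabla L_N(\bar{\bm{\beta}})\|_\infty = O(\sqrt{m_k\log p/N})$ on block $k$, the $\sqrt{m_k}$ arising from the number of fiber terms entering each partial derivative — this is exactly why $\lambda_{N,k}\asymp\sqrt{m_k\log p/N}$. Since $\bm{u}$ is supported on $\mathcal{A}=\cup_k\mathcal{A}_k$, Cauchy--Schwarz block-by-block yields $|\nabla L_N(\bar{\bm{\beta}})^\top\bm{u}| \le \|\nabla L_N(\bar{\bm{\beta}})_{\mathcal{A}}\|_2 = O(\sqrt{K}\max_k\sqrt{q_k}\lambda_{N,k})$. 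The penalty difference obeys $|{\cdot}| \le \sum_k \lambda_{N,k}\|r\bm{u}^{(k)}\|_1 \le r\sum_k \lambda_{N,k}\sqrt{q_k}\|\bm{u}^{(k)}\|_2 \le r\sqrt{K}\max_k\sqrt{q_k}\lambda_{N,k}$. For the curvature term I would show that the sample Hessian restricted to $\mathcal{A}$ is positive definite uniformly: its population version $\bar{L}''_{\mathcal{A},\mathcal{A}}$ has eigenvalues bounded below by a positive constant depending only on $\Lambda_{\min},\Lambda_{\max}$ via (A2), and a concentration bound (again from A1, controlling the sample second-moment matrix on the $O(\max_k q_k)$-dimensional support) gives $\lambda_{\min}(\nabla^2 L_N(\bar{\bm{\beta}})_{\mathcal{A},\mathcal{A}}) \ge c_{\min}>0$ once $N > O(\max_k q_k m_k\log p)$. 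Combining, $G_N(\bar{\bm{\beta}}+r\bm{u}) - G_N(\bar{\bm{\beta}}) \ge \tfrac{c_{\min}}{2}r^2 - C' r\sqrt{K}\max_k\sqrt{q_k}\lambda_{N,k}$, which is strictly positive once $r$ is a sufficiently large multiple of $\sqrt{K}\max_k\sqrt{q_k}\lambda_{N,k}$; this multiple is the constant $C(\bar{\bm{\beta}})$, depending only on spectral quantities of $\bar{\bm{\Omega}}$.

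The step I expect to be the main obstacle is the curvature/concentration argument: one must establish a restricted-eigenvalue-type lower bound for the sample Hessian of the \emph{tensor} pseudo-likelihood, whose natural block structure couples all $K$ modes through the term $(II)$, and do so with the correct sample-complexity scaling $N \gtrsim \max_k q_k m_k\log p$. Carefully propagating the fiber-count factors $m_k$ through both the gradient concentration (justifying $\lambda_{N,k}\asymp\sqrt{m_k\log p/N}$) and the Hessian deviation bound, while keeping $C(\bar{\bm{\beta}})$ a function of only $\Lambda_{\min},\Lambda_{\max}$ and $K$, is the delicate bookkeeping; the surrounding convexity argument is routine. A secondary technical point is verifying the subgaussian tail and unbiasedness of the pseudo-score $\nabla L$ for tensor-valued subgaussian data — this follows from (A1) and the quadratic form of $L$, but the quadratic pieces require a Hanson--Wright-type concentration inequality rather than a simple Hoeffding bound.
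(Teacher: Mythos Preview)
Your proposal is essentially the same localization argument the paper runs, adapted from \citet{peng2009partial}: the paper's Lemma~A.2.3 is exactly your boundary inequality $G_N(\bar{\bm\beta}+r\bm u)>G_N(\bar{\bm\beta})$ obtained from the exact second-order expansion, with Lemma~A.2.1 supplying the population-Hessian eigenvalue bound and Lemma~A.2.2 the subexponential (Bernstein) concentration of the score and sample Hessian. Two small differences are worth noting. First, where you invoke convexity to pass from the boundary inequality to ``\emph{every} minimizer lies in the ball'', the paper instead proves a separate Lemma~A.2.4 (outside the disc the restricted gradient has $\ell_2$ norm exceeding $\sqrt{K}\max_k\sqrt{q_k}\lambda_{N,k}$) and combines it with the KKT stationarity condition $\|L'_{N,\mathcal{A}_k}(\hat{\bm\beta})\|_\infty\le\lambda_{N,k}$; your route is shorter and equally valid. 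Second, the paper's per-coordinate score bound is $|L'_{N,ij}(\bar{\bm\beta})|=O(\sqrt{\log p/N})$ with no $\sqrt{m_k}$ factor (Lemma~A.2.1(iii) asserts bounded variance of $L'_{ij}$), and the role of $\lambda_{N,k}\asymp\sqrt{m_k\log p/N}$ in the paper's argument is only to ensure $\lambda_{N,k}\sqrt{N/\log p}\to\infty$, so that the linear term becomes $o(\alpha_N^2)$ rather than merely $O(\alpha_N^2)$; your attribution of the $\sqrt{m_k}$ to fiber counts in the score is not how the paper bookkeeps it, though either accounting closes the argument.
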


\begin{theorem}\label{thm:edge_selection}
Suppose that the conditions of Theorem~\ref{thm:restricted_problem} and (A3) are satisfied. Suppose further that $p=O(N^{\kappa})$ for some $\kappa \geq 0$. Then for $\eta>0$, for $N$ sufficiently large, the solution of \eqref{eqn:restricted_problem} satisfies:
  \begin{align*}
      P_{\bar{\tensor{W}},\bar{\bm{\beta}}} & \Big(\max_{(i,j) \in \mathcal{A}_{k}^c} |L_{N,ij}^{\prime}(\bar{\tensor{W}},\hat{\bm{\beta}}_{\mathcal{A}_{k}},\tensor{X})|<\lambda_{N,k}\Big) \\
      & \geq 1 - O(\exp(-\eta \log p))
  \end{align*} for each $k$, where $L_{N,ij}^{\prime} := \partial L_N / \partial (\mat{\Psi}_k)_{ij}$.
\end{theorem}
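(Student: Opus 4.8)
The plan is a primal-dual witness argument: we verify that the Karush-Kuhn-Tucker (KKT) stationarity condition of the $\ell_1$-penalized problem, evaluated at the restricted estimator $\hat{\bm\beta}_{\mathcal{A}}$ from Theorem~\ref{thm:restricted_problem}, holds \emph{strictly} at every off-support coordinate, i.e.\ $|L_{N,ij}'(\bar{\tensor{W}},\hat{\bm\beta}_{\mathcal{A}},\tensor{X})|<\lambda_{N,k}$ for all $(i,j)\in\mathcal{A}_k^c$. The key structural observation is that, with $\tensor{W}$ frozen at $\bar{\tensor{W}}$, the map $\bm\beta\mapsto L_N(\bar{\tensor{W}},\bm\beta,\tensor{X})$ is \emph{quadratic}: the quantity $(I)+(II)$ appearing in \eqref{eqn:syglasso_objective} is affine in $\bm\beta$ and the logarithmic term does not involve $\bm\beta$. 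Hence its gradient is affine in $\bm\beta$ and the Taylor expansion about the truth is exact,
\[
L_{N,ij}'(\bar{\tensor{W}},\bm\beta,\tensor{X}) = L_{N,ij}'(\bar{\tensor{W}},\bar{\bm\beta},\tensor{X}) + \sum_{(k,l)} L_{N,ij,kl}''(\bar{\tensor{W}},\tensor{X})\,(\beta_{kl}-\bar\beta_{kl}),
\]
with the Hessian $L_{N,\cdot\cdot}''$ depending on the data but not on $\bm\beta$, so there is no remainder to estimate.

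I would then combine the stationarity condition of the restricted problem \eqref{eqn:restricted_problem} on the active set, namely $L_{N,ij}'(\bar{\tensor{W}},\hat{\bm\beta}_{\mathcal{A}},\tensor{X})=-\lambda_{N,k}\,\mathrm{sign}(\hat\beta_{ij})$ for $(i,j)\in\mathcal{A}_k$, with the sign-consistency conclusion of Theorem~\ref{thm:restricted_problem} (which holds on an event of probability $1-O(\exp(-\eta\log p))$) to replace $\mathrm{sign}(\hat\beta_{ij})$ by $\mathrm{sign}(\bar\beta_{ij})$. Restricting the exact expansion to the rows indexed by $\mathcal{A}$ and solving the resulting linear system yields
\[
\hat{\bm\beta}_{\mathcal{A}}-\bar{\bm\beta}_{\mathcal{A}} = -\big[L_{N,\mathcal{A}\mathcal{A}}''\big]^{-1}\big(L_{N,\mathcal{A}}'(\bar{\tensor{W}},\bar{\bm\beta},\tensor{X}) + \bm s\big),
\]
where $\bm s$ is the vector with $(i,j)$-entry $\lambda_{N,k}\,\mathrm{sign}(\bar\beta_{ij})$ for $(i,j)\in\mathcal{A}_k$. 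Substituting back into the exact expansion at a coordinate $(i,j)\in\mathcal{A}_k^c$ decomposes $L_{N,ij}'(\bar{\tensor{W}},\hat{\bm\beta}_{\mathcal{A}},\tensor{X})$ into: (a) a noise term $L_{N,ij}'(\bar{\tensor{W}},\bar{\bm\beta},\tensor{X})$, which has zero mean under $(\bar{\tensor{W}},\bar{\bm\beta})$; (b) the term $-L_{N,ij,\mathcal{A}}''[L_{N,\mathcal{A}\mathcal{A}}'']^{-1}L_{N,\mathcal{A}}'(\bar{\tensor{W}},\bar{\bm\beta},\tensor{X})$; and (c) the bias term $-L_{N,ij,\mathcal{A}}''[L_{N,\mathcal{A}\mathcal{A}}'']^{-1}\bm s$.

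It then remains to bound the three pieces so that their sum is strictly below $\lambda_{N,k}$ with the stated probability. For (a), and for the gradient factor inside (b), $L_N'(\bar{\tensor{W}},\bar{\bm\beta},\tensor{X})$ is an average of $N$ i.i.d.\ mean-zero quadratic forms in the tensor entries, so a Bernstein-type tail bound under (A1) gives $\|L_N'(\bar{\tensor{W}},\bar{\bm\beta},\tensor{X})\|_\infty = O_P(\sqrt{\log p/N})$, which is $o(\lambda_{N,k})$ because $\lambda_{N,k}\asymp\sqrt{m_k\log p/N}$. For (b) and (c) the crux is to pass from the sample Hessian to the population Hessian $\bar{L}''$: assumption (A2) bounds the eigenvalues of $\bar{\mat\Omega}$ and hence of $\bar{L}_{\mathcal{A}\mathcal{A}}''$, so a matrix-concentration estimate (again using (A1)) together with a Neumann-series perturbation argument shows that $[L_{N,\mathcal{A}\mathcal{A}}'']^{-1}$ exists and that $L_{N,ij,\mathcal{A}}''[L_{N,\mathcal{A}\mathcal{A}}'']^{-1}$ is within $o(1)$ of $\bar{L}_{ij,\mathcal{A}}''[\bar{L}_{\mathcal{A}\mathcal{A}}'']^{-1}$ in the $\ell_\infty$ operator norm, uniformly over $(i,j)$, as long as $\max_k q_k m_k\log p/N\to 0$. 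The incoherence condition (A3) then bounds the population analogue of (c) by $\delta\lambda_{N,k}$, so the sample version is at most $\delta'\lambda_{N,k}$ with $\delta'=\delta+o(1)<1$, while (b) is $O_P(\sqrt{\log p/N})$ by the same ingredients. Summing (a), (b), (c) via the triangle inequality gives a bound strictly below $\lambda_{N,k}$; a union bound over the at most $p$ coordinates in $\cup_{k}\mathcal{A}_k^c$ and over $k=1,\dots,K$, using $p=O(N^\kappa)$, keeps the failure probability at $O(\exp(-\eta\log p))$.

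The step I expect to be the main obstacle is the concentration of the sample gradient and Hessian in the multiway setting: because each squared summand of the pseudolikelihood reuses many entries of the same tensor sample, the relevant quantities are sums of \emph{dependent} quadratic forms, and one must control their sub-exponential norms carefully to obtain the sharp $\sqrt{m_k\log p/N}$ scaling for $\lambda_{N,k}$ and to keep the perturbation of $[L_{N,\mathcal{A}\mathcal{A}}'']^{-1}$ uniformly $o(1)$ --- in particular so that the dimension factors enter through $m_k$ and $q_k$ rather than through powers of $p$. Once those tensor-specific concentration bounds are established, the rest of the argument parallels the nodewise-lasso analyses of \citet{meinshausen2006high} and \citet{peng2009partial}.
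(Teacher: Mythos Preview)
Your proposal is correct and follows essentially the same primal--dual witness strategy as the paper: exact (quadratic) Taylor expansion of $L_N'$ at $\bar{\bm\beta}$, substitution of the KKT stationarity condition on $\mathcal{A}_k$ together with the sign consistency from Theorem~\ref{thm:restricted_problem}, and a three-term decomposition bounded via Bernstein-type concentration for the gradient and Hessian deviations and via (A3) for the bias term. The only cosmetic difference is that the paper inverts the \emph{population} Hessian $\bar L''_{\mathcal{A}_k\mathcal{A}_k}$ from the outset and carries the sample--population discrepancy as an explicit residual $D_{N,\mathcal{A}_k\mathcal{A}_k}v_{N,k}$ (then bounded using $\|v_{N,k}\|_2$ from Theorem~\ref{thm:restricted_problem}), whereas you invert the sample Hessian and pass to the population via a Neumann-series perturbation; both routes are standard and equivalent, and the paper likewise defers the detailed $o(\lambda_{N,k})$ bounds for the remaining terms to the argument of \citet{peng2009partial}.
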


\begin{theorem}\label{thm:consistency}
  Assume the conditions of Theorem~\ref{thm:edge_selection}. Then there exists a constant $C(\bar{\bm{\beta}})>0$ such that for any $\eta>0$ the following events hold with probability at least $1 - O(\exp(-\eta \log p))$:
  \begin{itemize}
      \item There exists a global minimizer $\hat{\bm{\beta}}$ to problem \eqref{eqn:syglasso_objective}.
      \item (Estimation consistency) Any minimizer $\hat{\bm{\beta}}$ of \eqref{eqn:syglasso_objective} satisfies:
      \begin{equation*}
          \|\hat{\bm{\beta}} - \bm{\beta}\|_2 \leq C(\bar{\bm{\beta}})\sqrt{K}\max_{k}\sqrt{q_{k}}\lambda_{N,k}.          
      \end{equation*}
      \item (Sign consistency) If further the minimal signal strength: $\min_{(i,j) \in \mathcal{A}_{k}}|(\mat{\Psi}_k)_{i,j}| \geq 2C(\bar{\bm{\beta}})\max_{k}\sqrt{q_{k}}\lambda_{N,k}$ for each $k$, then sign($\hat{\bm{\beta}}$)=sign($\bar{\bm{\beta}}$).
  \end{itemize}
\end{theorem}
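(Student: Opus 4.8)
The plan is to use a primal--dual witness (``restricted-then-extend'') argument, with Theorems~\ref{thm:restricted_problem} and~\ref{thm:edge_selection} as the two inputs: the former supplies a well-behaved minimizer $\hat{\bm\beta}_{\mathcal{A}}$ of the support-restricted program~\eqref{eqn:restricted_problem}, and the latter certifies that this minimizer is strictly dually feasible for the full program~\eqref{eqn:syglasso_objective}. Throughout I would condition on the event $\mathcal{E}$ on which the conclusions of both theorems hold simultaneously; by a union bound $\Pr(\mathcal{E}) \geq 1 - O(\exp(-\eta\log p))$. I would also fix $\tensor{W}=\bar{\tensor{W}}$ (as in the stated setup), so that~\eqref{eqn:syglasso_objective} reduces to minimizing the convex function $\bm\beta \mapsto L_N(\bar{\tensor{W}},\bm\beta,\tensor{X}) + \sum_{k=1}^K \lambda_{N,k}\|\mat{\Psi}_k\|_{1,\text{off}}$ over \emph{all} off-diagonal vectors $\bm\beta$: the smooth part is a sum of squares, hence a positive semidefinite quadratic in $\bm\beta$, so the objective is convex.

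First I would extend the restricted minimizer $\hat{\bm\beta}_{\mathcal{A}}$ from Theorem~\ref{thm:restricted_problem} to a full vector $\hat{\bm\beta}$ by setting $\hat{\bm\beta}_{\mathcal{A}^c}=0$, and verify that $\hat{\bm\beta}$ satisfies the subgradient optimality conditions of the full convex program, which by convexity certifies global optimality. On the coordinates in $\mathcal{A}$ this is precisely the stationarity condition already satisfied by $\hat{\bm\beta}_{\mathcal{A}}$, with the $\ell_1$-subgradient pinned down to $\mathrm{sign}(\hat{\bm\beta}_{\mathcal{A}})=\mathrm{sign}(\bar{\bm\beta}_{\mathcal{A}})$ by the sign-consistency clause of Theorem~\ref{thm:restricted_problem}. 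On the coordinates $(i,j)\in\mathcal{A}_k^c$ one must show the gradient of the smooth part lies in $[-\lambda_{N,k},\lambda_{N,k}]$; Theorem~\ref{thm:edge_selection} delivers the \emph{strict} bound $|L_{N,ij}'(\bar{\tensor{W}},\hat{\bm\beta}_{\mathcal{A}},\tensor{X})| < \lambda_{N,k}$, so $0$ lies in the interior of the corresponding subdifferential. This yields the existence of a global minimizer $\hat{\bm\beta}$ of~\eqref{eqn:syglasso_objective} with $\mathrm{supp}(\hat{\bm\beta})\subseteq\mathcal{A}$.

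Next I would promote ``existence'' to ``every minimizer.'' The strict dual feasibility just used, combined with positive definiteness of the restricted population Hessian $\bar{L}''_{\mathcal{A}_k,\mathcal{A}_k}$ (which inherits the spectral bounds of $\bar{\bm\Omega}$ from assumption (A2), and whose empirical counterpart concentrates around it on $\mathcal{E}$ under the stated sample-size condition), forces any minimizer of the full program to have support inside $\mathcal{A}$: a nonzero off-support coordinate would require the smooth gradient there to have magnitude exactly $\lambda_{N,k}$, contradicting the strict inequality --- this is the standard uniqueness step of the primal--dual witness method. Restricted to $\{\bm\beta:\bm\beta_{\mathcal{A}^c}=0\}$ the objective is strictly convex (its Hessian is the positive-definite restricted Fisher information), so its minimizer is unique and equals $\hat{\bm\beta}_{\mathcal{A}}$. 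Hence~\eqref{eqn:syglasso_objective} has the unique minimizer $\hat{\bm\beta}$, and the estimation bound $\|\hat{\bm\beta}-\bar{\bm\beta}\|_2 \leq C(\bar{\bm\beta})\sqrt{K}\max_k\sqrt{q_k}\lambda_{N,k}$ together with sign consistency carries over verbatim from Theorem~\ref{thm:restricted_problem}.

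I expect the main obstacle to be exactly this uniqueness step --- ruling out spurious minimizers whose support leaks outside $\mathcal{A}$ --- since it is the only place where the \emph{strict} inequality of Theorem~\ref{thm:edge_selection} is indispensable and where one must fold into $\mathcal{E}$ a concentration argument guaranteeing positive definiteness of the restricted empirical Hessian. A secondary, more bookkeeping-type task is checking that the rate and sample-size conditions ($\lambda_{N,k}=O(\sqrt{m_k\log p/N})$, $N > O(\max_k q_k m_k \log p)$, $p=O(N^\kappa)$) propagated from Theorems~\ref{thm:restricted_problem}--\ref{thm:edge_selection} are precisely what the witness construction needs, and that if $\tensor{W}$ is estimated via Algorithm~\ref{alg:nodewise_tensor_lasso} rather than assumed known, replacing $\bar{\tensor{W}}$ by its estimate perturbs the gradient bounds only by lower-order terms that do not disturb the strict inequality.
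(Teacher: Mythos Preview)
Your proposal is correct and follows essentially the same route as the paper: the paper's proof is a two-line KKT argument stating that, by Theorems~\ref{thm:restricted_problem} and~\ref{thm:edge_selection}, any solution of the restricted problem~\eqref{eqn:restricted_problem} is a solution of the full problem~\eqref{eqn:syglasso_objective}, and conversely (via the KKT conditions and the strict inequality of Theorem~\ref{thm:edge_selection}) any solution of the full problem is a solution of the restricted one. Your primal--dual witness write-up is a more detailed unpacking of exactly this logic; the uniqueness discussion you flag as the ``main obstacle'' is implicit in the paper's one-line appeal to KKT and is handled there in the same way (inherited from the cited \citet{peng2009partial} framework).
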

Proofs of the above theorems are given in Appendix~\ref{app:syglasso}.

\section{Numerical Illustrations}\label{sec:syglasso-experiments}
We evaluate the proposed SyGlasso estimator in terms of optimization and graph recovery accuracy. We also compare the graph recovery performance with other models recently proposed for matrix- and tensor-variate precision matrices. We first illustrate the differences among these models by investigating the sparsity pattern of $\bm\Omega$ with $K=3$ and $m_k = 4, \forall k$. For simplicity, we generate $\bm\Psi_k$ for $k=1, 2, 3$ as identical $4 \times 4$ precision matrices that follow a one dimensional autoregressive-1 (AR1) process. We recall the KP and KS models:

\begin{figure}[h!] \centering
\begin{subfigure}[t]{0.45\linewidth} \centering
\includegraphics[width=\linewidth]{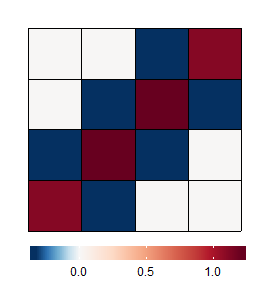}
\caption{$\bm\Psi_k$}
\end{subfigure}
\hspace{7pt}
\begin{subfigure}[t]{0.45\linewidth} \centering
\includegraphics[width=\linewidth]{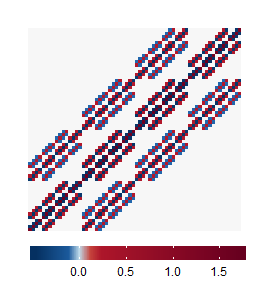}
\caption{KP $\mat{\Omega}$}
\end{subfigure}

\begin{subfigure}[t]{0.45\linewidth} \centering
\includegraphics[width=\linewidth]{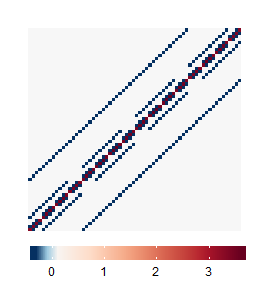}
\caption{KS $\mat{\Omega}$}
\end{subfigure}
\hspace{7pt}
\begin{subfigure}[t]{0.45\linewidth} \centering
\includegraphics[width=\linewidth]{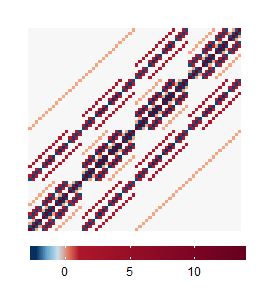}
\caption{SyGlasso $\mat{\Omega}$}
\end{subfigure}
\caption{Comparison of SyGlasso to Kronecker sum (KS) and product (KP) structures. All models are composed of the same components $\mat{\Psi}_k$ for $k=1, 2, 3$ generated as an AR(1) model with $m_k=4$ as shown in (a). The AR(1) components are brought together to create the final $64 \times 64$ precision matrix $\mat{\Omega}$ following (b) the KP structure with $\mat{\Omega} = \bigotimes_{k=1}^3 \mat{\Psi}_k$, (c) the KS structure with $\mat{\Omega}= \bigoplus_{k=1}^3 \mat{\Psi}_k$, and (d) the proposed Sylvester model with $\mat{\Omega}=\left(\bigoplus_{k=1}^3 \mat{\Psi}_k\right)^2$. The KP does not capture nested structures as it simply replicates the individual component with different multiplicative scales. The SyGlasso model admits a precision matrix structure that strikes a balance between KS and KP.}
\label{fig:AR_comparison}
\end{figure}

\textbf{Kronecker Product (KP):} 
The KP model restricts the precision matrix and the covariance matrix to be separable across the $K$ data dimensions and suffers from a multiplicative explosion in the number of edges. As they are separable models and the constructed $\bm\Omega$ corresponds to the direct product of the $K$ graphs, KP is unable to capture more complex nested patterns captured by the KS and SyGlasso models as shown in Figure \ref{fig:AR_comparison} (c) and (d). 

\textbf{Kronecker Sum (KS):} 
The covariance matrix under the KS precision matrix assumption is nonseparable across $K$ data dimensions, and the KS-structured models can be motivated from a maximum entropy point of view. Contrary to the KP structure, the number of edges in the KS structure grows as the sum of the edges of the individual graphs (as a result of Cartesian product of the associated graphs), which leads to a more controllable number of edges in $\bm\Omega$. 

We compare these methods under different model assumptions to explore the flexibility of the proposed SyGlasso model under model mismatch. To empirically assess the efficiency of the proposed model, we generate tensor-valued data based on three different precision matrices. The $\mat{\Psi}_k$'s are generated from one of 1) AR1($\rho$), 2) Star-Block (SB), or 3) Erdos-Renyi (ER) random graph models described in Appendix \ref{app:syglasso}. 

\begin{figure}[htb] \centering
\begin{subfigure}[t]{0.45\linewidth} \centering
\includegraphics[width=\linewidth]{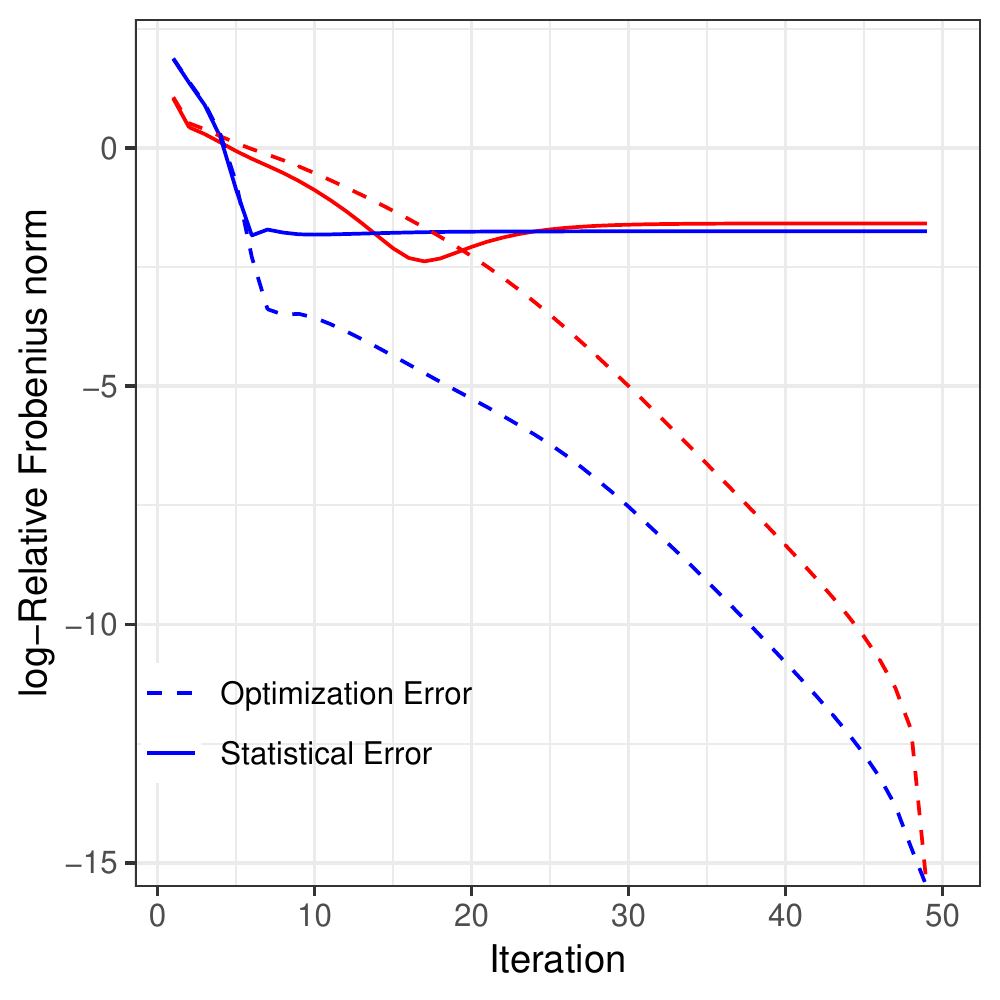}
\caption{SB and AR}
\end{subfigure}
~
\begin{subfigure}[t]{0.45\linewidth} \centering
\includegraphics[width=\linewidth]{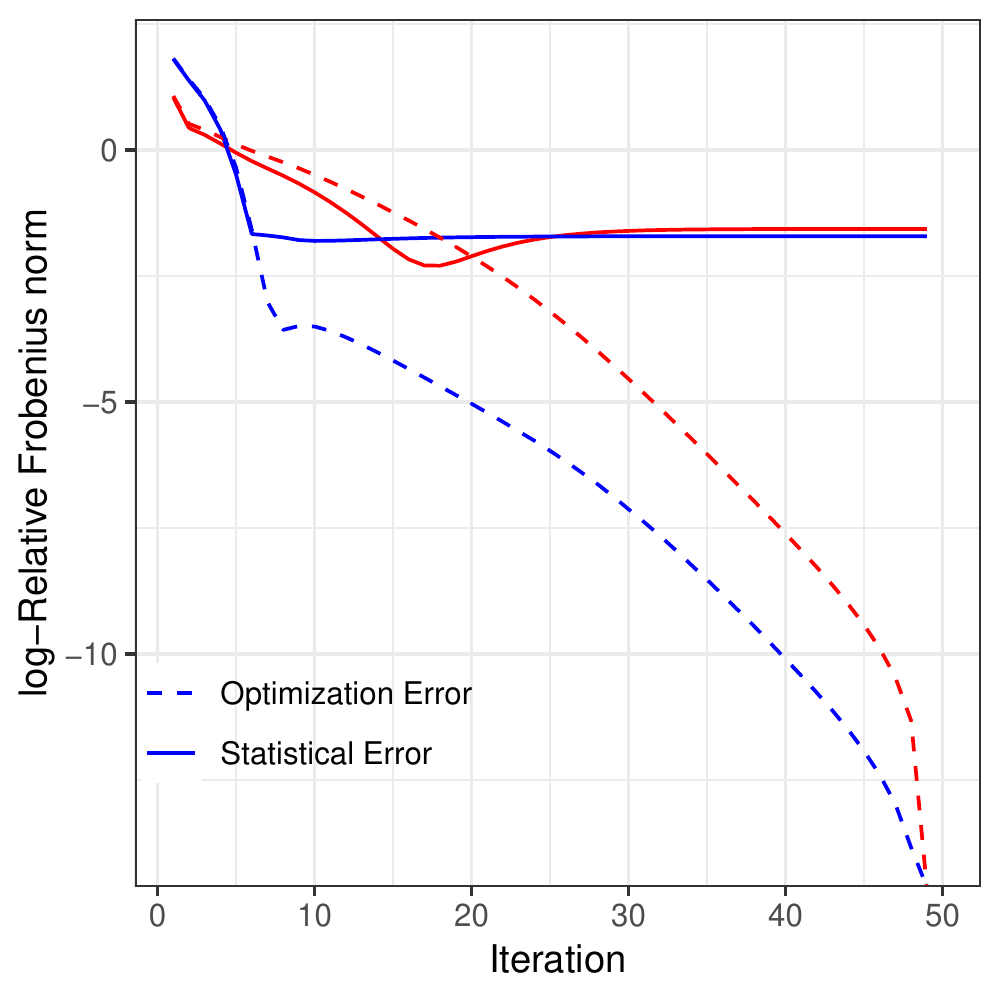}
\caption{SB and ER}
\end{subfigure}
\caption{Performance of the SyGlasso estimator against the number of iterations under different topologies of $\mat{\Psi}_k$'s. The solid line shows the statistical error $\log(\|\hat{\mat{\Psi}}_k^{(t)} - \mat{\Psi}_k\|_F \text{\textbackslash} \|\mat{\Psi}_k\|_F)$, and the dotted line shows the optimization error $\log(\|\hat{\mat{\Psi}}_k^{(t)} - \hat{\mat{\Psi}}_k\|_F \text{\textbackslash} \|\hat{\mat{\Psi}}_k\|_F)$, where $\hat{\mat{\Psi}}_k$ is the final SyGlasso estimator. The performances of $\mat{\Psi}_1$ and $\mat{\Psi}_2$ are represented by red and blue lines, respectively.}


\label{fig:sim_fnorm}
\end{figure}

We test SyGlasso with $K=2$ under: 1) SB with $\rho=0.6$ and sub-blocks of size $16$ and AR1($\rho=0.6$); 2) SB with $\rho=0.6$ and sub-blocks of size $16$ and ER with $256$ randomly selected edges. In both scenarios we set $m_1=128$ and $m_2=256$ with $10$ samples. Figure \ref{fig:sim_fnorm} shows the iterative optimization performance of Algorithm \ref{alg:nodewise_tensor_lasso}. All the plots for the various scenarios exhibit iterative optimization approximation errors that quickly converge to values below the statistical errors. Note that these plots also suggest that our algorithm can attain linear convergence rates. We also test our method for model selection accuracy over a range of penalty parameters (we set $\lambda_k=\lambda,\forall k$). Figure \ref{fig:sim_roc} displays the sum of false positive rate and false negative rate (FPR+FNR), it suggests that the nodewise SyGlasso estimator is able to fully recover the graph structures for each mode of the tensor data. 

\begin{figure}[htb] \centering
\begin{subfigure}[t]{0.45\linewidth} \centering
\includegraphics[width=\linewidth]{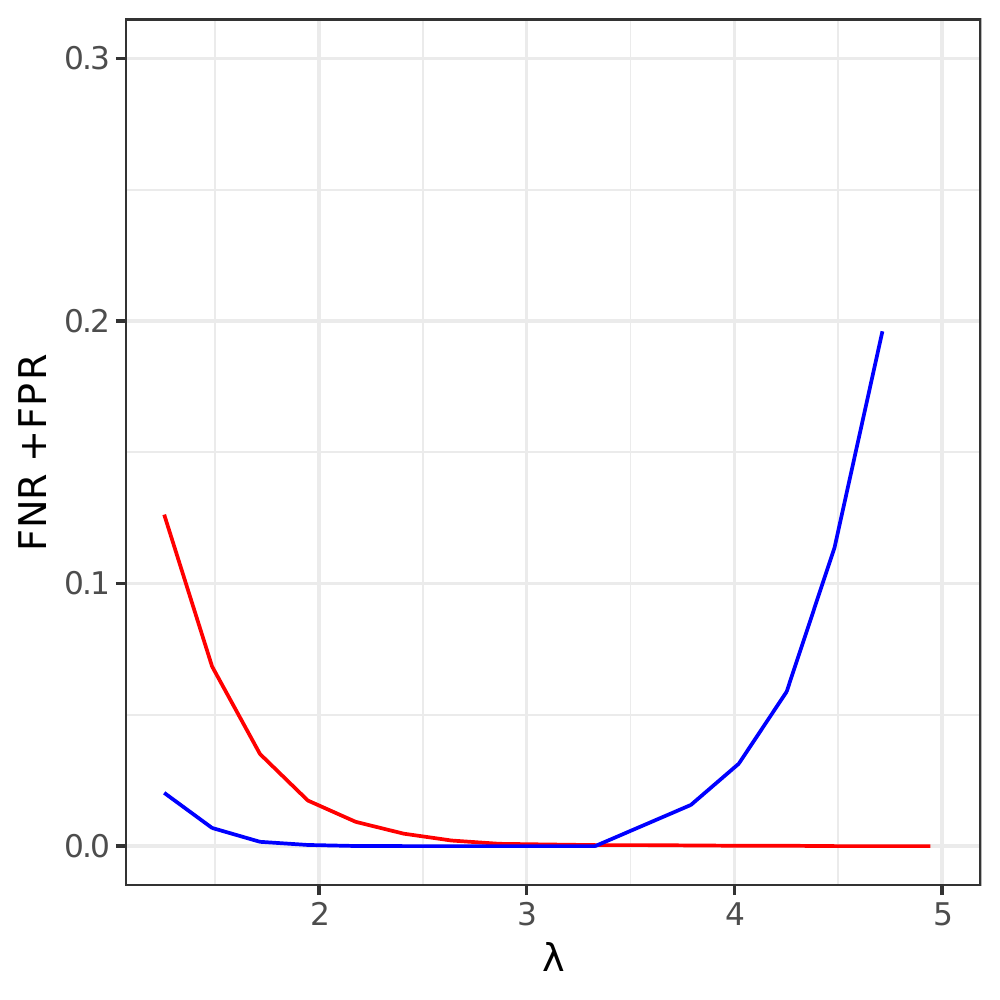}
\caption{SB and AR}
\end{subfigure}
~
\begin{subfigure}[t]{0.45\linewidth} \centering
\includegraphics[width=\linewidth]{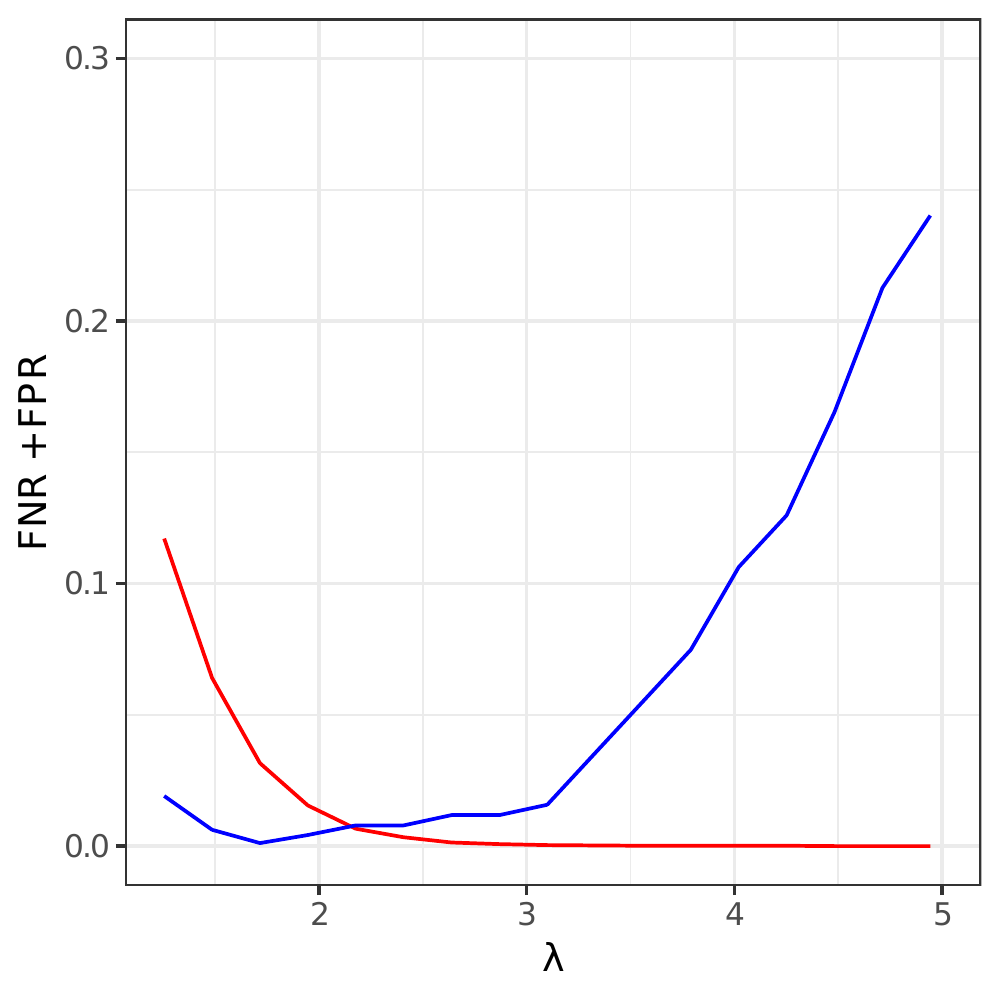}
\caption{SB and ER}
\end{subfigure}
\caption{The performance of model selection measured by FPR + FNR. The performances of $\mat{\Psi}_1$ and $\mat{\Psi}_2$ are represented by red and blue lines, respectively. With an appropriate choice of $\lambda$, the SyGlasso recovers the dependency structures encoded in each $\mat{\Psi}_k$.}
\label{fig:sim_roc}
\end{figure}

We compare the proposed SyGlasso to the TeraLasso estimator \citep{greenewald2019tensor}, and to the Tlasso estimator proposed by \citet{lyu2019tensor} for KP, on data generated using precision matrices $(\mat{\Psi}_1 \oplus \mat{\Psi}_2 \oplus \mat{\Psi}_3)^2$, $\mat{\Psi}_1 \oplus \mat{\Psi}_2 \oplus \mat{\Psi}_3$, and $\mat{\Psi}_1 \otimes \mat{\Psi}_2 \otimes \mat{\Psi}_3$, where $\mat{\Psi}$'s are each $16 \times 16$ ER graphs with $16$ nonzero edges. We use the Matthews correlation coefficient (MCC) to compare model selection performances. The MCC is defined as \citep{matthews1975comparison}
\begin{equation*}
    \text{MCC} = \frac{\text{TP}\times\text{TN}-\text{FP}\times\text{FN}}{\sqrt{(\text{}TP+\text{FP})(\text{TP}+\text{FN})(\text{TN}+\text{FP})(\text{TN}+\text{FN})}},
\end{equation*} where we follow \citet{greenewald2019tensor} to consider each nonzero off-diagonal element of $\mat{\Psi}_k$ as a single edge. 

The results shown in Figure \ref{fig:modelmismatch} indicate that all three estimators perform well when $N=5$, even under model misspecification. In the single sample scenario, the graph recovery performance of each estimator does well under each true underlying data generating process. Note that for data generated using KP, the SyGlasso performs surprisingly well and is comparable to Tlasso. These results seem to indicate that SyGlasso is very robust under model misspecification. The superior performance of SyGlasso under KP model, even with one sample, suggests again that SyGlasso structure has a flavor of both KS and KP structures, as seen in Figure \ref{fig:AR_comparison}. This follows from the observation that  $(\mat{\Psi}_1 \oplus \mat{\Psi}_2)^2 = \mat I_{m_1} \otimes \mat{\Psi}_1^2 + \mat{\Psi}_2^2 \otimes \mat I_{m_2} + 2\mat{\Psi}_1 \otimes \mat{\Psi}_2 = \mat{\Psi}_1^2 \oplus \mat{\Psi}_2^2 + 2\mat{\Psi}_1 \otimes \mat{\Psi}_2$.

\begin{figure}[!htb] \centering
\begin{tabular}{@{}cc@{}}
\qquad \qquad  $N = 1$ & \quad  $N = 5$\\
\rotatebox{90}{\qquad \qquad SyGlasso} \qquad 
\includegraphics[width=0.3\linewidth]{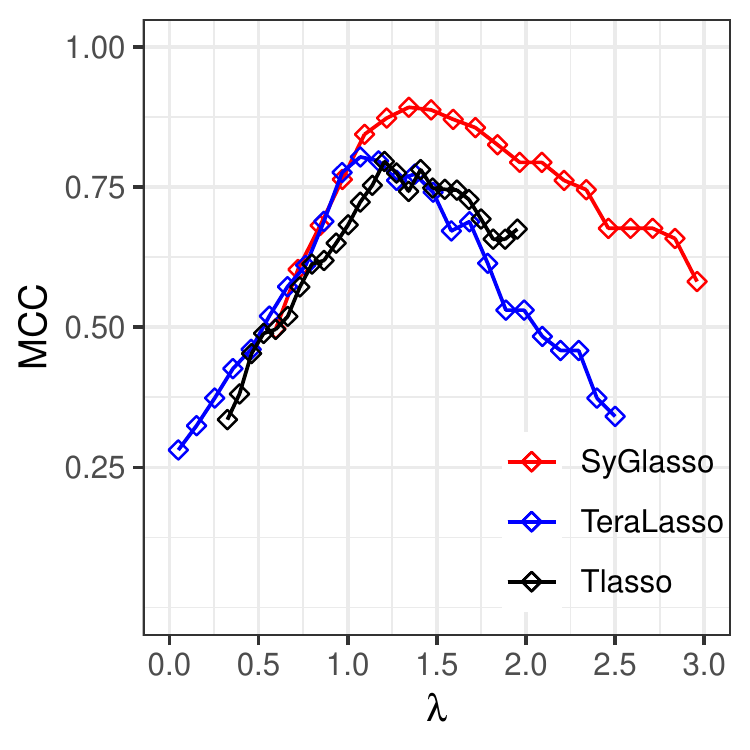}
&
\includegraphics[width=0.3\linewidth]{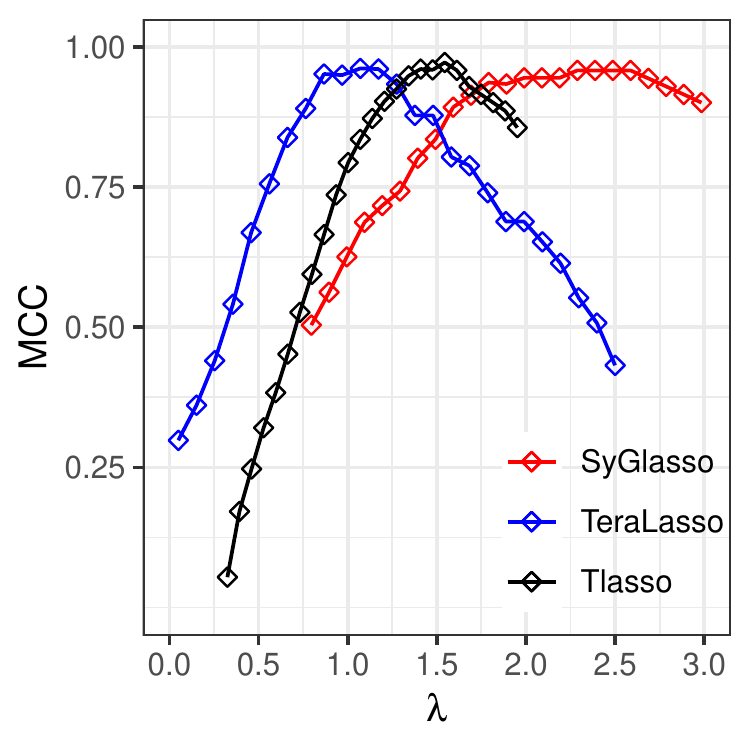}\\
\rotatebox{90}{\qquad \qquad KS} \qquad  
\includegraphics[width=0.3\linewidth]{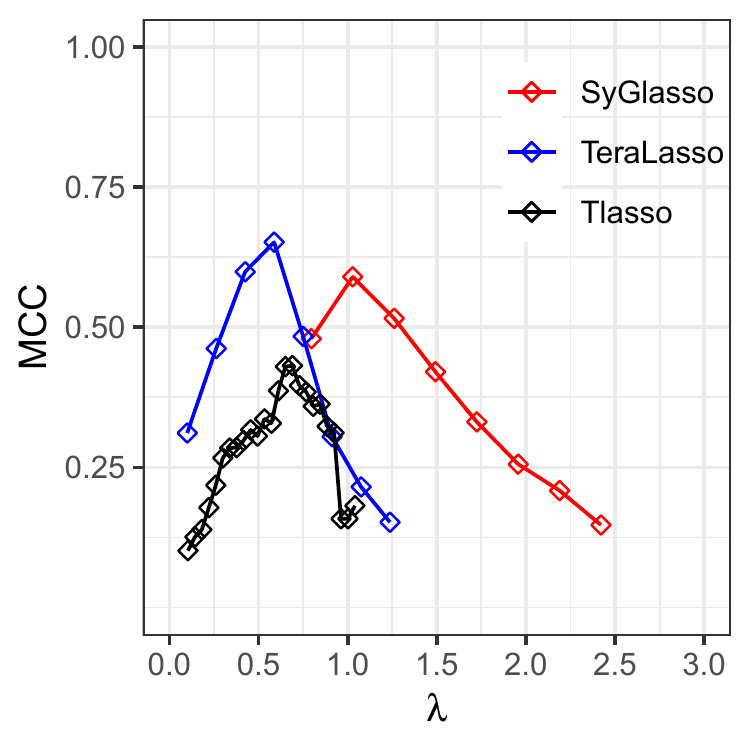}
& 
\includegraphics[width=0.3\linewidth]{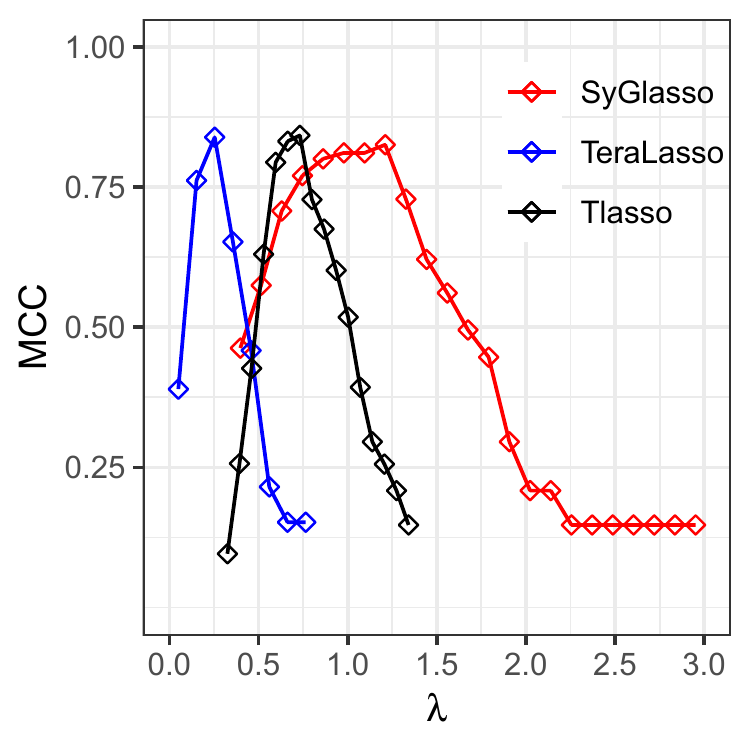}\\
\rotatebox{90}{\qquad \qquad KP} \qquad 
\includegraphics[width=0.3\linewidth]{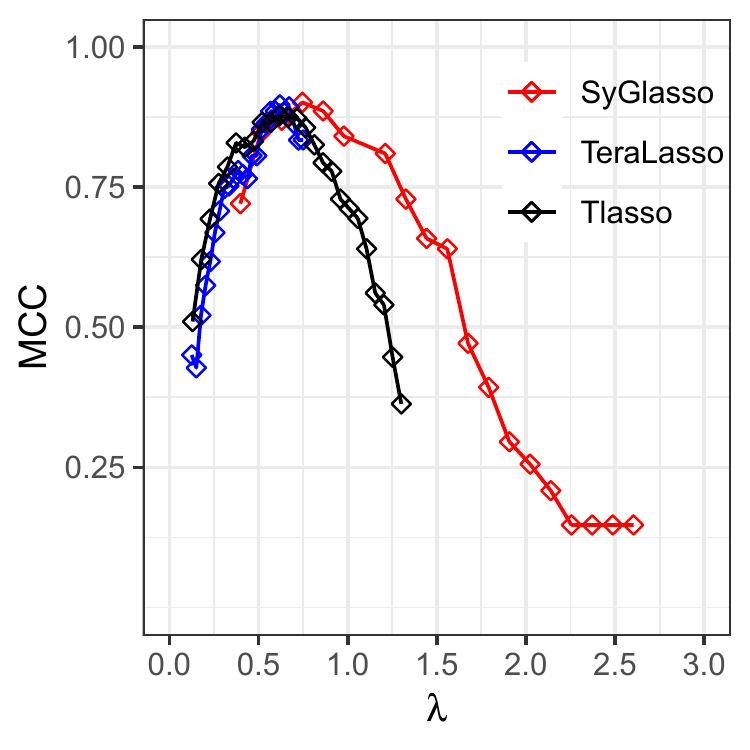}
& 
\includegraphics[width=0.3\linewidth]{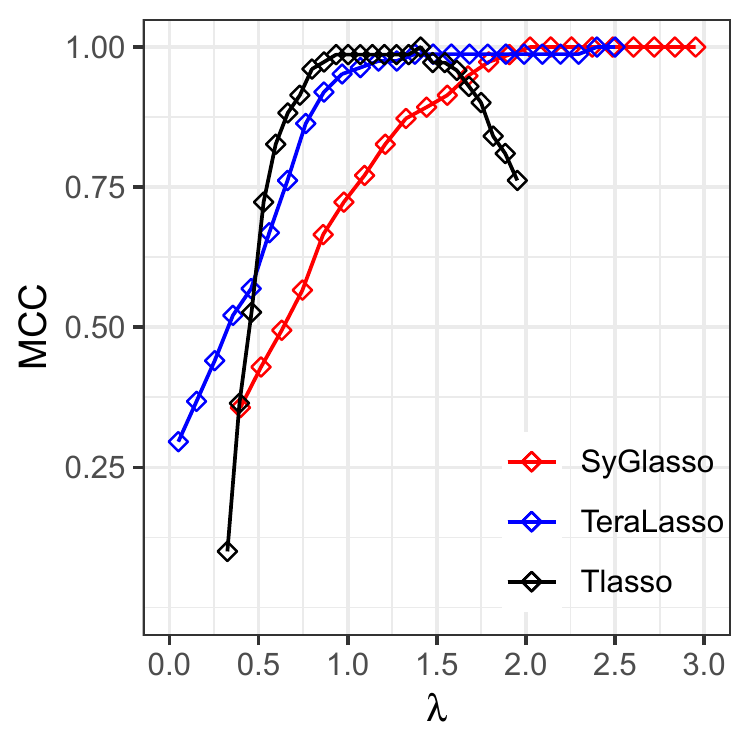}
\end{tabular}
\caption{Performance of SyGlasso, TeraLasso (KS), and Tlasso (KP) measured by MCC under model misspecification. MCC of $1$ represents a perfect recovery of the sparsity pattern in $\mat{\Omega}$, and MCC of $0$ corresponds to random guess. From top to bottom, the synthetic data were generated with the precision matrices from SyGlasso, KS, and KP models. The left column shows the results for a single sample ($N=1$), and the right column shows the results for $N=5$ observations. Note that the SyGlasso has better performance for a single sample (left column) when data is generated from the matched Kronecker model and as good performance for the mismatched Kronecker models.}
\label{fig:modelmismatch}
\end{figure}

\section{EEG Analysis}\label{sec:syglasso-eeg}
We revisit the alcoholism study conducted by \citet{zhang1995event} to explore multiway relationships in EEG measurements of alcoholic and control subjects. Each of 77 alcoholic subjects and 45 control subjects was visually stimulated by either a single picture or a pair of pictures on a computer monitor. Following the analyses of \citet{zhu2016bayesian} and \citet{qiao2019functional}, we focus on the $\alpha$ frequency band (8 - 13 Hz) that is known to be responsible for the inhibitory control of the subjects (see \citet{knyazev2007motivation} for more details). The EEG signals were bandpass filtered with the cosine-tapered window to extract $\alpha$-band signals. Previous Gaussian graphical models applied to such $\alpha$ frequency band filtered EEG data could only estimate the connectivity of the electrodes as they cannot be generalized to tensor valued data. The SyGlasso reveals similar dependency structure as reported in \citet{zhu2016bayesian} and \citet{qiao2019functional} while recovering the chain structure of the temporal relationship.

Specifically, after the band-pass filter was applied, we work with the tensor data $\tensor{X}_{alcoholic}, \tensor{X}_{control} \in \mathbb{R}^{m_{nodes} \times m_{time} \times m_{trial}}$ corresponding to an alcoholic subject and a control subject. We simultaneously estimate $\mat{\Psi}_{node} \in \mathbb{R}^{m_{node} \times m_{node}}$ that encodes the dependency structure among electrodes and $\mat{\Psi}_{time} \in \mathbb{R}^{m_{time}\times m_{time}}$ that shows the relationship among time points that span the duration of each trial. Previous studies consider the average of all trials, for each subject and use the number of subjects as observations to estimate the dependency structures among $64$ electrodes. Instead, we look at one subject at a time and consider different experimental trials as observations. Our analysis focuses on recovering the precision matrices of electrodes and time points, but it can be easily generalized to estimate the dependency structure among trials as well.
\begin{figure}[!thb] \centering
\begin{subfigure}[t]{0.5\linewidth} \centering
\includegraphics[width=\linewidth]{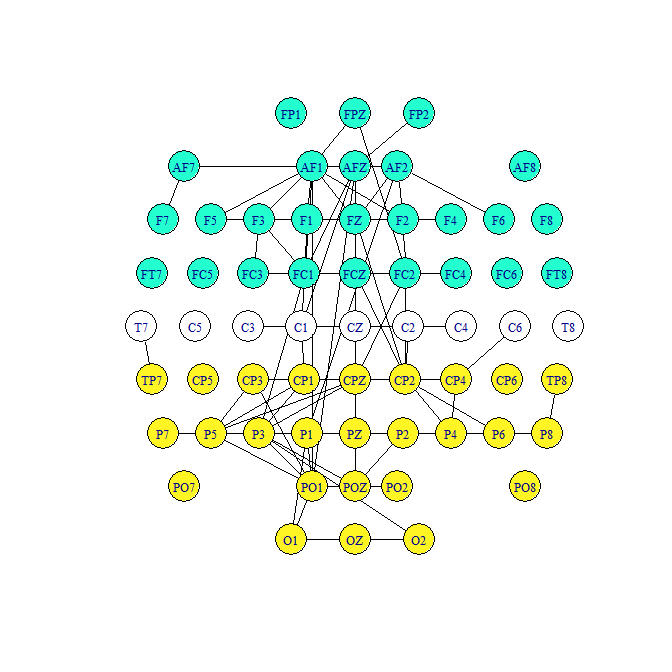}
\caption{Alcoholic subject}
\end{subfigure}
\hspace{-20pt}
\begin{subfigure}[t]{0.5\linewidth} \centering
\includegraphics[width=\linewidth]{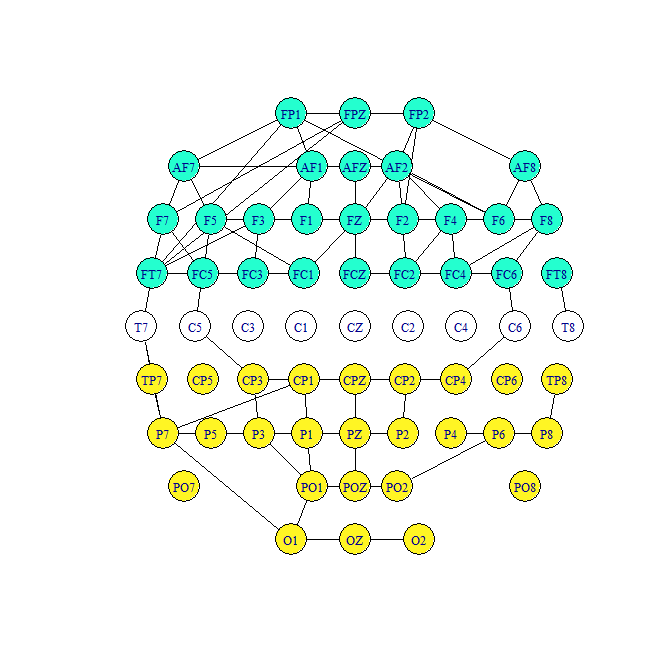}
\caption{Control subject}
\end{subfigure}
\caption{Estimated brain connectivity results from SyGlasso for (a) the alcoholic subject and (b) the control subject. The blue nodes correspond to the frontal region, and the yellow nodes correspond to the parietal and occipital regions. The alcoholic subject has asymmetric brain connections in the frontal region compared to the control subject.}
\label{fig:alcoholism_node_result}
\end{figure}

Figure \ref{fig:alcoholism_node_result} shows the result of the SyGlasso estimated network of electrodes. For comparison, both graphs were thresholded to match 5\% sparsity level. Similar to the findings of \citet{qiao2019functional}, our estimated graph $\bm\Psi_{node}$ for the alcoholic group shows the asymmetry between the left and the right side of the brain compared to the more balanced control group. Our finding is also consistent with the result in \citet{hayden2006patterns} and \citet{zhu2016bayesian} that showed frontal asymmetry of the alcoholic subjects. 

\begin{figure}[!htb] \centering
\begin{subfigure}[t]{0.45\linewidth} \centering
\includegraphics[width=\linewidth]{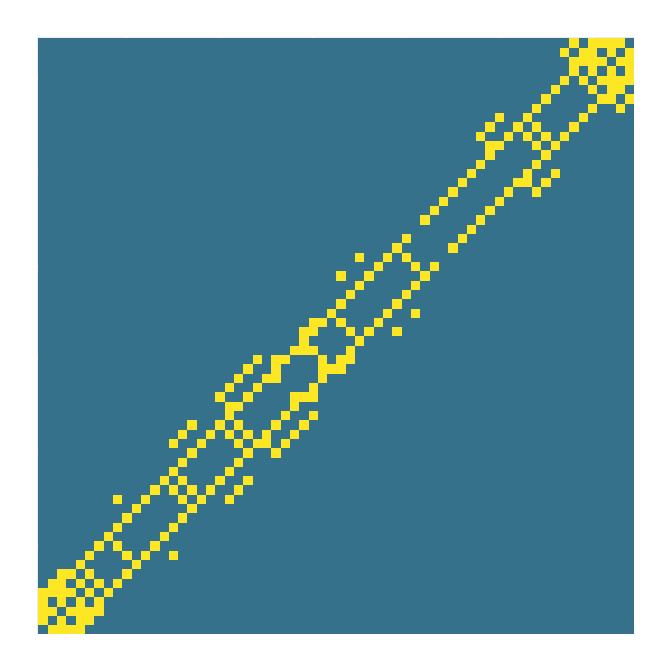}
\caption{Alcoholic subject}
\end{subfigure}
\begin{subfigure}[t]{0.45\linewidth} \centering
\includegraphics[width=\linewidth]{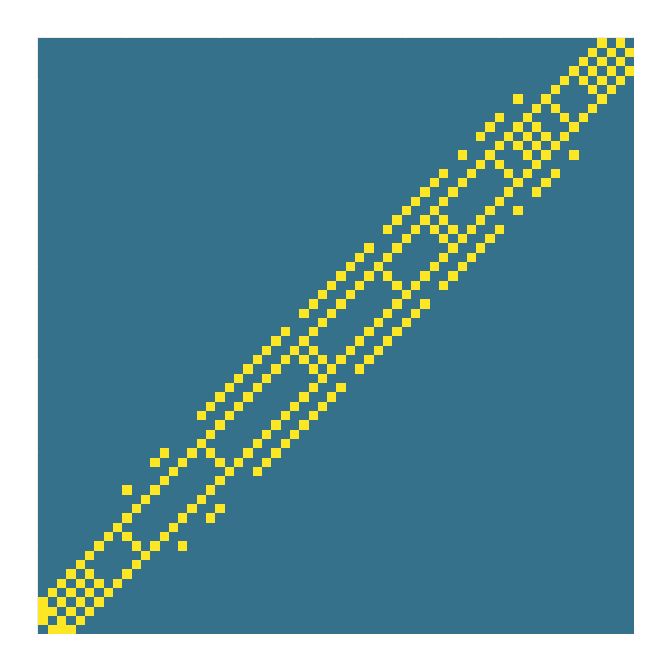}
\caption{Control subject}
\end{subfigure}
\caption{Support (off-diagonals) of SyGlasso-estimated temporal Sylvester factor $\hat{\mat{\Psi}}_{time}$ of the precision matrix for (a) the alcoholic subject and (b) the control subject. Both subjects exhibit banded conditional dependency structures over time.}
\label{fig:alcoholism_time_result}
\end{figure}
While previous analyses on this EEG data using graphical models only focused on the precision matrix of the electrodes, here we exhibit in Figure \ref{fig:alcoholism_time_result} the second precision matrix that encodes temporal dependency. Overall both subjects exhibit banded dependency structures over time, since adjacent timepoints are conditionally dependent. However, note that the conditional dependency structure of the timepoints for the alcoholic subject appears to be more chaotic.

\section{Conclusion}\label{sec:syglasso-conclusion}
This chapter proposed a Sylvester-structured graphical model and an inference algorithm, the SyGlasso, that can be applied to tensor-valued data. The current frameworks available for researchers are limited to Kronecker product and Kronecker sum models on either the covariance or the precision matrix. Our model is motivated by a generative stochastic representation based on the Sylvester equation. We showed that the resulting precision matrix corresponds to the squared Kronecker sum of the precision matrices $\mat{\Psi}_k$ along each mode. The individual components $\mat{\Psi}_k$'s are estimated by the nodewise regression based approach. 

There are several promising future directions. First is to relax the assumption that the diagonals of the factors are fixed - an assumption that is standard among the Kronecker structured models for theoretical analysis. Practically, SyGlasso is able to recover the off-diagonals of the individual $\bm\Psi_k$ and the diagonal of $\bm\Omega$, which only requires to estimating $\bigoplus_{k=1}^K \text{diag}(\bm{\Psi}_k)$ instead of all diagonal entries $\text{diag}(\bm{\Psi_k})$ for all $k$. Secondly, in terms of the statistical properties, our theoretical results guarantee sparsistency of the individual graphs with a slower convergence rate than that is proposed in \citet{greenewald2019tensor}, while empirical evidence suggests that a faster rate can be achieved. Improvement of this statistical convergence rate analysis will be worthwhile. Also, our results do not guarantee statistical convergence of individual $\mat\Psi_k$'s nor $\bm\Omega$ with respect to the operator norm. Similar to the solution proposed in \citet{zhou2011high}, we plan to adopt a two-step procedure using SyGlasso for variable selection followed by refitting the precision matrix $\mat{\Omega}$ using maximum likelihood estimation with edge constraint.

\chapter{A Proximal Alternating Linearized Minimization Method for Tensor Graphical Models}
\label{ch:sgpalm}
In this chapter, we extend the Sylvester graphical model introduced in Chapter~\ref{ch:syglasso} to incorporate a new inference procedure, called SG-PALM, for learning conditional dependency structure of high-dimensional tensor-variate data. Unlike the SyGlasso, the new method is computationally scalable to ultra-high dimension. Scalability of SG-PALM follows from the fast proximal alternating linearized minimization (PALM) procedure that SG-PALM uses during training. We establish that SG-PALM converges linearly (i.e., geometric convergence rate) to a global optimum of its objective function. We demonstrate the scalability and accuracy of SG-PALM for an important but challenging climate prediction problem: spatio-temporal forecasting of solar flares from multimodal imaging data.

\section{Introduction}\label{sec:sgpalm-intro}

A common challenge for structured tensor graphical models is the efficient estimation of the underlying (conditional) dependency structures. KP-structured models are generally estimated via extension of GLasso~\citep{friedman2008sparse} that iteratively minimize the $\ell_1$-penalized negative likelihood function for the matrix-normal data with KP covariance. This procedure was shown to converge to some local optimum of the penalized likelihood function~\citep{yin2012model,tsiligkaridis2013convergence}. Similarly, \citet{kalaitzis2013bigraphical} further extended GLasso to the KS-structured case for $2$-way tensor data. \citet{greenewald2019tensor} extended this to multiway tensors, exploiting the linearity of the space of KS-structured matrices and developing a projected proximal gradient algorithm for KS-structured inverse covariance matrix estimation, which achieves linear convergence (i.e., geometric convergence rate) to the global optimum. In Chapter~\ref{ch:syglasso}, the Sylvester-structured graphical model is estimated via a nodewise regression approach inspired by algorithms for estimating a class of vector-variate graphical models~\citep{meinshausen2006high,khare2015convex}. However, no theoretical convergence result for the algorithm was established nor did they study the computational efficiency of the algorithm.

In the modern era of big data, both computational and statistical learning accuracy are required of algorithms. Furthermore, when the objective is to learn representations for physical processes, interpretablility is crucial. In this chapter, we bridge this ``Statistical-to-Computational-to-Interpretable gap'' for Sylvester graphical models. We develop a simple yet powerful first-order optimization method, based on the Proximal Alternating Linearized Minimization (PALM) algorithm, for recovering the conditional dependency structure of such models. Moreover, we provide the link between the Sylvester graphical models and physical processes obeying differential equations and illustrate the link with a real-data example. The following are our principal contributions:
\begin{enumerate}
    \item A fast algorithm that efficiently recovers the generating factors of a representation for high-dimensional multiway data, significantly improving on the SyGlasso algorithm described in Chapter~\ref{ch:syglasso}.
    \item A comprehensive convergence analysis showing linear convergence of the objective function to its global optimum and providing insights for choices of hyperparameters.
    \item A novel application of the algorithm to an important multi-modal solar flare prediction problem from solar magnetic field sequences. For such problems, SG-PALM is physically interpretable in terms of the Poisson differential equation for solar magnetic induction fields proposed by heliophysicists.
\end{enumerate}

\section{Background and Notation}\label{sec:sgpalm-background}



\subsection{Notations}
In this chapter, scalar, vector and matrix quantities are denoted by lowercase letters, boldface lowercase letters and boldface capital letters, respectively. For a matrix $\mat{A} = (\mat{A}_{i,j}) \in \mathbb{R}^{d \times d}$, we denote $\|\mat{A}\|_2, \|\mat{A}\|_F$ as its spectral and Frobenius norm, respectively. We define $\|\mat{A}\|_{1,\text{off}} := \sum_{i \neq j} |\mat{A}_{i,j}|$ as its off-diagonal $\ell_1$ norm. For tensor algebra, we adopt the notations used by \citet{kolda2009tensor}. A $K$-th order tensor is denoted by boldface Euler script letters, e.g, $\tensor{X} \in \bbR^{d_1 \times \dots \times d_K}$. The $(i_1,\dots, i_K)$-th element of $\tensor{X}$ is denoted by $\tensor{X}_{i_1,\dots, i_K}$, and the vectorization of $\tensor{X}$ is the $d$-dimensional vector $\vecto(\tensor{X}) := (\tensor{X}_{1,1,\dots,1},\tensor{X}_{2,1,\dots,1},\dots,\tensor{X}_{d_1,1,\dots,1},\dots,\tensor{X}_{d_1,d_2,\dots,d_k})^T$ with $d=\prod_{k=1}^K d_k$. A fiber is the higher order analogue of the row and column of matrices. It is obtained by fixing all but one of the indices of the tensor. Matricization, also known as unfolding, is the process of transforming a tensor into a matrix. The mode-$k$ matricization of a tensor $\tensor{X}$, denoted by $\tensor{X}_{(k)}$, arranges the mode-$k$ fibers to be the columns of the resulting matrix. The $k$-mode product of a tensor $\tensor{X} \in \bbR^{d_1 \times \dots \times d_K}$ and a matrix $\mat{A} \in \bbR^{J \times d_k}$, denoted as $\tensor{X} \times_k \mat{A}$, is of size $d_1 \times \dots \times d_{k-1} \times J \times d_{k+1} \times \dots d_K$. Its entry is defined as $(\tensor{X} \times_k \mat{A})_{i_1,\dots,i_{k-1},j,i_{k+1},\dots,i_K} := \sum_{i_k=1}^{d_k} \tensor{X}_{i_1,\dots,i_K} A_{j,i_k}$. For a list of matrices $\{\mat{A}_k\}_{k=1}^K$ with $\mat{A}_k \in \bbR^{d_k \times d_k}$, we define $\tensor{X} \times \{\mat{A}_1,\dots,\mat{A}_K\} := \tensor{X} \times_1 \mat{A}_1 \times_2 \dots \times_K \mat{A}_K$. Lastly, we define the $K$-way Kronecker product as $\bigotimes_{k=1}^K \mat{A}_k = \mat{A}_1 \otimes \cdots \otimes \mat{A}_K$, and the equivalent notation for the Kronecker sum as $\bigoplus_{k=1}^K \mat{A}_k = \mat{A}_1 \oplus \dots \oplus \mat{A}_K = \sum_{k=1}^K \mat I_{[d_{k+1:K}]} \otimes \mat{A}_k \otimes \mat I_{[d_{1:k-1}]}$, where $\mat I_{[d_{k:\ell}]} = \mat I_{d_k} \otimes \dots \otimes \mat I_{d_\ell}$. For the case of $K=2$, $\mat{A}_1 \oplus \mat{A}_2 = \mat{I}_{d_2} \otimes \mat{A}_1 + \mat{A}_2 \otimes \mat{I}_{d_1}$.

\subsection{Tensor Gaussian graphical models}
A random tensor $\tensor{X} \in \bbR^{d_1 \times \dots \times d_K}$ follows the tensor normal distribution with zero mean when $\vecto(\tensor{X})$ follows a normal distribution with mean $\mat{0} \in \bbR^d$ and precision matrix $\bm\Omega := \bm\Omega(\bm\Psi_1,\dots,\bm\Psi_K)$, where $d=\prod_{k=1}^K d_k$. Here, $\bm\Omega(\bm\Psi_1,\dots,\bm\Psi_K)$ is parameterized by $\bm\Psi_k \in \bbR^{d_k \times d_k}$ via either Kronecker product, Kronecker sum, or the Sylvester structure, and the corresponding negative log-likelihood function (assuming $N$ independent observations $\tensor{X}^i, i=1,\dots,N$)
\begin{equation}\label{eqn:gaussiannegloglik}
     -\frac{N}{2} \log|\bm\Omega| + \frac{N}{2}\tr(\mat{S}\bm\Omega),
\end{equation}
where $\bm\Omega = \bigotimes_{k=1}^K \bm\Psi_k$, $\bigoplus_{k=1}^K \bm\Psi_k$, or $\Big(\bigoplus_{k=1}^K \bm\Psi_k\Big)^2$ for KP, KS, and Sylvester models, respectively; and $\mat{S} = \frac{1}{N}\sum_{i=1}^N \vecto(\tensor{X}^i) \vecto(\tensor{X}^i)^T$. For $K=1$, this formulation reduces to the vector normal distribution with zero mean and precision matrix $\bm\Psi_1$.

To encourage sparsity in the high-dimensional scenario, penalized negative log-likelihood function is proposed
\begin{equation*}
     -\frac{N}{2} \log|\bm\Omega| + \frac{N}{2}\tr(\mat{S}\bm\Omega) + \sum_{k=1}^K P_{\lambda_k}(\bm\Psi_k),
\end{equation*}
where $P_{\lambda_k}(\cdot)$ is a penalty function indexed by the tuning parameter $\lambda_k$ and is applied elementwise to the off-diagonal elements of $\bm\Psi_k$. Popular choices for $P_{\lambda_k}(\cdot)$ include the lasso penalty~\citep{tibshirani1996regression}, the adaptive lasso penalty~\citep{zou2006adaptive}, the SCAD penalty~\citep{fan2001variable}, and the MCP penalty~\citep{zhang2010nearly}. 

\subsection{The Sylvester generating equation}
The Sylvester graphical model uses the Sylvester tensor equation to define a generative process for the underlying multivariate tensor data. The Sylvester tensor equation has been studied in the context of finite-difference discretization of high-dimensional elliptical partial differential equations~\citep{grasedyck2004existence,kressner2010krylov}. Any solution $\tensor{X}$ to such a PDE  must have the (discretized) form:
\begin{equation}\label{eqn:sylvester}
    \begin{aligned}
        \sum_{k=1}^K \tensor{X} \times_k \bm\Psi_k = \tensor{T} &\Longleftrightarrow  \Big(\bigoplus_{k=1}^K \bm\Psi_k \Big) \vecto(\tensor{X}) = \vecto(\tensor{T}).
    \end{aligned}
\end{equation} 
where $\tensor{T}$ is the driving source on the domain, and $\bigoplus_{k=1}^K \bm\Psi_k$ is a Kronecker sum of $\bm\Psi_k$'s representing the discretized differential operators for the PDE, e.g., Laplacian, Euler-Lagrange operators, and associated coefficients. These operators are often sparse and structured. 

For example, consider a physical process characterized as a function $u$ that satisfies:
\begin{equation*}
    \mathcal{D}u = f \quad \text{in} \quad \Omega, \quad u(\Gamma)=0, \quad \Gamma = \partial \Omega.
\end{equation*}
where $f$ is a driving process, e.g., a Wiener process (white Gaussian noise); $\mathcal{D}$ is a differential operator, e.g, Laplacian, Euler-Lagrange; $\Omega$ is the domain; and $\Gamma$ is the boundary of $\Omega$. After discretization, this is equivalent to (ignoring discretization error) the matrix equation
\begin{equation*}
    \mat{D}\mat{u} = \mat{f}.
\end{equation*}
Here, $\mat{D}$ is a sparse matrix since $\mathcal{D}$ is an infinitesimal operator. Additionally, $\mat{D}$ admits Kronecker structure as a mixture of Kronecker sums and Kronecker products.

The matrix $\mat{D}$ reduces to a Kronecker sum when $\mathcal{D}$ involves no mixed derivatives. For instance, consider the Poisson's equation in 2D, where $u(x,y)$ on $[0,1]^2$ satisfies the elliptical PDE
\begin{equation*}
    \mathcal{D}u = (\partial^2_x + \partial^2_y)u = f.
\end{equation*}
The Poisson equation governs many physical processes, e.g., electromagnetic induction, heat transfer, convection, etc. A simple Euler discretization yields $\mat{U} = (u(i,j))_{i,j}$, where $u(i,j)$ satisfies the local equation (up to a constant discretization scale factor)
\begin{equation*}
\begin{aligned}
    2 u(i,j) &= u(i+1,j) + u(i-1,j) + u(i,j+1)  \\
    & \quad + u(i,j-1) - 4f(i,j).  
\end{aligned}
\end{equation*}
Defining $\mat{u}=\text{vec}(\mat{U})$ and $\mat{A}$ (a tridiagonal matrix)
\begin{equation*}
\mat{A} = 
    \begin{bmatrix}
    -1 & 2 & -1 & & & \\
      & \ddots & \ddots & \ddots \\
      &  & \ddots & \ddots & \ddots \\
      &  &  & -1 & 2 & -1
    \end{bmatrix},
\end{equation*}
then $(\mat{A} \oplus \mat{A})\mat{u} = \mat{f}$, which is the Sylvester equation ($K=2$).

For the Poisson example, if the source $\mat{f}$ is a white noise random variable, i.e., its covariance matrix is proportional to the identity matrix, then the inverse covariance matrix of $\mat{u}$ has sparse square-root factors, since $\text{Cov}^{-1}(\mat{u})=(\mat{A} \oplus \mat{A})(\mat{A} \oplus \mat{A})^T$. Other physical processes that are generated from differential equations will also have sparse inverse covariance matrices, as a result of the sparsity of general discretized differential operators. Note that similar connections between continuous state physical processes and sparse ``discretized'' statistical models have been established by \citet{lindgren2011explicit}, who elucidated a link between Gaussian fields and Gaussian Markov Random Fields via stochastic partial differential equations. 

The Sylvester generative (SG) model~\eqref{eqn:sylvester} leads to a tensor-valued random variable $\tensor{X}$ with a precision matrix $\bm\Omega=\Big(\bigoplus_{k=1}^K \bm\Psi_k\Big)^2$, given that $\tensor{T}$ is white Gaussian. The Sylvester generating factors $\bm\Psi_k$'s can be obtained via minimization of the penalized negative log-pseudolikelihood
\begin{equation}
  \label{eqn:objective}
  \begin{aligned}
    \mathcal{L}_{\bm\lambda}(\bm\Psi)
    = & -\frac{N}{2} \log | (\bigoplus_{k=1}^K \diag(\bm\Psi_k))^2| \\
    & + \frac{N}{2} \tr(\mat{S} \cdot (\bigoplus_{k=1}^K \bm\Psi_k)^2) + \sum_{k=1}^K \lambda_k \|\bm\Psi_k\|_{1, \text{off}}.
  \end{aligned}
\end{equation}
This differs from the penalized Gaussian negative log-likelihood in the exclusion of off-diagonals of $\bm\Psi_k$'s in the log-determinant term. \eqref{eqn:objective} is motivated and derived directly using the Sylvester equation defined in~\eqref{eqn:sylvester}, from the perspective of solving a sparse linear system. This maximum pseudolikelihood estimation procedure has been applied to vector-variate Gaussian graphical models (see \citet{khare2015convex} and references therein for discussions). It is known that inference using pseudo-likelihood is consistent and enjoys the same $\sqrt{N}$ convergence rate as the MLE in general~\citep{varin2011overview}. This procedure can also be more robust to model misspecification. Detailed derivations are provided in Appendix~\ref{supp:pseudolik}.

\section{The SG-PALM Method}\label{sec:sgpalm-method}

Estimation of the generating parameters $\bm\Psi_k$'s of the SG model is challenging since the sparsity penalties are applied to the square root factors of the precision matrix and the likelihood function involves a mix of Kronecker sums and Kronecker products of matrix-valued parameters. The previously proposed estimation procedure called SyGlasso (see Chapter~\ref{ch:syglasso}), recovers only the off-diagonal elements of each Sylvester factor. This is a deficiency in many applications where the factor-wise variances are desired. Moreover, the convergence rate of the cyclic coordinate-wise algorithm used in SyGlasso is unknown and the computational complexity of the algorithm is higher than other sparse Glasso-type procedures. To overcome these deficiencies, we propose a proximal alternating linearized minimization method, called SG-PALM, for finding the minimizer of \eqref{eqn:objective}. SG-PALM is designed to exploit structures of the coupled objective function and yields simultaneous estimates for both off-diagonal and diagonal entries.

The PALM algorithm was originally proposed to solve nonconvex optimization problems with separable structures, such as those arising in nonnegative matrix factorization~\citep{xu2013block,bolte2014proximal}. Its efficacy in solving convex problems has also been established, for example, in regularized linear regression problems~\citep{shefi2016rate}, it was proposed as an attractive alternative to iterative soft-thresholding algorithms (ISTA). For simplicity, we consider the $\ell_1$-regularized case~\eqref{eqn:objective}, and the general, possibly non-convex, case is described in the supplement. The SG-PALM procedure is summarized in Algorithm~\ref{alg:sg-palm}.

For clarity of notation we write
\begin{equation}\label{eqn:decomp_obj}
    \mathcal{L}_{\bm\lambda}(\bm\Psi_1,\dots,\bm\Psi_K) = H(\bm\Psi_1,\dots,\bm\Psi_K) + \sum_{k=1}^K G_k(\bm\Psi_k),
\end{equation}
where $H: \mathbb{R}^{d_1 \times d_1} \times \cdots \times \mathbb{R}^{d_K \times d_K} \rightarrow \mathbb{R}$ represents the log-determinant plus trace terms in \eqref{eqn:objective} and $G_k: \mathbb{R}^{d_k \times d_k} \rightarrow (-\infty,+\infty]$ represents the penalty term in \eqref{eqn:objective} for each axis $k=1,\dots,K$. For notational simplicity we use $\bm\Psi$ (i.e., omitting the subscript) to denote the set $\{\bm\Psi_k\}_{k=1}^K$ or the $K$-tuple $(\bm\Psi_1,\dots,\bm\Psi_K)$ whenever there is no risk of confusion. The gradient of the smooth function $H$ with respect to $\bm\Psi_k$, $\nabla_k H(\bm\Psi)$, is given by
\begin{equation}\label{eqn:block-grad}
\begin{aligned}
    & \diag\Big(\Big\{\tr[(\diag((\bm\Psi_k)_{ii}) + \bigoplus_{j \neq k}\diag(\bm\Psi_j))^{-1}] \Big\}_{i=1}^{d_k} \Big) \\
    & \quad + \mat{S}_k\bm\Psi_k + \bm\Psi_k\mat{S}_k + 2\sum_{j \neq k}\mat{S}_{j,k}.
\end{aligned}
\end{equation}
Here, the first ``$\diag$'' maps a $d_k$-vector to a $d_k \times d_k$ diagonal matrix, the second one maps a scalar (i.e., $(\bm\Psi_k)_{ii}$) to a $(\prod_{j \neq k}d_j) \times (\prod_{j \neq k}d_j)$ diagonal matrix with the same elements, and the third operator maps a symmetric matrix to a matrix containing only its diagonal elements. In addition, we define:
\begin{equation}
    \begin{aligned}
        & \mat{S}_k = \frac{1}{N}\sum_{i=1}^N \tensor{X}_{(k)}^i(\tensor{X}_{(k)}^i)^T, \\
        & \mat{S}_{j,k} = \frac{1}{N}\sum_{i=1}^N \mat{V}_{j,k}^i(\mat{V}_{j,k}^i)^T, \\
        & \mat{V}_{j,k}^i = \tensor{X}_{(k)}^i\Big(\mat{I}_{d_{1:j-1}} \otimes \bm\Psi_j \otimes \mat{I}_{d_{j:K}}\Big)^T, \quad j \neq k.
    \end{aligned}
\end{equation}
A key ingredient of the PALM algorithm is a proximal operator associated with the non-smooth part of the objective, i.e., $G_k$'s. In general, the proximal operator of a proper, lower semi-continuous convex function $f$ from a Hilbert space $\mathcal{H}$ to the extended reals $(-\infty,+\infty]$ is defined by~\citep{parikh2014proximal}
\begin{equation*}
    \text{prox}_f(v) = \argmin_{x \in \mathcal{H}} f(x) + \frac{1}{2}\|x-v\|^2_2
\end{equation*}
for any $v \in \mathcal{H}$. The proximal operator well-defined as the expression on the right-hand side above has a unique minimizer for any function in this class. For $\ell_1$-regularized case, the proximal operator for the function $G_k$ is given by
\begin{equation}
     \text{prox}_{G_k}^{\lambda_k}(\bm\Psi_k) = \diag(\bm\Psi_k) + \text{soft}(\bm\Psi_k-\diag(\bm\Psi_k), \lambda_k),
\end{equation}
where the soft-thresholding operator $\text{soft}_{\lambda}(x) = \text{sign}(x)\max(|x|-\lambda,0)$ has been applied element-wise.

\begin{algorithm}[!tbh]
\begin{minipage}{0.9\linewidth}
\begin{algorithmic}
\caption{SG-PALM}\label{alg:sg-palm}
\REQUIRE Data tensor $\tensor{X}$, mode-$k$ Gram matrix $\mat{S}_k$, regularizing parameter $\lambda_k$, backtracking constant $c \in (0,1)$, initial step size $\eta_0$, initial iterate $\bm\Psi_k$ for each $k=1,\dots,K$.
\WHILE{not converged}
    \FOR{$k=1,\dots,K$}
        \STATE \textit{Line search:} 
        
        Let $\eta^t_k$ be the largest element of $\{c^j \eta_{k,0}^t\}_{j=1,\dots}$ such that condition~\eqref{eqn:linesearch-cond} is satisfied.
        \STATE \textit{Update:} 
        
        $\bm\Psi_k^{t+1} \leftarrow \text{prox}^{\eta^t_k\lambda_k}_{G_k}\Big(\bm\Psi_k^t - \eta^t_k \nabla_k H(\bm\Psi_{i < k}^{t+1},\bm\Psi_{i \geq k}^t)\Big)$.
    \ENDFOR
    \STATE \textit{Update initial step size:} Compute Barzilai-Borwein step size $\eta_0^{t+1}=\min_k \eta^{t+1}_{k,0}$, where $\eta^{t+1}_{k,0}$ is computed via~\eqref{eqn:bb-step}.
\ENDWHILE
\ENSURE Final iterates $\{\bm\Psi_k\}_{k=1}^K$.
\end{algorithmic}
\end{minipage}
\end{algorithm}

\subsection{Choice of step size}
In the absence of a good estimate of the blockwise Lipchitz constant, the step size of each iteration of SG-PALM is chosen using backtracking line search, which, at iteration $t$, starts with an initial step size $\eta_0^t$ and reduces the size with a constant factor $c \in (0,1)$ until the new iterate satisfies the sufficient descent condition:
\begin{equation}\label{eqn:linesearch-cond}
    H(\bm\Psi_{i \leq k}^{t+1},\bm\Psi_{i > k}^t) \leq Q_{\eta^t}(\bm\Psi_{i \leq k}^{t+1},\bm\Psi_{i > k}^t;\bm\Psi_{i < k}^{t+1},\bm\Psi_{i \geq k}^t).
\end{equation}
Here, 
\begin{equation*}
\begin{aligned}
    & Q_{\eta}(\bm\Psi_{i < k},\bm\Psi_k,\bm\Psi_{i > k};\bm\Psi_{i < k},\bm\Psi_k',\bm\Psi_{i > k}) \\
    &= H(\bm\Psi_{i < k},\bm\Psi_k,\bm\Psi_{i > k}) \\ 
    &+ \tr\Big((\bm\Psi_k'-\bm\Psi_k)^T \nabla_k H(\bm\Psi_{i < k},\bm\Psi_k,\bm\Psi_{i > k})\Big) \\
    &+ \frac{1}{2\eta}\|\bm\Psi_k'-\bm\Psi_k\|_F^2.
\end{aligned}
\end{equation*}
The sufficient descent condition is satisfied with any $\frac{1}{\eta}=M_k$ and $M_k \geq L_k$, for any function that has a block-wise Lipschitz gradient with constant $L_k$ for $k=1,\dots,K$. In other words, so long as the function $H$ has block-wise gradient that is Lipschitz continuous with some block Lipschitz constant $L_k>0$ for each $k$, then at each iteration $t$, we can always find an $\eta^t$ such that the inequality in \eqref{eqn:linesearch-cond} is satisfied. Indeed, we proved in Lemma~\ref{lemma:lip} in the Appendix that $H$ has the desired properties. Additionally, in the proof of Theorem~\ref{thm:sg-palm-main} we also showed that the step size found at each iteration $t$ satisfies $\frac{1}{\eta_k^{0}} \leq L_k \leq \frac{1}{\eta_k^{t}} \leq c L_k$.

In terms of the initialization, a safe step size (i.e., very small $\eta_0^t$) often leads to slower convergence. Thus, we use the more aggressive Barzilai-Borwein (BB) step~\citep{barzilai1988two} to set a starting $\eta_0^t$ at each iteration (see Appendix~\ref{supp:bb-step-size} for justifications of the BB method). In our case, for each $k$, the step size is given by
\begin{equation}\label{eqn:bb-step}
    \eta_{k,0}^t = \frac{\|\bm\Psi_k^{t+1}-\bm\Psi_k^{t}\|_F^2}{\tr(\bm{A})},
\end{equation}
where
\begin{equation*}
\begin{aligned}
    \bm{A} &= (\bm\Psi_k^{t+1}-\bm\Psi_k^{t})^T \times \\
    &(\nabla_k H(\bm\Psi_{i \leq k}^{t+1},\bm\Psi_{i > k}^t)
    - \nabla_k H(\bm\Psi_{i < k}^{t+1},\bm\Psi_{i \geq k}^t)).
\end{aligned}
\end{equation*}

\subsection{Computational complexity}
After pre-computing $\mat{S}_k$, the most significant computation for each iteration in the SG-PALM algorithm is the sparse matrix-matrix multiplications $\mat{S}_k \bm\Psi_k$ and $\mat{S}_{j,k}$ in the gradient calculation. In terms of computational complexity, the former and latter can be computed using $O(d_k^3)$ and $O(N \sum_{j \neq k} d_jm_j^2)$ operations, respectively, there $m_j = \prod_{i \neq j} d_i$. Thus, each iteration of SG-PALM can be computed using $O\Big(\sum_{k=1}^K (d_k^3 + N \sum_{j \neq k} d_jm_j^2) \Big)$
floating point operations, which is significantly lower than competing methods.

\begin{remark}
All the structured precision estimation algorithms are variants of Glasso, implemented with techniques tailored to the model assumptions for speedup. Generally speaking, the resulting complexity consists of the mode-wise complexity ($d_k^3$) and the cost of updating the objective: $dK$ for TeraLasso~\citep{greenewald2019tensor}, $N\sum_k d_k m_k^2$ for Tlasso~\citep{lyu2019tensor}, and $N \sum_k \sum_{j \neq k} d_j m_j^2$ for SG-PALM. The mode-wise complexity of TeraLasso is dominated by matrix inversion, which is hard to scale for general problem instances. For Tlasso/KGlasso, the mode-wise complexity is the same as that of running a Glasso-type algorithm for each mode, which could be improved by applying state-of-the-art optimization techniques developed for vector-variate Gaussian graphical models. For SG-PALM, the mode-wise operations involve only sparse-dense matrix multiplications, which could be improved to $O(d_k \cdot \textsf{nnz})$, where $\textsf{nnz}$ counts the number of non-zero elements of the sparse matrix (i.e., the estimated $\bm\Psi_k$ at each iteration). This could greatly reduce the computational cost for extremely sparse $\bm\Psi_k$, e.g., with only $O(d_k)$ non-zero elements. Further, Tlasso and SG-PALM both incur a cost of $O(N d_k m_k^2)$ for each mode-wise update. This can also be reduced to be $\approx d$ for sparse estimated $\bm\Psi_k$'s at each iteration. Overall, for sample-starved setting where we only have access to a handful of data samples, structured KP and KS models run similarly fast, while the Sylvester GM runs slower theoretically due to the extra and richer structures that it takes into account.

Additionally, TG-ISTA and the Tlasso proposed both require inversion of $d_k \times d_k$ matrices, which is not easily parallelizable and cannot easily exploit the sparsity of $\bm\Psi_k$'s. The cyclic coordinate-wise method used in SyGlasso does not allow for parallelization since it requires cycling through entries of each $\bm\Psi_k$ in specified order. In contrast, SG-PALM can be implemented in parallel to distribute the sparse matrix-matrix multiplications because at no step do the algorithms require storing all dense matrices on a single machine. Therefore, with the adaptation of communication-efficient algorithms (such as that proposed in \citet{koanantakool2018communication} for vector-variate Gaussian graphical models), the scalability of the distributed SG-PALM is restricted only by the number of machines available. 
\end{remark}

\section{Convergence Analysis}\label{sec:sgpalm-convergence}

In this section, we present the main convergence theorems. Detailed proofs are included in the supplement. Here, we study the convergence behavior in the convex cases, but similar convergence rate can be established for non-convex penalties (see supplement).

We first establish statistical convergence of a global minimizer $\hat{\bm\Psi}$ of \eqref{eqn:objective} to its true value, denoted as $\bar{\bm\Psi}$, under the correct statistical model.
\begin{theorem}\label{thm:statistical}
Let $\mathcal{A}_{k}:=\{(i,j):(\bar{\bm\Psi}_k)_{i,j} \neq 0, i \neq j\}$ and $q_{k}:=|\mathcal{A}_{k}|$ for $k=1,\dots,K$. If $N > O(\max_k q_k d_k \log d)$ and $d:=d_N=O(N^{\kappa})$ for some $\kappa \geq 0$, and further, if the penalty parameter satisfies $\lambda_k:=\lambda_{N,k}=O(\sqrt{\frac{d_k\log d}{N}})$ for all $k=1,\dots,K$, then under conditions (A1-A3) in Appendix~\ref{supp:thm_statistical}, there exists a constant $C>0$ such that for any $\eta>0$ the following events hold with probability at least $1 - O(\exp(-\eta \log d))$:
  \begin{equation*}
    \begin{aligned}
        & \sum_{k=1}^K\|\text{offdiag}(\hat{\bm\Psi}_k) - \text{offdiag}(\bar{\bm\Psi}_k)\|_F \\
        & \leq C\sqrt{K}\max_{k}\sqrt{q_{k}}\lambda_{k}.  
    \end{aligned}
  \end{equation*}
Here $\text{offdiag}(\bm\Psi_k)$ is the the off-diagonal part of $\bm\Psi_k$. If further $\min_{(i,j) \in \mathcal{A}_{k}}|(\bar{\bm{\Psi}}_k)_{i,j}| \geq 2C\max_{k}\sqrt{q_{k}}\lambda_{k}$ for each $k$, then sign($\hat{\bm{\Psi}}_k$)=sign($\bar{\bm{\Psi}}_k$).
\end{theorem}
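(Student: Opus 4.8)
The plan is to follow the primal--dual witness strategy of \citet{meinshausen2006high} and \citet{peng2009partial}, adapted to the squared Kronecker-sum structure exactly as in the proofs of Theorems~\ref{thm:restricted_problem}--\ref{thm:consistency} from Section~\ref{sec:syglasso-thm}. Since the conclusion concerns only the off-diagonal parameters, I would first reparametrize the smooth part $H$ of \eqref{eqn:objective} by the vector $\bm\beta$ collecting the off-diagonal entries of all the $\bm\Psi_k$, holding $\tensor{W}=\bigoplus_k\diag(\bm\Psi_k)$ fixed at its true value $\bar{\tensor{W}}$, and write $L_N(\bar{\tensor{W}},\bm\beta,\tensor{X})$ for the resulting loss. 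The argument then splits into three pieces mirroring Theorems~\ref{thm:restricted_problem},~\ref{thm:edge_selection}, and~\ref{thm:consistency}.

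\emph{Step 1 (restricted estimator).} Define the oracle solution $\hat{\bm\beta}_{\calA}$ as the minimizer of \eqref{eqn:objective} restricted to $\bm\beta_{\calA^c}=0$. A second-order Taylor expansion of $L_N$ about $\bar{\bm\beta}$ on the support $\calA$, combined with (i) a tail bound on the score $\|\nabla_{\calA}L_N(\bar{\tensor{W}},\bar{\bm\beta})\|_\infty$ and (ii) a restricted strong-convexity lower bound on the sample Hessian $L''_{\calA,\calA}$, yields existence of $\hat{\bm\beta}_{\calA}$ in a ball of radius $C\sqrt{K}\max_k\sqrt{q_k}\lambda_k$ around $\bar{\bm\beta}_{\calA}$ via a Brouwer-type fixed-point argument applied to the KKT map of the restricted problem. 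Both (i) and (ii) rest on concentration for quadratic forms of the subgaussian vectors $\vecto(\tensor{X}^s)$ guaranteed by (A1): because the entries of $\nabla L_N$ are degree-two polynomials in the data (through the Gram matrices $\mat{S}_k$ and the cross-mode statistics $\mat{S}_{j,k}$ appearing in \eqref{eqn:block-grad}), I would use a Hanson--Wright / Bernstein-type inequality, carefully tracking how $d_k$ and $m_k=\prod_{j\neq k}d_j$ enter in order to obtain the rate $\sqrt{d_k\log d/N}$; the eigenvalue bounds (A2) supply the population curvature and control the variance proxy $\|\bar{\mat{\Sigma}}\|$ in the tail constant. Under $N>O(\max_k q_k d_k\log d)$ the sample Hessian inherits the population lower bound on the relevant high-probability event.

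\emph{Step 2 (no false edges).} On the same event I would verify strict dual feasibility: for every $k$ and $(i,j)\in\calA_k^c$, the quantity $|\,\partial L_N/\partial(\bm\Psi_k)_{ij}\,|$ evaluated at $(\bar{\tensor{W}},\hat{\bm\beta}_{\calA})$ is strictly less than $\lambda_k$. Expanding this coordinate of the gradient about $\bar{\bm\beta}$, the leading term factors through $\bar{L}''_{ij,\calA_k}(\bar{L}''_{\calA_k,\calA_k})^{-1}\text{sign}(\bar{\bm\beta}_{\calA_k})$, which the incoherence condition (A3) bounds by $\delta<1$; the remaining contributions --- the score restricted to $\calA^c$, the second-order Taylor remainder, and the deviation of the sample from the population Hessian --- are each $o(\lambda_k)$ with high probability by the concentration bounds of Step 1 together with the radius bound on $\|\hat{\bm\beta}_{\calA}-\bar{\bm\beta}_{\calA}\|_2$ and the scaling $d=O(N^\kappa)$. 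This is exactly Theorem~\ref{thm:edge_selection} transplanted into the present notation.

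\emph{Step 3 (conclusion and sign consistency), and the main obstacle.} Strict dual feasibility together with restricted strong convexity implies that $\hat{\bm\beta}_{\calA}$, extended by zeros, is the unique global minimizer of \eqref{eqn:objective}; hence $\hat{\bm\beta}=\hat{\bm\beta}_{\calA}$ and the bound of Step 1 becomes the asserted bound on $\sum_k\|\text{offdiag}(\hat{\bm\Psi}_k)-\text{offdiag}(\bar{\bm\Psi}_k)\|_F$. For the sign statement, the minimal-signal hypothesis $\min_{(i,j)\in\calA_k}|(\bar{\bm\Psi}_k)_{i,j}|\geq 2C\max_k\sqrt{q_k}\lambda_k$ makes every active entry larger in magnitude than the total estimation error, so no active coordinate is estimated with the wrong sign; combined with no inactive coordinate becoming active, this gives $\text{sign}(\hat{\bm\Psi}_k)=\text{sign}(\bar{\bm\Psi}_k)$ for each $k$. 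I expect the hard part to be the concentration analysis in Step 1: obtaining sharp tail bounds for the score and Hessian of the pseudo-likelihood when the loss contains $(\bigoplus_k\bm\Psi_k)^2$, since the cross-mode statistics $\mat{S}_{j,k}$ make $\nabla L_N$ a coupled quadratic form whose variance must be controlled mode by mode to extract the correct $\max_k\sqrt{q_k d_k\log d/N}$ dependence; establishing restricted strong convexity uniformly over the $\ell_2$-ball (not merely at $\bar{\bm\beta}$) in this coupled setting is the other delicate point.
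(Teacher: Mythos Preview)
Your proposal is correct and matches the paper's approach: the paper's proof of Theorem~\ref{thm:statistical} simply restates conditions (A1)--(A3) and then invokes Theorem~3.3 of \citet{wang2020sylvester}, which is precisely Theorem~\ref{thm:consistency} (built on Theorems~\ref{thm:restricted_problem} and~\ref{thm:edge_selection}) whose primal--dual witness argument you have accurately outlined. In other words, you have unpacked the cited result rather than diverged from it.
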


Theorem~\ref{thm:statistical} means that under regularity conditions on the true generative model, and with appropriately chosen penalty parameters $\lambda_k$'s guided by the theorem, one is guaranteed to recover the true structures of the underlying Sylvester generating parameters $\bm\Psi_k$ for $k=1,\dots,K$ with probability one, as the sample size and dimension grow.

We next turn to convergence of the iterates $\{\bm\Psi^t\}$ from SG-PALM to a global optimum of \eqref{eqn:objective}. 

\begin{theorem}\label{thm:sg-palm-main}
Let $\{\bm\Psi^{(t)}\}_{t \geq 0}$ be generated by SG-PALM.
Then, SG-PALM converges in the sense that
\begin{equation*}
\begin{aligned}
    & \frac{\mathcal{L}_{\bm\lambda}(\bm\Psi^{(t+1)}) - \min \mathcal{L}_{\bm\lambda}}{\mathcal{L}_{\bm\lambda}(\bm\Psi^{(t)}) - \min \mathcal{L}_{\bm\lambda}} \\
    & \leq \Bigg(\frac{\alpha^2L_{\min}}{4Kc^2(\sum_{j=1}^K L_j)^2 + 4c^2L_{\max}} + 1\Bigg)^{-1},
\end{aligned}
\end{equation*}
where $\alpha$, $L_k,k=1,\dots,K$ are positive constants, $L_{\min}=\min_jL_j$, $L_{\max}=\max_jL_j$, and $c \in (0,1)$ is the backtracking constant defined in Algorithm~\ref{alg:sg-palm}.
\end{theorem}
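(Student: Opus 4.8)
The plan is to establish $Q$-linear decrease of the objective gap by assembling the three standard ingredients of a linear-rate analysis for a proximal block-gradient method on a convex composite problem — a \emph{sufficient-decrease} inequality, a \emph{stationarity-residual} bound, and an \emph{error bound / quadratic-growth} inequality — and then chaining them. Throughout I would use that $\mathcal{L}_{\bm\lambda} = H + \sum_k G_k$ is convex: the trace term equals $\tfrac{N}{2}\|\mat{S}^{1/2}(\bigoplus_{k=1}^K \bm\Psi_k)\|_F^2$, which is convex as a squared norm of an affine image of $\bm\Psi$, the log-determinant term is $-N\sum_{i}\log\big(\sum_k (\bm\Psi_k)_{i_k,i_k}\big)$ which is convex in the diagonal entries, and each $G_k$ is an $\ell_1$ term. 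Hence $\mathcal{L}_{\bm\lambda}$ has value $\min\mathcal{L}_{\bm\lambda}$ attained on a nonempty closed convex solution set, and the relevant domain on which the iterates move is the sublevel set $\mathcal{S}_0 = \{\bm\Psi : \mathcal{L}_{\bm\lambda}(\bm\Psi) \le \mathcal{L}_{\bm\lambda}(\bm\Psi^{(0)})\}$, on which Lemma~\ref{lemma:lip} supplies the block-Lipschitz constants $L_k$ and the backtracking guarantees $L_k \le 1/\eta_k^t \le L_k/c$.

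\textbf{Sufficient decrease.} Since $\bm\Psi_k^{t+1}$ exactly minimizes the surrogate $Q_{\eta_k^t}(\,\cdot\,;\bm\Psi_{i<k}^{t+1},\bm\Psi_{i\ge k}^{t}) + G_k(\cdot)$ and the line-search condition \eqref{eqn:linesearch-cond} places $H$ below that surrogate, the block-wise descent inequalities sum to
\[
\mathcal{L}_{\bm\lambda}(\bm\Psi^{(t)}) - \mathcal{L}_{\bm\lambda}(\bm\Psi^{(t+1)}) \;\ge\; \sum_{k=1}^K \frac{1}{2\eta_k^t}\,\big\|\bm\Psi_k^{t+1}-\bm\Psi_k^{t}\big\|_F^2 \;\ge\; \frac{L_{\min}}{2}\sum_{k=1}^K \big\|\bm\Psi_k^{t+1}-\bm\Psi_k^{t}\big\|_F^2 .
\]
In particular $\{\mathcal{L}_{\bm\lambda}(\bm\Psi^{(t)})\}$ is nonincreasing, so all iterates stay in $\mathcal{S}_0$. \textbf{Stationarity residual.} The optimality condition of the $k$-th proximal step yields, with $\nabla_k H$ taken at the Gauss--Seidel point, that
\[
g_k^{t+1} := \nabla_k H(\bm\Psi^{(t+1)}) - \nabla_k H(\bm\Psi_{i<k}^{t+1},\bm\Psi_{i\ge k}^{t}) - \tfrac{1}{\eta_k^t}\big(\bm\Psi_k^{t+1}-\bm\Psi_k^{t}\big) \in \partial_k \mathcal{L}_{\bm\lambda}(\bm\Psi^{(t+1)}),
\]
and bounding the gradient difference via the block-Lipschitz property (only the blocks not yet updated at step $k$ differ) together with $1/\eta_k^t \le L_k/c$ and a Cauchy--Schwarz accounting over the $K$ blocks gives $\sum_k \|g_k^{t+1}\|_F^2 \le \tfrac{C_1}{c^2}\big(K(\sum_{j=1}^K L_j)^2 + L_{\max}\big)\sum_k \|\bm\Psi_k^{t+1}-\bm\Psi_k^{t}\|_F^2$ for an absolute constant $C_1$; the two terms $K(\sum_j L_j)^2$ and $L_{\max}$ are exactly those that appear in the rate.

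\textbf{Error bound and chaining.} The third ingredient is an inequality of the form $\mathcal{L}_{\bm\lambda}(\bm\Psi^{(t+1)}) - \min\mathcal{L}_{\bm\lambda} \le \tfrac{1}{2\alpha}\sum_k\|g_k^{t+1}\|_F^2$, valid on $\mathcal{S}_0$, which follows from a quadratic-growth / restricted-strong-convexity property of $\mathcal{L}_{\bm\lambda}$ with modulus $\alpha>0$; this is where the composite structure of $H$ (a strongly convex function of an affine image of $\bm\Psi$, plus a strongly convex function of the diagonals) enters, restricted to the slice of the parameter space preserved by the algorithm. Writing $\Delta_t := \mathcal{L}_{\bm\lambda}(\bm\Psi^{(t)}) - \min\mathcal{L}_{\bm\lambda}$ and combining the three estimates produces a recursion $\Delta_{t+1} \le c_0\,(\Delta_t - \Delta_{t+1})$, hence $\Delta_{t+1}/\Delta_t \le (1 + c_0^{-1})^{-1}$; carefully tracking the numerical constants and the step-size window $L_k \le 1/\eta_k^t \le L_k/c$ identifies $c_0^{-1}$ with $\alpha^2 L_{\min}\big/\big(4Kc^2(\sum_j L_j)^2 + 4c^2 L_{\max}\big)$ and yields the statement of Theorem~\ref{thm:sg-palm-main}.

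\textbf{Main obstacle.} The crux is the error-bound step and, specifically, showing that the modulus $\alpha$ is strictly positive and uniform over the relevant region: the pseudolikelihood is \emph{not} globally strongly convex — in the sample-starved regime $\mat{S}$ is rank-deficient, so the trace term is flat in many directions, and there are genuine flat (identifiability) directions along which diagonal contributions cancel. One must therefore obtain $\alpha$ either from a Luo--Tseng-type error bound for $\ell_1$-regularized objectives whose smooth part factors through an affine map, or from eventual identification of the active support combined with a restricted-eigenvalue condition on $\mat{S}$, and argue that the directions along which $\mathcal{L}_{\bm\lambda}$ is flat are also inert for the residual $g^{t+1}$ so that the bound is unaffected. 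A secondary, purely technical nuisance is the Gauss--Seidel evaluation of $\nabla_k H$ in the residual bound, which forces careful bookkeeping of which (cross-)block Lipschitz constants enter and is the source of the $K(\sum_j L_j)^2 + L_{\max}$ combination above.
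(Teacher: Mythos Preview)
Your three-step skeleton (sufficient decrease, stationarity-residual bound from the proximal optimality condition, error bound, then chaining) is exactly the paper's structure, and your Steps~1 and~2 match the paper almost verbatim, including the bookkeeping that produces the combination $K(\sum_j L_j)^2 + L_{\max}$.

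The substantive difference is in the error-bound step. You propose to obtain the quadratic-growth inequality via a Luo--Tseng-type argument or via support identification plus a restricted-eigenvalue condition, and you correctly flag this as the main obstacle because $\mat{S}$ is rank-deficient and there are flat identifiability directions. The paper takes a different route that bypasses these concerns entirely: it proves that $\mathcal{L}_{\bm\lambda}$ satisfies the Kurdyka--\L ojasiewicz property with exponent $\tfrac12$, which is known to be equivalent to the error-bound/quadratic-growth condition you need (cf.\ Karimi--Nutini--Schmidt and Zhang). The KL exponent $\tfrac12$ is obtained not by any strong-convexity argument on the data, but by \emph{calculus rules} for KL exponents: (i) $\tr(\mat{S}\mat{X}^2)+\|\mat{X}\|_{1,\mathrm{off}}$ is known to be KL with exponent $\tfrac12$ even when $\mat{S}$ is singular; (ii) composition with the linear map $(\bm\Psi_1,\dots,\bm\Psi_K)\mapsto\bigoplus_k\bm\Psi_k$ preserves the exponent; (iii) the $-\log\det(\bigoplus_k\diag\bm\Psi_k)$ term is handled via a spectral-function transfer principle; (iv) separable sums preserve the exponent. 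This gives $\|\partial^0\mathcal{L}_{\bm\lambda}(\bm\Psi^{(t+1)})\|\ge\alpha\sqrt{\mathcal{L}_{\bm\lambda}(\bm\Psi^{(t+1)})-\min\mathcal{L}_{\bm\lambda}}$ for some $\alpha>0$, and the constant $\alpha$ is precisely the one in the theorem statement.

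What each approach buys: your route, if carried through, would make $\alpha$ explicit in terms of a restricted eigenvalue of $\mat{S}$ and the problem geometry, at the cost of a delicate identifiability/active-set analysis. The paper's KL-calculus route is cleaner and fully rigorous without any condition on $\mat{S}$, but leaves $\alpha$ as an abstract positive constant coming from the KL machinery. Since the theorem only asserts existence of such an $\alpha$, the KL route is the more economical one here, and it dissolves what you identified as the main obstacle.
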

Note that the term on  the right hand side of the inequality above is strictly less than $1$. This means that the SG-PALM algorithm converges linearly, which is a strong results for a non-strongly convex objective (i.e., $\mathcal{L}_{\bm\lambda}$). To the best of our knowledge, for first-order optimization methods, this rate is faster than any other Gaussian graphical models having non-strongly convex objectives (see \citet{khare2015convex,oh2014optimization} and references therein) and comparable with those having strongly-convex objectives (see, for example, \citet{guillot2012iterative,dalal2017sparse,greenewald2019tensor}). In practical large-scale applications, a fast rate is vital as it would be desired to have the iterative optimization approximation errors quickly converge to values below the statistical errors.

\section{Experiments}\label{sec:sgpalm-experiments}
Experiments in this section were performed in a system with \texttt{8-core Intel Xeon CPU E5-2687W v2 3.40GHz} equipped with \texttt{64GB RAM}. SG-PALM was implemented in \texttt{Julia v1.5}. For synthetic data analyses, we used the SyGlasso implementation in \texttt{R} with \texttt{C++} speed-up (\url{https://github.com/ywa136/syglasso}). For real data analyses, we used the \texttt{Tlasso} package implementation in \texttt{R}~\citep{r-Tlasso} and the TeraLasso implementation in \texttt{MATLAB} (\url{https://github.com/kgreenewald/teralasso}).

\subsection{Synthetic data}

We first validate the convergence theorems discussed in the previous section via simulation studies. Synthetic datasets were generated from true sparse Sylvester factors  
 $\{\bm\Psi_k\}_{k=1}^K$ where $K=3$ and $d_k=\{16,32,64\}$ for all $k$. Instances of the random matrices used here have uniformly random sparsity patterns with edge densities (i.e., the proportion of non-zero entries) ranging from $0.1\% -30\%$ on average over all $\bm\Psi_k$'s. For each $d$ and edge density combination, random samples of size $N=\{10,100,1000\}$ were tested. For comparison, the initial iterates, convergence criteria were matched between SyGlasso and SG-PALM. Highlights of the results in run times are summarized in Table~\ref{tab:synthetic_run_time}.

\begin{table}[tbh!]
\centering
\caption{Run time comparisons (in seconds with N/As indicating those exceeding $24$ hour) between SyGlasso and SG-PALM on synthetic datasets with different dimensions, sample sizes, and densities of the generating Sylvester factors. Note that the proposed SG-PALM has average speed-up ratios ranging from $1.5$ to $10$ over SyGlasso.}
\label{tab:synthetic_run_time}
\begin{tabular}{|c|p{0.5cm}|p{0.8cm}||r|r|}
 \multicolumn{5}{c}{} \\
 \hline
 \multirow{2}{*}{$d$} & \multirow{2}{*}{$N$} & \multirow{2}{*}{NZ\%} & \textbf{SyGlasso} & \textbf{SG-PALM} \\
 \cline{4-5} 
 &&& \textbf{iter} \quad \textbf{sec} & \textbf{iter} \quad \textbf{sec} \\
 \hline
 \multirow{6}{*}{$16^3$} & \multirow{2}{*}{$10^1$} & $0.11$ &
 $9$ \quad $4.6$ & $11$ \quad $4.5$ \\
 && $4.10$ &
 $9$ \quad $5.1$ & $32$ \quad $5.1$ \\
 \cline{2-5}
 & \multirow{2}{*}{$10^2$} & $0.21$ & 
 $8$ \quad $8.8$ & $11$ \quad $5.4$ \\
 && $2.60$ &
 $8$ \quad $10.8$ & $35$ \quad $7.2$ \\
 \cline{2-5}
 & \multirow{2}{*}{$10^3$} & $0.26$ & 
 $8$ \quad $82.4$ & $12$ \quad $14.3$ \\
 && $3.40$ &
 $10$ \quad $99.2$ & $37$ \quad $33.5$ \\
 \cline{1-5}
 \multirow{6}{*}{$32^3$} & \multirow{2}{*}{$10^1$} & $0.13$ &
 $10$ \quad $191.2$ & $19$ \quad $7.3$ \\
 && $7.50$ &
 $17$ \quad $304.8$ & $42$ \quad $10.2$ \\
 \cline{2-5}
 & \multirow{2}{*}{$10^2$} & $0.46$ & 
 $9$ \quad $222.4$ & $24$ \quad $28.9$ \\
 && $7.00$ &
 $17$ \quad $395.2$ & $41$ \quad $48.5$ \\
 \cline{2-5}
 & \multirow{2}{*}{$10^3$} & $0.10$ & 
 $9$ \quad $1764.8$ & $22$ \quad $226.4$ \\
 && $6.90$ &
 $19$ \quad $3789.4$ & $41$ \quad $473.9$ \\
 \cline{1-5}
 \multirow{6}{*}{$64^3$} & \multirow{2}{*}{$10^1$} & $0.65$ &
 $10$ \quad $583.7$ & $42$ \quad $91.3$ \\
 && $14.5$ &
 $22$ \quad $952.2$ & $47$ \quad $119.0$ \\
 \cline{2-5}
 & \multirow{2}{*}{$10^2$} & $0.62$ & 
 $9$ \quad $6683.7$ & $41$ \quad $713.9$ \\
 && $14.4$ &
 $21$ \quad $15607.2$ & $48$ \quad $1450.9$ \\
 \cline{2-5}
 & \multirow{2}{*}{$10^3$} & $0.85$ & 
 N/A  & $39$ \quad $6984.4$ \\
 && $14.0$ &
 N/A  & $48$ \quad $12968.7$ \\
 \hline
\end{tabular}
\end{table}

Convergence behavior of SG-PALM is shown in Figure~\ref{fig:convergence_sg-palm} (a) for the datasets with $d_k=32$, $N=\{10,100\}$, and edge densities roughly around $5\%$ and $20\%$, respectively. Geometric convergence rate of the function value gaps under Theorem~\ref{thm:sg-palm-main} can be verified from the plot. Note an acceleration in the convergence rate (i.e., a steeper slope) near the optimum, which is suggested by the ``localness'' of the Kurdyka - \L ojasiewicz (KL) property (defined in Section B.2 of the Appendix) of the objective function close to its global optimum. 
Further for the same datasets, in Figure~\ref{fig:convergence_sg-palm} (b), SG-PALM graph recovery performances is illustrated, where the Matthew's Correlation Coefficients (MCC) is plotted against run time. Here, MCC is defined by 
\begin{equation*}
    \text{MCC} = \frac{\text{TP}\times\text{TN}-\text{FP}\times\text{FN}}{\sqrt{(\text{TP}+\text{FP})(\text{TP}+\text{FN})(\text{TN}+\text{FP})(\text{TN}+\text{FN})}},
\end{equation*}
where TP is the number of true positives, TN the number of true negatives, FP the number of false positives, and FN the number of false negatives of the estimated edges (i.e., non-zero elements of $\bm\Psi_k$'s). An MCC of $1$ represents a perfect prediction, $0$ no better than random prediction and $-1$ indicates total disagreement between prediction and observation. The results validate the statistical accuracy under Theorem~\ref{thm:statistical}. It also shows that SG-PALM outperforms SyGlasso (indicated by blue/red solid dots) within the same time budget.

\begin{figure*}[tbh!]
    \centering
    \begin{subfigure}{0.45\linewidth}
    \centering
    \includegraphics[width=\textwidth]{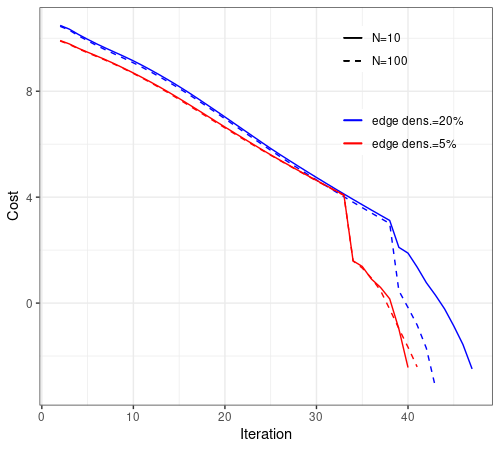}
    \caption{Cost gap vs. Iteration}
    \end{subfigure}
    \hspace{0.1pt}
    \begin{subfigure}{0.45\linewidth}
    \centering
    \includegraphics[width=\textwidth]{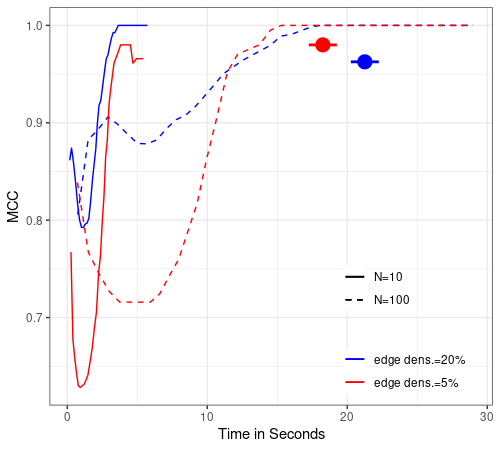}
    \caption{MCC vs. Run time}
    \end{subfigure}
    \caption{Convergence of SG-PALM algorithm under datasets with varying sample sizes (solid and dashed) generated via matrices with different sparsity (red and blue). The function value gaps on log-scale (left) verifies the geometric convergence rate in all cases and the MCC over time (right) demonstrates the algorithm's accuracy and efficiency. Note that the SG-PALM reached almost perfect recoveries (i.e., MCC of $1$) within $20$ seconds in all cases. In comparison, SyGlasso (big solid dots with line-range) was only able to achieve at lower MCCs for lower sample-size cases within $30$ seconds.}
    \label{fig:convergence_sg-palm}
\end{figure*}

\subsection{Solar imaging data}

Solar active regions are temporary centers of strong and complex magnetic field on the sun, the principal source of violent eruptions such as solar flares \citep{van2015evolution}. While weak flares of, for example, B-class, have only limited terrestrial effect, strong flares of M- and X-class can produce tremendous amount of electromagnetic radiation, causing disturbance or damage to satellites, power grids, and communication systems. Therefore, it would be great value to be able to predict how active regions evolve before the onset of solar flares. 

Although there are numerous studies that use active region images or physical parameters to predict flare activities \citep{leka2003photospheric,chen2019identifying, jiao2020solar,wang2020predicting,sun2021predicting}, fewer studies have attempted to predict the complicated preflare evolution of active regions without physical modeling \citep{bai2021predicting}. Furthermore, existing work tends to focus on predictions using images collected from a single space instrument. In this section, to illustrate the viability of the proposed tensor graphical models, we use multiwavelength active region observations acquired by multiple instruments: the Solar Dynamics Observatory (SDO)/Helioseismic and Magnetic Imager (HMI) and SDO/Atmospheric Imaging Assembly (AIA), to predict the evolution of two types of active regions that lead to either a weak (B-class) flare or a strong (M- or X-class) flare.

We construct a multiwavelength active region video dataset from the curated dataset generated by \citet{galvez2019machine}. The video data are taken in four wavelengths (94\r{A}, 131\r{A}, 171\r{A}, and 193\r{A}) by the Atmospheric Imaging Assembly \citep[AIA,][]{lemen2011atmospheric} plus the three prime HMI vector magnetic field components Bx, By, and Bz, both aboard the Solar Dynamics Observatory (SDO) satellite. Each video is a 24-hour image sequence of an active region at 1-hour cadence before a strong (M- or X-class) or a weak (B-class) flare occurs in the region. We spatially interpolate the videos so that each video is represented as a $d_1\times d_2\times d_3 \times d_4$ tensor, where $d_1 = 13$ denotes the number of frames in the video, $d_2 = 50$ denotes the height of the frames after interpolation, $d_3 = 100$ denotes the width of the frames after interpolation, and $d_4=7$ represents the number of different channels/wavelength/components at which the images are recorded. To prevent information leakage, we chronologically split the active region videos into a training set (year 2011 to 2014) and a test set (year 2011 to 2014). In the training set, there are 186 active region videos that lead to a B-class flare and 48 active region videos that lead to a M/X-class flare. In the test set, the sample sizes are 93 and 24 for the B-class and the M/X-class, respectively.

To perform active region prediction, we first fit the tensor graphical models on the training set to estimate the covariance or prediction matrices for each of the two types of active region videos, and then we use the best linear predictor to predict the last frame from all previous frames for videos in the test set.
The forward linear predictor is constructed in a multi-output least squares regression setting as
\begin{equation}
    \hat{\mat{y}}_t = -\mat\Omega_{2,2}^{-1}\mat\Omega_{2,1}\mat{y}_{t-1:t-(p-1)}
\end{equation}
when the precision estimate is available. Here, $t = d_1$ for predicting the last frame of a video. For notational convenience, let $p=d_1$ and $q=d_2d_3d_4$, then $\mat{y}_{t-1:t-(p-1)} = \mat{y}_{p-1:1} \in \mathbb{R}^{(p-1)q}$ is the stacked set of pixel values from the previous $p-1$ time instances and $\mat\Omega_{2,1} \in \mathbb{R}^{q \times (p-1)q}$ and $\mat\Omega_{2,2} \in \mathbb{R}^{q \times q}$ are submatrices of the $pq \times pq$ estimated precision matrix:
\begin{equation*}
    \hat{\mat\Omega} =
    \begin{pmatrix}
    \mat\Omega_{1,1} & \mat\Omega_{1,2} \\
    \mat\Omega_{2,1} & \mat\Omega_{2,2}
    \end{pmatrix}.
\end{equation*}

The predictors were tested on the data containing flares observed from different active regions than those in training set, so that the predictor has never ``seen'' the frames that it attempts to predict, corresponding to $117$ observations of which $93$ are B-class flares and $24$ are MX-class flares. Figure~\ref{fig:nrmse_comparison} shows the root mean squared error normalized by the difference between maximum and minimum pixels (NRMSE) over the testing samples, for the forcasts based on the SG-PALM estimator, TeraLasso estimator~\citep{greenewald2019tensor}, Tlasso estimator~\citep{lyu2019tensor}, and IndLasso estimator. Here, the TeraLasso and the Tlasso are estimation algorithms for a KS and a KP tensor precision matrix model, respectively; the IndLasso denotes an estimator obtained by applying independent and separate $\ell_1$-penalized regressions to each pixel in $\bm{y}_t$. The SG-PALM estimator was implemented using a regularization parameter $\lambda_{N}=C_1\sqrt{\frac{\min(d_k)\log(d)}{N}}$ for all $k$ with the constant $C_1$ chosen by optimizing the prediction NRMSE on the training set over a range of $\lambda$ values parameterized by $C_1$. The TeraLasso estimator and the Tlasso estimator were implemented using $\lambda_{N,k}=C_2\sqrt{\frac{\log(d)}{N\prod_{i \neq k}d_i}}$ and $\lambda_{N,k}=C_3\sqrt{\frac{\log(d_k)}{Nd}}$ for $k=1,2,3$, respectively, with $C_2, C_3$ optimized in a similar manner. Each sparse regression in the IndLasso estimator was implemented and tuned independently with regularization parameters chosen from a grid via cross-validation.

We observe that SG-PALM outperforms all three other methods, indicated by NRMSEs across pixels. Figure~\ref{fig:predicted_vs_real_img} depicts examples of predicted images, comparing with the ground truth. The SG-PALM estimates produced most realistic image predictions that capture the spatially varying structures and closely approximate the pixel values (i.e., maintaining contrast ratios). The latter is important as the flares are being classified into weak (B-class) and strong (MX-class) categories based on the brightness of the images, and stronger flares are more likely to lead to catastrophic events, such as those damaging spacecrafts. Lastly, we compare run times of the SG-PALM algorithm for estimating the precision matrix from the solar flare data with SyGlasso. Table~\ref{tab:solar_flare_run_time} in Appendix~\ref{supp:additional_experiments} illustrates that the SG-PALM algorithm converges faster in wallclock time. Note that in this real dataset, which is potentially non-Gaussian, the convergence behavior of the algorithms is different compare to synthetic examples. Nonetheless, SG-PALM enjoys an order of magnitude speed-up over SyGlasso.

\begin{figure*}[!tbh]
\centering
\begin{tabular}{@{}c@{}}
    \quad Avg. NRMSE = $0.0379$, $0.0386$, $0.0579$, $0.1628$ (from left to right) \\
    \rotatebox{90}{\qquad AR B} 
    \includegraphics[width=0.9\textwidth]{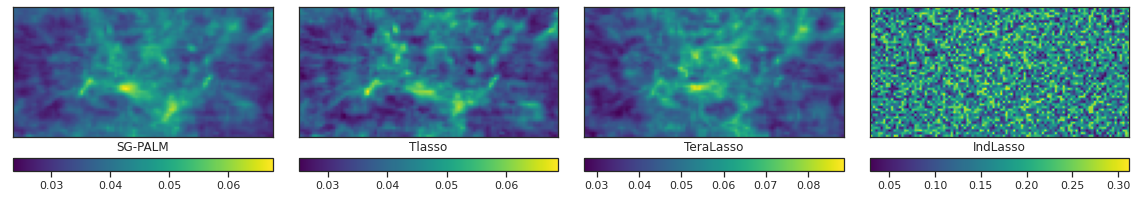}  \\
    \quad Avg. NRMSE = $0.0620$, $0.0790$, $0.0913$, $0.1172$ (from left to right) \\
    \rotatebox{90}{\qquad AR M/X} 
    \includegraphics[width=0.9\textwidth]{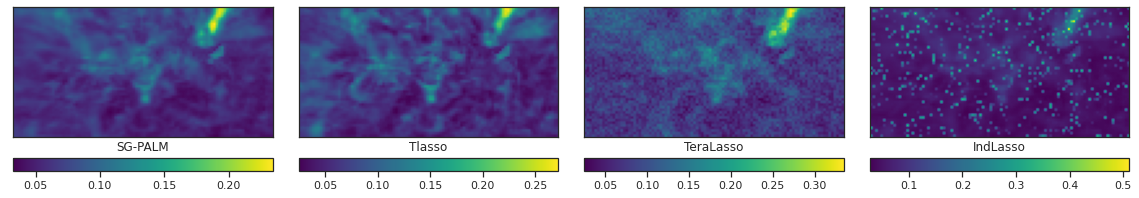} 
\end{tabular}
\caption{Comparison of the SG-PALM, Tlasso, TeraLasso, IndLasso performances measured by NRMSE in predicting the last frame of $13$-frame video sequences leading to B- and MX-class solar flares. The NRMSEs are computed by averaging across testing samples and AIA channels for each pixel. 2D images of NRMSEs are shown to indicate that certain areas on the images (usually associated with the most abrupt changes of the magnetic field/solar atmosphere) are harder to predict than the rest. SG-PALM achieves the best overall NRMSEs across pixels. B flares are generally easier to predict due to both a larger number of samples in the training set and smoother transitions from frame to frame within a video (see the supplemental material for details).}
\label{fig:nrmse_comparison}
\end{figure*}

\begin{figure*}[!tbh]
\centering
\begin{tabular}{@{}c@{}}
    Predicted examples - B vs. M/X \\
    \rotatebox{90}{\quad AR B}
    \includegraphics[width=0.9\textwidth]{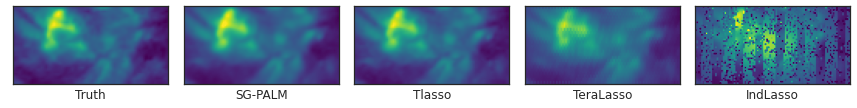}  \\
    \rotatebox{90}{\quad AR B}
    \includegraphics[width=0.9\textwidth]{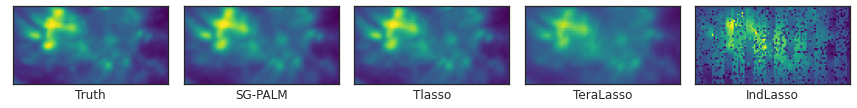} \\
    \rotatebox{90}{\quad AR M/X}
    \includegraphics[width=0.9\textwidth]{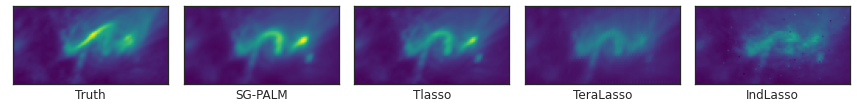} \\
    \rotatebox{90}{\quad AR M/X}
    \includegraphics[width=0.9\textwidth]{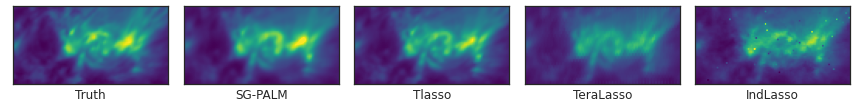}
\end{tabular}
\caption{Examples of one-hour ahead prediction of the first two AIA channels of last frames of $13$-frame videos, leading to B- (first two rows) and MX-class (last two rows) flares, produced by the SG-PALM, Tlasso, TeraLasso, IndLasso algorithms, comparing to the real image (far left column). Note that in general linear forward predictors tend to underestimate the contrast ratio of the images. The proposed SG-PALM produced the best-quality images in terms of both the spatial structures and contrast ratios. See the supplemental material for examples of predicted images from the HMI instrument.}
\label{fig:predicted_vs_real_img}
\end{figure*}

\section{Conclusion}\label{sec:sgpalm-conclusion}

We proposed SG-PALM, a proximal alternating linearized minimization method for solving a pseudo-likelihood based sparse tensor-variate Gaussian precision matrix estimation problem. Geometric rate of convergence of the proposed algorithm is established building upon recent advances in the theory of PALM-type algorithms. We demonstrated that SG-PALM outperforms the coordinate-wise minimization method in general, and in ultra-high dimensional settings SG-PALM can be faster by at least an order of magnitude. A link between the Sylvester generating equation underlying the graphical model and certain physical processes was established. This connection was illustrated on a novel astrophysics application, where multi-instrument imaging datasets characterizing solar flare events were used. The proposed methodology was able to robustly forward predict both the patterns and intensities of the solar atmosphere, yielding potential insights to the underlying physical processes that govern the flaring events. 

Future directions include additional downstream tasks involving solar flare predictions using the estimated precision matrix, such as classification of strong/weak flares. Furthermore, the statistical convergence rate outlined in Theorem~\ref{thm:statistical} might not be optimal. We have observed that, for example, from the simulation study in Section~\ref{sec:sgpalm-experiments}, where we see in Figure~\ref{fig:convergence_sg-palm}(b) that the estimator achieves perfect graph recovery accuracy even when $N=10$, which is better than the sample complexity implied by the theorem. We are actively working towards obtaining a tighter upper bound on the statistical error.

\chapter{Multiway Ensemble Kalman Filter}
\label{ch:enkf}
In this chapter, we develop methods of forecasting multiway times-series generated by dynamical systems. These methods can be used to study the emergence of sparsity and multiway structures in second-order statistical characterizations of dynamical processes governed by partial differential equations (PDEs). We consider several state-of-the-art multiway covariance and inverse covariance (precision) matrix estimators and examine their pros and cons in terms of accuracy and interpretability in the context of physics-driven forecasting when incorporated into the ensemble Kalman filter (EnKF). In particular, we show that multiway data generated from the Poisson, the convection-diffusion, and the Kuramoto–Sivashinsky types of PDEs can be accurately tracked via EnKF when integrated with appropriate covariance and precision matrix estimators.

\section{Introduction}\label{sec:enkf-intro}
There has recently been a resurgence of interest in integrating machine learning with physics-based modeling. Much of the recent work has focused on black-box models such as deep neural networks~\citep{takeishi2017learning, long2018pde, NEURIPS2018_e2ad76f2, vlachas2018data, reichstein2019deep, wang2020towards}. However, seeking shallower models that capture mechanism in a physically  interpretable manner has been a recurring theme in both machine learning and physics~\citep{weinan2020integrating}. The Kalman filter is a well-known technique to track a linear dynamical system over time by assimilating real-world observations into physical knowledge. Many variants based on the extended and ensemble Kalman filters have been proposed to deal with non-linear systems. However, these systems are often high dimensional and
forecasting each ensemble member forward through the system is computationally expensive. Moreover, in the high dimensional and low sample regime ($N \ll d$), the sample covariance matrix of the forecast ensemble is extremely noisy. Previous methods for dealing with these sampling errors can be dived into the ``stochastic filters'' and the ``deterministic filters''. The former often involve manually ``tuning'' of the sample covariance with variance inflation and localization~\citep{hamill2001distance,houtekamer2001sequential,ott2004local,wang2007comparison,anderson2007adaptive,anderson2009spatially,li2009simultaneous,bishop2009ensemble1,bishop2009ensemble2,campbell2010vertical,greybush2011balance,miyoshi2011gaussian}. However, these schemes require carefully choosing the inflation factor and using expert knowledge to determine local
areas of interest that are used in assimilation. Additionally, they work with perturbed observations that introduce further sampling errors due to the lack of orthogonality between the perturbation noise and the ensembles. This has led to the development of deterministic versions of the EnKF such as the square root and transform filters~\citep{bishop2001adaptive,evensen2004sampling,whitaker2002ensemble,tippett2003ensemble,hunt2007efficient,godinez2012efficient,nerger2012unification,todter2015second}, which do not perturb the observations and are designed to avoid these additional sampling errors. \citet{lawson2004implications} studies
the differences between different approaches (stochastic vs. deterministic) of EnKF and the implications of those differences in various regimes, and claims that the stochastic filters can better withstand regimes with nonlinear error growth.

Most similar to our proposed work is \citet{hou2021penalized}, which suggests to implement EnKF with a sparse inverse covariance estimator to handle the high-dimensional regime. However, we note that many real-world processes are complex and generating heterogeneous multiway/tensor-variate data. For example, weather satellites measure spatio-temporal climate variables such as temperature, wind velocity, sea level, pressure, etc. Due to the non-homogeneous nature of these data, estimation of the second-order information that encodes (conditional) dependency structure within the data is of great importance. Assuming the data are drawn from a tensor normal distribution, a straightforward way to estimate this structure is to vectorize the tensor and estimate the underlying Gaussian graphical model associated with the vector, as suggested by~\citet{hou2021penalized}. Such an approach ignores the tensor structure and requires estimating a rather high dimensional precision matrix, often with insufficient sample size. In many scientific applications the sample size can be as small as one when only a single tensor-valued measurement is available. In this chapter, we introduce a high-dimensional statistical approach that naturally integrates physics and machine learning through Kronecker-structured Gaussian graphical models. The learned representation can then be incorporated into a high dimensional predictive model using the ensemble Kalman filtering framework.

\section{Background}\label{sec:enkf-background}

We consider a noisy, non-linear dynamical model $f(\cdot)$ that evolves some unobserved states $\mat{x}_t \in \bbR^d$ through time. A noisy version of the states, $\mat{x}_t \in \bbR^r_t$, is observed via a transformation of $\mat{x}_t$ by a function $h(\cdot)$. Both the state/process noise $\mat{v}_t$ and the observation noise $\mat{w}_t$ are assumed to be independent of the states. Further, we assume both noises are zero-mean Gaussians with known diagonal covariance matrices $\mat{Q}_t$ and $\mat{R}_t$. Specifically,
\begin{equation}
    \begin{aligned}
        \bx_t &= f(\bx_{t-1}) + \bfv_t, \\
        \by_t &= h(\bx_t) + \bfw_t.
    \end{aligned}
\end{equation}
In this work, we further restrict both noise variables $\bfv$ \& $\bfw$ and the observational process are time-invariant, i.e., $\bfv_t = \bfv$, $\bfw_t = \bfw$, and $r_t = r$, although the methods developed here work in time-variant scenarios. 
In geophysical problems such as weather prediction, the state and observation dimensions are often enormous (i.e., $d \geq 10^7$ and $r_t \geq 10^5$). Therefore, as with localization methods, we make an assumption about the correlation structure of the state vector in order to handle the high dimensionality of the state. Specifically, only a small number of pairs of state variables are assumed to have non-zero conditional correlation, i.e., $\cov(x_i, x_j | \bx_{-(i, j)}) \neq 0$ where $\bx_{-(i, j)}$ represents all state variables except $x_i$ and $x_j$. For an illustrating example, consider a one-dimensional spatial field with three locations $x_1$ , $x_2$, and $x_3$ where $x_1$
and $x_3$ are both connected to $x_2$, but not each other. In this case, it is natural to
model $x_1$ and $x_3$ as uncorrelated conditioned on $x_2$ although they are not necessarily marginally uncorrelated, that is, $\cov(x_1, x_3 | x_2) = 0$ but $\cov(x_1, x_3) \neq 0$. Similar conditional independence assumptions have been used in the study of Markov random fields (MRFs), which find applications in, for example, image processing to generate textures as they can be used to generate flexible and stochastic image models~\citep{kindermann1980markov}. A spacial case is the Gaussian MRFs, which is most widely used in spatial statistics~\citep{rue2005gaussian}. For Gaussian states, the assumption that the set of non-zero conditional correlations is sparse is equivalent to the assumption that the inverse correlation matrix of the model state is sparse with few non-
zero off-diagonal entries~\citep{lauritzen1996graphical}.

\subsection{Ensemble Kalman filter}
The ensemble Kalman filter (EnKF) is particularly effective when the dynamical system is complicated and non-linear, which is often the case in physical systems~\citep{evensen1994sequential,burgers1998analysis}. In these cases, analytic propagation of the entire Gaussian systems as in the classic Kalman filter (KF) algorithm fails~\citep{evensen2003ensemble}. The EnKF can be viewed as an approximate version of the KF, in which the state distribution is represented by a sample or ``ensemble'' from the distribution. This ensemble is then propagated forward through time and updated when new data become available. 

The forecast covariance matrix is replaced by its sample estimate obtained from the forecast ensemble. However, such systems are often high-dimensional and the EnKF operates in the regime where the number of ensemble members, $N$, is much less than the size of the state, $d$, suggesting that the sample covariance matrix is singular and may introduce spurious correlations~\citep{greybush2011balance}. In this case, regularized inverse covariance models will be especially attractive. \citet{hou2021penalized} introduced a sparsity-penalized EnKF, which replace the sample covariance with an estimator of the forecasting covariance whose inverse is sparsity regularized. Here we propose incorporating the multiway covariance / inverse covariance models into the penalized EnKF framework of \citet{hou2021penalized}.


\subsection{Multiway representations for diffusion processes}\label{sec:enkf-diffusion}
Multiway representations are particularly useful when modeling data generated from physical processes as many of these processes obey partial differential equations of the form 
\begin{equation}
    \label{eq:pde}
    \begin{aligned}
        \mathcal{D}u &= f \quad \text{in } \Omega, \\
        u & = g \quad \text{on } \partial\Omega,
    \end{aligned}
\end{equation}
where $u$ is the unknown physical process, $f$ is the driving process (e.g., white Gaussian noise), $g$ is the function value of $u$ on the boundary, $\mathcal{D}$ is some differential operator (e.g, a Laplacian or an Euler-Lagrange operator), and $\Omega$ is the domain. After finite difference discretization over the domain $\Omega$, the model is equivalent to (ignoring discretization error) the matrix equation
\begin{equation*}
    \bD\bu = \mat{f}.
\end{equation*}
Here, $\bD$ is a sparse matrix since $\mathcal{D}$ is a differential operator. Additionally, as shown below, $\bD$ admits the Kronecker structure as a mixture of Kronecker sums and Kronecker products.

The matrix $\bD$ reduces to a Kronecker sum when $\mathcal{D}$ involves no mixed derivatives. As an example, we consider the Poisson equation, an elliptical PDE that governs many physical processes including electromagnetic induction, heat transfer, and convection. On a rectangular region $\Omega=(0,d_1)\times(0,d_2)$ in the 2D Cartesian plane, the Poisson equation with homogeneous Dirichlet boundary condition is expressed as
\begin{equation}
    \begin{aligned}
        \mathcal{D}u = (\partial^2_x + \partial^2_y)u &= f \quad \text{in } \Omega, \\
        u &= 0 \quad \text{on } \partial\Omega
        \label{eq:Poisson}
    \end{aligned}
\end{equation}
where $f: \Omega \to \bbR$ is the given source function and $u: \Omega \to \bbR$ is the unknown process of interest. Using the finite difference method with a square mesh grid with unit spacing, the unknown and the source can be expressed as $d_1$-by-$d_2$ matrices, $\bU$ and $\bF$, respectively, that are related to each other via
\begin{align}
    U_{i+1,j} + U_{i-1,j} + U_{i,j+1} + U_{i,j-1} - 4 U_{i,j} = F_{i,j}
\end{align}
for any interior grid point $(i,j)$. Defining $n$-by-$n$ square matrix
\begin{equation*}
\mat{A}_n = 
    \begin{bmatrix}
    2   &   -1  &       &   \\
    -1  &   2   & \ddots&   \\
        & \ddots& \ddots& -1\\
        &       &   -1  & 2
    \end{bmatrix},
\end{equation*}
the above relation can be expressed as the (vectorized) Sylvester equation with $K=2$:
\begin{equation}
    (\bA_{d_1} \oplus \bA_{d_2})\bu = \bff,
    \label{eq:poisson_discrete}
\end{equation}
where $\bu = \vecto(\bU)$, $\bff = \vecto(\bF)$. Note that $\mat{A}$ is tridiagonal. In the case where $\mat{f}$ is white noise with variance $\sigma^2$, the inverse covariance matrix of $\bu$ has the form $\cov^{-1}(\bu)=\sigma^{-2}(\bA_{d_1} \oplus \bA_{d_2})^T(\bA_{d_1} \oplus \bA_{d_2})$ and hence sparse.

More generally, any physical process generated from Equation (\ref{eq:pde}) also has sparse inverse covariance matrices due to the sparsity of general discretized differential operators. Note that similar connections between continuous state physical processes and sparse ``discretized'' statistical models have been established by \citet{lindgren2011explicit}, who elucidated a link between Gaussian fields and Gauss Markov Random Fields via stochastic partial differential equations. 

\subsection{Kronecker-structured covariance models}
Classic regularized estimators such as the graphical lasso~\citep[Glasso,][]{friedman2008sparse} for the (inverse) covariance induced by Equation~\eqref{eq:poisson_discrete} may fail because: 1) both $d_1$ and $d_2$ may be large (and as a result, $d=d_1d_2$ is large) for large spatial fields/domains; 2) ignoring the Kronecker structure may lead to (statistical) inefficiency of the method, i.e., the estimator not converging to the estimand; 3) ignoring the generative process in Equation~\eqref{eq:poisson_discrete} will result in learned structures that are not easily (physically) interpretable.

To address these issues in learning second-order representations for multiway (tensor) data, (sparse) Kronecker product (KP) or Kronecker sum (KS) decomposition of $\mat{\Sigma}$ or $\mat{\Omega}$ are often employed. Statistical models and corresponding learning algorithms can be derived using generative models or matrix approximations. The former include: KGlasso/Tlasso~\citep{tsiligkaridis2013convergence,lyu2019tensor} for estimating $\mat{\Omega}=\mat{A} \otimes \mat{B}$, using a autoregressive representation $\mat{A}\mat{X}\mat{B} = \mat{Z}$ for data $\mat{X}$ when $\mat{Z}$ is white noise. Another generative model is SyGlasso/SG-PALM (Chapter~\ref{ch:syglasso},\ref{ch:sgpalm}) that models the precision matrix as $\mat{\Omega}=(\mat{A} \oplus \mat{B})^2$, which corresponds to assuming the data $\mat{X}$ obeys a Sylvester equation $\mat{X} \mat{A} + \mat{B} \mat{X} = \mat{Z}$. Matrix approximation methods include: KPCA~\citep{tsiligkaridis2013covariance,greenewald2015robust} that approximates the covariance matrix as $\mat{\Sigma}=\sum_{i=1}^l \mat{A}_i \otimes \mat{B}_i$, i.e., low separation rank $l$. Another matrix approximation method is the TeraLasso~\citep{greenewald2019tensor} that models the precision matrix as $\mat{\Omega}=\mat{A} \oplus \mat{B}$. TeraLasso is equivalent to approximation of the conditional dependency graph (encoded by the precision matrix) with a Cartesian product of smaller graphs~\footnote{Note that Tlasso, TeraLasso, Syglasso/SG-PALM are generalizable to precision matrices of the form $\bigotimes_{k=1}^K \bm\Psi_k$, $\bigoplus_{k=1}^K \bm\Psi_k$, and $(\bigoplus_{k=1}^K \bm\Psi_k)^2$, respectively, for $K \geq 2$.}. 

All of KGlasso/Tlasso, TeraLasso, and SyGlasso/SG-PALM can be formulated using a penalized Gaussian likelihood approach. Here, we give a brief review of penalized Gaussian graphical models for multiway, tensor-valued data. A random tensor $\tensor{X} \in \bbR^{d_1 \times \dots \times d_K}$ follows the tensor normal distribution with zero mean when $\vecto(\tensor{X})$ follows a normal distribution with mean $\mat{0} \in \bbR^d$ and precision matrix $\bm\Omega := \bm\Omega(\bm\Psi_1,\dots,\bm\Psi_K)$, where $d=\prod_{k=1}^K d_k$. Here, $\bm\Omega(\bm\Psi_1,\dots,\bm\Psi_K)$ is parameterized by $\bm\Psi_k \in \bbR^{d_k \times d_k}$ via either Kronecker product, Kronecker sum, or the Sylvester structure, and the corresponding negative log-likelihood function (assuming $N$ independent observations $\tensor{X}^i, i=1,\dots,N$)
\begin{equation}
     -\frac{N}{2} \log|\bm\Omega| + \frac{N}{2}\tr(\mat{S}\bm\Omega),
\end{equation}
where $\bm\Omega = \bigotimes_{k=1}^K \bm\Psi_k$, $\bigoplus_{k=1}^K \bm\Psi_k$, or $\Big(\bigoplus_{k=1}^K \bm\Psi_k\Big)^2$ for KP, KS, and Sylvester models, respectively; and $\mat{S} = \frac{1}{N}\sum_{i=1}^N \vecto(\tensor{X}^i) \vecto(\tensor{X}^i)^T$. For $K=1$, this formulation reduces to the vector normal distribution with zero mean and precision matrix $\bm\Psi_1$.

To encourage sparsity in the high-dimensional scenario, penalized negative log-likelihood function is proposed
\begin{equation*}
     -\frac{N}{2} \log|\bm\Omega| + \frac{N}{2}\tr(\mat{S}\bm\Omega) + \sum_{k=1}^K P_{\lambda_k}(\bm\Psi_k),
\end{equation*}
where $P_{\lambda_k}(\cdot)$ is a penalty function indexed by the tuning parameter $\lambda_k$ and is applied elementwise to the off-diagonal elements of $\bm\Psi_k$. Popular choices for $P_{\lambda_k}(\cdot)$ include the lasso penalty~\citep{tibshirani1996regression}, the adaptive lasso penalty~\citep{zou2006adaptive}, the SCAD penalty~\citep{fan2001variable}, and the MCP penalty~\citep{zhang2010nearly}. 

To further reduce computational complexity and improve robustness, the Sylvester models in SyGlassoi/SG-PALM consider the penalized negative log-pseudolikelihood
\begin{equation}
  \begin{aligned}
    \mathcal{L}_{\mat\lambda}(\mat\Psi)
    = & -\frac{N}{2} \log | (\bigoplus_{k=1}^K \diag(\mat\Psi_k))^2| \\
    & + \frac{N}{2}\tr(\mat{S}\bm\Omega) + \sum_{k=1}^K P_{\lambda_k}(\bm\Psi_k).
  \end{aligned}
\end{equation}
This differs from the true penalized Gaussian negative log-likelihood in the exclusion of off-diagonals of $\mat\Psi_k$'s in the log-determinant term. It is motivated and derived directly using the Sylvester equation, from the perspective of solving a sparse linear system (see Chapters~\ref{ch:sgpalm} and \ref{ch:sgpalm} for details). This maximum pseudolikelihood estimation procedure has been applied to vector-variate Gaussian graphical models (see \citet{khare2015convex} and references therein). 

Lastly, the matrix approximation approach of \cite{tsiligkaridis2013covariance} to multiway covariance estimation is based on the representation $\mat{\Sigma}=\sum_{i=1}^l \mat{A}_i \otimes \mat{B}_i$. The representation is universal: any square matrix can be represented as a sum of $l$ Kronecker products for sufficiently large $l\leq \min\{d_1^2,d_2^2\}$, as shown in \citet{van1993approximation}. The Kronecker components can be obtained via a penalized optimization approach for estimating a rank $l$ Kronecker product decomposition of the sample covariance $\mat{S}$, i.e., 
$$
\min_{\{\bA_i,\bB_i\}} \left\| \bS-\sum_{i=1}^l \bA_i \otimes \bB_i \right\|^2_F+\lambda\left\|\sum_{i=1}^l \bA_i \otimes \bB_i \right\|_*
$$
for a user-supplied regularization parameter $\lambda>0$. The solution to this penalized optimization is specified by the first $l$ principal components of the  singular value decomposition (SVD) of $\calR(\bS)$ where $l$ is determined by $\lambda$ through a soft-thresholding of the SVD spectrum. In analogy to the ordinary PCA algorithm, the soft-thresholding SVD solution to this optimization problem was called Kronecker PCA (KPCA) in \cite{greenewald2014kronecker}.

\section{Penalized Multiway Ensemble Kalman Filter}\label{sec:enkf-method}
The proposed multiway ensemble Kalman filter, whose pseudo code is shown in Algorithm~\ref{alg:enkf}, modifies the EnKF by using a forecast (inverse) covariance estimator $\widehat{\bSigma}^f_t = (\widehat{\bOmega}^f_t)^{-1}$ obtained from one of the Kronecker-structured methods. From this, the correspondingly modified Kalman gain matrix is given by
\begin{equation}\label{eq:kalman_gain}
    \widehat{\bK}_t = \widehat{\bSigma}^f_t \bH^T (\bH \widehat{\bSigma}^f_t \bH^T + \bR)^{-1} = ((\widehat{\bOmega}^f_t)^{-1} + \bH^T \bR^{-1} \bH)^{-1} \bH^T \bR^{-1}.
\end{equation}

As the observations are assimilated into the EnKF through the ensemble update, which depends linearly on the Kalman gain matrix (as outlined in Algorithm~\ref{alg:enkf}), an accurate estimate of the true $\bK_t$ ensures that data is properly incorporated into the forecast ensemble. \citet[Theorem 1,][]{hou2021penalized} argues that the estimator $\widehat{\bK}_t$ via a Glasso covariance estimator is asymptotically consistent with the true Kalman gain matrix, given conditions on the state ensembles and the regularization parameters in the covariance estimation procedure. This nice property is a result of convergence of the Glasso-type sparse covariance estimator. Here, the Kronecker-structured estimators outlined in the previous section all enjoy faster rates of convergence, under appropriate regularity conditions. In Table~\ref{tab:enkf_guarantees}, we summarize the theoretical guarantees on the (inverse) covariance estimators, and hence the estimator $\widehat{\bK}_t$ under different modeling assumptions, i.e., KP, KS, Sylvester. All estimators have a similar $\sqrt{\log d / N}$ factor. However, comparing to Glasso that has an additional $\sqrt{d+s}$ factor, the Kronecker-structured estimators have additional factors that depend on the smaller $d_k$'s (assuming the number of tensor modes remain constant), that is, $\sqrt{l \sum_k d_k^2}$ for matrix approximation based estimators~\footnote{Here, $l$ indicates the separation rank.}, $\sqrt{\sum_k d_k}$ for KP based inverse covariance estimators, $\sqrt{(d+s)/\min_k m_k}$ for KS based inverse covariance estimators, and $\max_k \sqrt{s_k d_k}$ for the Sylvester estimators. These indicate improved theoretical accuracy on estimating the state (inverse) covariance and the Kalman gain matrix.

\begin{algorithm}[!tbh]
\begin{minipage}{0.9\linewidth}
\begin{algorithmic}
\caption{Multiway Ensemble Kalman Filter}\label{alg:enkf}
\REQUIRE Initial ensemble $\widehat{\bx}^{(1)}_{0},\dots,\widehat{\bx}^{(N)}_{0}$, observations at each time $\by_t$, measurement operator $\mat{H}$,  state and observation noise covariance matrices $\mat{Q}$ and $\mat{R}$
\FOR{$t = 1,\dots,T$}
    \FOR{$i = 1,\dots,N$}
        \STATE \textit{Forecast Step:} 
        Evolve each ensemble member forward in time via $\widetilde{\bx}^{(i)}_t = f(\widehat{\bx}^{(i)}_{t-1}) + \bfw^{(i)}$, with $\bfw^{(i)} \sim \mathcal{N}_d(\mat{0}, \mat{Q})$
        
        \STATE \textit{Multiway Covariance Estimation:} 
        Estimate the (inverse) covariance via and compute the Kalman gain matrix $\widehat{\bK}_t$ via \eqref{eq:kalman_gain} 
        
        \STATE \textit{Update Step:} 
        Update the ensemble with the observations by computing $\widehat{\bx}^{(i)}_t = \widetilde{\bx}^{(i)}_t + \widehat{\bK}_t(\by_t + \bfv_t^{(i)} - \bH \Tilde{\bx}^{(i)}_t)$, where $\bfv_t^{(i)} \sim \mathcal{N}_r(\mat{0}, \mat{R})$
    \ENDFOR
\ENDFOR
\ENSURE Final ensemble $\widehat{\bx}^{(1)}_{T},\dots,\widehat{\bx}^{(N)}_{T}$.
\end{algorithmic}
\end{minipage}
\end{algorithm}

In terms of computational complexity, all the structured precision estimation algorithms are variants of Glasso, implemented with techniques tailored to the model assumptions for speedup. Generally speaking, the resulting complexity consists of the mode-wise complexity ($d_k^3$) and the cost of updating the objective: $dK$ for TeraLasso, $Nd$ for KGlasso, and $N \sum_k \sum_{j \neq k} d_j m_j^2$ for SG-PALM. The mode-wise complexity of TeraLasso is dominated by matrix inversion, which is hard to scale for general problem instances. For KGlasso, the mode-wise complexity is the same as that of running a Glasso-type algorithm for each mode, which could be improved by applying state-of-the-art optimization techniques developed for vector-variate Gaussian graphical models such as \citet{hsieh2013big}. For SG-PALM, the mode-wise operations involve only sparse-dense matrix multiplications, which could be improved to $O(d_k \cdot \textsf{nnz})$, where $\textsf{nnz}$ counts the number of non-zero elements of the sparse matrix (i.e., the estimated $\bm\Psi_k$ at each iteration). This could greatly reduce the computational cost for extremely sparse $\bm\Psi_k$, e.g., with only $O(d_k)$ non-zero elements. Further, KGlasso and SG-PALM both incur a cost of the type $O(N d_k m_k^2)$ for each mode-wise update. This can also be reduced to be $\approx d$ for sparse estimated $\bm\Psi_k$'s at each iteration. Overall, for sample-starved setting where we only have access to a handful of data samples, structured KP and KS models run similarly fast, while the Sylvester GM runs slower theoretically due to the extra and richer structures that it takes into account. The matrix approximation based estimation procedure, KPCA, is in general computationally more expensive than other KP, KS, and Sylvester based methods. This is mostly due to the absence of the sparsity structure in the latter model, as well as as SVD step involved during the its estimation algorithm. There exist faster randomized methods for truncated SVD~\citep{halko2011finding}. Thus, it still scales well for moderately high-dimensional applications. In Table~\ref{tab:enkf_runtime} wall-clock runtimes of EnKF integrated with theses (inverse) covariance estimation algorithms are compared under various settings. The table confirms the aforementioned theoretical computational complexities. 

\begin{table}
\centering
\caption{Comparison of theoretical guarantees on sample complexity (statistical error) and computational complexity of various precision / covariance estimators. Here, $M = \max\{d_1, d_2, N\}$, $m_k = \prod_{i \neq k} d_i$ is the co-dimension of the $k$-th mode, $d = \prod_{k=1}^K d_i$, and $s_k$ characterizes the sparsity of each of the inverse covariance Kronecker factors $s_k = |\{(i,j): i \neq j, [\bm\Psi_k]_{i,j} \neq 0\}|$, $s$ is the sparsity of the full inverse covariance $s = |\{(i,j): i \neq j, \bm\Omega_{i,j} \neq 0\}|$ and $s = \sum_{k=1}^K m_k s_k$ if $\bOmega$ satisfies the Kronecker sum model.
}
\label{tab:enkf_guarantees}
\resizebox{\columnwidth}{!}{
\begin{tabular}{ l c c c}
\toprule
  Model & Algorithm & Statistical Error & Computational Complexity \\
\midrule
  Sparse-Precision
    &   Glasso~\citep{friedman2008sparse} & $O_P\Big( \sqrt{\frac{(d+s) \log d}{N}} \Big)$ & $O(d^3)$\\
\midrule
  KP-Covariance
    &   Robust KPCA~\citep{greenewald2015robust} & $O_P\Big( \sqrt{\frac{l(d_1^2+d_2^2+\log M)}{N}} \Big)$ & $O(l d^2)$\\
\midrule
  KP-Precision
    &   KGlasso~\citep{tsiligkaridis2013convergence} & $O_P\Big(\sqrt{\frac{(d_1+d_2)\log M}{N}}\Big)$ & $O(d_1^3 + d_2^3 +Nd)$\\
\midrule
  KS-Precision
    &   TeraLasso~\citep{greenewald2019tensor} & $O_P\Big(\sqrt{K+1}\cdot \sqrt{\frac{(d+s) \log d}{N \min_k m_k}}\Big)$ & $O(dK+\sum_{k=1}^K d_k^3)$ \\
\midrule
  Sylvester GM
    &   SG-PALM~\citep{wang2021sg} & $O_P\Big(\sqrt{K}\cdot \max_k \sqrt{\frac{s_k d_k \log d}{N}}\Big)$ & $O\Big(\sum_{k=1}^K (d_k^3 + N \sum_{j \neq k} d_jm_j^2) \Big)$ \\
\bottomrule
\end{tabular}
}
\end{table}

\begin{landscape}
\begin{table}[tbh!]
\centering
\caption{Runtime (in seconds) of $20$ time steps of EnKF tracking using various (inverse) covariance estimation algorithms. Comparisons under various problem sizes (i.e., different $d$ and $N$) and two observation types (i.e., fully observed or partially observed) are shown. Note the sparse multiway precision models (SG-PALM, KGlasso, TeraLasso) are comparably fast and are all faster than Glasso (for large problems) and KPCA.}
\label{tab:enkf_runtime}
\begin{tabular}{|p{0.5cm}|p{0.5cm}|p{0.6cm}||r|r|r|r|r|}
 \multicolumn{8}{c}{} \\
 \hline
 \multirow{2}{*}{$d$} & \multirow{2}{*}{$N$} & \multirow{2}{*}{obs.} & \textbf{Glasso} & \textbf{SG-PALM} & \textbf{TeraLasso} & \textbf{KGlasso} & \textbf{KronPCA}\\
 \cline{4-8} 
 &&& \textbf{sec} & \textbf{sec} & \textbf{sec} & \textbf{sec} & \textbf{sec} \\
 \hline
 \multirow{6}{*}{$32^2$} & \multirow{2}{*}{$25$} & full &
$14.52 (0.20)$ & $18.03 (0.15)$ & $17.06 (0.35)$ & $18.60 (0.11)$ & $80.88 (0.20)$ \\
 && part &
$10.95 (0.01)$ & $13.94 (0.05)$ & $11.02 (0.03)$ & $11.88 (0.10)$ & $53.89 (0.88)$ \\
 \cline{2-8}
 & \multirow{2}{*}{$50$} & full & 
$24.88 (0.21)$ & $31.02 (0.08)$ & $27.53 (0.50)$ & $33.25 (0.10)$ & $92.76 (0.55)$ \\
 && part &
$18.95 (0.09)$ & $25.37 (0.53)$ & $18.63 (0.02)$ & $28.63 (0.11)$ & $66.12 (0.56)$ \\
 \cline{2-8}
 & \multirow{2}{*}{$100$} & full & 
$48.23 (0.05)$ & $58.63 (0.13)$ & $49.62 (0.28)$ & $67.53 (0.30)$ & $131.43 (1.05)$\\
 && part &
$36.41 (1.05)$ & $41.80 (0.25)$ & $36.80 (0.55)$ & $53.77 (0.14)$ & $78.74 (1.13)$ \\
 \cline{1-8}
 \multirow{6}{*}{$64^2$} & \multirow{2}{*}{$25$} & full &
$288.13 (2.45)$ & $207.71 (1.09)$ & $195.92 (0.58)$ & $217.71 (1.99)$ & $2562.19 (2.69)$ \\
 && part &
$281.96 (2.04)$ & $191.56 (1.21)$ & $185.19 (0.88)$ & $193.67 (3.09)$ & $2593.48 (3.89)$ \\
 \cline{2-8}
 & \multirow{2}{*}{$50$} & full & 
$489.82 (0.98)$ & $353.10 (1.81)$ & $222.09 (1.98)$ & $370.11 (2.00)$ & $4535.72 (2.19)$ \\
 && part &
 $422.94 (0.72)$ & $287.34 (1.90)$ & $277.78 (2.09)$ & $290.05 (0.56)$ & $3890.22 (1.96)$ \\
 \cline{2-8}
 & \multirow{2}{*}{$100$} & full & 
 $734.72 (2.01)$ & $529.65 (1.10)$ & $499.60 (0.72)$ & $555.17 (0.57)$ & $6522.67 (4.01)$ \\
 && part &
$507.53 (0.91)$  & $344.81 (1.90)$ & $333.24 (2.89)$ & $348.61 (3.90)$ & $4668.26 (2.67)$ \\
 \hline
\end{tabular}
\end{table}
\end{landscape}

\section{Numerical Experiments}\label{sec:enkf-experiments}
We describe three dynamic models that extend the spatial Poisson equation described in Section~\ref{sec:enkf-diffusion} to incorporate temporal dynamics, and the resulting multiway (inverse) covariance structure. These models will be used in our numerical experiments to generate data to demonstrate the performance of the proposed multiway ensemble Kalman filter algorithm.

\paragraph{Poisson-AR(1) Process.}
The first extension, which we call the Poisson-AR(1) process, imposes an autoregressive temporal model of order 1 on the source function $f$ in the Poisson equation (\ref{eq:Poisson}). Specifically, we say a sequence of discretized spatial observations $\{\bU^k \in \bbR^{d_1\times d_2}\}_k$ indexed by time step $k=1,\cdots,T$ is from a Poisson-AR(1) process if
\begin{equation*}\label{eq:ar1-discrete}
    \begin{aligned}
    &(\mat{A}_{d_1} \oplus \mat{A}_{d_2}) \vecto(\mat{U}^k) = \vecto(\mat{Z}^k), \\
    &\vecto(\mat{Z}^k) = a \vecto(\mat{Z}^{k-1}) + \vecto(\mat{W}^k),\quad |a|<1,   
    \end{aligned}
\end{equation*}
where $\{\mat{W}^k \in \bbR^{d_1\times d_2}\}_k$ is spatial white noise, i.e., $W_{i,j}^k \sim \mathcal{N}(0, \sigma^2_w)$, i.i.d.

\paragraph{Convection-diffusion Process.} 
The second time-varying extension of the Poisson PDE model (\ref{eq:Poisson}) is based on the convection-diffusion (C-D) process~\citep{chandrasekhar1943stochastic}
\begin{equation}\label{eqn:convec-diff}
    \pdv{u}{t} = \theta \sum_{i=1}^2 \pdv[2]{u}{x_i} - \epsilon \sum_{i=1}^2 \pdv{u}{x_i}.
\end{equation}
Here, $\theta > 0$ is the diffusivity; and $\epsilon \in \Reals$ is the convection velocity of the quantity along each coordinate. Note that for simplicity of discussion here, we assume these coefficients do not change with space and time (see, \citet{stocker2011introduction}, for example, for a detailed discussion). These equations are closely related to the Navier-Stokes equation commonly used in stochastic modeling for weather and climate prediction~\citep{chandrasekhar1943stochastic,stocker2011introduction}. Coupled with Maxwell's equations, these equations can be used to model magneto-hydrodynamics~\citep{roberts2006slow}, which characterize solar activities including flares.


A solution of Equation~\eqref{eqn:convec-diff} can be approximated similarly as in the Poisson equation case, through a finite difference approach. Denote the discrete spatial samples of $u(\mat{x},t)$ at time $t_k$ as a matrix $\mat{U}^k\in\bbR^{d_1 \times d_2}$. We obtain a discretized update propagating $u(\mat{x},t)$ in space and time, which locally satisfies
\begin{equation}\label{eqn:convec-diff-discrete}
\begin{aligned}
    \frac{U_{i,j}^k - U_{i,j}^{k-1}}{\Delta t} = &\ \theta \left(\frac{U_{i+1,j}^k + U_{i-1,j}^k + U_{i,j+1}^k + U_{i,j-1}^k - 4U_{i,j}^k}{h^2}\right) \\
    &- \epsilon \left(\frac{U_{i+1,j}^k - U_{i-1,j}^k + U_{i,j+1}^k - U_{i,j-1}^k}{2h}\right),
\end{aligned}
\end{equation}
where $\Delta t = t_{k+1} - t_{k}$ is the time step and $h$ is the mesh step (spatial grid spacing). Then, the temporal update of $\mat{U}^k$ can be shown to obey the Sylvester matrix update equation~\citep{thomas2013numerical} $\mat{A}_{d_1}\mat{U}^k + \mat{U}^k\mat{A}_{d_2}^T = \mat{U}^{k-1}$,
or equivalently,
\begin{align}
    (\mat{A}_{d_2} \oplus \mat{A}_{d_1})\vecto({\mat{U}}^k) =\vecto({\mat{U}}^{k-1}),
    \label{eq:CD_discrete_update}
\end{align}
where $\mat{A}_{d_1} = \mat{A}_{d_1}(\theta,\epsilon,h,\Delta t)$ and $\mat{A}_{d_2} = \mat{A}_{d_2}(\theta,\epsilon,h,\Delta t)$ are symmetric tridiagonal matrices whose entries depend on $\theta, \epsilon$, $\Delta t$ and $h$ \citep{grasedyck2004existence}. 



\paragraph{Kuramoto-Sivashinsky Process.} The third extension of a spatial diffusion process is the Kuramoto-Sivashinsky (K-S) equation, which is a class of non-linear fourth-order PDEs known to exhibit chaotic behaviors~\citep{hyman1986kuramoto}. Specifically, the K-S equation in a 2D spatial domain can be written as
\begin{equation*}
    u_t + \Delta u + \Delta^2 u + \frac{1}{2}|\nabla u|^2 = 0,
\end{equation*}
or equivalently,
\begin{equation}
    \pdv{u}{t} + \sum_{i=1}^2 \pdv[2]{u}{x_i} + \sum_{i=1}^2 \pdv[4]{u}{x_i} + \frac{\partial^4 u}{\partial x_1^2 \partial x_2^2} + \sum_{i=1}^2 \Big(\pdv{u}{x_i} \Big)^2.
\end{equation}
Here, although we can similarly apply finite difference approximation to the differential operators, the equation is non-linear and simple linear algebraic update like in the Poisson-AR and convection-diffusion cases is not available.

For numerical illustrations, we consider a 2D spatio-temporal process of dimension $64 \times 64$ where only half of the entries are observed, which leads to a measurement matrix $\mat{H} \in \{0,1\}^{2048 \times 4096}$. We generated the true states and the corresponding observations according to Poisson-AR(1), convection-diffusion, and Kuramoto-Sivashinsky dynamics for $T = 50$ time steps. Several realizations of the true state variables are shown in Figure~\ref{fig:enkf_states} of Appendix~\ref{app:enkf}. At each time step, we generated an ensemble of size $N = 15$ and estimated the state covariance / inverse covariance using several sparse (multiway) inverse covariance estimation methods, including Glasso~\citep{friedman2008sparse}, KPCA~\citep{greenewald2015robust}, KGlasso~\citep{tsiligkaridis2013convergence}, TeraLasso~\citep{greenewald2019tensor}, SG-PALM~\citep{wang2021sg}. 

Figure~\ref{fig:rmse_linear_enkf} shows evolution of the computed root mean squared errors (RMSEs) for the estimated states under the Poisson-AR (left panel) and the convection-diffusion (right panel) processes across all ensemble members. It is noted that SG-PALM, which corresponds to the statistical method that models the inverse covariance as a squared Kronecker sum, performs the best under the Poisson-AR generating process. In Figure~\ref{fig:inv_cov_struct} (a) we show the true and estimated (inverse) covariance matrices obtained at the last time step -- at each time step the multiway EnKF involves estimation of a sparse Kronecker sum squared inverse covariance matrix induced by the Poisson-AR process. Hence, the SG-PALM method operates under the correct model assumption in this situation. On the other hand, the KPCA method outperforms other methods as time progresses. This is due to the fact that the inverse covariance structure under the convection-diffusion dynamics model is dense due to the smoothing nature of the Kalman filtering algorithm. But, its steady-state covariance has low-dimensional structures as shown in Figure~\ref{fig:inv_cov_struct} (b). The KPCA in this case was able to approximate this structure reasonably well as it does not impose any sparsity on the precision matrix. Remarks~\ref{rmk:dense-inv-cov} and \ref{rmk:blocked-inv-cov} below further discuss this emergence of dense precision matrix for the marginal spatial process. Appendix~\ref{app:enkf} illustrates situations where the joint spatio-temporal precision matrix is sparse. Note in this case, the EnKF with Glasso converges slower than the multiway methods.

\begin{remark}\label{rmk:dense-inv-cov}
Although the state variable following the convection-diffusion dynamics evolves via a Sylvester equation, similar to the Poisson-AR case, the state (inverse) covariance matrix at time step $t_k$ admits different structures. Specifically, the state precision matrix $\mat\Omega^k=\cov^{-1}(\vecto({\mat{U}^k})) \in \Reals^{d_1d_2 \times d_1d_2}$ evolves as $\mat\Omega^k = (\mat{A}_{d_1} \oplus \mat{A}_{d_2}) \mat\Omega^{k-1} (\mat{A}_{d_1} \oplus \mat{A}_{d_2}) + \sigma^{-2}_w\mat{I}$ (see \citet{katzfuss2016understanding}, for example). This matrix is not necessarily sparse for finite $k$ but, assuming that the eigenvalues of the matrix $\mat{A}_{d_1} \oplus \mat{A}_{d_2}$ are in $(-1,1)$, the limiting precision matrix  $\mat\Omega^{\infty}=\lim_{k\rightarrow\infty} \mat\Omega^k$ is $\mat\Omega^{\infty} = (\mat{A}_{d_1} \oplus \mat{A}_{d_2}) \mat\Omega^{\infty} (\mat{A}_{d_1} \oplus \mat{A}_{d_2}) + \sigma^{-2}\mat{I}$. The $\mat\Omega^{\infty}$ matrix is sparse because $\mat{A}_{d_1}$ and $\mat{A}_{d_2}$ are both tridiagonal.
\end{remark}

\begin{figure}[!tbh]
    \centering
    \includegraphics[width=0.45\textwidth]{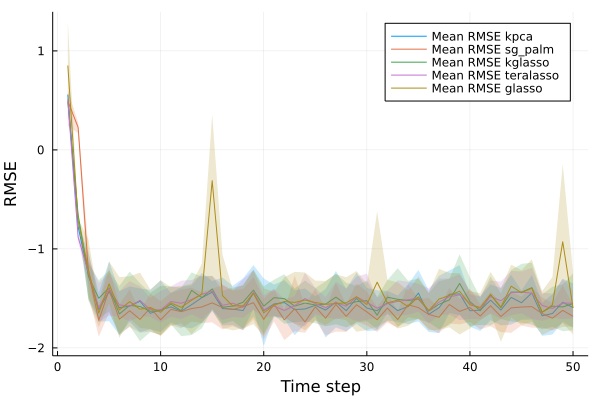}
    \includegraphics[width=0.45\textwidth]{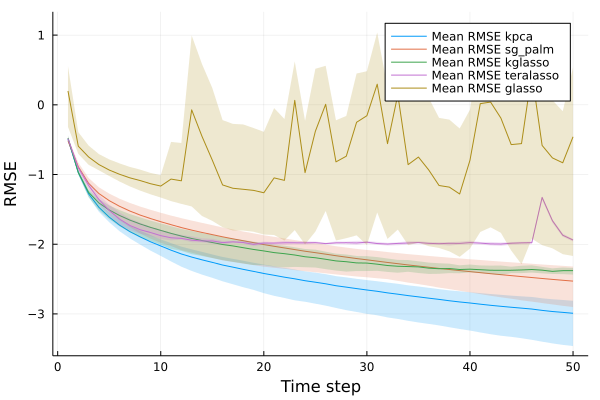}
    \caption{RMSEs of the estimated states via EnKF over $50$ time steps using different (inverse) covariance estimators. The $95\%$ posterior interval for RMSEs over all ensemble members are shown here with the posterior mean highlighted using solid lines. Here, each state is of dimension $64 \times 64$ and is generated via either a convection-diffusion (right) or Poisson-AR(1) equation (left). The best performers in terms of mean RMSE over all ensemble members are KPCA for convection-diffusion and SG-PALM for Poisson-AR(1).}
    \label{fig:rmse_linear_enkf}
\end{figure}

\begin{figure}[!tbh]
    \centering
    \begin{subfigure}{\textwidth}
    \centering
    \includegraphics[width=0.45\textwidth]{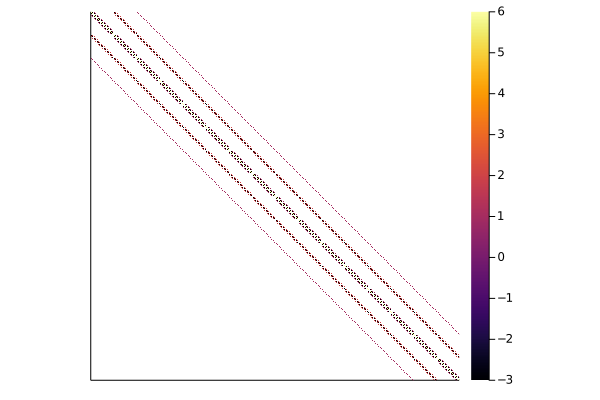}
    \includegraphics[width=0.45\textwidth]{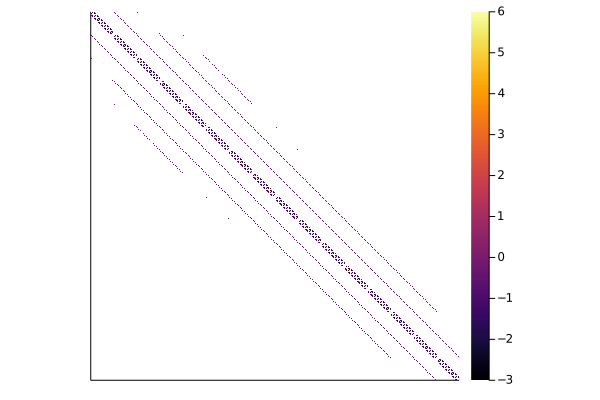}
    \caption{Poisson-AR inverse covariance structure (left) and the estimate obtained by SG-PALM (right) at the last time step.}
    \end{subfigure}
    \begin{subfigure}{\textwidth}
    \centering
    \includegraphics[width=0.45\textwidth]{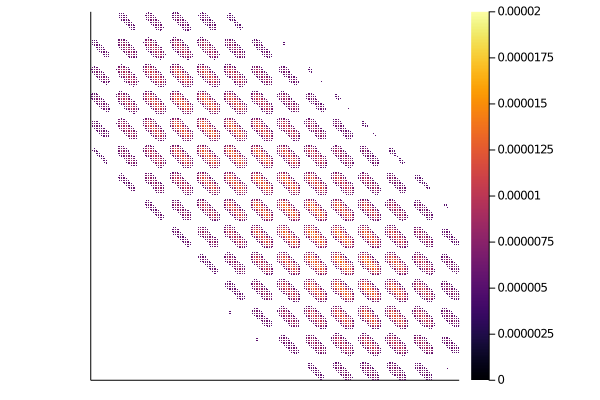}
    \includegraphics[width=0.45\textwidth]{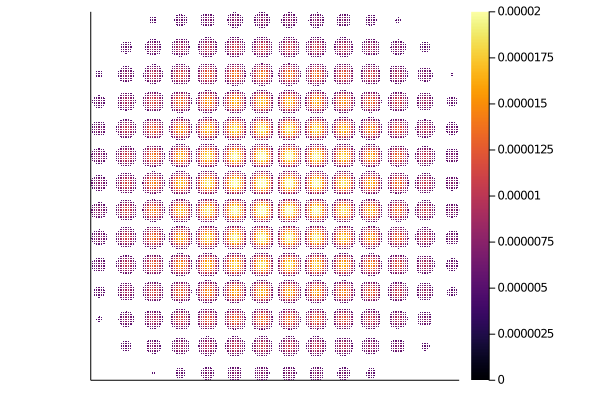}
    \caption{Convection-diffusion covariance structure (left) and the estimate obtained by KPCA at the last time step.}
    \end{subfigure}
    \caption{Covariance/precision structures for Poisson-AR and convection-diffusion dynamics and their estimates. Here, white/blank entries indicate zeros in the (inverse) covariance matrix. For Poisson-AR dynamics the Sylvester graphical model approximately matches the true structure of the precision matrix. For convection-diffusion dynamics the covariance instead of the precision matrix is structured and sparse.}
    \label{fig:inv_cov_struct}
\end{figure}

Tacking the Poisson-AR and convection-diffusion dynamics with EnKF both involve sparse (on either the covariance or its inverse) and tractable linear updates. The Kuramoto-Sivashinsky dynamical model will similarly involve sparse updates if finite difference approximations are employed for solving the PDE because the discretized differential operators will always be sparse. But, the non-linear nature of the problem makes the update intractable. Moreover, the KS equation is known to generate chaotic behaviors, making it a more realistic benchmark model for real-world systems. Here, two of the best performers under the Possion-AR and convection-diffusion dynamics (SG-PALM and KPCA) are compared against the ensemble transform Kalman filter (ETKF) and its localized version, a method known to work well for tracking high-dimensional highly non-linear systems with limited ensemble size. It has been successfully applied for data assimilation of, for example, the solar photospheric magnetic flux, which are fundamental drivers for simulations of the corona and solar wind~\citep{hickmann2015data}. Figure~\ref{fig:sgpalm_states} (a) shows that the proposed multiway EnKF outperforms the (local) ETKF. The KPCA based estimator outperforms the SG-PALM based estimator as time progresses, likely due to a similar reason discussed previously -- the inverse covariance structure becomes denser and denser, making the sparse models less appealing. The local ETKF performs similarly well as the non-local version but facilitates parallel estimation schemes where the ``local patches'' of the state variable can be updated and evolved simultaneously. Figure~\ref{fig:sgpalm_states} (b) visualizes the true and estimated KS states by SG-PALM multiway EnKF at several timestamps. It shows that the proposed method can correct the noisy observations (with missing values) and recovers the true states reasonably well.

\begin{figure}[!tbh]
\centering
    \begin{subfigure}{0.7\textwidth}
    \centering
    \includegraphics[width=\textwidth]{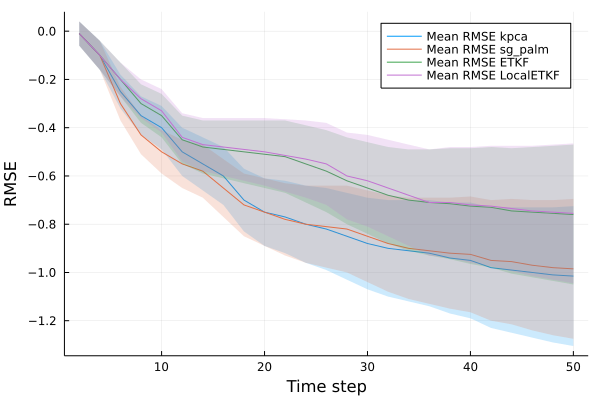}
    \caption{Root mean squared errors in $\log10$ scale for state estimated by EnKF variants. The solid lines and the shaded areas indicate the posterior mean and the $95\%$ posterior interval over all ensemble estimates.}
    \end{subfigure}
    \\
    \begin{subfigure}{\textwidth}
    \centering
    \includegraphics[width=\textwidth]{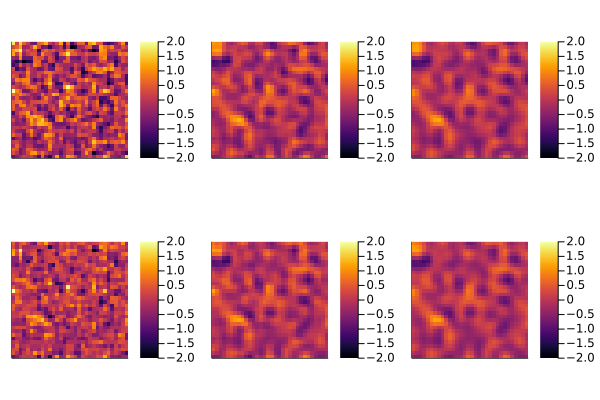}
    \caption{Comparison between true states (top) and estimated states by EnKF with the SG-PALM inverse covariance estimator (bottom) at several time stamps.}
    \end{subfigure}
    \caption{Visualizations of the performances by various EnKF methods for tracking the Kuramoto-Sivashinsky system. The proposed multiway EnKF outperforms the ETKF and its localized version.}
    \label{fig:sgpalm_states}
\end{figure}

\begin{remark}
The Sylvester matrix equations (and hence the sparse Kronecker structures) arise when the finite-difference discretization is performed on a rectangular grid. The relations~\eqref{eq:ar1-discrete} and \eqref{eq:CD_discrete_update} might not hold for finite-difference on, for example, spherical coordinates, as well as approximations to the equations using other types of methods, such as finite volume, finite element, and spectral methods.
\end{remark}

\begin{remark}\label{rmk:blocked-inv-cov}
Although the precision matrix of the state ensemble becomes dense as the temporal update progresses, making sparse Kronecker-structured methods less appealing as illustrated in Figure~\ref{fig:rmse_linear_enkf} and \ref{fig:inv_cov_struct}, if we consider ``temporal blocks'' of states then the precision matrix remains sparse. Appendix~\ref{app:enkf} includes detailed derivations of the blocked versions of the Poisson-AR and convection-diffusion dynamics, and illustrates the performances of the Kronecker-structured models under these scenarios.
\end{remark}

\section{Conclusions}\label{sec:enkf-conclude}
Spatiotemporal PDEs are prominent techniques for modeling real-world physical systems. In this chapter, we introduced a multiway ensemble Kalman filtering framework that integrates the powerful ensemble Kalman filters with state-of-the-art Kronecker-structured covariance/precision models. The resulting framework allows one to track simulated complex, potentially chaotic systems. One such system in the real world arises in space physics, where solar flares and coronal mass ejections are associated with rapid changes in field connectivity and are powered by partial dissipation of electrical currents in the solar atmosphere~\citep{schrijver2008nonlinear}. The nonlinear force-free field model is often used to describe the solar coronal magnetic field~\citep{derosa2015influence,wheatland2013state} and can be derived from the convection-diffusion process described in this work. Additionally, global maps of the solar photospheric magnetic flux are fundamental drivers for simulations of the corona and solar wind. However, observations of the solar photosphere are only made intermittently over approximately half of the solar surface. \citet{hickmann2015data} introduced the Air Force Data Assimilative Photospheric Flux Transport model that uses localized ensemble transform Kalman filtering to adjust a set of photospheric simulations to agree with the available observations. In future work, we plan to incorporate our proposed multiway EnKF framework for tracking these solar physical systems. 


\chapter{A Geometry-driven Framework for Dynamic Topic Modeling}
\label{ch:gdtm}
A simple and scalable framework for longitudinal analysis of text data is developed that combines latent topic models with computational geometric methods. Dimensionality reduction tools from computational geometry are applied to learn the intrinsic manifold on which the latent, temporal topics reside. Then shortest path distances on the manifold are used to link together these topics. The proposed framework permits visualization of the low-dimensional embedding, which provides clear interpretation of the complex, high-dimensional trajectories that may exist among latent topics. Practical application of the proposed framework is demonstrated through its ability to 1) capture and effectively visualize natural progression of latent COVID-19--related topics learned from Twitter data; 2) learn latent topics correspond to human-labeled data and ``generate'' novel latent topics from TalkLife -- a peer support network focused on mental health. Interpretability of the trajectories and the learned topics is achieved by comparing to real-world events and expert knowledge (e.g., labeled data). The analysis demonstrates that the proposed framework is able to 1) capture granular-level impact of COVID-19 on public discussions; and 2) learn mental health focused topic clusters that resemble human-level expert knowledge.

\section{Introduction}\label{sec:gdtm-intro}
The continued digitization of public discourse in news feeds, books, scientific reports, social media, blogs, microblogs, and web pages creates opportunities to discover meaningful patterns and trends of public opinion. Methods of probabilistic topic modeling have been used to extract  such patterns using a suite of algorithms that aim to automatically discover and annotate large collections of documents with thematic labels 
\citep{blei2012probabilistic}.
Topic modeling algorithms are computational methods that manipulate word frequencies in document corpora to discover the themes that run through them, quantify how those themes are connected to each other, and how they change over time.

\subsection{Probabilistic topic models and computational geometry}
A probabilistic topic model that has seen success in many applications is the latent Dirichlet allocation (LDA) model~\citep{blei2003latent}, which uses a latent topic model to extract thematic information from document corpora to infer an underlying generative process that explains hidden relationships among documents. Many real-world document corpora, however, have complex structure and include temporal information that is ignored by traditional LDA models. For example, discussions of COVID-19 on Twitter between February and May 2020 involve the emergence, evolution, and extinction of multiple topics over time. Moreover, data generated from emerging social media platforms, such as Twitter, Reddit, TalkLife, etc. are short bursts composed in micro-text (\cite{ellen2011all}), which traditional LDA models struggle to model effectively. Additionally, side information is commonly available such as document-level labels/tags or word-level features. For example, a significant proportion of the news articles on Reuters is labeled with multiple human-provided tags~\citep{ramage2009labeled}. Effectively incorporating these additional information is key to reliability and interpretability of many machine learning algorithms, including LDA and topic models.

Extensions of the standard LDA have been proposed to learn latent topics in the context of complex structure and temporal information. An early modeling strategy is to assume a temporally Markovian relationship where the state of the process at time $t + 1$ is independent of past history given the state at time $t$. \citet{blei2006dynamic} proposed the dynamic topic model (DTM) for modeling time-varying topics, where the topical-alignment over time is captured by a Kalman filter procedure. Further improvements have been in various directions, including: (1) relaxation of the Markov assumption, as discussed by \citet{wang2006topics}, who introduced a non-Markov continuous-time model called the topics-over-time (TOT) model, capturing temporal changes in the occurrence of the topics themselves, and (2) circumvent of time discretization, as proposed by \citet{wang2012continuous} that improved the DTM using a continuous time variant, called cDTM, formulated on Brownian motion to model the latent topics in a longitudinal collection of documents. These approaches rely on spatiotemporally coupled stochastic processes for modeling the evolution of topics over time. Such integrated models employ a global joint parameterization of time evolution and word co-occurrence, producing a unified generative probabilistic model for both temporal and topical dimensions. 

However, global parameterized DTMs have several deficiencies that motivate the model proposed in this article. The main issue is that global parameterization can increase the computational complexity of parametric inference. \citet{wang2012continuous} and \citet{blei2006dynamic} argued that applying Gibbs sampling to perform inference on DTMs is more difficult than on static models, principally due to the nonconjugacy of the Gaussian and multinomial distributions. As an alternative, they proposed the use of inexact variational methods, in particular, variational Kalman filtering and variational wavelet regression, for inference. These approximate inference procedures face two issues: 1) they usually involve assumptions on the correlation structures among latent variables, for example, mean-field, which undermines uncertainty quantification; 2) the resulting optimization problems are usually nonconvex, which means that the approximate posterior distribution found might only be locally optimal--trapping the topic parameters in a neighborhood of a local optima. An additional issue is that posterior inference via variational approximation usually relies on batch algorithms that need to scan the full data set before each update of the model. This increases the computational burden, especially for long time sequences, and parallel computing cannot be easily exploited~\citep{bhadury2016scaling}. Such issues can lead to numerical instability and lack of interpretability of the model predictions. Furthermore, incorporating side information, such as document-level labels, word-level features (e.g., word volumes), imposes additional challenges to dynamic topic modeling. \citet{hong2011tracking} proposed a variant of DTM for tracking topic trends, by incorporating word volumes and assuming these volumes are generated by the latent topics through a linear model. Similarly, \citet{park2015supervised} introduced a supervised DTM (sDTM), where a time-series of numerical values are assumed to be generated by the topic assignment distributions via a normal linear model. Both of these variants require the aforementioned Kalman filtering  and variational approximation procedures for inference, in addition to the extra modeling assumption on the side information.

Rather than jointly modeling word co-occurrence and the temporal dynamics, there exist alternatives that adopt simpler analysis strategies that motivate our proposed approach. Most of these approaches to nonglobal modeling involve fitting a local time-unaware topic model to predivided discrete time slices of data, and then examining the topic distributions in each time-slice in order to assemble topic trends that connect related topics~\citep{griffiths2004finding,wang2005group,malik2013topicflow,cui2011textflow}. A difficulty with these approaches is that aligning the topics from each time slice can be challenging, even though several strategies have been proposed. \citet{malik2013topicflow} proposed a framework to connect every pair of topics from adjacent time slices whose similarity, measured by the cosine metric, exceeds a certain threshold. \citet{cui2011textflow} used a semiparametric clustering algorithm to identify similar topics at adjacent time slices. However, these approaches suffer from an inherent inflexibility in modeling diverse dynamical structures that exist in a potentially large collection of temporal topic sequences. Such methods are developed to model and visualize specific, and relatively rare, types of temporal dynamics and are often not able to capture all types of variations, for example, anomalies, bifurcations, emergence, convergence, and divergence.

We propose a flexible and scalable computational geometry framework that remedies the above mentioned issues and complements the existing methods in the dynamic topic modeling toolbox. Specifically, in this article a time-evolving topic model is introduced that uses a local LDA-type model for discrete time slices of collections of documents, and a geometric proximity model to align the topics from time to time. In contrast to global parametric dynamic latent variable approaches to summarizing time-evolving unstructured texts, our framework offers a wrapper for a suite of tools. The proposed wrapper framework has the flexibility to allow any particular topic model to be applied locally to each time slice of documents. This allows any side information to be included via supervised/semi-supervised variants of LDAs (e.g., \citet{ramage2009labeled, mcauliffe2007supervised,petterson2010word,zhu2012medlda,lu2011multi}). It then implements a fast and scalable shortest path algorithm to stitch together the locally learned LDA topics into an integrated collection of temporal topic trends. 

To facilitate visualization and interpretation of the learned topic trends, an emphasis of this article, the proposed framework also implements a recent geometric embedding method called PHATE (Potential of Heat-diffusion for Affinity-based Trajectory Embedding) that projects the high-dimensional word distributions representing latent topics to lower dimensional coordinates. The PHATE embedding has been shown to preserve the intrinsic geometry of high-dimensional time-varying data~\citep{moon2019visualizing}, which provides a clear and intuitive visualization of any progressive structure that exists among the topics. We note that similar computational geometric representations of data have been used in unsupervised, semisupervised, and supervised learning, both as principal learning models and as supplementary regularizers of other models. In manifold learning, geometric affinity (or distance) between data points drives dimensionality reduction ~\citep{tenenbaum2000global,donoho2003hessian} and dimensionality estimation methods~\citep{costa2006determining}. Several deep learning architectures, like the deep k-nearest neighbors~\citep[DkNN]{papernot2018deep}, use interpoint distances and the kNN classifier to induce interlayer representational continuity and robustness against adversarial attacks. Semisupervised classification approaches adopt geometric measures over reproducing kernel Hilbert space (RKHS) to associate unlabeled data with labeled data in geometry-regularized empirical loss frameworks~\citep{belkin2004semi}. Geometry is the driver for many missing data models, for example, synthetic minority oversampling technique~\citep[SMOTE]{chawla2002smote} and more generally, nearest neighbor interpolations.

We point out that dimensionality reduction is the basis for latent semantic analysis (LSA) in computational linguistics. In particular,  \citet{doxas2010dimensionality} had a similar objective to ours, to explore temporal evolution of discourse, but in long text with labeled corpora. 
The authors constructed semantic spaces for various corpora, and then calculated the intrinsic dimensionality of the paragraph trajectories through these corpora. The work focuses on investigating the intrinsic dimension of the trajectories and they used LSA to construct representations of the texts. However, they did not address the topic alignment or trajectory clustering problems for which our PHATE and Hellinger shortest path framework is designed.

\subsection{Application to Twitter data}
Enabled by our proposed longitudinal dynamic topic model, we leverage recent activity on social media to understand the impact of the COVID-19 pandemic and, in particular, its impact on social discourse. The utilization of novel data sources is vital, as the current data landscape for understanding the pandemic remains imperfect. For example, public databases maintained by Johns Hopkins University (\url{https://bit.ly/2UqFSuA}) and \textit{The New York Times} (\url{https://bit.ly/2vUHfrK}) provide incoming county-level information of confirmed cases and deaths. Unfortunately, these data streams are of limited utility due to limited testing capacity and selection bias~\citep{dempsey2020hypothesis}. The public health community requires auxiliary sources of information to improve national and local health policy decisions. A critical question is whether there are complementary data streams that may be leveraged to better understand the COVID-19 pandemic in the United States. Social media platforms, such as Twitter, Reddit, Facebook, and so on, are examples of such data streams. These platforms generate high resolution spatiotemporal data sets that concern public opinions on various societal issues, including health care, government decisions, and politics, all of which could be highly relevant to understanding the impact of COVID-19. 

Although use of these novel data streams create new challenges due to limitations such as high noise level, high volume, and selection bias, many recent efforts have explored social media data as a complementary source to traditional health care data and applied topic models to understand public concerns toward COVID-19~\citep{doogan2020public,stokes2020public,boon2020public,xue2020public,jang2020exploratory,du2017cbinderdb,liu2019regional,tan2022identifying,tan2022doubly}, as well as related socioeconomic issues~\citep{su2020covid,sha2020dynamic,liu2020health,tan2018changepoint}. Here, we extract information from Twitter, a particularly popular social media platform, and focus on studying its spatiotemporal behaviors that are believed to be affected by COVID-19. We use subsamples of tweets generated from February 15, 2020, to May 15, 2020, a period over which a large volume of COVID-19- related tweets occurred. An extended analysis of data collected from May to August 2020 can be found on~\url{https://github.com/ywa136/twitter-covid-topics}. We apply our temporal topic modeling framework to discover sets of COVID-19-related latent topics that impact public discourse.

\subsection{Application to TalkLife data}
In addition to its lasting physical health effects, a side effect of the COVID-19 is a noticeable and disproportionate increase in the global burden of depressive and anxiety disorders worldwide. \citet{santomauro2021global} showed that the pandemic led to a dramatic rose in the overall number of cases of mental disorders, with an additional $53.2$ million and $76.2$ million cases of anxiety and major depressive disorders (MDD), respectively. Even before the COVID-19 pandemic, mental health disorders posed a significant burden worldwide. However, access to mental healthcare resources remain poor worldwide. Online social media platform, especially peer-to-peer support platforms attempt to alleviate this fundamental gap by enabling those who struggle with mental illness to provide and receive social support from their peers.

Recent work found that social media big data combined with NLP and machine learning techniques can help address public health, especially mental health, research questions~\citep{conway2016social,de2013role,gkotsis2016language, de2016discovering, kim2021machine,amir2019mental}. As another practical application of this work, we use data from TalkLife (\url{https://www.talklife.com}), the largest online peer-to-peer support platform for mental health support. Several work~\citep{sharma2020computational,sharma2020engagement,sharma2021towards} have demonstrated the usefulness of the TalkLife data as a machine learning dataset for training models that help improve understanding of the mental health issues. Here, we obtain posts from the platform in the year of 2019 and apply our temporal topic modeling framework to extract mental health related discussions. A distinguishing feature of the TalkLife data is that labels for the posts are created by human experts for further investigation of the contents and to aid potential early prevention and intervention of mental health issues. Extending the Twitter analysis, we develop a weakly-supervised temporal topic model and show that our framework is able to capture latent topics that correlate well with a set of labels created by human experts.

\subsection{Key contributions and outline of the chapter}
We highlight key contributions of this article:
\begin{itemize}
    \item A modular framework that provides a wrapper for a suite of tools for interpretation and visualization of temporal topic models. 
    \item A new approach for aligning independently learned topic models over time based on computational geometry.
    \item A scheme for visualizing and understanding temporal structures of the aligned topics via manifold learning.
\end{itemize}

The remainder of the article is organized as follows: Section~\ref{sec:gdtm-method} introduces the methods and tools that have been applied in our analysis framework. Section~\ref{sec:gdtm-results-twitter} and Section~\ref{sec:gdtm-results-talklife} present numerical results and visualizations with several case studies. Section~\ref{sec:gdtm-conclusion} gives some concluding remarks.

\section{Methods}\label{sec:gdtm-method}
In this section, we discuss the building blocks for the proposed framework: Section~\ref{sec:lda} briefly describes the LDA model and its variants for dealing with micro-text; Section~\ref{sec:temporal} introduces two key components of the framework for propagating and associating topics over time; and Section~\ref{sec:phate} reviews and applies a dimension reduction technique to visualize the temporal trajectories of the evolving topics. 

\subsection{LDA for micro-text documents}\label{sec:lda}
 Since the literature in probabilistic topic models and their dynamic variants is enormous (see \citet{blei2012probabilistic} for a survey), we focus our discussion on the LDA~\citep{blei2003latent}, which is the building block for all other algorithms targeting similar applications. A graphical model representing its generating process is presented in Appendix~\ref{supp:tlda}. The idea of LDA is: from a collection of documents (each composed of set of words $w_{d,n}$), one is able to infer the per-word topic assignment $z_{d,n}$, the per-document topic proportions $\theta_d$, and the per-corpus topic distributions $\beta_k$, through a joint posterior distribution $p(\theta,z,\beta|w)$. Numerous inference algorithms are developed to handle data at scale, for example, variational methods~\citep{blei2003latent,teh2008collapsed,hoffman2013stochastic,mimno2012sparse,srivastava2016neural}, expectation propagation~\citep{minka2012expectation}, collapsed Gibbs sampling~\citep{griffiths2002probabilistic}, distributed sampling~\citep{newman2008distributed,ahmed2013network}, and spectral methods~\citep{arora2012learning,anandkumar2014tensor}. The posterior expectations can then be used to perform the task at hand: information retrieval, document similarity determination, exploration, and so on.

The standard LDA, however, may not work well with micro-text like tweets. In particular, each tweet usually concentrates on a single topic, and it is not reasonable to consider one tweet as a document in the traditional sense as there is limited data (e.g., word co-occurrences) from which the latent topics can be learned. To overcome this ``data sparsity'' issue, efforts have been made along on three major directions~\citep{qiang2020short}: 1) methods predicated on the assumption that each text (e.g., tweet) is sampled from only one latent topic; 2) methods utilizing global (i.e., the whole corpus) word co-occurrences structures; 3) methods based on aggregation/pooling of texts into `pseudo-documents' prior to topic inference. 

In this article, we apply the Twitter LDA model~ \citep[T-LDA,][]{zhao2011comparing}, for modeling topics at each time slice. T-LDA can be categorized along the directions 1) and 3) mentioned above. But we note that the proposed framework works with any topic model that outputs word distributions representing learned latent topics. We selected T-LDA since it has been widely used in many related applications, including aspect mining~\citep{yang2016aspect}, user modeling~\citep{qiu2013not}, and bursty topic detection~\citep{diao2012finding}. The generative model underlying T-LDA assumes that there are $K$ topics in the Tweets, each represented by a word distribution, denoted as $\beta_k$ for topic $k$ and $\beta_B$ for background words. Let $\theta_u$ denote the topic assignment distribution for user $u$. Let $\pi$ denote a Bernoulli distribution that governs the choice between background words and topic words. The generating process for a tweet is as follows: a user first chooses a topic based on its user-specific topic assignment distribution. Then the user chooses a bag of words one-by-one based on the chosen topic or the background model. The generation process is summarized in Algorithm~\ref{alg:t-lda}, and a plate notation comparison between the T-LDA and standard LDA is included in Appendix~\ref{supp:tlda}. Similarly to a standard LDA algorithm, parameters in each multinomial distribution are governed by symmetric Dirichlet priors. The model inference can be performed using collapsed Gibbs sampling (code available at \url{https://github.com/minghui/Twitter-LDA}). Due to space limitations we leave out derivation details and sampling formulas. More details on the implementation can be found in Appendix~\ref{supp:tlda}.

\begin{algorithm}
\begin{minipage}{0.9\textwidth}
\begin{algorithmic}
\caption{Generating process for T-LDA}
\label{alg:t-lda}
\REQUIRE Constants $\eta,\gamma$
\STATE Draw $\beta_B \sim \text{Dir}(\eta)$, $\pi \sim \text{Dir}(\gamma)$
\FOR{topic $k=1,\dots,K$} 
    \STATE Draw $\beta_k \sim \text{Dir}(\eta)$
\ENDFOR
\FOR{user $u=1,\dots,U$}
    \STATE Draw $\theta_u \sim \text{Dir}(\alpha)$
    \FOR{Tweet $s=1,\dots,S_u$}
        \STATE Draw $z_{u,s} \sim \text{Multi}(\theta_u)$
        \FOR{word $n=1,\dots,N_{u,s}$}
            \STATE Draw $y_{u,s,n} \sim \text{Multi}(\pi)$
            \IF{$y_{u,s,n}=0$}
                \STATE Draw $w_{u,s,n} \sim \text{Multi}(\beta_B)$
            \ELSE
                \STATE Draw $w_{u,s,n} \sim \text{Multi}(\beta_{z_{u,s}})$ 
            \ENDIF
        \ENDFOR
    \ENDFOR
\ENDFOR
\end{algorithmic}
\end{minipage}
\end{algorithm}

\paragraph{Weak Supervision with Word-level Prior Knowledge:} To encourage topic models to learn latent topics that correlate directly with word-level side information, we augment them with a weakly supervised signal in the form of seed words. Rather than fully guiding the model with labels in a supervised version, as in for example, \citet{mcauliffe2007supervised} and \citet{ramage2009labeled}, we use a set of seed words to deﬁne an asymmetric prior on the word-topic distributions. The reasons for modeling choice are twofold: 1) In one of the applications we concern, we have access to micro-text data from TalkLife and the corresponding labels for each post. However, the labeling is noisy (labels could be wrong/imperfect), limited (not every post is labeled), and have overlaps (a post could be tagged with multiple related labels). Directly applying supervised LDAs that assume perfect labeling may not be appropriate. 2) We hope to learn novel latent topics that have not yet been discovered and/or have been missed by domain experts. 

Using seed words as a form of word-level side information has been considered by a few researchers~\citep{lu2011multi,zhu2009multi,wang2010latent}, although their goal was normally aspect ratings and multi-aspect sentence labeling. In this work, we characterize our prior knowledge (seed words) for the original T-LDA model using a conjugate Dirichlet prior to the multinomial word-topic distributions. We define a combined conjugate prior for each word $n$ in the vocabulary $V$ as $\beta_k \sim \text{Dir}\Big(\{\eta + w_n\}_{n \in V}\Big)$ for each topic $k$, where $w_n$ can be interpreted as an equivalent sample size, i.e., the impact of our asymmetric prior is equivalent to adding $w_n$ pseudo counts to the
sufﬁcient statistics of the topic to which word $n$ belongs. In practice, $w_n$ can be obtained empirically as proportional to the volume of a word $n$ in the corpus. We set $w_n=0$ when we do not have prior knowledge of a word.

\subsection{Time evolution of topics and shortest paths}\label{sec:temporal}
Instead of explicitly building the temporal structures into the model as in the globally parameterized DMT and its variants, we propose a two-stage approach: 1) construct a new corpus at each time point via subsampling the documents and independently fitting a topic model to each new corpus; 2) link each of these time points together via shortest distance paths through topics. 

\paragraph{\textit{Temporal smoothing by subsampling}}
\label{sec:smoothing}
A subsample of tweets is constructed at each time point by conditional sampling of all the tweets with a sampling distribution that is inversely proportional to the temporal proximity of the tweet.  This produces subsamples that are local mixtures of tweets at nearby time points, accomplishing a degree of temporal smoothing prior to topic analysis. 

To clarify the subsampling procedure, we give a simple example. Assume that the corpus is composed of five tweets per day over a 5-day period. We write $d_t$ as the set of five tweets on day $t$.  On the first day ($t=1$), exponential weights are computed $w_{1}=\{1.000,0.7500,0.5625,0.4219,0.3164\}$ and normalized by their sum, defining the sampling distribution used to construct the subsample. That is, at time $1$, the subsample consists of $100\%$ of all the tweets from day $1$, $75\%$ of the tweets from day $2$, $75\%^2=56.25\%$ of the tweets from day 3 and so on. This subsample is denoted $C_{1} = \{s_1,\dots,s_5\}$. To construct the subsample on day $2$, we condition on the subsample $C_1$ on day $1$ and we construct exponential sampling weights for day $2$ of the form $w_{2}=\{0.7500,1.000,0.7500,0.5625,0.4219\}$.  We combine the subsample $C_1$ and weights $w_2$ to construct the subsample on day $2$, denoted $C_2$, by either randomly removing extra tweets if the sampling weights on a particular day $i$ decreased or randomly adding more tweets from $d_i - s_i$ if the weights on a particular day $i$ increased.  This algorithm ensures a (tuneable) degree of smoothness over the subsamples generated at each time in the sense that subsamples which are close in time are likely to contain similar tweets. The procedure is illustrated in Figure~\ref{fig:Tweets-smoothed}. Specifically, using notations developed above, $C_{1} = \{s_1,\dots,s_5\}$ and $C_{2} = \{s_1 - \text{Tweet 3}, s_2 + \text{Tweet 7}, s_3 + \text{Tweet 15}, s_4 + \text{Tweet 20}, s_5 + \text{Tweet 24}\}$, where \text{Tweet 3}, \text{Tweet 7}, \text{Tweet 15}, \text{Tweet 20}, and \text{Tweet 24} were randomly chosen.
\begin{figure}[!tbh]
    \centering
    \includegraphics[width=0.9\textwidth]{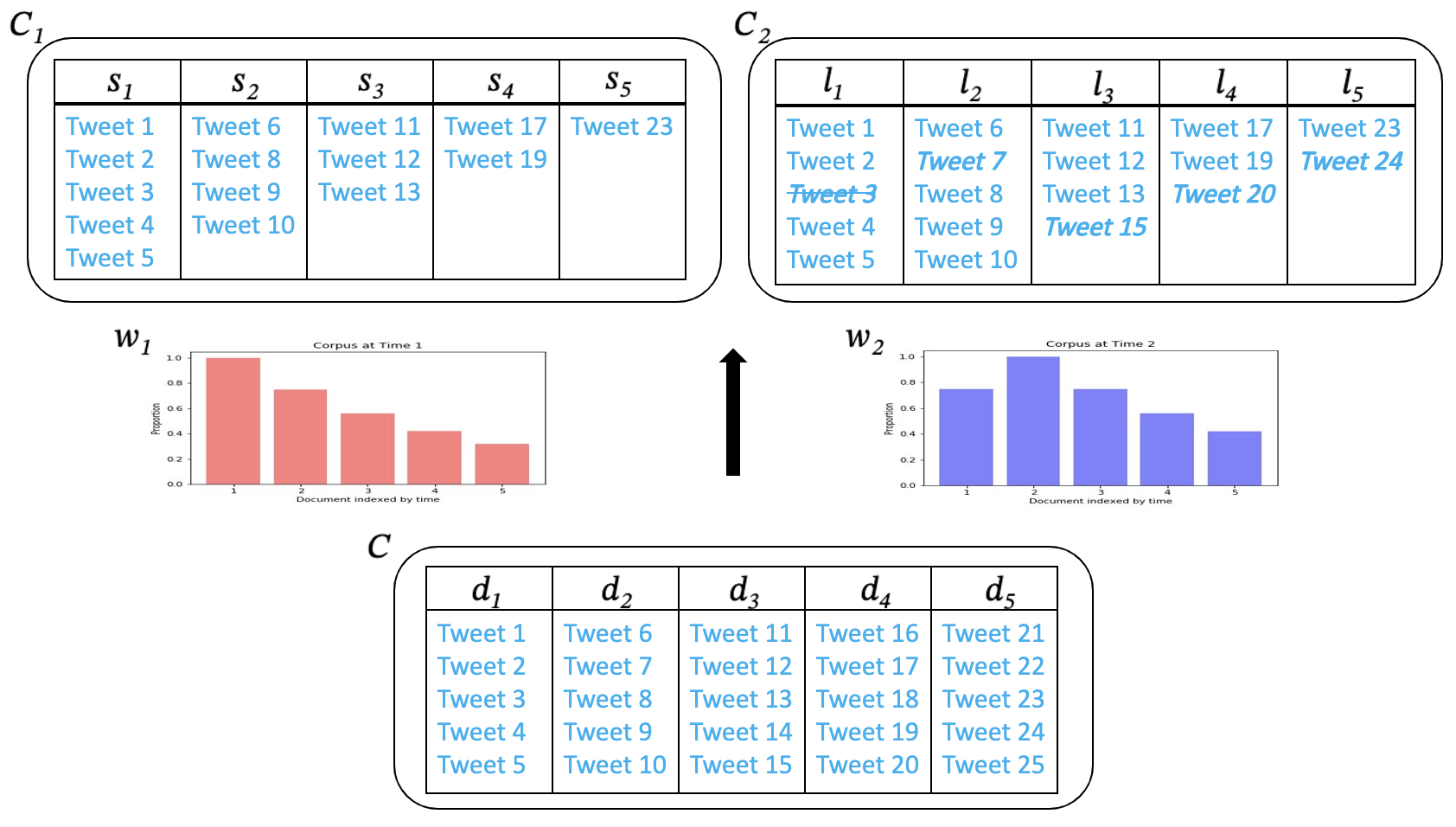}
    \caption{Conditional subsampling procedure using a hypothetical corpus composed of five documents each containing five tweets. For example, $C=\{d_1,\dots,d_5\}$, $d_1$ aggregates tweets from day $1$, $d_2$ aggregates tweets from day $2$, and so on. The subsampling weights for each document are shown in the bar plots and are exponentially decaying with a factor of $0.75$, centered at day $1$ (left, $w_{1}$) and day $2$ (right, $w_{2}$), respectively. Each newly generated corpus is a proportionally weighted random sample and a realization of these samples are shown in the tables ($C_{1}$ and $C_{2}$). Note that the two corpora differ only by those highlighted and italicized tweets.}
    \label{fig:Tweets-smoothed}
\end{figure}

After constructing the temporally smoothed corpus, any topic modeling algorithm, such as LDA or its (weakly) supervised variants, can be independently applied to each of the extracted corpora, using either the same set of parameters across all LDA runs, or seeding the next LDA with estimated parameter values (e.g., the posterior mean of the Markov chain Monte Carlo samples) from the current LDA, as described in~\citet{song2005modeling}.

Our proposed temporal smoothing technique is similar to smoothing approaches introduced in spatiotemporal statistics. For example, in time-series analysis the idea of exponential smoothing was proposed in the late 1950s~\citep{brown1959statistical,holt2004forecasting,winters1960forecasting}, and has motivated some of the most successful forecasting methods. Forecasts produced using exponential smoothing methods are weighted averages of past observations, with the weights decaying exponentially as the observations become older. As another example, kriging~\citep{krige1951statistical} or Gaussian process regression is a widely used method of interpolation in geostatistics. The basic idea of kriging is to predict the value of a function at a given point by computing a weighted average of the known values of the function in the neighboring points~\citep{cressie2015statistics}. Lastly, in nonparametric regression analysis, the locally estimated scatterplot smoothing (LOESS) is a widely used method that combines multiple regression models in a $k$-nearest neighbor based framework. Particularly, at each point in a data set a low-degree polynomial is fitted to a subset of the data, with explanatory variable values near the point whose response is being estimated. The subsets used for each of these polynomial fits are determined by a nearest neighbor algorithm. A user-specified `bandwidth' or smoothing parameter determines how much of the data is used to fit each local polynomial. This smoothing parameter is the fraction of the total number of data points that are used in each local fit. 

\paragraph{\textit{Dissimilarity between topic word distributions}}
After applying local LDA to each of the time localized subsamples of the smoothed corpus, we stitch together the local LDA results. The alignment of topics with different time stamps is accomplished by creating a weighted graph connecting all pairs of topics where the edge weights are a measure of topic similarity, to be described below.  Assume that each local model generates  $K$ topics, resulting in a total of $K \times T$ topics across $T$ time points. A weighted adjacency matrix is constructed from the similarities between $\binom{K \times T}{2}$ topic pairs. The similarities between topics will allow the alignment algorithm to relate topics together across time and enable us to track topic evolution. 

As each topic is characterized by the LDA word distribution, any metric that measures dissimilarity between discrete distributions could be used to construct a similarity measure.  It is well known that the Euclidean distance is not well adapted to measuring dissimilarity between probability distributions~\citep{amari2012differential}. As an alternative, we propose using the Hellinger metric on the space of distributions, which we justify as follows. The LDA word distribution is conditionally multinomial and it lies on a statistical manifold called an \textit{information geometry}, that is endowed with a natural distance metric, called the Fisher-Rao Riemannian metric. Unlike the Euclidean metric, this Riemannian metric characterizes intrinsic minimal (geodesic) distances between multinomial distributions and it depends on the Fisher information matrix $[\mathcal{I}(\theta)]$, $\theta$ is the multinomial probability vector. \citet{carter2009fine} showed that this metric can be well approximated by the Hellinger distance between multinomial distributions. The Hellinger distance between discrete probability distributions $P=(p_1,\dots,p_N)$ and $Q=(q_1,\dots,q_N)$ is defined as 
\begin{equation*}
    H(P,Q) = \frac{1}{\sqrt{2}}\sqrt{\sum_{n=1}^N(\sqrt{p_n}-\sqrt{q_n})^2}, \quad 0 \leq H(\cdot,\cdot) \leq 1.
\end{equation*}

The major advantages of the Hellinger distance are threefold: 1) it defines a true metric for probability distributions, as compared to, for example, the Kullback-Leibler divergence; 2) it is computationally simple, as compared to the Wasserstein distance; 3) and it is a special case of the $f$-divergence, which enjoys many geometric properties and has been used in many statistical applications. For example, \citet{liese2012phi} showed that $f$-divergence can be viewed as the integrated Bayes risk in hypothesis testing where the integral is with respect to a distribution on the prior; \citet{nguyen2009surrogate} linked $f$-divergence to the achievable accuracy in binary classification problems; \citet{jager2007goodness} used a subclass of $f$-divergences for goodness of fit testing; \citet{rao2020use} demonstrated the advantages of the Hellinger metric for graphical representations of contingency table data; \citet{srivastava2016functional} adopted the Hellinger distance to measure distances between functional and shape data; \citet{shemyakin2014hellinger} showed the connection of the Hellinger distance to Hellinger information, which is useful in nonregular statistical models when Fisher information is not available; and finally, \citet{servidea2006statistical} derived an identity between the Hellinger derivative and the Fisher information that is useful for studying the interplay between statistical physics and statistical computation.

We note that in previous work on aligning topics the $L2$ or cosine distance is commonly applied~\citep{chuang2013topic,chuang2015topiccheck,yuan2018multilingual}. As discussed above, these distances are practically and theoretically deficient for aligning distributions. A simulation study is presented in Appendix~\ref{supp:phate-simulaiton} that compares use of these distances to the Hellinger distance, showing that the latter better preserves topic trend coherence.
 
\paragraph{\textit{Nearest neighbor graphs and shortest paths}}
We use the topic graph with Hellinger weights to identify natural progressions from one topic to another over time. We use Dijkstra shortest paths through a nearest neighbor subgraph to identify these progressions.  These paths can be interpreted as trajectories of public discourse on the topics identified. This is of interest because we want to understand how conversations around a topic evolves over time. Shortest path analysis allows us to do this with minimal assumptions on the data. In particular, we do not assume or further encourage temporal smoothness in the data beyond the temporally smoothed corpora described in Section~\ref{sec:smoothing}.

Due to the noisy nature of social media data and the wide range of topics, we pay special attention to local neighborhoods of data points. Hence, instead of working with a fully connected graph induced by the full $N \times N$ Hellinger distance matrix of pairwise distances between topics, we build a $k$-nearest neighbor graph from it. Natural evolution of a topic over time can then be inferred by finding a shortest path of topics on the weighted $k$-nearest graph, where Hellinger distances represent edge weights. Here, a shortest path is a path between two vertices (i.e., two topics) in a weighted graph such that the total sum of edges weights is minimum, and can be computed efficiently using, for example,  Dijkstra's algorithm. The approach of using neighborhood graphs for estimating the intrinsic geometry of a data manifold is justifiable both empirically and theoretically. In manifold learning similar ideas are used to reconstruct lower dimensional geometry from data. For example, the isometric feature mapping~\citep[ISOMAP]{tenenbaum2000global} extends metric multidimensional scaling (MDS) by replacing the matrix of Euclidean distances in MDS with the matrix of shortest path distances between pairs of vertices in the Euclidean $k$ nearest neighbor graph. Using such embedding, ISOMAP is able determine lower dimensional structure in high-dimensional data and capture perceptually natural but highly nonlinear ``morphs'' of the corresponding high-dimensional observations (see figure 4 in \citet{tenenbaum2000global}). Such shortest path analysis is supported by substantial theory ~\citep{bernstein2000graph,costa2006determining,hwang2016shortest}. Under the assumption that the data points are random realizations on a compact and smooth Riemannian manifold, as the number of data points grows, the shortest paths over the $k$ nearest neighborhood graph converge to the true geodesic distance along the manifold.   

In the context of our topic alignment application, this theory suggests that the analogous Hellinger shortest paths should be able to achieve alignment if the empirical LDA word distributions can themselves be interpreted as random draws from an underlying distribution that varies continuously and smoothly over time along a statistical manifold. To illustrate, Figure~\ref{fig:covid-short-path} demonstrates how a COVID-19-related topic learned from the corpus on February 15, 2020 (far left), evolves to a COVID-19-related health care--focused topic learned from the corpus on May 15, 2020 (far right). The top row in the figure was constructed by computing the shortest Hellinger distance path on a $10$-nearest neighbor graph, whereas the bottom row was constructed using the full graph. As expected, the shortest path on the neighborhood graph captures perceptually natural but highly nonlinear `morphs' of the corresponding high-dimensional word distributions by transforming them approximately along geodesic paths. On the other hand, the shortest path on the full graph connects the two observations through a sequence of apparently unrelated and nonintuitive topics. In Appendix~\ref{supp:topicflow}, we compare the proposed Hellinger shortest path topic alignment method with  TopicFlow~\citep{malik2013topicflow}, a common method for topic alignment that uses local matching and Euclidean distances.

\begin{figure}[!tbh]
\centering
\begin{subfigure}{.2\textwidth}
  \centering
  \includegraphics[width=\linewidth]{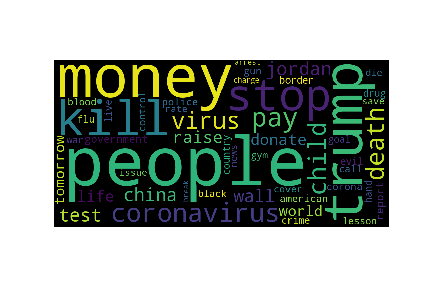}
\end{subfigure}%
{$\rightarrow$}%
\begin{subfigure}{.2\textwidth}
  \centering
  \includegraphics[width=\linewidth]{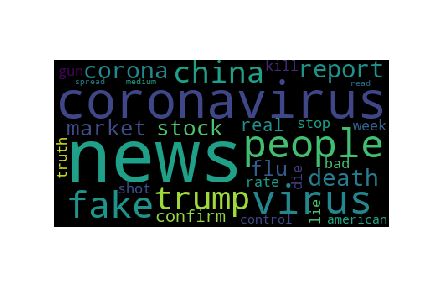}
\end{subfigure}
{$\rightarrow$}%
\begin{subfigure}{.2\textwidth}
  \centering
  \includegraphics[width=\linewidth]{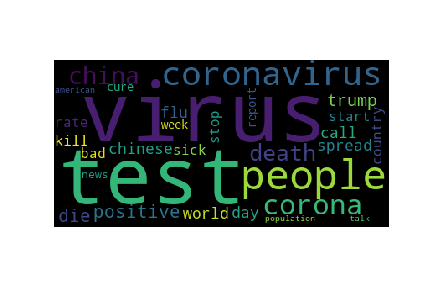}
\end{subfigure}
{$\rightarrow$}%
\begin{subfigure}{.2\textwidth}
  \centering
  \includegraphics[width=\linewidth]{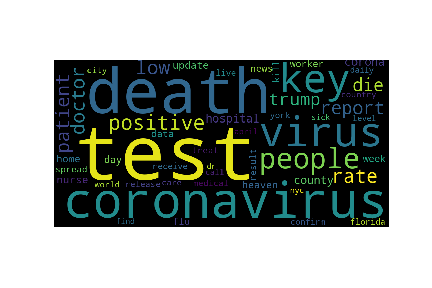}
\end{subfigure}
\\
\begin{subfigure}{.2\textwidth}
  \centering
  \includegraphics[width=\linewidth]{covid_short_path_b.png}
\end{subfigure}%
{$\rightarrow$}%
\begin{subfigure}{.2\textwidth}
  \centering
  \includegraphics[width=\linewidth]{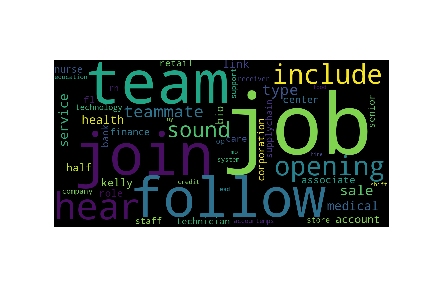}
\end{subfigure}
{$\rightarrow$}%
\begin{subfigure}{.2\textwidth}
  \centering
  \includegraphics[width=\linewidth]{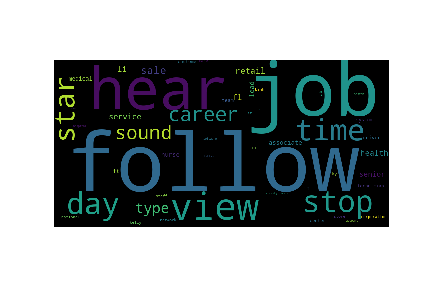}
\end{subfigure}
{$\rightarrow$}%
\begin{subfigure}{.2\textwidth}
  \centering
  \includegraphics[width=\linewidth]{covid_short_path_e.png}
\end{subfigure}
\caption{Evolution along the Hellinger shortest paths of a COVID-19 topic on February 15, 2020, to a COVID-19
topic on May 15, 2020. The paths are computed on a $10$-nearest neighbor graph (top) and a fully connected graph (bottom).  Each word cloud image represents a topic at a particular time, showing the word distribution encoded by font size (only the top $30$ words in each topic are shown). The middle two word clouds represent two intermediate topics on the respective paths and illustrate the benefit of using the $k$ nearest neighbor graph. The middle two topics on the top row seem naturally connected to the beginning and the end topics, in contrast to the bottom row.}
\label{fig:covid-short-path}
\end{figure}

The choice of $k$ for the neighborhood graph affects the approximation to the Hellinger geodesic path: choosing a $k$ that is too large creates short circuits in the graph, resulting in a noisy path like the bottom row of Figure~\ref{fig:covid-short-path}; choosing a  $k$ that is too small results in a graph that is disconnected for which there might not exist a path between two points of interest. The problem of selecting an optimal value of $k$ remains open, although several computational data-driven approaches have been proposed for ISOMAP \citep{tenenbaum2000global,samko2006selection,gao2011dynamical}. Here we use $k=10$, which exceeds the connectivity threshold, to induce the most natural approximation to the true geodesic path between topics of interest. In Appendix~\ref{supp:sensitivity} we establish that our results are robust to perturbations around this value of $k$.

We also note that the Hellinger shortest paths may differ in length, which is the number of topics that they connect over time. This variation is due to the occasional time skips in the path that occur when the shortest path algorithm does not find an adequate match between topics at successive time points. Such skipping can occur when a topic thread wanes temporarily, merges with another thread, or dies. In Appendix~\ref{supp:path-skips-summary} we provide statistics on the occurrences of skips for a subset of paths.

\subsection{Interpretation and visualization of topic trends via low-dimensional embedding}\label{sec:phate}
In LDA each latent topic is represented by a vector that lies on a simplex that constitutes a discrete probability distribution over words. This vector could be very high dimensional depending on the size of the vocabulary. Dimensionality reduction methods are useful for visualization, exploration, and interpretation of such high-dimensional data, as they enable extraction of critical information in the data while discarding noise. Many popular methods are available for visualizing high dimensional data, such as principle component analysis (PCA), MDS, uniform manifold approximation and projection~\citep[UMAP]{mcinnes2018umap}, and t-distributed stochastic neighbor embedding~\citep[t-SNE]{maaten2008visualizing}. These methods use spectral decompositions of the pairwise distance matrix to embed the data into lower dimension. PHATE~\citep{moon2019visualizing}, on the other hand, is designed to visualize high-dimensional time-varying data. As demonstrated by the authors, it is capable of uncovering hidden low-dimensional embedded temporal progression and  branching structure. 

Here we embed the estimated LDA word distributions into lower dimensions using a novel application of PHATE to the Hellinger distance matrix. For details on our implementation, see Appendix~\ref{supp:phate-details}. Here, using simulated data, we demonstrate the power of the proposed PHATE-Hellinger embedding for visualization of temporal evolution patterns as compared to other embedding methods. Specifically, we simulate $10$ trajectories of $100$-dimensional probability vectors using the model
\begin{equation*}
    X_t^j|X_{t-1}^j \sim \mathcal{N}_{100}(X_{t-1}^j,\sigma^2_jI)
\end{equation*} 
and
\begin{equation*}
    P_{t,i}^j = \frac{\exp(X_{t,i}^j)}{\sum_{i=1}^p \exp(X_{t,i}^j)}, \qquad i=1,\dots,100
\end{equation*}
for $j=1,\dots,10$ and $t=0,\dots,99$. Each trajectory starts at the same point $X_0 \in \mathbb{R}^{100}$ and differs from realization to realization  depending on $\sigma_j$. We project all $1000$ vectors onto a hypersphere by computing the element-wise square root of each probability vector and using the mapping $P_{t,1} + \cdots + P_{t,100} = 1 \Leftrightarrow (\sqrt{P_{t,1}})^2 + \cdots + (\sqrt{P_{t,100}})^2 = 1^2$. Figure~\ref{fig:phate-hellinger-sphere} presents the 2D embeddings of this synthetic dataset using PCA on the Euclidean distance matrix, and t-SNE, UMAP, and PHATE on the Hellinger distance matrix. Observe that, among all methods, only PHATE correctly captures the temporal progressions as distinct trajectories originating from a common initial point $X_0$. Additional simulation studies comparing PCA, t-SNE, UMAP, and PHATE with and without Hellinger distance are included in the Appendix~\ref{supp:phate-simulaiton}. In particular, the benefit of using the Hellinger distance  instead of the Euclidean distance is demonstrated.  

\begin{figure}[!tbh]
    \centering
    \includegraphics[width=\textwidth]{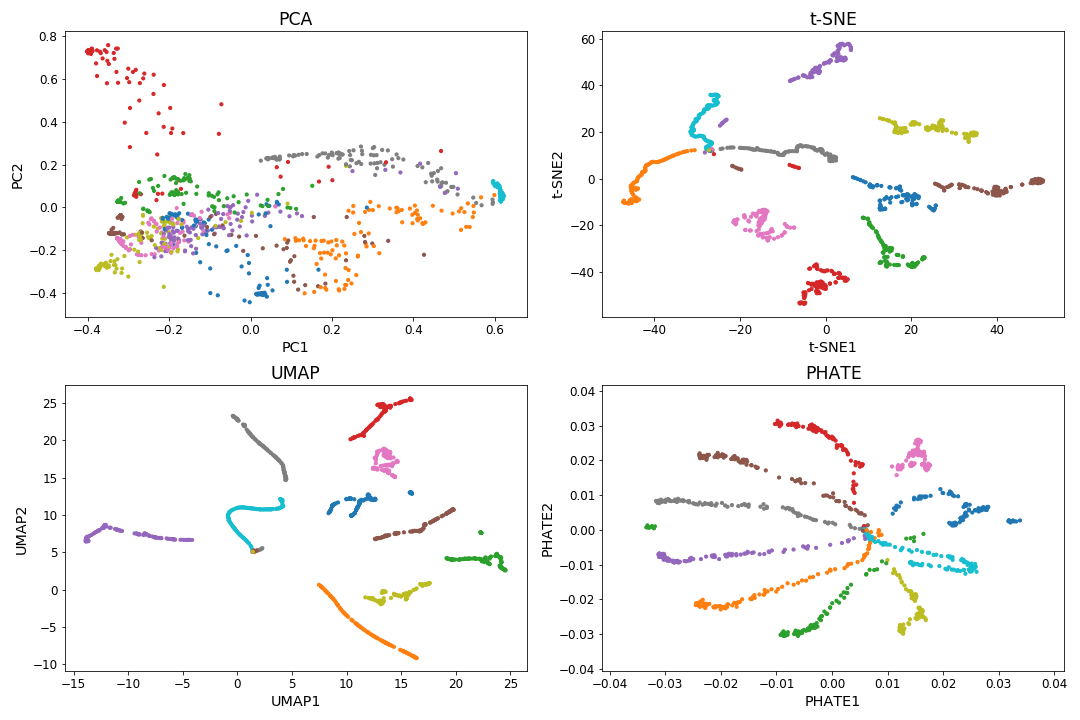}
    \caption{Comparison of principle component analysis (PCA), t-distributed stochastic neighbor embedding (t-SNE), uniform manifold approximation and projection (UMAP), and potential of heat-diffusion for affinity-based transition embedding (PHATE) for dimensionality reduction. The methods are applied to 2D embedding of simulated $10$ trajectories (identified by color) of $100$-dimensional probability vectors, all originating from a common initial point. Except for PCA, all these methods are applied to the matrix of Hellinger distances. Only PHATE  correctly captures the temporal progressions as distinct trajectories originating from a common initial point.}
    \label{fig:phate-hellinger-sphere}
\end{figure}


\section{Twitter Data Analysis}\label{sec:gdtm-results-twitter}
The entire pipeline for our analysis is described in Algorithm~\ref{alg:procedure}. The implementation requires setting several hyperparameters. For Twitter data, the temporal smoothing parameter was selected as $\gamma=0.75$, which corresponds to smoothing approximately one month of tweets into the current time point, in inverse proportion to temporal proximity; the parameter for the number of topics was set to $K=50$ at every time point; the number of neighbors was set to $k=10$ for the neighborhood graph to compute shortest paths. In Appendix~\ref{supp:sensitivity} we show relative insensitivity of our framework to the choice of these hyperparameters. Although not explored here, one could also vary $K$ over time, for example, selected by minimizing perplexities or Bayesian information criteria (BIC) scores at each time (see Appendix~\ref{supp:sensitivity} for a further discussion).

\begin{algorithm}[!tbh]
\begin{minipage}{0.9\linewidth}
\begin{algorithmic}
\caption{Longitudinal analysis of micro-text data.}
\label{alg:procedure}
\REQUIRE Raw micro-text data
    \STATE 1: Preprocess data and organize tweets into a temporally smoothed corpus as described in Section~\ref{sec:temporal}, with smoothing parameter $\gamma$.
    \STATE 2: Apply topic models described in Section~\ref{sec:lda} independently to each one of $T$ corpus with $K$ topics. This results in $TK$ word distributions. 
    \STATE 3: Compute pairwise Hellinger distances for word distributions.
    \STATE 4: Compute:
    \begin{ALC@g}
        \STATE a: $k$-nearest neighbor graph with from the $TK$-by-$TK$ Hellinger matrix and find the shortest path of interest on the neighborhood graph using the Djikstra algorithm.
        \STATE b: PHATE embedding of high-dimensional word distributions in 2D and 3D.
    \end{ALC@g}
\ENSURE Shortest paths and PHATE coordinates.
\end{algorithmic}
\end{minipage}
\end{algorithm}

\subsection{Data preparation}
We downloaded data via the Twitter Decahose Stream API (\url{https://developer.twitter.com/en/docs/Tweets/sample-realtime/overview/decahose}). The Decahose includes a random sample of $\sim 10\%$ of the tweets from each day, resulting in a sample of $300 - 500$ millions of tweets per day. Among all tweets that are sampled, between $\sim 0.1\%$ and $0.5\%$ (see Appendix~\ref{supp:raw_volume} for details) of them contain geographic location information, called geotags, that localize the tweet to within a neighborhood of the user's location when the tweet was generated. Note that Twitter's precise location service that uses GPS information has been turned off by default (\url{https://twitter.com/TwitterSupport/status/1141039841993355264}). We consider here the more common Twitter ``Place'' object that consists of $4$ longitude-latitude coordinates that define the general area from which the user is posting the tweet (\url{https://developer.twitter.com/en/docs/tutorials/filtering-Tweets-by-location}). Here, we focus on a time period from February 15, 2020, to May 15, 2020, where we expect there to be a large volume of tweets that are COVID-19 related. Figure~\ref{fig:volume_all} in Appendix~\ref{supp:raw_volume} shows the number of tweets for each day in the study period. The following filtering was used:
\begin{itemize}
    \item \textbf{U.S. geographic area}: Tweets that are geotagged and originated in the United States as indicated by the Twitter location service.
    \item \textbf{English language Tweets}: Tweets from users who selected English as their default language.
    \item \textbf{Non-retweets}: Tweets that contain original content from the users and are not a retweet of other tweets.
\end{itemize}
The following text preprocessing steps were undertaken: 1) we remove stop words (e.g., \textit{in, on, and}, etc., which do not carry semantic meaning); 2) we keep only common forms of words (lemmatization); 3) we remove words that occurred less than $5$ times in a document. As a result, the average vocabulary length per timestamp was been reduced from around $300000$ to $3000$. Further, the union of the unique words from each timestamp has been used as the common vocabulary with word frequencies zeroed out on days where those words do not occur.

\subsection{Hellinger-PHATE embedding for all topics}
Figure~\ref{fig:phate-overall-2d} shows the 2D Hellinger-PHATE embedding of $4500$ word distributions. We labeled the points on the plots with different colors, sizes, and styles for visualization and interpretation of various time points, tweet volumes, and shortest paths. The full labeling scheme is included in Appendix~\ref{supp:phate-dict}. Figure~\ref{fig:phate-overall-2d} also shows (as insets) two zoom-ins onto selected COVID-19 topics. We observe several interesting trajectory patterns in the PHATE embeddings. For example, the ``STAY HOME (executive order)'' cluster (bottom inset) is organized along a straight line, where the points are more dense at the beginning as well as at the end of the line while sparser in between. The COVID and COVID NEWS clusters (top inset) behave like a splitting between two branches of a tree, and the COVID NEWS (presidential election) path in those clusters exhibits a 'hook' or a 'U' shape. Within the COVID NEWS cluster, the presidential election path also splits and diverges from other points in the same cluster. The following two subsections will focus on these two clusters and paths therein to illustrate the advantages of the proposed framework. Additional visualizations for the SANITIZING (wash hands) and STAY HOME (executive order) paths are included in Appendix~\ref{supp:linear-trajectories}.

\begin{figure}[!tbh]
    \centering
    \includegraphics[width=\textwidth]{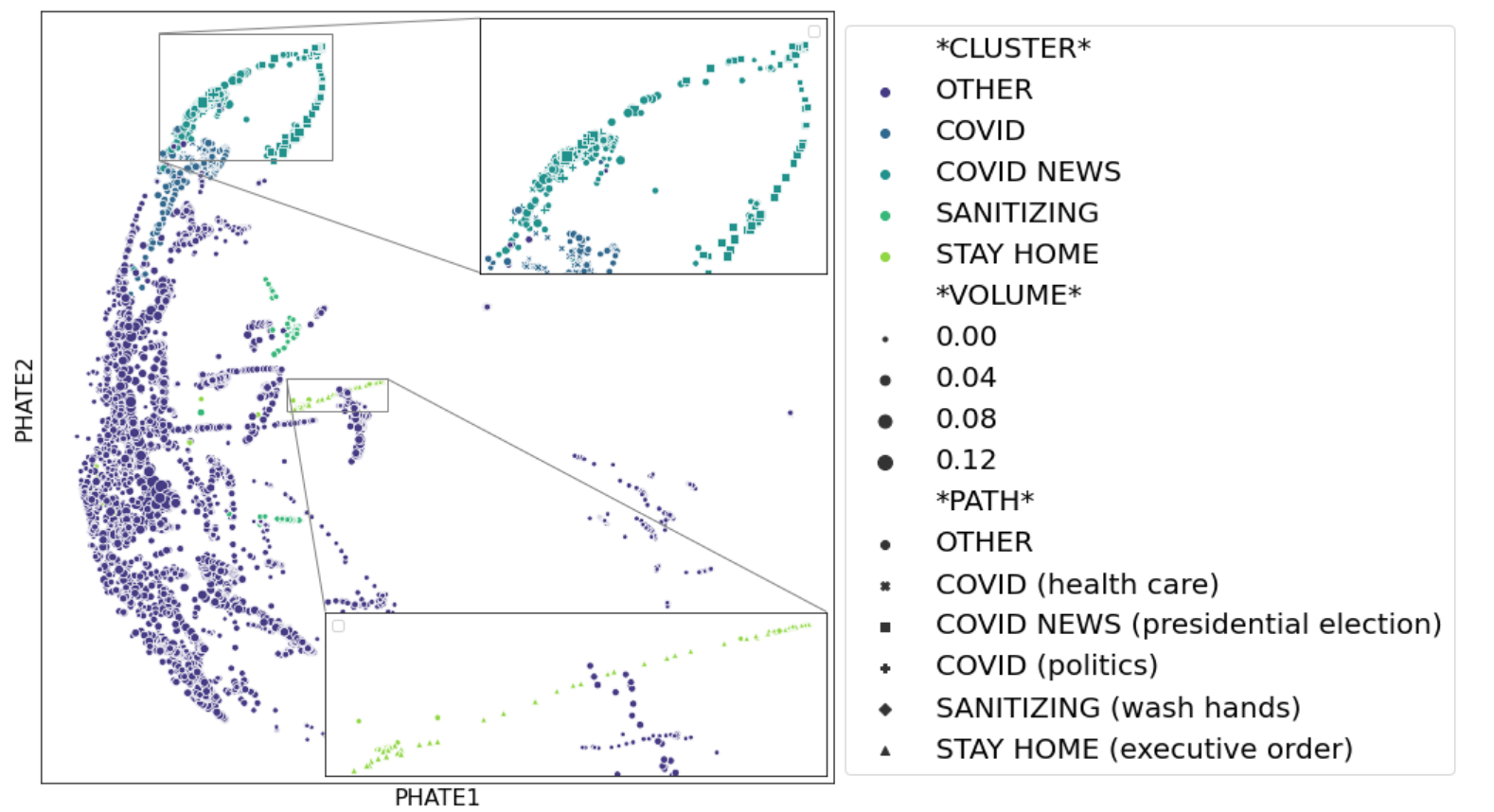}
    \caption{Potential of heat-diffusion for affinity-based transition embedding (PHATE) for all word distributions. Here the two bounding boxes and insets highlight two of the COVID-19-related topic clusters/paths (COVID/COVID NEWS and STAY HOME). The colors, sizes, and styles signify various clusters, tweet volumes, and shortest paths, as given in the dictionary in Appendix~\ref{supp:phate-dict}. Note that the embedding captures some important clustering/trajectory structures, for example, branching, splitting, merging, and so on.}
    \label{fig:phate-overall-2d}
\end{figure}

\subsection{Case study I: presidential election topic path} 
Here, we focus on a cluster of topics that is implicitly COVID-related but can be well understood from associated real-world events. We call this the presidential election topical path. The subset of topics lying on this shortest path is illustrated in Figure~\ref{fig:phate-election-2d}. Here continuous color scales are used to illustrate temporal evolution, which exhibits a smooth transition from the beginning to the end points on the path. The PHATE embedding exhibits three subclusters on the path: 1) an early March cluster that groups topics related to Super Tuesday; 2) an April cluster that groups topics related to or triggered by the ``Bernie Sanders dropped out of the presidential race'' event; 3) an early to mid-May cluster that groups topics converging to more general COVID-related political topics.

\begin{figure}[!tbh]
    \centering
    \includegraphics[width=0.45\textwidth]{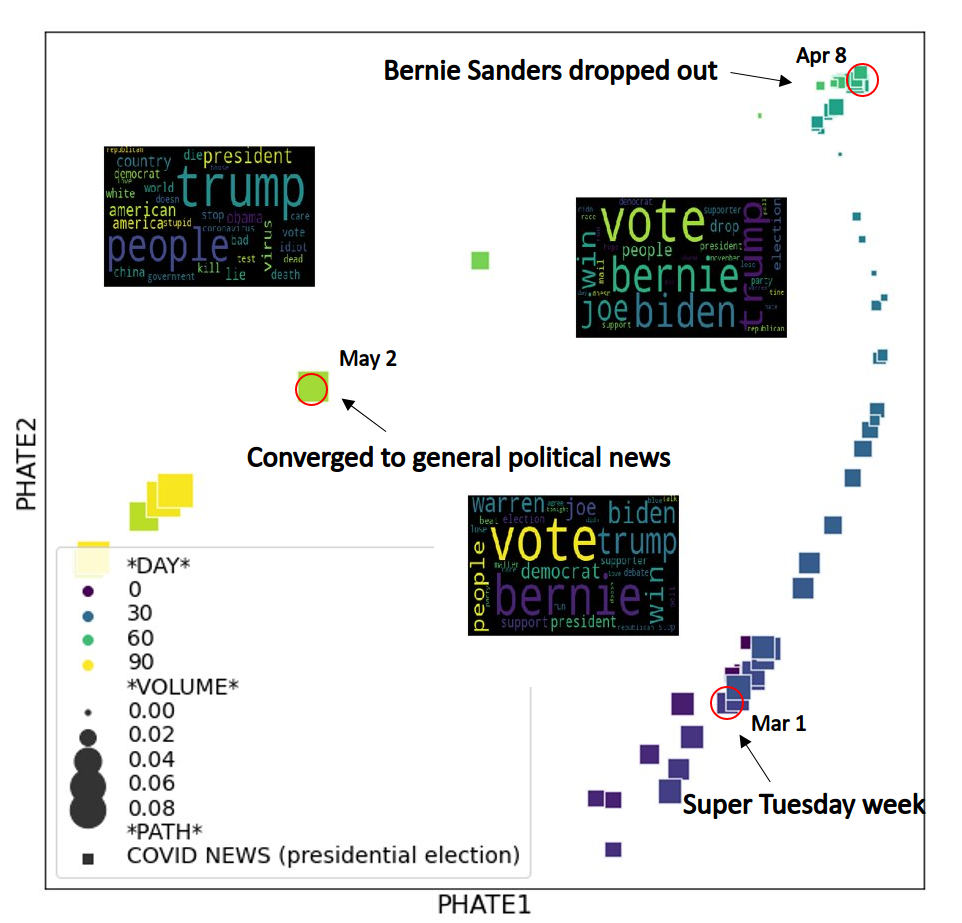} 
    \includegraphics[width=0.45\textwidth]{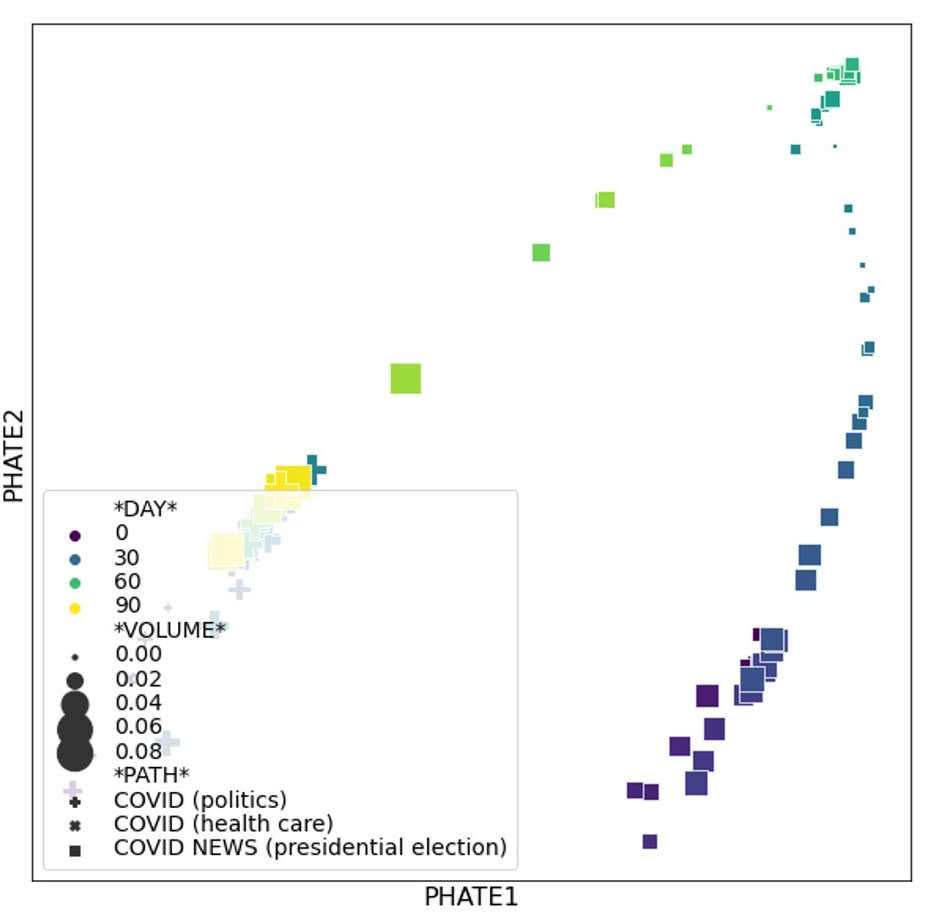}
    \caption{Potential of heat-diffusion for affinity-based transition embedding (PHATE) for subsets of topics in the COVID NEWS cluster (right) and the presidential election path (left) within the cluster. Colors and sizes highlight time and tweet volumes, respectively. Here three word clouds containing top $30$ words in corresponding topics are shown for the time points highlighted by red circles, showing important real-word events that are annotated. Note the plot at the bottom shows (near lower left) the merge and split of different paths (labeled by filled squares, crosses, and pluses) within the same cluster.}
    \label{fig:phate-election-2d}
\end{figure}

Additionally, in terms of tweet volume generated by COVID NEWS topic, there exists again a U-shaped trend: starting at a high level in mid-February the tweet volumes dropped down after the Super-Tuesday week and started to rise near the time when Bernie Sanders dropped out and eventually peaked in mid-May. We believe that this modulation of the presidential election path can be explained by the COVID-19 pandemic in the United States, which accelerated through March when many states issued stay-at-home orders. This then triggered public discourse around COVID-19, increasing the volume of COVID-19-related topics. However, starting in May, as many stay-at-home orders were lifted, more mainstream political news topics reentered the discourse. 

Following we present results of spatial analysis, showing county-level tweet volume in California, illustrating that the Hellinger-distance shortest path combined with PHATE is able to capture more granular-level variations in both space and time. In Figure~\ref{fig:ca_county_map_election} we plot smoothed choropleth maps for the same three topics that were highlighted in Figure~\ref{fig:phate-election-2d}, where the color changes with respect to tweet proportions (the estimated tweet volumes generated from the given topics normalized by the total tweet volumes for the given days for each county). Here raw tweet proportions have been smoothed using a simple Markov random field (MRF) smoother~\citep{wood2017generalized}, which regularizes neighboring counties (i.e., regions with contiguous boundaries, that is, sharing one or more boundary point) to have similar tweet proportions. This smoothing procedure is used to identify hot spots, or areas whose tweet volumes have a high likelihood of differing over neighboring locations. The regularization removes some of the variance one would normally see in a choropleth, and gives a bird's eye view of the entire state. For visualization of similar choropleth maps for other states, as well as a comparison of the maps between states, we include interactive maps at \url{https://wayneyw.shinyapps.io/mrf_smooth_map_app/}.

From the top row of Figure~\ref{fig:ca_county_map_election} we observe two `presidential election' hot spots in counties near the Bay Area and in counties near Los Angeles. The local trend in tweet volume for California is similar to the global trend overall in the United States, as indicated by Figure~\ref{fig:phate-election-2d} above as well as Figure~\ref{fig:states_election_path} in Appendix~\ref{supp:state-level-spatial}.

\begin{figure}[!tbh]
    \centering
    \includegraphics[width=0.7\textwidth]{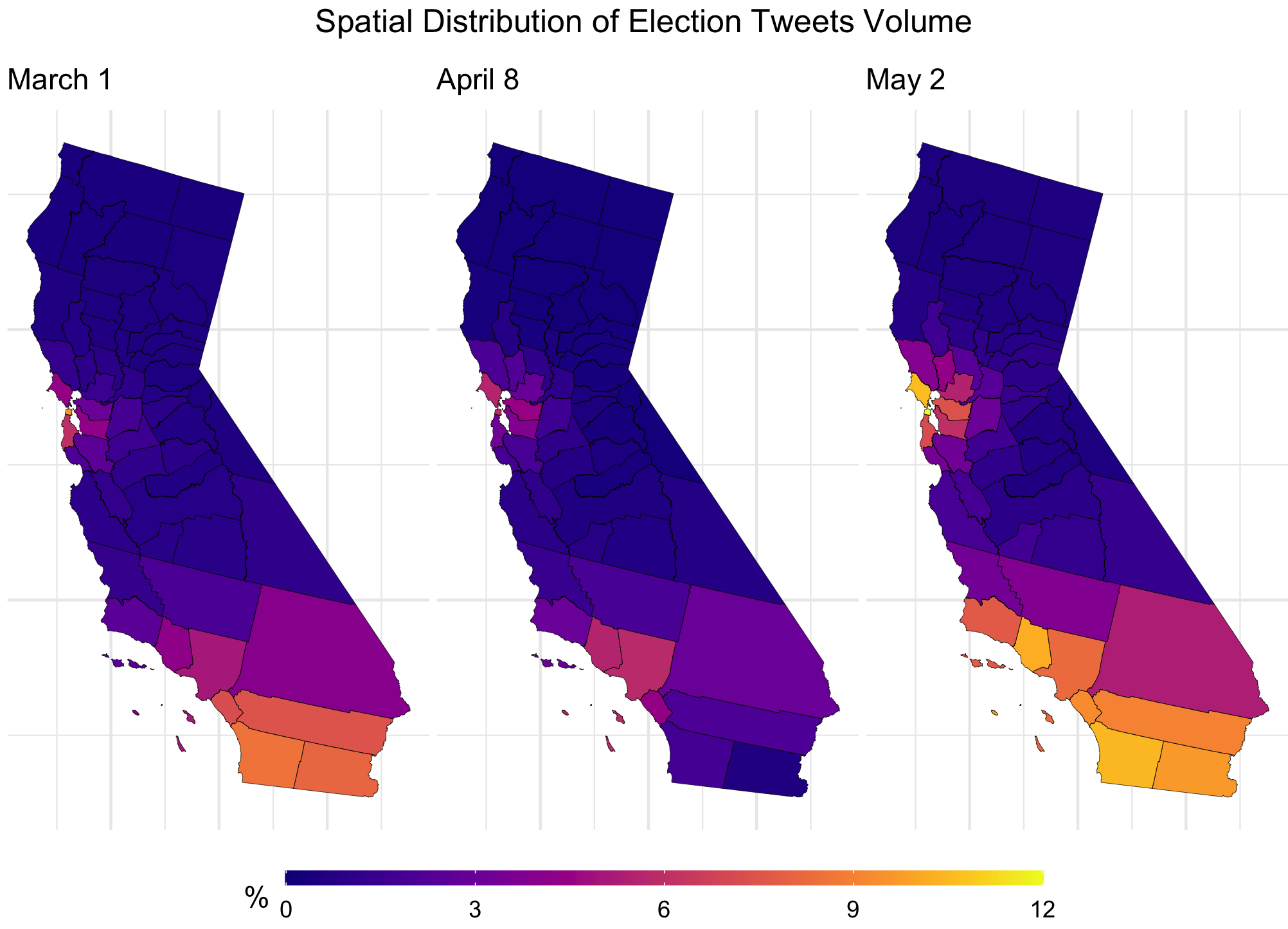}
    \caption{County-level maps for California. It shows the spatial distribution of proportional tweet volumes for the three time points on the COVID NEWS (presidential election) path.}
    \label{fig:ca_county_map_election}
\end{figure}

\subsection{Case study II: general COVID-19 topic path}
In this case study we focus on an explicit COVID-19 topic cluster and shortest paths therein. Figure~\ref{fig:phate-covid} shows the PHATE embedding for subsets of topics in the COVID cluster. The embedding identifies two paths that together exhibit splitting behavior, which can be considered as types of structures built into PHATE a priori. In this case, two similar discussions around COVID-19 split into a path that focused on health care, for example, testing, deaths, hospital, and so on, and a path that focused on politics, for example, government, Trump, president, and so on, respectively. The split of the two paths into two different sets of topics is revealed by naive clustering algorithms, such as hierarchical clustering. We emphasize here that such bifurcation behavior would be difficult to model explicitly, for example, using a time-varying global LDA-type model, but appears naturally in the PHATE embedding of the shortest paths using Hellinger distance. 

\begin{figure}[!tbh]
    \centering
    \includegraphics[width=0.45\textwidth]{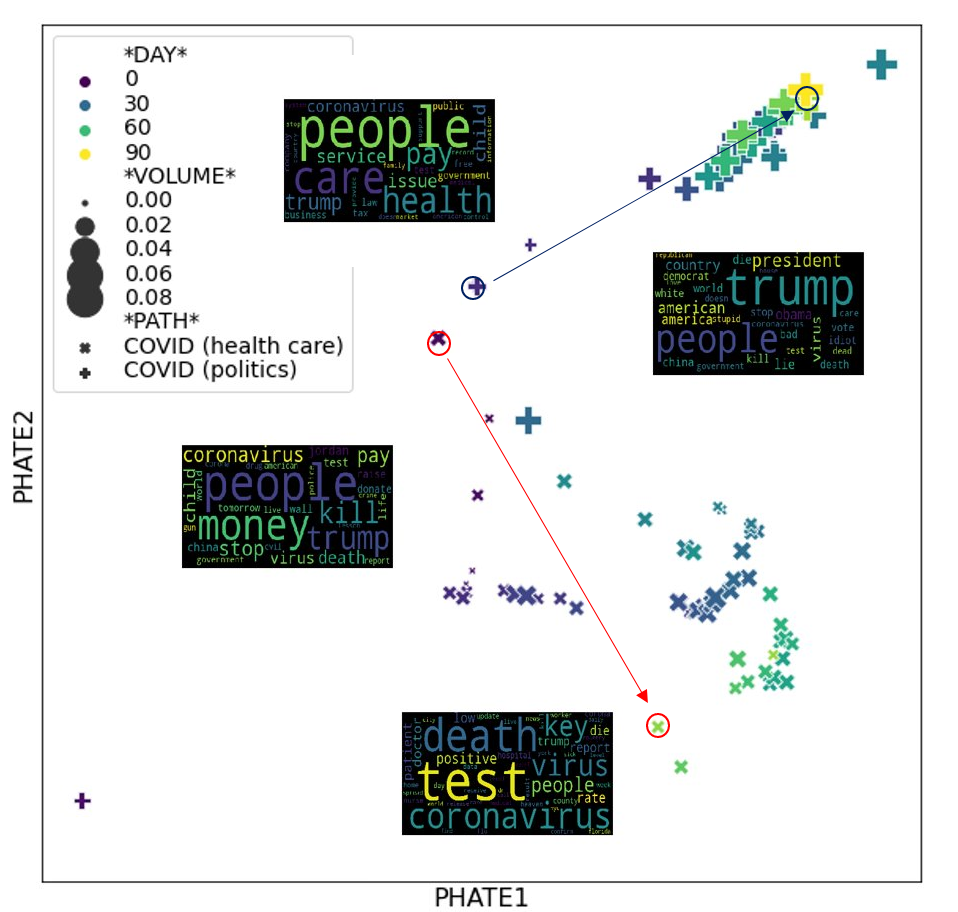} 
    \includegraphics[width=0.45\textwidth]{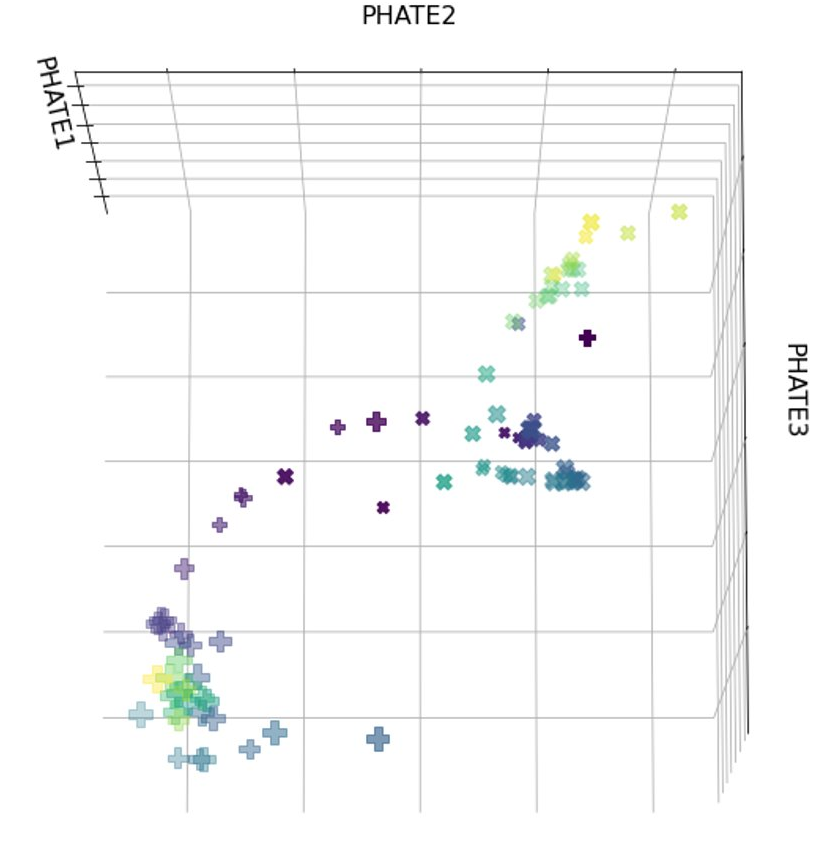}
    \caption{Potential of heat-diffusion for affinity-based transition embedding (PHATE) for subsets of topics in the COVID cluster. The plots demonstrate a 2D (left) and a 3D (right) embedding of two different paths (i.e., health care and politics). Colors and sizes highlight time and tweet volumes, respectively. Here four word clouds containing top $30$ words in corresponding topics are shown for the time points (with arrows connecting the beginning and the end topics on the same path) highlighted by red (health care) and black (politics) circles. Note the plots show divergent behavior of public discourse around COVID-19, where two similar discussions diverge to different discussions (indicated by the word clouds). The 3D embedding illustrates nonlinear paths, that is, spirals and loops, for this topic.}
    \label{fig:phate-covid}
\end{figure}

The two separated paths can be more clearly observed in the 3D view, where a `spiral' structure in the path labeled by filled circles is revealed. This spiral as well as the `loop' presented in Figure~\ref{fig:phate-election-2d} capture sharp transitions of discussions within a topic path, in contrast to more linear structures such as those exhibited in the SANITIZING (wash hands) and the STAY HOME (executive order) clusters, where the discussion is stable over time. In particular, the health care trajectory transitioned from a discussion on general concerns about the coronavirus to testing-focused discussions on a similar topic; the discussions along the presidential election trajectory transitioned from politicians in the presidential race to more general politics. On the other hand, as illustrated in Appendix~\ref{supp:linear-trajectories}, for more linear `wash hands' and `executive order' trajectories, discussions along the paths are quite stable in terms of the most relevant words. We conjecture, more formally, that linear paths geometrically constitute a one-dimensional subspace over which a single multinomial word distribution propagates over time, unaffected by nearby clusters. This represents stability in the discussions of the topic. Nonlinear paths like spirals, on the other hand, likely constitute a nonlinear subspace where the multinomial word distribution changes smoothly over time, affected by proximity to other clusters.

For county-level spatial analysis, three examples of events can be visualized in Figure~\ref{fig:ca_county_map_covid} (following the list of relevant events found at \url{https://en.wikipedia.org/wiki/COVID-19_pandemic_in_California}):
\begin{itemize}
    \item Spatial distribution of COVID tweet proportions on March 1, where the Bay Area is identified as a relative hot spot in the state. Around late February and early March, counties near the Bay Area were first hit by the coronavirus pandemic. For example, cases were reported in Alameda and Solano Counties on that day; a case was reported in Marin County, who was a passenger on the Grand Princess cruise.
    \item On March 11, the first death due to coronavirus was reported in LA County, and Ventura County reported their first case on the day before. These `light up' the two counties on the map as a hot spot.
    \item On March 20 to March 21, Los Angeles County, which is nationally the second-largest municipal health system, announced that it could no longer contain the virus and changed their guidelines for COVID-19 testing to not test symptomatic patients if a positive result would not change their treatment. Note that the Bay Area hot spot before started to `fade away' in terms of Tweets volume proportions.
\end{itemize}

\begin{figure}[!tbh]
    \centering
    \includegraphics[width=0.7\textwidth]{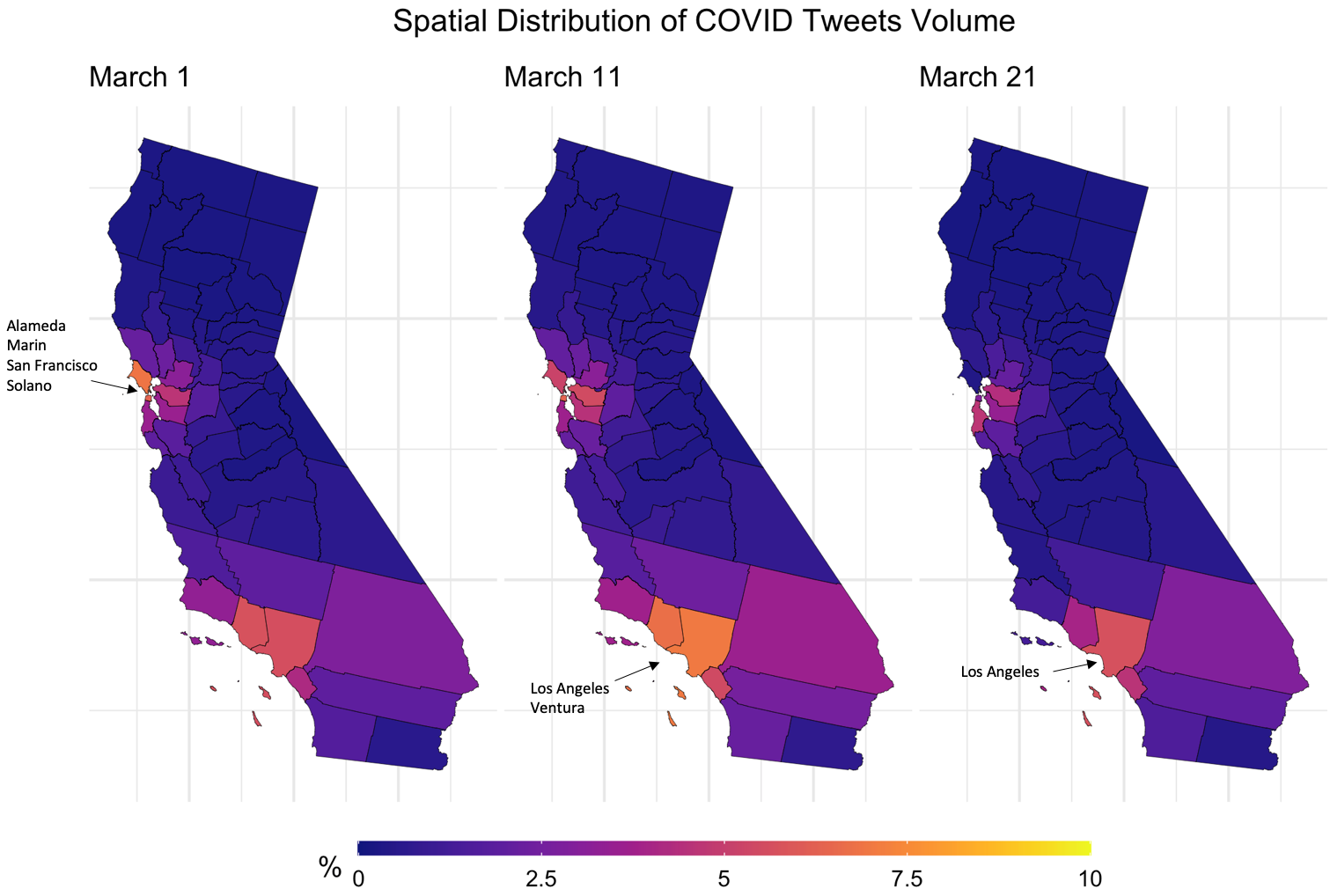}
    \caption{County-level maps for California. It shows the spatial distribution of proportional tweet volumes for three time points on the COVID (health care) path. Note that counties' names are given for spatial hot spots (in terms of tweet volume).}
    \label{fig:ca_county_map_covid}
\end{figure}

\section{TalkLife Data Analysis}\label{sec:gdtm-results-talklife}
For TalkLife data, a similar procedure detailed in Algorithm~\ref{alg:procedure} is applied with a weakly-supervised T-LDA using seed words. Here, we set $\gamma=0.5$ which corresponds to one month of posts for the current time point; the parameter for the number of topics was set to $K=15$ at every time point; the number of neighbors was set to $k=10$ for the neighborhood graph to compute shortest paths.

In contrast to the Twitter analysis, we do not have access to a time period of high-volume data and a diverse set of real-world events (e.g., COVID-19 pandemic, presidential election) that can be compared against each other. Although this results in a lack of ground truth for our dynamical analysis, we do have access to a set of labels generated by human experts. This motivates us to compare the learned topics with these labels in the following subsections.

\subsection{Data preparation}
We use data provided by the TalkLife platform (\url{https://www.talklife.com/research}). All posts from the year 2019 are extracted for further processing. Since the volume of daily posts is much lesser compared to that of Twitter data, we combined posts on a weekly basis and consider ``week'' as our time unit for the following analyses. Volumes are these weekly posts are comparable to the volume of the daily tweets (see Figure~\ref{fig:volume_geo}). Furthermore, similar text preprocessing steps as in the Twitter analysis were undertaken, and the union of the unique words
from each timestamp has been used as the common vocabulary with word frequencies zeroed out on weeks where those words do not occur.

A notable feature of the TalkLife data is the human-generated labels for the posts. There are $33$ labels in total, and each post is tagged by $\geq 0$ labels that describe the underlying/suspected mental health issues embedded in the post. Note that there are cases where more than one label is tagged to a post. In Table~\ref{tab:label_volume} of Appendix~\ref{supp:talklife_data} basic information including percentage volume of each label is shown. Additionally, the labels are used for constructing word-level features that are fed into the weakly supervised LDA model. Specifically, top words (measured by percentage volume) from posts associated with the labels are selected as ``seed words'' that guide the LDA discovery of latent topics. Using the notation from Section~\ref{sec:lda}, for each seed word $n \in V$, an incremental weight of $w_n$ that is proportional to the volume of the corresponding word has been employed in the Dirichlet prior to the multinomial word distribution. Table~\ref{tab:seed_words} of Appendix~\ref{supp:talklife_data} provides details of the seed word selection and the associated prior weights being used in the weakly-supervised T-LDA model. 
 
\subsection{Clustering of labels}
An issue with $33$ labels is that they overlap with each other in terms of the bag-of-words representations of the corresponding posts. For example, the posts with labels ``NauseaWithEatingDisorderSuspected'' and ``NauseaSuspected'' respectively may look similar from the perspective of bag-of-words. As our goal of using the labels is to compare them against the learned topics from an LDA model that is based on the bag-of-words approach, we further cluster the labels into meta-labels in order to reduce noise and redundancy. 

In particular, we extract the associated posts for each given label and construct a word distribution where the weights are computed as the percentage volumes. When using a potentially large set of features, one might expect that the true underlying clusters present in the data differ only with respect to a small fraction of the features, and will be missed if one clusters the observations using the full set of features~\citep{witten2010framework}. Here, since we are training a clustering model on a dataset with $33$ samples where each sample is a $3623$-dimensional vector ($3623 \gg 33$), i.e., a discrete distribution over the vocabulary, we sparsify each feature vector by zeroing out the weights of words that belong to the intersection of the $33$ samples. This procedure will de-emphasize words such as ``feel'', ``sad'', ``upset'', etc. that are common to most mental health-related posts in any clustering algorithm. Moreover, as noted by several researchers~\citep{gopal2014mises,batmanghelich2016nonparametric,meng2019weakly}, clustering text data with Euclidean metrics such as used by the kmeans algorithm or Gaussian mixture models are not appropriate as the data is usually normalized (e.g., term frequency, word counts) and lies on a unit-sphere manifold induced by the Hellinger metric over distribution pairs. Here, taking into consideration the above-mentioned issues, we employ a von Mises-Fisher (vMF) mixture model~\citep{banerjee2005clustering,gopal2014mises} to cluster the label using their sparse representations of word distributions.

We construct $10$ meta-clusters where the number of components is chosen via a Bayesian information criteria (BIC). Figure~\ref{fig:combined_label_topics} visualizes the top words from the merged words distributions. Table~\ref{tab:clustered_labels} and Figure~\ref{fig:sparse_labels} of Appendix~\ref{supp:label_cluster} shows the labels that belong to each of the cluster and a similar top-word visualization for the $33$ samples before clustering, respectively.

\begin{figure}[!tbh]
    \centering
    \includegraphics[width=\textwidth]{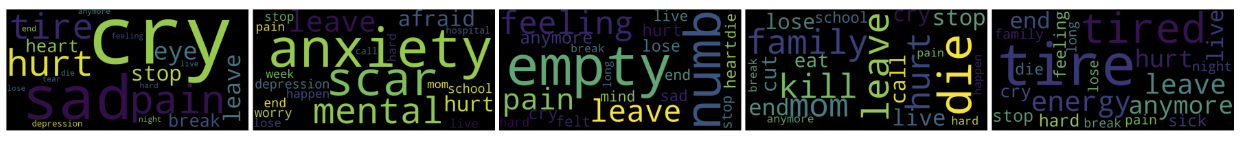}
    \includegraphics[width=\textwidth]{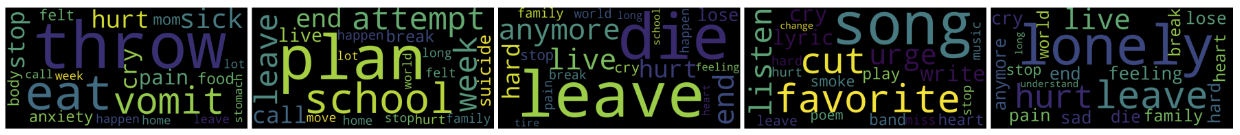}
    \caption{Top words from the merged/clustered words distributions.}
    \label{fig:combined_label_topics}
\end{figure}

\subsection{Learned topics vs. label topics}
In this study we focus on comparing the learned latent topics against the label topics (i.e., meta-clusters). At each timestamp, we compute the dot products between the learned topics with each of the label topic, that is, we compute the dot products between the corresponding weights over the vocabulary. Figure~\ref{fig:top_matches_eat} depicts topics that have the largest (i.e., most similar), smaller (i.e., not quite similar), and the smallest (i.e., least similar) dot products compared with a meta label topic that includes posts labeled by ``NauseaSuspected'' and/or ``NauseaWithEatingDisorderSuspected''. The topics with smaller dot products are clearly unrelated to eating disorder. This comparison is potentially helpful for discovering novel mental health related discussions/topics that have not yet been assigned a proper label by experts. Specifically, the learned topics that consistently result in small dot products with all label topics may indicate such novel discovery. For example, the last topic in the middle row of Figure~\ref{fig:top_matches_eat} may indicate a topic related to sleeping disorder (e.g., insomnia), which has not been included in the label set.

Furthermore, as one increases the level of supervision by increasing the prior weights on the seed words, we expect to see an increased similarity between the learned topics and a given label topic. This is confirmed in Figure~\ref{fig:top_matches_scores_eat} that compares two weighting schemes (with different seed words weights) with an unsupervised T-LDA.

\begin{figure}[!tbh]
    \centering
    \includegraphics[width=\textwidth]{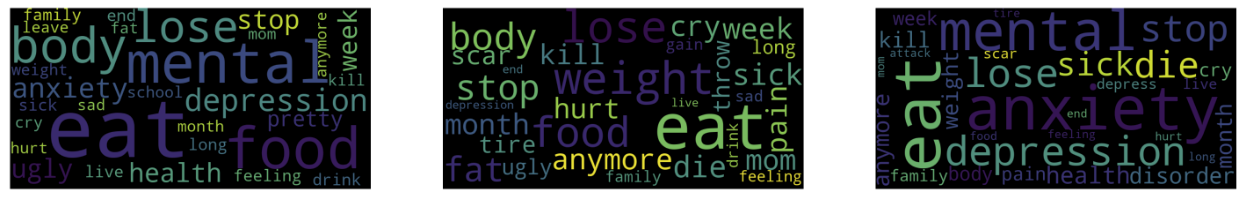}
    \includegraphics[width=\textwidth]{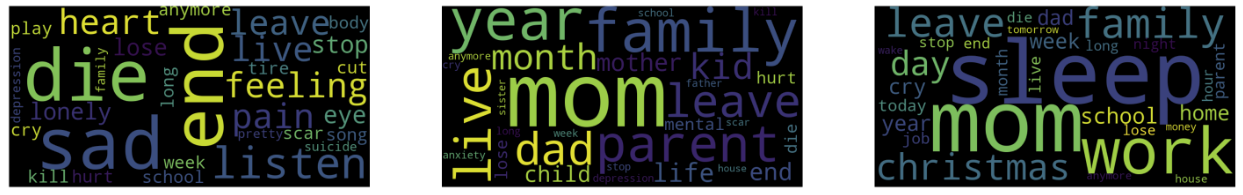}
    \includegraphics[width=\textwidth]{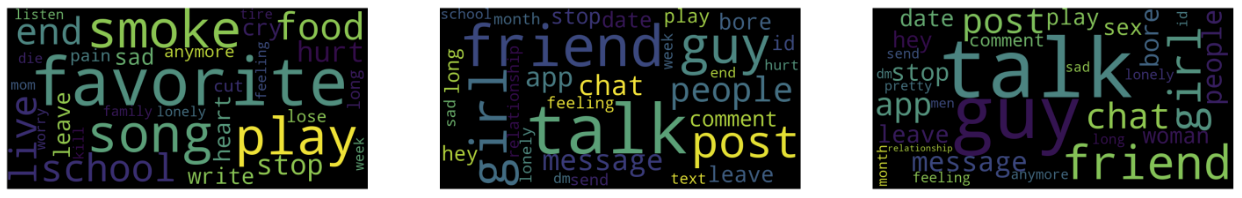}
    \caption{Examples of the most similar (top row), not quite similar (middle row), and the least similar (bottom row) learned topics compared to the label topic under labels ``NauseaSuspected'' and/or ``NauseaWithEatingDisorderSuspected'' at various timestamps. The top row clearly resembles the discussion expected from expert knowledge.}
    \label{fig:top_matches_eat}
\end{figure}

\begin{figure}[!tbh]
    \centering
    \includegraphics[width=\textwidth]{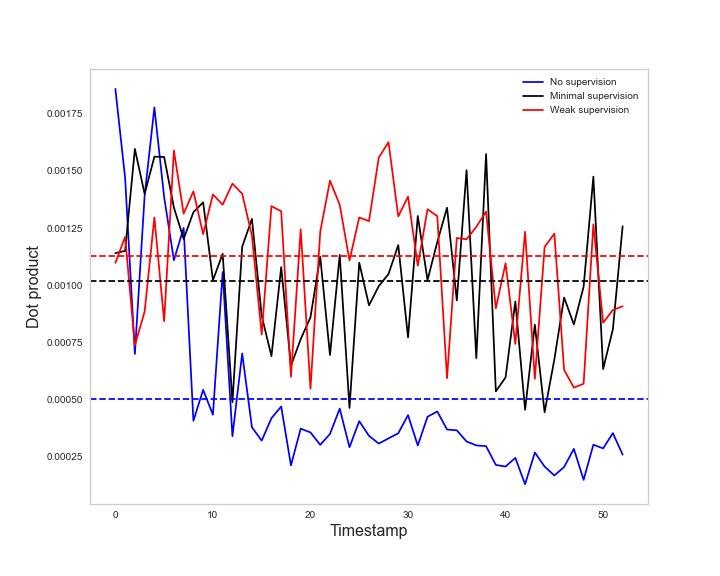}
    \caption{The dot product scores between the learned topics and the label topic under labels ``NauseaSuspected'' and/or ``NauseaWithEatingDisorderSuspected'' across timestamps (52 weeks in 2019), where the horizontal dotted line indicate the average over those timestamps. Here, scores computed from topics learned with no supervision, minimal supervision, and weak supervision are compared -- more supervision results in more similar topics compared with the labels.} 
    \label{fig:top_matches_scores_eat}
\end{figure}

\subsection{Case study: anxiety and suicide topic paths}
In this case study we focus on two critical mental health issues and the corresponding latent topic paths learned using our method: anxiety and suicide ideation. Figure~\ref{fig:phate-anxiety-suicide} depicts the PHATE embedding for these two topic paths. The embedding identifies two paths that together exhibit converging
behavior, which can be considered as types of structures built into PHATE a priori (similar to the splitting structure in case study II of Twitter analysis). Here, two dissimilar discussions on TalkLife around anxiety and suicide, respectively, merge into discussions centered on life-worthlessness and suicidal ideation. The convergence of the two paths is further revealed by clustering of the true labels -- Figure~\ref{tab:clustered_labels} in the appendix shows that the labels ``AnxietyPanicFearSuspected'' and ``SuicidalIdeationAndBehaviorSuspected'', as well as other related labels such as ``AgitationOrIrritationSuspected'', ``SelfHarmRelapseSuspected'' all belong to the same cluster.

Moreover, in terms of the shape of the embedding, we again observe similar curved and spiral structures that occurred in Figure~\ref{fig:phate-covid} and Figure~\ref{fig:phate-election-2d} in the Twitter analysis. For example, we believe the curvy structure appeared on the suicidal path (solid circle) is due to seasonal effect in suicide rates. In particular, a study by the Annenberg Public Policy Center~\citep{suiciderates} found that in 2018 the month with the lowest average daily suicide rate was December with the next-lowest rates in November and January (e.g, winter months). In the same year, the highest rates were in June, July, and August (summer months). Here, the ``inflection point'' in the suicidal path occurred around the beginning of summer with an increase in post volumes. The drop in suicidal rate during the winter months is signified by the convergence of the path with the anxiety path. On the anxiety path (solid cross), there is a similar inflection that occurred around early May, which indicates an increase in the suicidal rates and convergence of the path with the suicidal path. From a mental health point of view, a preexisting anxiety issue is a risk factor for the subsequent onset of suicidal ideation and attempts. This is consistent with published analysis~\citep{sareen2005anxiety}. Further, another change of direction occurred on the anxiety path towards the winter, which coincides with the decrease in suicidal events. After this second transition, we observe on the PHATE plot that the topics in the anxiety path are diverging from the suicidal path and converging with the earlier topics on anxiety.

\begin{figure}[!tbh]
    \centering
    \includegraphics[width=0.45\textwidth]{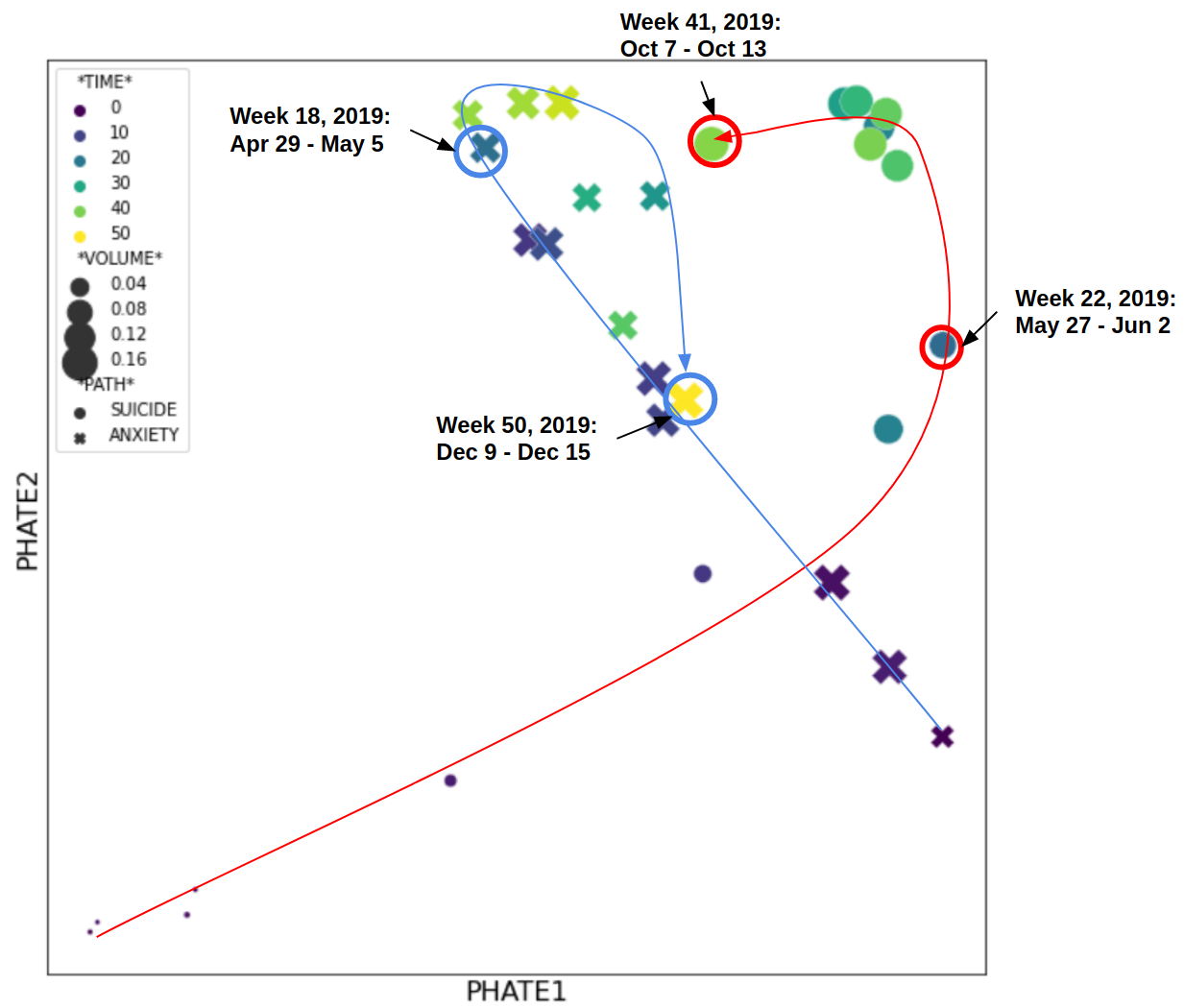} 
    \includegraphics[width=0.45\textwidth]{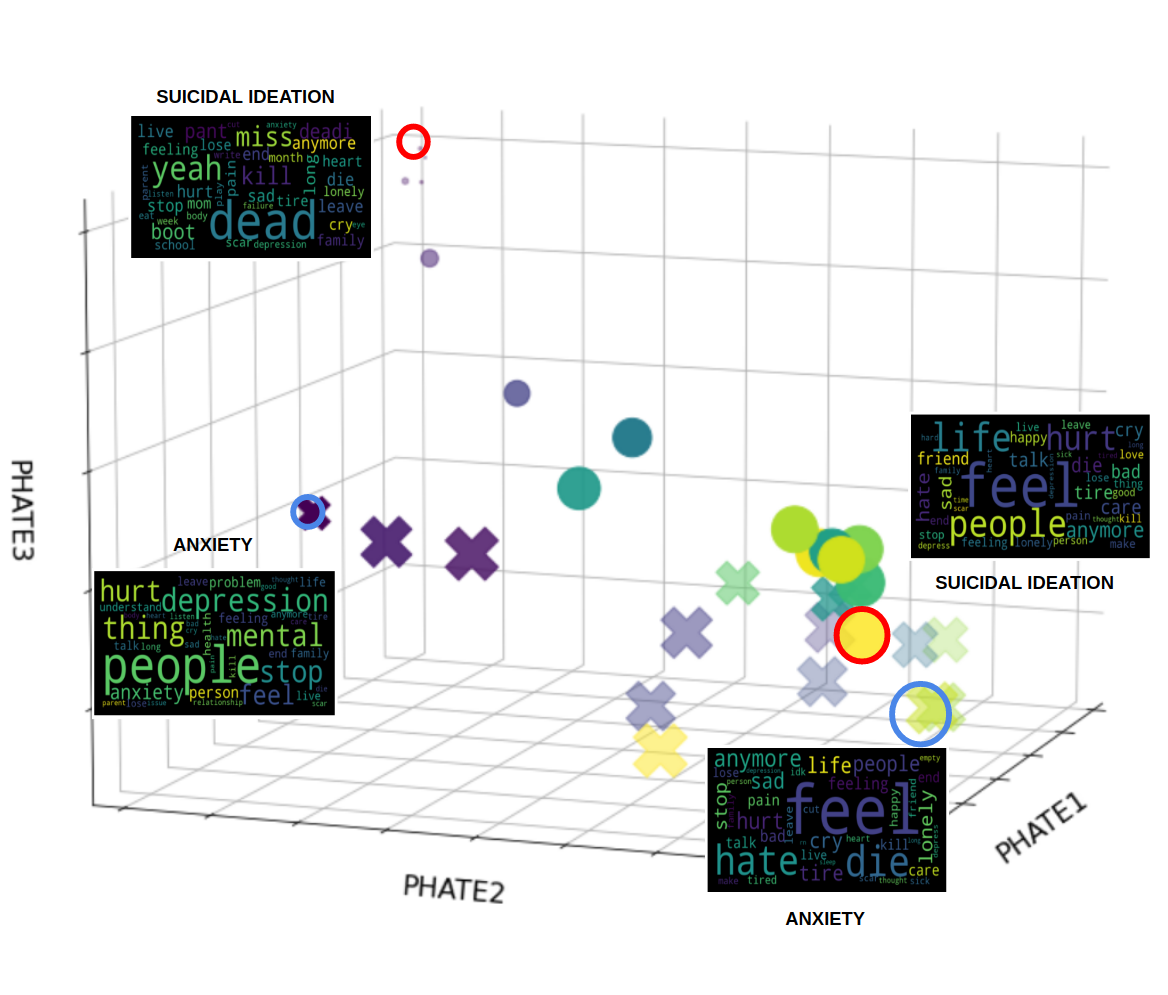}
    \caption{Potential of heat-diffusion for affinity-based transition embedding (PHATE) for two different topic paths. The plots demonstrate a 2D (left) and a 3D (right) embedding of two different paths -- anxiety and suicidal ideation/attempts. Colors and sizes highlight time (52 weeks in 2019) and posts volumes, respectively. Here four word clouds containing top $30$ words in corresponding topics are shown for the time points highlighted by red (suicide) and blue (anxiety) circles on the 3D plot. On the 2D plot, arrows are drawn connecting the beginning and the end topics on the same path with circles emphasizing several key time points. Note the plots show convergent behavior of these two temporal topic paths, where two dissimilar discussions converge to similar discussions (indicated by the word clouds). The 3D embedding further confirms this converging behavior.}
    \label{fig:phate-anxiety-suicide}
\end{figure}

\section{Conclusion}\label{sec:gdtm-conclusion}
We proposed a framework for longitudinal analysis of text data, combining tools from graph algorithms, statistics, and computational geometry. The proposed procedure works by linking together marginal topic-word distributions discovered by a `regularized' LDA model designed for micro-text, via Hellinger distances and shortest paths on neighborhood graphs. The resulting chain of topics can then be visualized by PHATE dimensionality reduction, which preserves the progressive nature of the input data. With this framework, we discovered and interpreted how certain conversations split and merged under the impact of the COVID-19 pandemic, which can be validated by associating with real-world events. Granular-level spatial analyses showed that our framework is able to capture both global (in the United States) and local variations of COVID-19-related discussions. We further extended the framework to incorporate side information via weak supervision in the form of seed words. With TalkLife data, this extension has been shown to be able to capture latent topics that coincide with expert knowledge. Finally, we believe that social media data could be used to supplement traditional health care or census data to provide fresh insights into the impact of events, like the pandemic, on society, as well as to aid study of mental health issues.

\subsection{Limitations}
There are several limitations of our analysis  that deserve additional attention. First, as with most statistical algorithms, there are user tuning parameters that must be selected. There are three tuning parameters that the user must provide: 1) the numbers of nearest neighbors $k$ in the $k$ nearest neighbor graph; 2) the data smoothing parameter $\gamma$; and 3) the number of topics $K$ for the T-LDA algorithm. We have shown that our results are robust to perturbations about the parameters we chose, but there may be better choices. These include comprehensive cross-validation methods which, with sufficient computational resources, can be used to reliably select parameters that minimize a loss function. Such methods have been proposed for selecting $k$. For selection of $K$, a promising option is the hierarchical Dirichlet process (HDP), a nonparametric Bayesian model for the number of topics that could vary over time and model birth and death of topics~\citep{teh2006hierarchical}. Our wrapper framework could easily incorporate an HDP in place of the T-LDA model, but at the expense of increased computation. Two more challenging limitations are those of selection bias and model bias. 

\paragraph{Selection biases} 
\label{section:bias}
The use of Twitter data for studying public discourse may be subject to selection bias as users of Twitter may not be representative of the U.S. population. Additionally, users of Twitter may be engaged in different types of public discourses around COVID-19 than users of other social media platforms, for example, Facebook and Reddit, which have different user demographics and privacy policies. Different types of subsampling of Tweets may create their own biases. For example, subsampling based on retweet status, geotag information, country, and time range (e.g., Feb 15 to May 15) are all subject to selection biases. Our subsampling procedure may leave out some important information. For example, we did not consider any retweets, which may contain information on how popular a particular topic might be. Retweets could possibly shed light on a particular topic, which can be measured, for example, by the longitudinal distribution of retweet frequencies for the topic. However, we could not perform a retweet analysis on our geotagged tweets since Twitter does not allow retweets to be geotagged. We also leave out tweets that are generated from U.S. users who are outside of the United States.

\paragraph{Model biases}
The LDA algorithm we have applied to topic modeling summarizes unstructured texts by themes or topics using a \textit{bag-of-words} approach. This particular approach is computationally scalable but it ignores the relative order of words. For example, a topic about `vaccines are not available' can be very close to a topic on `vaccines are available, but not to me.' The issue may be alleviated by using more sophisticated representations, for example,  bigrams or latent semantic analysis. This would result in higher computational burden--the length of unique phrases would increase exponentially as the word order dimension. Other approaches that attempt to model the semantic meaning of topics, such as deep neural networks, could also be used. Additionally, our construction of the smoothed corpora assumes temporal similarities between tweets generated at adjacent time points. Similar types of smoothing assumptions are common in other areas of spatiotemporal statistics as described in Section~\ref{sec:temporal}. The manifold hypothesis is also essential in our model for the shortest path algorithms to  recover the intrinsic similarities between topics over time.


\chapter{Conclusion and Future Work}
\label{ch:conclusion}
\section{Summary}
This thesis focused on statistical methods concerning data with spatio-temporal structure. In particular, it touched on two separated research areas:
\begin{itemize}
    \item Tensor-variate Gaussian graphical models and its connections to and applications in spatio-temporal physical processes
    \item Temporal topic modeling in both unsupervised and weakly-supervised settings with applications to analyses of public opinions and mental health
\end{itemize}

These areas are connected in that in each of them there exists structured and graphical representations of the data, and it is imperative to utilize these interpretable representations to achieve scalable and valid inference. This dissertation advanced the state-of-the-art by introducing a new class of Gaussian graphical model for tensor-valued data, designing new estimation algorithms with fast convergence, introducing a new framework for filtering and data assimilation, and finally describing a new approach to temporal topic modeling. There are multiple fruitful extensions of these methodology that warrant further investigation, which we discuss next.

\section{Future Work}
\paragraph{Physical interpretability.} While the Kronecker products expansion used in Kronecker PCA captures dense structures in the covariance matrix of data generated from more complex spatio-temporal physical processes as illustrated in Chapter~\ref{ch:enkf}, it lacks physical interpretability. In contrast to the case of Sylvester graphical model and Poisson-AR(1) processes, it is not obvious whether the sum of Kronecker products structure corresponds to any true physical models. Recent development in quantum informatics~\citep{chu2021nonlinear} has demonstrated a link between estimation of the density matrix for entangled quantum states and the structured tensor approximation via $\sum_{i=1}^l \mat{A}_i \otimes \mat{B}_i$. Further characterizing and extending these connections to other classes of discretized PDEs is an interesting future direction. Furthermore, in both the blocked Poisson-AR(1) and convection-diffusion examples, a mixed Kronecker sum and Kronecker product structure emerges that can be related to the state inverse covariance of a dynamical system.

\paragraph{Heavy-tailed multiway covariance/precision models.} Most existing work on multiway covariance and inverse covariance models focus on modeling Gaussian variables. It would be interesting to explore whether the pseudo-likelihood framework we adopted for SyGlasso and SG-PALM can be extended to non-Gaussian heavy-tailed models, e.g., using copula's or  elliptically contoured distributions. This could have important practical applications, in particular to solar flare and active region prediction problems presented in Chapter~\ref{ch:sgpalm}. The images that characterize the active regions generally include a small number of pixels of extreme high-intensity. These pixels might not be captured by a Gaussian-like distribution. Recently, there have been advances~\citep{wei2017heavytailed,ke2019user} in covariance estimation for heavy-tailed, non-Gaussian vector-variate data. Multiway (inverse) covariance estimation is an open  problem. Furthermore, robust Kronecker structured covariance / correlation models where robust estimator of the correlation matrix with sparse Kronecker structure has been recently studied for high-dimensional matrix-variate data~\citep{niu2020robust}. But, there are still open problems such as theoretical guarantees (comparable to those of traditional methods) and efficient computational algorithms that warrant future development.

\paragraph{Kronecker-structured autoencoders.} Low-rank covariance models have close connections with variational autoencoders (VAEs). \citet{dai2018connections} studied the relationship between (robust) PCA and VAEs. Since the Kronecker product for matrices is a generalization of the outer product for vectors, KPCA can be considered as a generalization of a the low-rank approximation method of PCA. It is thus natural to exploit similar relationships between KPCA and VAEs. In this case VAE may be considered as a nonlinear/non-Gaussian extension to KPCA for low separation rank covariance models. Additionally, recent advances in efficient training of the VAE-type neural network architecture (e.g., using stochastic gradient descent) could improve the computational complexity of KPCA that is currently limited by an expensive singular value decomposition~\citep{tsiligkaridis2013covariance,greenewald2015robust}.

\paragraph{Model selection for Kronecker-structured models.} Each of the KP, KS, or Sylvester structure has its pros and cons and is appealing only under appropriate data generating processes. It is still unclear, for a given data problem with unknown underlying generative process, how to choose among various Kronecker-structured models. This problem has been attracting attentions only very recently -- \citet{guggenberger2022test} developed a procedure for testing for a covariance matrix to have Kronecker product structure. However, the method proposed relies on an expensive rank test procedure~\citep{kleibergen2006generalized} that is not scalable to modern big-data applications. Moreover, it is still an open problem to develop similar tests for Kronecker sum and Sylvester structures in either the covariance or its inverse.

\paragraph{Theoretical analysis of geometry-driven dynamic topic models.}
The ``nonparametric'' geometry-driven framework proposed in Chapter~\ref{ch:gdtm} shows promising results in recovering perceptually natural temporal dynamics that may exist among data. We demonstrated the ``closeness'' of the recovered chain of topics to a series of real events happened around the same time period. However, from a theoretical point of view, it is desirable to understand whether the estimated topic chain approximates well the truth. Just as statisticians have studied when least-squares regression can estimate the ``true'' regression model, it is natural and important for us to study the ability of the computational geometric algorithms to estimate the ``true'' topic path in a stochastic topic model. 

Researchers have explored the performance of nonparametric algorithms that are based on heuristics or insights on the underlying problems under certain statistical/stochastic models. For example, \citet{rohe2011spectral} showed the consistency of the spectral clustering algorithms in identifying clusters in network data under a true network generated from the Stochastic Blockmodel~\citep{holland1983stochastic}. \citet{bickel2009nonparametric} proved that, also under the Stochastic Blockmodel, a nonparametric community detection algorithm called the Newman–Girvan modularity~\citep{newman2004finding} are asymptotically consistent estimators of block partitions. 

Akin to these work of studying the performance of nonparametric methods on parametric tasks of estimating quantities in statistical models, we propose to study the consistency of the geometry-driven topic modeling algorithm in identifying the true topic path, under topics generated by the DTM model proposed in~\citet{blei2006dynamic}. More specifically, under DTM, the generative process at a time stamp $t$ is
\begin{enumerate}
    \item Draw $\beta_{t,k}|\beta_{t-1,k} \sim \mathcal{N}(\beta_{t-1,k},\sigma^2\mat{I}), \forall k$
    \item Draw $\alpha_t|\alpha_{t-1} \sim \mathcal{N}(\alpha_{t-1},\delta^2\mat{I})$
    \item For each document:
    \begin{enumerate}
        \item Draw $\eta_{t,d} \sim \mathcal{N}(\alpha_t,a^2\mat{I})$
        \item For each word:
        \begin{enumerate}
            \item Draw topic $Z_{t,d,n} \sim \text{Multi}(\pi(\eta_{t,d}))$
            \item Draw $W_{t,d,n} \sim \text{Multi}(\pi(\beta_{t,Z_{t,d,n}}))$,
        \end{enumerate}
    \end{enumerate}
\end{enumerate}
where $\pi(x)$ is a mapping from the natural parameterization $x$ to the mean parameterization. Here, define
\begin{itemize}
    \item[] $\alpha_{t}$ as the per-document topic distribution at time $t$.
    \item[] $\beta_{t,j}$ as the word distribution of topic $k$ at time $t$.
    \item[] $\eta_{t,d}$ as the topic distribution for document $d$ at time $t$.
    \item[] $z_{t,d,n}$ as the topic for the $n$th word in document $d$ in time $t$.
    \item[] $w_{t,d,n}$ as the word.
\end{itemize}

A plausible direction in proving performance of the nonparametric topic modeling approach proposed in Chapter~\ref{ch:gdtm} under DTM is to characterize the distances between the recovered topic paths to the true paths, i.e., $\beta_{t,k}$'s, for $t=1,\dots$ and show that these distances vanish as both the length of the documents and the number of latent topics grow to infinity. 

\appendix
\chapter{Appendix of Chapter II}
\label{app:syglasso}
In this Appendix,
\begin{itemize}
    \item[] Section~\ref{supp:syglasso_alg_derivation} provides the detailed derivation of the updates for Algorithm \ref{alg:nodewise_tensor_lasso};
    \item[] Section~\ref{supp:syglasso_convrg_proofs} provides the proofs of theorems stated in Section~\ref{sec:syglasso-thm}; 
    \item[] Section~\ref{supp:simulated_precision_matrix} provides details on the simulated data in Section~\ref{sec:syglasso-experiments}. 
\end{itemize}

\section{Derivation of the Nodewise Tensor Lasso Estimator}
\label{supp:syglasso_alg_derivation}

\subsection{Off-Diagonal updates}
\label{supp:syglasso_derivation_offdiag}
For $1 \leq i_k < j_k \leq m_k$, $T_{i_kj_k}(\mat{\Psi}_k^{\text{off}})$ can be computed in closed form:
\begin{equation}\label{eqn:update_offdiag}
    (T_{i_kj_k}(\mat{\Psi}_k))_{i_kj_k}^{\text{off}} = 
    \frac{S_{\frac{\lambda_k}{N}}\Big(F_{\tensor{X},\{\mat{\Psi}_k\}_{k=1}^K}\Big)}
    {
    (\frac{1}{N}\tensor{X}_{(k)}\tensor{X}_{(k)}^T)_{i_ki_k} + (\frac{1}{N}\tensor{X}_{(k)}\tensor{X}_{(k)}^T)_{j_kj_k}
    },
\end{equation}
where
\begin{align*}
        F_{\tensor{X},\{\mat{\Psi}_k\}_{k=1}^K} = - \frac{1}{N} &
        \Bigg(\Big((\tensor{W}_{(k)} \circ \tensor{X}_{(k)}) \tensor{X}_{(k)}^T \Big)_{i_kj_k} + \Big((\tensor{W}_{(k)} \circ \tensor{X}_{(k)}) \tensor{X}_{(k)}^T \Big)_{j_ki_k} \\
        & + \Big(\tensor{X}_{(k)}(\tensor{X} \times_k \mat{\Psi}_k^{\text{off},i_kj_k})^T_{(k)}\Big)_{j_ki_k} + \Big(\tensor{X}_{(k)}(\tensor{X} \times_k \mat{\Psi}_k^{\text{off},i_kj_k})^T_{(k)}\Big)_{i_kj_k} \\
        & \quad + \sum_{l \neq k} \Big(\tensor{X}_{(k)}(\tensor{X} \times_l \mat{\Psi}_l^{\text{off}})^T_{(k)}\Big)_{i_kj_k} + \sum_{l \neq k} \Big(\tensor{X}_{(k)}(\tensor{X} \times_l \mat{\Psi}_l^{\text{off}})^T_{(k)}\Big)_{j_ki_k}
        \Bigg).
\end{align*} 
Here the $\circ$ operator denotes the Hadamard product between matrices; $\mat{\Psi}_k^{\text{off},i_kj_k}$ is $\mat{\Psi}_k^{\text{off}}$ with the $(i_k,j_k)$ entry being zero; and $S_{\lambda}(x):=\text{sign}(x)(|x|-\lambda)_{+}$ is the soft-thresholding operator. 

\subsection{Diagonal updates}
\label{supp:syglasso_derivation_diag}
For $\tensor{W}$,
\begin{equation}
\label{eqn:update_diag}
(T(\tensor{W}))_{i_{[1:K]}}
    = \frac{-\Big(\tensor{X}_{(N)}^T\tensor{Y}_{(N)}\Big)_{i_{[1:K]}}+\sqrt{\Big(\tensor{X}_{(N)}^T\tensor{Y}_{(N)}\Big)_{i_{[1:K]}}^2+4\Big(\tensor{X}_{(N)}\tensor{X}_{(N)}^T\Big)_{i_{[1:K]}}}}
    {2 \Big(\tensor{X}_{(N)}\tensor{X}_{(N)}^T\Big)_{i_{[1:K]}}}.
\end{equation}
Here we define $\tensor{Y}:=\sum_{k=1}^K \Big(\tensor{X} \times_k \mat{\Psi}_k^{\text{off}} \Big)$. Equations \eqref{eqn:update_offdiag} and \eqref{eqn:update_diag} give necessary ingredients for designing a coordinate descent approach to minimizing the objective function in \eqref{eqn:objective}. The optimization procedure is summarized in Algorithm \ref{alg:nodewise_tensor_lasso}.
\subsection{Derivation of updates}
Note that for $1 \leq i_k < j_k \leq m_k$, $1 \leq k \leq K$,
\begin{align*}
    & Q_N(\{\mat{\Psi}_k\}_{k=1}^K) \\ 
    & = (N/2)\Big(\sum_{i_{[1:k-1,k+1:K]}} ({\tensor{X}_{i_{[1:K]}}^{i_k}}^2 +{\tensor{X}_{i_{[1:K]}}^{j_k}}^2)\Big)\Big((\mat{\Psi}_k)_{i_kj_k}\Big)^2 \\ 
    & + N F_{\tensor{X},\{\mat{\Psi}\}_{k=1}^K} (\mat{\Psi}_k)_{i_kj_k} + \lambda_k|(\mat{\Psi}_k)_{i_kj_k}| \\ 
    & + \text{terms independent of $(\mat{\Psi}_k)_{i_kj_k}$},
\end{align*} where

\begin{equation*}
  \begin{aligned}
        F_{\tensor{X},\{\mat{\Psi}\}_{k=1}^K} = - \sum_{i_{[1:k-1,k+1:K]}} &
        \Big( \tensor{W}_{i_{[1:K]}}^{i_k} \tensor{X}_{i_{[1:K]}}^{i_k}\tensor{X}_{i_{[1:K]}}^{j_k} + \tensor{W}_{i_{[1:K]}}^{j_k} \tensor{X}_{i_{[1:K]}}^{j_k}\tensor{X}_{i_{[1:K]}}^{i_k}\\
        & \quad + (\mat{\Psi}_k)_{i_k,\text{\textbackslash} \{i_k,j_k\}}^T \tensor{X}_{i_{[1:K]}}^{\text{\textbackslash} \{i_k,j_k\}} \tensor{X}_{i_{[1:K]}}^{j_k}  \\
        & \quad + (\mat{\Psi}_k)_{j_k,\text{\textbackslash} \{i_k,j_k\}}^T \tensor{X}_{i_{[1:K]}}^{\text{\textbackslash} \{i_k,j_k\}}
        \tensor{X}_{i_{[1:K]}}^{i_k}\\
        & \quad + \sum_{l \in [1:k-1,k+1:K]}
        (\mat{\Psi}_l)_{i_l,\text{\textbackslash} i_l}^T \tensor{X}_{i_{[1:K]}}^{i_k,\text{\textbackslash} i_l} \tensor{X}_{i_{[1:K]}}^{j_k} \\
        & \quad + \sum_{l \in [1:k-1,k+1:K]}
        (\mat{\Psi}_l)_{i_l,\text{\textbackslash} i_l}^T \tensor{X}_{i_{[1:K]}}^{j_k,\text{\textbackslash} i_l} \tensor{X}_{i_{[1:K]}}^{i_k}
         \Big).
    \end{aligned}
\end{equation*} Here $\tensor{X}_{i_{[1:K]}}^{i_k}$ denotes the element of $\tensor{X}$ indexed by $i_{[1:K]}$ except that the $k$th index is replaced by $i_k$ and $\tensor{X}_{i_{[1:K]}}^{i_k,j_l}$ denotes the element of $\tensor{X}$ indexed by $i_{[1:K]}$ except that the $k,l$th indices are replaced by $i_k,j_l$. Note the following equivalence:
\begin{align*}
    & \sum_{i_{[1:k-1,k+1:K]}} \tensor{W}_{i_{[1:K]}}^{i_k} \tensor{X}_{i_{[1:K]}}^{i_k}\tensor{X}_{i_{[1:K]}}^{j_k}=\Big((\tensor{W}_{(k)} \circ \tensor{X}_{(k)}) \tensor{X}_{(k)}^T \Big)_{i_kj_k} \\ & \sum_{i_{[1:k-1,k+1:K]}}\tensor{X}_{i_{[1:K]}}^{i_k}\tensor{X}_{i_{[1:K]}}^{j_k} = (\tensor{X}_{(k)}\tensor{X}_{(k)}^T)_{i_kj_k} \\ & \sum_{i_{[1:k-1,k+1:K]}}(\mat{\Psi}_l)_{i_l,.}^T \tensor{X}_{i_{[1:K]}}^{i_k,.} \tensor{X}_{i_{[1:K]}}^{j_k} = \Big(\tensor{X}_{(k)}(\tensor{X}\times_l\mat{\Psi}_l)_{(k)}^T\Big)_{j_ki_k},
\end{align*}
where $\tensor{W}$ is a tensor of the same dimensions of $\tensor{X}$, formed by tensorize values in $\tensor{W}$, and in the case of $N>1$ the last mode of $\tensor{W}$ is the observation mode similarly to $\tensor{X}$ but with exact replicates. Using the tensor notation and standard sub-differential method, Equation \eqref{eqn:update_offdiag} then follows. 

For $\tensor{W}_{i_{[1:K]}}$, using similar tensor operations,
\begin{align*}
    & \frac{\partial}{\partial \tensor{W}_{i_{[1:K]}}} Q_N(\tensor{W},\{\mat{\Psi}_k^{\text{off}}\}_{k=1}^K)  = 0 \\
    & \iff -\frac{1}{\tensor{W}_{i_{[1:K]}}} + \tensor{W}_{i_{[1:K]}}^2 \tensor{X}_{i_{[1:K]}}^2 + \tensor{W}_{i_{[1:K]}}\Big(\tensor{X}_{i_{[1:K]}}\sum_{k=1}^K(\tensor{X} \times_k \mat{\Psi}_k^{\text{off}})_{i_{[1:K]}})\Big) = 0 \\
    & \iff \tensor{W}_{i_{[1:K]}}^2 \Big(\tensor{X}_{(N)}^T\tensor{X}_{(N)}\Big)_{i_{[1:K]}} + \tensor{W}_{i_{[1:K]}} \Big(\tensor{X}_{(N)}^T\sum_{k=1}^K(\tensor{X} \times_k \mat{\Psi}_k^{\text{off}})\Big)_{i_{[1:K]}} - 1 = 0
\end{align*} which is a quadratic equation in $\tensor{W}_{i_{[1:K]}}$ and since $\tensor{W}_{i_{[1:K]}}>0$, so the positive root has been retained as the solution. Note that the estimation for one entry of $\tensor{W}$ is independent of the other entries. So during the estimation process we update all the entries at once by noting that $\diag\Big(\tensor{X}_{(N)}^T\tensor{X}_{(N)}\Big)=\Big(\Big(\tensor{X}_{(N)}^T\tensor{X}_{(N)}\Big)_{i_{[1:K]}}, \forall i_{[1:K]} \Big)$.

\section{Proofs of Main Theorems}\label{supp:syglasso_convrg_proofs}
We first list some properties of the loss function.

\begin{lemma}
The following is true for the loss function:
\begin{enumerate}[label=(\roman*)]
    \item There exist constants $0 < \Lambda_{\min}^L \leq \Lambda_{\max}^L < \infty$ such that for $\mathcal{S}_{k}:=\{(i_k,j_k):1 \leq i_k < j_k \leq m_k\},k=1,\dots,K$,
    \begin{equation*}
        \Lambda_{\min}^L \leq \lambda_{\min}(\bar{L}^{\prime\prime}_{\mathcal{S}_{k},\mathcal{S}_{k}}(\bar{\bm{\beta}})) \leq \lambda_{\max}(\bar{L}^{\prime\prime}_{\mathcal{S}_{k},\mathcal{S}_{k}}(\bar{\bm{\beta}})) \leq \Lambda_{\max}^L
    \end{equation*}
    \item There exists a constant $K(\bar{\bm{\beta}})<\infty$ such that for all $1 \leq i_k < j_k \leq m_k$, $\bar{L}^{\prime\prime}_{i_kj_k,i_kj_k}(\bar{\bm{\beta}}) \leq K(\bar{\bm{\beta}})$
    \item There exist constant $M_1(\bar{\bm{\beta}}), M_2(\bar{\bm{\beta}})<\infty$, such that for any $1 \leq i_k < j_k \leq m_k$
    \begin{equation*}
        \Var_{\bar{\tensor{W}},\bar{\bm{\beta}}}(L^{\prime}_{i_kj_k}(\bar{\tensor{W}},\bar{\bm{\beta}},\tensor{X})) \leq M_1(\bar{\bm{\beta}}), \ \Var_{\bar{\tensor{W}},\bar{\bm{\beta}}}(L^{\prime\prime}_{i_kj_k,i_kj_k}(\bar{\tensor{W}},\bar{\bm{\beta}},\tensor{X})) \leq M_2(\bar{\bm{\beta}})
    \end{equation*}
    \item There exists a constant $0 < g(\bar{\bm{\beta}}) <\infty$, such that for all $(i,j) \in \mathcal{A}_{k}$
    \begin{equation*}
        \bar{L}^{\prime\prime}_{ij,ij}(\bar{\tensor{W}},\bar{\bm{\beta}}) - \bar{L}^{\prime\prime}_{ij,\mathcal{A}_{k}^{ij}}(\bar{\tensor{W}},\bar{\bm{\beta}})[\bar{L}^{\prime\prime}_{\mathcal{A}_{k}^{ij},\mathcal{A}_{k}^{ij}}(\bar{\tensor{W}},\bar{\bm{\beta}})]^{-1}\bar{L}^{\prime\prime}_{\mathcal{A}_{k}^{ij},ij}(\bar{\tensor{W}},\bar{\bm{\beta}}) \geq g(\bar{\bm{\beta}}),
    \end{equation*} where $\mathcal{A}_{k}^{ij}:=\mathcal{A}_{k}/\{(i,j)\}$.
    \item There exists a constant $M(\bar{\bm{\beta}})<\infty$, such that for any $(i,j) \in \mathcal{A}_{k}^c$
    \begin{equation*}
        \|\bar{L}^{\prime\prime}_{ij,\mathcal{A}_{k}}(\bar{\tensor{W}},\bar{\bm{\beta}})[\bar{L}^{\prime\prime}_{\mathcal{A}_{k},\mathcal{A}_{k}}(\bar{\tensor{W}},\bar{\bm{\beta}})]^{-1}\|_2 \leq M(\bar{\bm{\beta}}).
    \end{equation*}
\end{enumerate}
\end{lemma}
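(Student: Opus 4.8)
The plan is to exploit the quadratic structure of the per-node loss $L(\tensor{W},\bm\beta,\tensor{X})$ in $\bm\beta$ (the off-diagonal entries of the $\bm\Psi_k$'s), together with the bounded-eigenvalue assumption (A2) and the subgaussian assumption (A1). First I would write out the Hessian $\bar L^{\prime\prime}(\bar{\tensor{W}},\bar{\bm\beta})$ explicitly. Because each summand $((I)+(II))^2$ in $L$ is quadratic in $\bm\beta$ with $\tensor{W}$ held fixed, the second derivatives do not depend on $\bm\beta$ and are finite linear combinations of entries of the population covariance $\bar{\mat\Sigma} = \bar{\mat\Omega}^{-1}$; concretely, a mixed partial $\partial^2 L / \partial(\mat\Psi_k)_{ij}\partial(\mat\Psi_k)_{i'j'}$ reduces, after taking expectations, to a sum of a bounded number of entries of $\bar{\mat\Sigma}$ of the form $\bbE[\tensor{X}_{\cdot}\tensor{X}_{\cdot}]$ along appropriate fibers. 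The key quantitative input is that all such entries are bounded in absolute value by $\|\bar{\mat\Sigma}\|_2 = \lambda_{\max}(\bar{\mat\Sigma}) = \lambda_{\min}(\bar{\mat\Omega})^{-1} \leq \Lambda_{\min}^{-1}$ under (A2), which immediately gives the upper bounds in (i) and (ii) with constants depending only on $\Lambda_{\min}$ and $K$. For (ii) note $\bar L^{\prime\prime}_{i_kj_k,i_kj_k}(\bar{\bm\beta}) = \bbE[\tensor{X}^{i_k}_{i_{[1:K]}}{}^2 + \tensor{X}^{j_k}_{i_{[1:K]}}{}^2]$ summed over the co-mode indices, again bounded by a multiple of $\Lambda_{\min}^{-1}$.

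For the lower bound in (i) I would argue that $\bar L^{\prime\prime}_{\mathcal S_k,\mathcal S_k}(\bar{\bm\beta})$ is a principal submatrix (after a change of variables identifying the off-diagonal coordinates of $\bm\Psi_k$ with a subvector of $\vecto(\tensor{X})$-type regressors) of a Gram-type matrix whose smallest eigenvalue is controlled below by $\lambda_{\min}(\bar{\mat\Omega}) \geq \Lambda_{\min}$; this mirrors the argument in \citet{peng2009partial} for the $K=1$ case, where the analogous "restricted" Hessian has eigenvalues sandwiched by the extreme eigenvalues of the true precision matrix. Properties (iv) and (v) are then algebraic consequences of (i): the quantity in (iv) is the Schur complement of the $(ij,ij)$ entry in the principal submatrix $\bar L^{\prime\prime}_{\mathcal A_k,\mathcal A_k}$, hence bounded below by $\Lambda_{\min}^L$ (a standard fact: Schur complements of a positive-definite matrix have smallest eigenvalue at least that of the whole matrix); and the operator norm in (v) is bounded by $\|\bar L^{\prime\prime}_{ij,\mathcal A_k}\|_2 \cdot \|[\bar L^{\prime\prime}_{\mathcal A_k,\mathcal A_k}]^{-1}\|_2 \leq (\text{const}\cdot\Lambda_{\min}^{-1}\sqrt{q_k})/\Lambda_{\min}^L$, finite under (A2).

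Property (iii) is the one genuinely requiring (A1) rather than just (A2). Here $L^{\prime}_{i_kj_k}$ and $L^{\prime\prime}_{i_kj_k,i_kj_k}$ are, respectively, quadratic and again quadratic (since the Hessian is constant) forms in the entries of $\tensor{X}$; their variances are therefore fourth-order moments of $\vecto(\tensor{X})$. Under the subgaussian condition in (A1) — $\bbE e^{\mat a^T\mat x}\leq e^{c\mat a^T\bar{\mat\Sigma}\mat a}$ and $\bbE e^{tx_j^2}\leq K$ for $|t|<\rho_j$ — all fourth moments of linear/quadratic functionals of $\mat x$ are bounded by a universal constant times powers of $\|\bar{\mat\Sigma}\|_2$, via the Hanson--Wright-type inequality or direct moment generating function bounds. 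This yields $M_1(\bar{\bm\beta}),M_2(\bar{\bm\beta})<\infty$ depending only on $c$, $K$, $\{\rho_j\}$, and $\Lambda_{\min}$. \emph{The main obstacle} I anticipate is purely bookkeeping: carefully enumerating which entries of $\bar{\mat\Sigma}$ appear in each second derivative across the $K$ modes (the Kronecker-sum-squared structure of $\bar{\mat\Omega}$ makes the fiber indexing in \eqref{eqn:elementwise_tensor_sylvester} somewhat intricate), and then checking that the combinatorial factors multiplying the $\Lambda_{\min}^{-1}$ bounds remain polynomial in $K$ and the $m_k$'s so that the constants truly are dimension-free in the relevant sense. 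Once the Hessian is written down explicitly, everything else follows the $K=1$ template of \citet{peng2009partial,meinshausen2006high} with Kronecker-sum modifications.
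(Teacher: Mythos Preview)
Your proposal is correct and follows essentially the same strategy as the paper: exploit the fact that $L$ is quadratic in $\bm\beta$ (with $\tensor{W}$ fixed) so that the population Hessian is determined by $\bar{\mat\Sigma}$, derive (i) from (A2), and then obtain (ii)--(v) by the $K=1$ arguments of \citet{peng2009partial} with mode-wise indexing. The only place the paper is cleaner is the upper bound in (i): rather than bounding individual Hessian entries by $\|\bar{\mat\Sigma}\|_2$ (which on its own does not give a dimension-free eigenvalue bound), it computes the Hessian in matrix form from \eqref{eqn:syglasso_objective_matrix} and shows $\bar L''_{\mathcal S_k,\mathcal S_k}(\bar{\bm\beta}) = \bbE[\mat S] = \bar{\mat\Sigma}$ directly, so that $\mat a^T\bar L''_{\mathcal S_k,\mathcal S_k}\mat a = \mat a^T\bar{\mat\Sigma}\mat a$ and both eigenvalue bounds follow at once from (A2); your own ``principal submatrix of a Gram-type matrix'' observation for the lower bound is exactly this identification and would give the upper bound too without the entry-wise detour. (Note also that the bounds come from $\lambda_{\min}(\bar{\mat\Sigma}),\lambda_{\max}(\bar{\mat\Sigma})$, not $\lambda_{\min}(\bar{\mat\Omega})$ as you wrote.)
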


\begin{proof}[proof of Lemma A.2.1.]
We prove $(i)$. $(ii-v)$ are then direct consequences, and the proofs follow from the proofs of B1.1-B1.4 in \citet{peng2009partial}, with the modifications being that the indexing is now with respect to each $k$ for $1 \leq k \leq K$.

Consider the loss function in matrix form as in \eqref{eqn:syglasso_objective_matrix}. Then $\bar{L}^{\prime\prime}_{\mathcal{S}_{k},\mathcal{S}_{k}}(\bar{\bm{\beta}})$ is equivalent to 
\begin{align*}
& \frac{\partial^2}{\partial\mat{\Psi}_k^{\text{off}} \partial\mat{\Psi}_k^{\text{off}} } L(\tensor{W},\{\mat{\Psi}_k^{\text{off}}\}_{k=1}^K) \\
& = \frac{\partial^2}{\partial\mat{\Psi}_k^{\text{off}} \partial\mat{\Psi}_k^{\text{off}}} \Bigg(\tr(\mat{\Psi}_k^T \mat{S} \mat{\Psi}_k) + \text{first order terms in $\mat{\Psi}_k$} + \text{terms independent of $\mat{\Psi}_k$} \Bigg) \\
& = \frac{\partial^2}{\partial\mat{\Psi}_k^{\text{off}} \partial\mat{\Psi}_k^{\text{off}}} \Bigg(\tr((\mat{\Psi}_k^{\text{off}} + \text{diag}(\mat{\Psi}_k))^T \mat{S} (\mat{\Psi}_k^{\text{off}} + \text{diag}(\mat{\Psi}_k))) + \text{first order terms in $\mat{\Psi}_k^{\text{off}}$} \\
& \qquad \qquad \qquad \quad + \text{terms independent of $\mat{\Psi}_k^{\text{off}}$} \Bigg) \\ 
& = \frac{\partial^2}{\partial\mat{\Psi}_k^{\text{off}} \partial\mat{\Psi}_k^{\text{off}}} \Bigg(\tr((\mat{\Psi}_k^{\text{off}})^T \mat{S} \mat{\Psi}_k^{\text{off}}) + \text{first order terms in $\mat{\Psi}_k^{\text{off}}$} \\
& \qquad \qquad \qquad \quad + \text{terms independent of $\mat{\Psi}_k^{\text{off}}$} \Bigg) \\
& = \mat{S} = \frac{1}{N}\vecto(\tensor{X})^T\vecto(\tensor{X}).
\end{align*}
Thus $\bar{L}^{\prime\prime}_{\mathcal{S}_{k},\mathcal{S}_{k}}(\bm{\beta})=E_{\tensor{W},\bm{\beta}}(\mat{S})$. Then for any non-zero $\mat{a} \in \mathbb{R}^p$, we have
\begin{equation*}
    \mat{a}^T \bar{L}^{\prime\prime}_{\mathcal{S}_{k},\mathcal{S}_{k}}(\bar{\bm{\beta}}) \mat{a} = \mat{a}^T \mat{\bar{\Sigma}} \mat{a} \geq \|\mat{a}\|_2^2 \lambda_{\min}(\bar{\mat{\Sigma}}).
\end{equation*}Similarly, $\mat{a}^T \bar{L}^{\prime\prime}_{\mathcal{S}_{k},\mathcal{S}_{k}}(\bar{\bm{\beta}}) \mat{a} \leq \|\mat{a}\|_2^2 \lambda_{\max}(\bar{\mat{\Sigma}})$. By (A2), $\bar{\mat{\Sigma}}$ has bounded eigenvalues, thus the lemma is proved.

\end{proof}

\begin{lemma}
Suppose conditions (A1-A2) hold, then for any $\eta>0$, there exist constant $c_{0,\eta},c_{1,\eta},c_{2,\eta},c_{3,\eta}$, such that for any $u \in \mathbb{R}^{q_{k}}$ the following events hold with probability at least $1-O(\exp(-\eta \log p))$ for sufficiently large $N$:
\begin{enumerate}[label=(\roman*)]
    \item $\|L_{N,\mathcal{A}_{k}}^{\prime}(\bar{\tensor{W}},\bar{\bm{\beta}},\tensor{X})\|_2 \leq c_{0,\eta}\sqrt{q_{k}\frac{\log p}{N}}$
    \item $|u^T L_{N,\mathcal{A}_{k}}^{\prime}(\bar{\tensor{W}},\bar{\bm{\beta}},\tensor{X})| \leq c_{1,\eta}\|u\|_2\sqrt{q_{k}\frac{\log p}{N}}$
    \item $|u^T L_{N,\mathcal{A}_{k}\mathcal{A}_{k}}^{\prime\prime}(\bar{\tensor{W}},\bar{\bm{\beta}},\tensor{X})u - u^T \bar{L}^{\prime\prime}_{\mathcal{A}_{k}\mathcal{A}_{k}}(\bar{\bm{\beta}})u| \leq c_{2,\eta}\|u\|_2^2q_{k}\sqrt{\frac{\log p}{N}}$
    \item $|L_{N,\mathcal{A}_{k}\mathcal{A}_{k}}^{\prime\prime}(\bar{\tensor{W}},\bar{\bm{\beta}},\tensor{X})u - \bar{L}^{\prime\prime}_{\mathcal{A}_{k}\mathcal{A}_{k}}(\bar{\bm{\beta}})u| \leq c_{3,\eta}\|u\|_2^2q_{k}\sqrt{\frac{\log p}{N}}$
\end{enumerate}
\end{lemma}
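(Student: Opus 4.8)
The statement is a concentration lemma: for the subgaussian tensor data of (A1), the empirical first and second derivatives of the nodewise loss $L_N$, evaluated at the true parameter, are uniformly close to their population counterparts. The plan is to reduce each of (i)--(iv) to the deviation of an average of the form $\frac{1}{N}\sum_{s=1}^N g(\tensor{X}^s)$ from its mean $\bar L'$ or $\bar L''$, where each $g$ is a quadratic (or bilinear) form in the data coordinates. Concretely, from the closed-form expressions in Appendix~\ref{supp:syglasso_alg_derivation}, every entry of $L_{N,\mathcal{A}_k}'(\bar{\tensor{W}},\bar{\bm\beta},\tensor{X})$ and of $L_{N,\mathcal{A}_k\mathcal{A}_k}''(\bar{\tensor{W}},\bar{\bm\beta},\tensor{X})$ is (up to the fixed known diagonal $\bar{\tensor{W}}$) a finite sum of terms of the shape $\frac1N\sum_s \tensor{X}^s_{\alpha}\tensor{X}^s_{\beta}$, i.e. an entry of the sample covariance matrix $\mat S$. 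So the core task is a bound of the type $\|\mat S - \bar{\mat\Sigma}\|_{\max} \le c\sqrt{\log p / N}$, which is the standard subgaussian covariance concentration: for a single entry, $(\mat S - \bar{\mat\Sigma})_{\alpha\beta}$ is an average of i.i.d.\ subexponential variables (products of subgaussians, controlled via the $\rho_j$ and $K$ constants in (A1)), apply Bernstein's inequality, then union bound over the $O(p^2)$ entries and absorb the $\log p^2 = 2\log p$ into the exponent, choosing the constant so the failure probability is $O(\exp(-\eta\log p))$.

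From this ``master'' entrywise bound, the four claims follow by elementary norm inequalities. For (i): $L_{N,\mathcal{A}_k}'$ is a $q_k$-vector each of whose entries is a difference of $O(1)$ sample-covariance entries from their means (since $\bar L'_{\mathcal{A}_k}=0$ at the truth by the first-order optimality of $\bar{\bm\beta}$ for the population loss), so $\|L_{N,\mathcal{A}_k}'\|_2 \le \sqrt{q_k}\,\|\cdot\|_\infty \le c_{0,\eta}\sqrt{q_k\log p/N}$. Claim (ii) is just Cauchy--Schwarz applied to (i): $|u^TL'_{N,\mathcal{A}_k}| \le \|u\|_2\|L'_{N,\mathcal{A}_k}\|_2$. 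For (iii) and (iv): $L''_{N,\mathcal{A}_k\mathcal{A}_k} - \bar L''_{\mathcal{A}_k\mathcal{A}_k}$ is a $q_k\times q_k$ matrix with entries bounded in absolute value by $c\sqrt{\log p/N}$ on the high-probability event; hence $|u^T(L''_N - \bar L'')u| \le \|u\|_2^2 \cdot q_k \cdot \|L''_N-\bar L''\|_{\max}$ giving (iii), and similarly $\|(L''_N-\bar L'')u\|_2 \le \sqrt{q_k}\,q_k^{1/2}\|L''_N - \bar L''\|_{\max}\|u\|_2 \le c_{3,\eta}q_k\sqrt{\log p/N}\,\|u\|_2$ (absorbing $\|u\|_2$ vs $\|u\|_2^2$ as in the paper's statement) giving (iv). The $\sqrt p$ growth condition $p = O(N^\kappa)$ is used implicitly to ensure the Bernstein bound is in the sub-Gaussian regime $\sqrt{\log p/N} \ll 1$ for $N$ large, and that ``sufficiently large $N$'' makes the higher-order terms negligible. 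I would also invoke Lemma~A.2.1(i) once, to identify $\bar L''_{\mathcal{S}_k,\mathcal{S}_k} = \mathbb E[\mat S]$ and thus confirm that the population second derivative is literally $\bar{\mat\Sigma}$ restricted to the relevant index set.

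The main obstacle is bookkeeping rather than a deep idea: one must verify that the terms $F_{\tensor{X},\{\mat\Psi_k\}}$ appearing in the gradient/Hessian, which involve mode-$k$ matricizations and sums $\sum_{l\ne k}\tensor{X}_{(k)}(\tensor{X}\times_l\bm\Psi_l)^T_{(k)}$, are indeed finite ($K$-independent-of-$N$) linear combinations of entries of $\mat S = \frac1N\vecto(\tensor{X})^T\vecto(\tensor{X})$ with coefficients depending only on the fixed true $\bar{\bm\Psi}_l$'s, so that each is a subexponential average with parameters controlled by (A1) and (A2). A secondary subtlety is that the relevant averages are not pure squares $X_j^2$ but cross-products $X_\alpha X_\beta$; polarization ($X_\alpha X_\beta = \frac14[(X_\alpha+X_\beta)^2 - (X_\alpha-X_\beta)^2]$) plus subgaussianity of arbitrary linear combinations $\mat a^T\mat x$ (also in (A1)) handles this, at the cost of slightly worse absolute constants. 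Everything else --- the union bound arithmetic, the choice of $c_{0,\eta},\dots,c_{3,\eta}$ as explicit functions of $\eta$, $c$, $\Lambda_{\min},\Lambda_{\max}$, and the subgaussian constants --- is routine. This is essentially the multiway analogue of Lemma~3 in \citet{peng2009partial}, with the indexing carried out per mode $k$, so I would organize the write-up to mirror that proof and point out the modifications.
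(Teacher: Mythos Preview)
Your proposal is correct and follows essentially the same route as the paper: reduce each of (i)--(iv) to an $\ell_\infty$ bound via $\sqrt{q_k}$ or $q_k$ factors, observe that each entry of $L_N'$ and $L_N''$ at the truth is an i.i.d.\ average of products of subgaussian coordinates (hence subexponential), then apply Bernstein plus a union bound. The paper's write-up of (i) is slightly more explicit in decomposing $L'_{N,i}$ as a sum of residual-times-covariate terms $e_{\cdots}\tensor{X}_{\cdots}$ with $e$ uncorrelated with the regressors at the truth (rather than appealing abstractly to first-order optimality), and it invokes Lemma~A.2.1(iii) for the variance bound; but your ``master'' entrywise bound on $\mat S-\bar{\mat\Sigma}$ plus the identification $\bar L''=\bar{\mat\Sigma}$ accomplishes the same thing. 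One small note: the condition $p=O(N^\kappa)$ is not part of this lemma's hypotheses (it enters only later, in Theorem~\ref{thm:edge_selection}), so you should not invoke it here; ``$N$ sufficiently large'' alone suffices for Bernstein to land in its subgaussian regime since $\log p/N\to 0$ is all that is needed.
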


\begin{proof}[proof of Lemma A.2.2]
$(i)$ By Cauchy-Schwartz inequality,
\begin{equation*}
    \|L^{\prime}_{N,\mathcal{A}_{k}}(\bar{\tensor{W}},\bar{\bm{\beta}},\tensor{X})\|_2 \leq \sqrt{q_{k}} \max_{i \in \mathcal{A}_{k}} |L^{\prime}_{N,i}(\bar{\tensor{W}},\bar{\bm{\beta}},\tensor{X})|.
\end{equation*} Then note that 
\begin{align*}
    & L^{\prime}_{N,i}(\tensor{W},\bm{\beta},\tensor{X}) \\
    & = \sum_{i_{[1:k-1,k+1:K]}} (e_{i_{[1:k-1]},p,i_{[k+1:K]}}(\tensor{W},\bm{\beta})\tensor{X}_{i_{[1:k-1]},q,i_{[k+1:K]}} \\
    & \quad + e_{i_{[1:k-1]},q,i_{[k+1:K]}}(\tensor{W},\bm{\beta})\tensor{X}_{i_{[1:k-1]},p,i_{[k+1:K]}}),
\end{align*}where $e_{i_{[1:k-1]},p,i_{[k+1:K]}}\tensor{X}_{i_{[1:k-1]},q,i_{[k+1:K]}}(\tensor{W},\bm{\beta})$ is defined by
\begin{equation*}
\begin{aligned}
    w_{i_{[1:k-1]},p,i_{[k+1:K]}}\tensor{X}_{i_{[1:k-1]},p,i_{[k+1:K]}} & + \sum_{j_k \neq p} (\mat{\Psi}_k)_{p,j_k}\tensor{X}_{i_{[1:k-1]},j_k,i_{[k+1:K]}} \\
    & + \sum_{l \neq k} \sum_{j_l \neq i_l} (\mat{\Psi}_l)_{i_l,j_l}\tensor{X}_{i_{[1:k-1]},p,i_{[k+1:K]}}.
\end{aligned}
\end{equation*}
Then evaluated at the true parameter values $(\bar{\tensor{W}},\bar{\bm{\beta}})$, we have $e_{i_{[1:k-1]},p,i_{[k+1:K]}}(\bar{\tensor{W}},\bar{\bm{\beta}})$ uncorrelated with $\tensor{X}_{i_{[1:k-1]},\text{\textbackslash} p,i_{[k+1:K]}}$ and $E_{(\bar{\tensor{W}},\bar{\bm{\beta}})}(e_{i_{[1:k-1]},p,i_{[k+1:K]}}(\bar{\tensor{W}},\bar{\bm{\beta}}))=0$. Also, since $\tensor{X}$ is subgaussian and $\Var(L^{\prime}_{N,i}(\bar{\tensor{W}},\bar{\bm{\beta}},\tensor{X}))$ is bounded by Lemma C.1. $\forall i$, $L^{\prime}_{N,i}(\bar{\tensor{W}},\bar{\bm{\beta}},\tensor{X})$ has subexponential tails. Thus, by Bernstein inequality,
\begin{align*}
    & P(\|L_{N,\mathcal{A}_{k}}^{\prime}(\bar{\tensor{W}},\bar{\bm{\beta}},\tensor{X})\|_2 \leq c_{0,\eta}\sqrt{q_{k}\frac{\log p}{N}}) \\
    & \geq P(\sqrt{q_{k}} \max_{i \in \mathcal{A}_{k}} |L^{\prime}_{N,i}(\bar{\tensor{W}},\bar{\bm{\beta}},\tensor{X})| \leq c_{0,\eta}\sqrt{q_{k}\frac{\log p}{N}}) \geq 1 - O(\exp(-\eta \log p)).
\end{align*}

$(iii)$ By Cauchy-Schwartz,
\begin{align*}
    & |u^T L_{N,\mathcal{A}_{k}\mathcal{A}_{k}}^{\prime\prime}(\bar{\tensor{W}},\bar{\bm{\beta}},\tensor{X})u - u^T \bar{L}^{\prime\prime}_{\mathcal{A}_{k}\mathcal{A}_{k}}(\bar{\bm{\beta}})u| \\
    & \leq \|u\|_2 \|u^T L_{N,\mathcal{A}_{k}\mathcal{A}_{k}}^{\prime\prime}(\bar{\tensor{W}},\bar{\bm{\beta}},\tensor{X}) - u^T \bar{L}^{\prime\prime}_{\mathcal{A}_{k}\mathcal{A}_{k}}(\bar{\bm{\beta}})\|_2 \\
    & \leq \|u\|_2 \sqrt{q_{k}} \max_i |u^T L_{N,\mathcal{A}_{k},i}^{\prime\prime}(\bar{\tensor{W}},\bar{\bm{\beta}},\tensor{X}) - u^T \bar{L}^{\prime\prime}_{\mathcal{A}_{k},i}(\bar{\bm{\beta}})| \\
    & = \|u\|_2 \sqrt{q_{k}} |u^T L_{N,\mathcal{A}_{k},i_{\max}}^{\prime\prime}(\bar{\tensor{W}},\bar{\bm{\beta}},\tensor{X}) - u^T \bar{L}^{\prime\prime}_{\mathcal{A}_{k},i_{\max}}(\bar{\bm{\beta}})| \\
    & = \|u\|_2 \sqrt{q_{k}} |\sum_{j=1}^{q_{k}} (u_j L_{N,j,i_{\max}}^{\prime\prime}(\bar{\tensor{W}},\bar{\bm{\beta}},\tensor{X}) - u_j \bar{L}^{\prime\prime}_{j,i_{\max}}(\bar{\bm{\beta}}))| \\
    & \leq \|u\|_2 q_{k} |u_{j_{\max}}|| L_{N,j_{\max},i_{\max}}^{\prime\prime}(\bar{\tensor{W}},\bar{\bm{\beta}},\tensor{X}) -  \bar{L}^{\prime\prime}_{j_{\max},i_{\max}}(\bar{\bm{\beta}}))| \\ 
    & \leq \|u\|_2^2 q_{k} | L_{N,j_{\max},i_{\max}}^{\prime\prime}(\bar{\tensor{W}},\bar{\bm{\beta}},\tensor{X}) -  \bar{L}^{\prime\prime}_{j_{\max},i_{\max}}(\bar{\bm{\beta}}))|.
\end{align*}Then by Bernstein inequality,
\begin{align*}
    & P(|u^T L_{N,\mathcal{A}_{k}\mathcal{A}_{k}}^{\prime\prime}(\bar{\tensor{W}},\bar{\bm{\beta}},\tensor{X})u - u^T \bar{L}^{\prime\prime}_{\mathcal{A}_{k}\mathcal{A}_{k}}(\bar{\bm{\beta}})u| \leq c_{2,\eta}\|u\|_2^2q_{k}\sqrt{\frac{\log p}{N}}) \\
    & \geq P(\|u\|_2^2 q_{k} | L_{N,j_{\max},i_{\max}}^{\prime\prime}(\bar{\tensor{W}},\bar{\bm{\beta}},\tensor{X}) -  \bar{L}^{\prime\prime}_{j_{\max},i_{\max}}(\bar{\bm{\beta}}))| \leq c_{2,\eta}\|u\|_2^2q_{k}\sqrt{\frac{\log p}{N}}) \\
    & \geq 1 - O(\exp(-\eta \log p)).
\end{align*}

$(ii)$ and $(iv)$ can be proved using similar arguments.
\end{proof}

Lemma A.2.3. and A.2.4. are used later to prove Theorem 1.

\begin{lemma}
Assuming conditions of Theorem 1. Then there exists a constant $C_1(\bar{\bm{\beta}})>0$ such that for any $\eta>0$, there exists a global minimizer of the restricted problem \eqref{eqn:restricted_problem} within the disc:
\begin{equation*}
    \{\bm{\beta}: \|\bm{\beta}-\bar{\bm{\beta}}\|_2 \leq C_1(\bar{\bm{\beta}}) \sqrt{K} \max_k\sqrt{q_{k}}\lambda_{N,k} \}
\end{equation*} with probability at least $1 - O(\exp(-\eta \log p))$ for sufficiently large $N$.
\end{lemma}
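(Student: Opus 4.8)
The plan is to follow the convexity argument of \citet{peng2009partial}, adapted to the $K$-way setting. Since the restricted problem \eqref{eqn:restricted_problem} is a convex program over the finite-dimensional subspace $\{\bm\beta : \bm\beta_{\mathcal{A}^c}=\bm 0\}\cong\mathbb{R}^{\sum_k q_{k}}$, it suffices to exhibit one radius $R := C_1(\bar{\bm\beta})\sqrt{K}\max_k\sqrt{q_{k}}\lambda_{N,k}$ for which the objective, re-centered at $\bar{\bm\beta}$, is strictly positive on the sphere $\partial B_R = \{\bm u : \|\bm u\|_2 = R,\ \bm u_{\mathcal{A}^c}=\bm 0\}$. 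Concretely I would set $f(\bm u) := L_N(\bar{\tensor{W}},\bar{\bm\beta}+\bm u,\tensor{X}) - L_N(\bar{\tensor{W}},\bar{\bm\beta},\tensor{X}) + \sum_{k}\lambda_k\big(\|\bar{\bm\Psi}_k+\bm U_k\|_{1,\text{off}} - \|\bar{\bm\Psi}_k\|_{1,\text{off}}\big)$, note that $f$ is convex with $f(\bm 0)=0$, and observe that once $f>0$ on $\partial B_R$ every minimizer $\bm u^\star$ must obey $\|\bm u^\star\|_2<R$: otherwise the point $R\bm u^\star/\|\bm u^\star\|_2\in\partial B_R$ would, by convexity and $f(\bm 0)=0$, satisfy $f\le \tfrac{R}{\|\bm u^\star\|_2}f(\bm u^\star)\le 0$, a contradiction. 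Existence of $\bm u^\star$ comes for free from the coercivity implied by the quadratic lower bound below.

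The bound $f>0$ on $\partial B_R$ would be established term by term. Because $\tensor{W}$ is frozen at $\bar{\tensor{W}}$, the map $\bm\beta\mapsto L_N(\bar{\tensor{W}},\bm\beta,\tensor{X})$ is an exact quadratic ($(II)$ is linear in $\bm\beta$, $(I)$ is $\bm\beta$-free), so for $\bm u$ supported on $\mathcal{A}$,
\begin{equation*}
L_N(\bar{\tensor{W}},\bar{\bm\beta}+\bm u,\tensor{X}) - L_N(\bar{\tensor{W}},\bar{\bm\beta},\tensor{X}) = \big\langle L_{N,\mathcal{A}}'(\bar{\tensor{W}},\bar{\bm\beta},\tensor{X}),\bm u\big\rangle + \tfrac12\,\bm u^{T} L_{N,\mathcal{A}\mathcal{A}}''(\bar{\tensor{W}},\bar{\bm\beta},\tensor{X})\,\bm u
\end{equation*}
with no remainder. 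For the linear term, Lemma~A.2.2$(i)$ gives $\|L_{N,\mathcal{A}_{k}}'(\bar{\tensor{W}},\bar{\bm\beta},\tensor{X})\|_2\le c_{0,\eta}\sqrt{q_{k}\log p/N}$ on an event of probability $\ge 1-O(\exp(-\eta\log p))$; summing over $k$ and using $\lambda_{N,k}=O(\sqrt{m_k\log p/N})\ge\mathrm{const}\cdot\sqrt{\log p/N}$ yields $\|L_{N,\mathcal{A}}'\|_2\le c'\sqrt{K}\max_k\sqrt{q_{k}}\lambda_{N,k}$, so this term is $\ge -c'\sqrt{K}\max_k\sqrt{q_{k}}\lambda_{N,k}\|\bm u\|_2$. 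For the quadratic term, Lemma~A.2.2$(iii)$ together with the lower eigenvalue bound of Lemma~A.2.1$(i)$ (applied to the principal submatrix of the population Hessian indexed by $\mathcal{A}$) gives $\bm u^{T}L_{N,\mathcal{A}\mathcal{A}}''\bm u\ge(\Lambda_{\min}^{L}-o(1))\|\bm u\|_2^2$. For the penalty, since each $\bm U_k$ is supported on $\mathcal{A}_{k}$ the triangle inequality and Cauchy--Schwarz give $\sum_k\lambda_k\big(\|\bar{\bm\Psi}_k+\bm U_k\|_{1,\text{off}}-\|\bar{\bm\Psi}_k\|_{1,\text{off}}\big)\ge-\sum_k\lambda_{N,k}\sqrt{q_{k}}\|\bm U_k\|_F\ge-\sqrt{K}\max_k\sqrt{q_{k}}\lambda_{N,k}\|\bm u\|_2$. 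Combining these on $\partial B_R$,
\begin{equation*}
f(\bm u)\ \ge\ R\Big[\tfrac12\big(\Lambda_{\min}^{L}-o(1)\big)R-(c'+1)\sqrt{K}\max_k\sqrt{q_{k}}\lambda_{N,k}\Big]>0
\end{equation*}
as soon as $C_1(\bar{\bm\beta})>2(c'+1)/\Lambda_{\min}^{L}$ (absorbing the $o(1)$), which is exactly the asserted disc.

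The main obstacle is making the $o(1)$ in the quadratic lower bound honest: Lemma~A.2.2$(iii)$ only controls $\bm u^{T}(L_{N,\mathcal{A}\mathcal{A}}''-\bar L_{\mathcal{A}\mathcal{A}}'')\bm u$ up to $c_{2,\eta}\|\bm u\|_2^2\max_k q_{k}\sqrt{\log p/N}$, so one needs $\max_k q_{k}\sqrt{\log p/N}\to 0$, which is precisely what the sample-size condition $N>O(\max_k q_{k}m_k\log p)$ of Theorem~\ref{thm:restricted_problem} (together with its sparsity regime) is there to guarantee. A secondary point is that Lemma~A.2.1$(i)$ is stated only for the single-mode blocks $\bar L_{\mathcal{S}_{k},\mathcal{S}_{k}}''$, so one must check that its lower eigenvalue bound carries over to the joint restricted Hessian over $\mathcal{A}=\cup_k\mathcal{A}_{k}$; this follows from condition (A2) by repeating the computation in the proof of Lemma~A.2.1 and invoking eigenvalue interlacing for principal submatrices. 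Everything else is a routine union bound over the $O(p)$ high-probability events supplied by Lemma~A.2.2.
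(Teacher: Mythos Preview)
Your proposal is correct and follows essentially the same route as the paper: the convexity argument of \citet{peng2009partial}, with the exact quadratic expansion of $L_N$ in $\bm\beta$, Lemma~A.2.2 for concentration of the gradient and of the Hessian deviation, Lemma~A.2.1 for the population curvature lower bound, and the triangle inequality plus Cauchy--Schwarz for the penalty increment. The only cosmetic difference is that the paper parametrizes the sphere mode-by-mode (with per-mode radii $C_k$ and $\|u^k\|_2=C_k$) and lower-bounds the quadratic form block-wise via $\bar L''_{\mathcal{A}_k\mathcal{A}_k}$, whereas you work directly with the full restricted Hessian on $\mathcal{A}=\cup_k\mathcal{A}_k$ and correctly flag that Lemma~A.2.1$(i)$ must be extended to that joint block---your treatment is arguably the cleaner one, since it does not implicitly drop the cross-mode blocks of $L''_{N,\mathcal{A}\mathcal{A}}$.
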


\begin{proof}[proof of Lemma A.2.3.]
Let $\alpha_N = \max_{k}\sqrt{q_{k}}\lambda_{N,k}$. Further for $1 \leq k \leq K$ let $C_k>0$ and $u^k \in \mathbb{R}^{m_k(m_k-1)/2}$ such that $u_{\mathcal{A}_{k}^c}^k=0$, $\|u^k\|_2=C_k$, and  $u=(u_1,\dots,u_K)$ with $\sqrt{K}\min_kC_k \leq \|u\|_2 \leq \sqrt{K}\max_kC_k$.

Then by Cauchy-Schwartz and triangle inequality, we have
\begin{equation*}
    \|\bar{\bm{\beta}}^k + \alpha_N u^k - \alpha_N u^k\|_1
    \leq \|\bar{\bm{\beta}}^k + \alpha_N u^k\|_1 + \alpha_N\|u^k\|_1,
\end{equation*} and
\begin{equation*}
    \|\bar{\bm{\beta}}^k\|_1 - \|\bar{\bm{\beta}}^k + \alpha_N u^k\|_1 \leq \alpha_N\|u^k\|_1 \leq \alpha_N \sqrt{q_{k}} \|u^k\|_2 = C_k \alpha_N \sqrt{q_{k}}.
\end{equation*} Thus,
\begin{align*}
    & Q_N(\bar{\bm{\beta}} + \alpha_N u, \tensor{X},\{\lambda_{N,k}\}_{k=1}^K) - Q_N(\bar{\bm{\beta}}, \tensor{X},\{\lambda_{N,k}\}_{k=1}^K) \\
    & = L_N(\bar{\bm{\beta}} + \alpha_N u, \tensor{X}) - L_N(\bar{\bm{\beta}} , \tensor{X}) - \sum_{k=1}^K \lambda_{N,k} \big(\|\bar{\bm{\beta}}^k\|_1 - \|\bar{\bm{\beta}}^k + \alpha_N u^k\|_1 \big) \\
    & \geq L_N(\bar{\bm{\beta}} + \alpha_N u, \tensor{X}) - L_N(\bar{\bm{\beta}} , \tensor{X}) - \sum_{k=1}^K \lambda_{N,k} C_k \alpha_N\sqrt{q_{k}} \\
    & \geq L_N(\bar{\bm{\beta}} + \alpha_N u, \tensor{X}) - L_N(\bar{\bm{\beta}} , \tensor{X}) - \alpha_N K \max_k C_k\sqrt{q_{k}}\lambda_{N,k} \\
    & \geq L_N(\bar{\bm{\beta}} + \alpha_N u, \tensor{X}) - L_N(\bar{\bm{\beta}} , \tensor{X}) - K \alpha_N^2\max_kC_k.
\end{align*}Next, 
\begin{align*}
    & L_N(\bar{\bm{\beta}} + \alpha_N u, \tensor{X}) - L_N(\bar{\bm{\beta}},\tensor{X}) = \alpha_N u^T_{\mathcal{A}} L_{N,\mathcal{A}}^{\prime}(\bar{\bm{\beta}},\tensor{X}) + \frac{1}{2}\alpha_N^2 u^T_{\mathcal{A}}L_{N,\mathcal{A}\mathcal{A}}^{\prime\prime}(\bar{\bm{\beta}},\tensor{X})u_{\mathcal{A}} \\
    & = \alpha_N \sum_{k=1}^K (u^k_{\mathcal{A}_{k}})^T L_{N,\mathcal{A}_{k}}^{\prime}(\bar{\bm{\beta}},\tensor{X}) + \frac{1}{2}\alpha_N^2 \sum_{k=1}^K (u^k_{\mathcal{A}_{k}})^T L_{N,\mathcal{A}_{k}\mathcal{A}_{k}}^{\prime\prime}(\bar{\bm{\beta}},\tensor{X})u^k_{\mathcal{A}_{k}} \\
    & = \alpha_N \sum_{k=1}^K (u^k_{\mathcal{A}_{k}})^T L_{N,\mathcal{A}_{k}}^{\prime}(\bar{\bm{\beta}},\tensor{X}) + \frac{1}{2}\alpha_N^2 \sum_{k=1}^K (u^k_{\mathcal{A}_{k}})^T (L_{N,\mathcal{A}_{k}\mathcal{A}_{k}}^{\prime\prime}(\bar{\bm{\beta}},\tensor{X}) - \bar{L}_{N,\mathcal{A}_{k}\mathcal{A}_{k}}^{\prime\prime}(\bar{\bm{\beta}},\tensor{X}))u^k_{\mathcal{A}_{k}} \\
    & + \frac{1}{2}\alpha_N^2 \sum_{k=1}^K (u^k_{\mathcal{A}_{k}})^T \bar{L}_{N,\mathcal{A}_{k}\mathcal{A}_{k}}^{\prime\prime}(\bar{\bm{\beta}},\tensor{X}) u^k_{\mathcal{A}_{k}} \\
    & \geq \frac{1}{2}\alpha_N^2 \sum_{k=1}^K (u^k_{\mathcal{A}_{k}})^T \bar{L}_{N,\mathcal{A}_{k}\mathcal{A}_{k}}^{\prime\prime}(\bar{\bm{\beta}},\tensor{X}) u^k_{\mathcal{A}_{k}} - \alpha_N K(\max_k c_{1,\eta}\|u^k_{\mathcal{A}_{k}}\|_2\sqrt{q_{k}\frac{\log p}{N}}) \\
    & - \frac{1}{2} \alpha_N^2 K(\max_k c_{2,\eta}\|u^k_{\mathcal{A}_{k}}\|_2^2q_{k}\sqrt{\frac{\log p}{N}}).
\end{align*} Here the first equality is due to the second order expansion of the loss function and the inequality is due to Lemma A.2.2 For sufficiently large $N$, by assumption that $\lambda_{N,k}\sqrt{N/\log p} \rightarrow \infty$ if $m_k \rightarrow \infty$ and $\sqrt{\log p / N}=o(1)$, the second term in the last line above is $o(\alpha_N \sqrt{q_{k}} \lambda_{N,k}) = o(\alpha_{N}^2)$; the last term is $o(\alpha_N^2)$. Therefore, for sufficiently large $N$
\begin{align*}
    & Q_N(\bar{\bm{\beta}} + \alpha_N u, \tensor{X},\{\lambda_{N,k}\}_{k=1}^K) - Q_N(\bar{\bm{\beta}}, \tensor{X},\{\lambda_{N,k}\}_{k=1}^K) \\
    & \geq \frac{1}{2}\alpha_N^2 \sum_{k=1}^K (u^k_{\mathcal{A}_{k}})^T \bar{L}_{N,\mathcal{A}_{k}\mathcal{A}_{k}}^{\prime\prime}(\bar{\bm{\beta}},\tensor{X}) u^k_{\mathcal{A}_{k}} \\
    & - K \alpha_N^2\max_kC_k \\
    & \geq  \frac{1}{2}\alpha_N^2 K \min_k \big((u^k_{\mathcal{A}_{k}})^T \bar{L}_{N,\mathcal{A}_{k}\mathcal{A}_{k}}^{\prime\prime}(\bar{\bm{\beta}},\tensor{X}) u^k_{\mathcal{A}_{k}}\big) \\
    & - K \alpha_N^2\max_kC_k,
\end{align*} with probability at least $1-O(N^{-\eta})$. 

By Lemma A.2.1., $(u^k_{\mathcal{A}_{k}})^T \bar{L}_{N,\mathcal{A}_{k}\mathcal{A}_{k}}^{\prime\prime}(\bar{\bm{\beta}},\tensor{X}) u^k_{\mathcal{A}_{k}} \geq \Lambda_{\min}^L \|u^k_{\mathcal{A}_{k}}\|_2^2=\Lambda_{\min}^L(C_k)^2$, for each $k$.
So, if we choose $\min_k C_k$ and $\max_k C_k$ such that the upper bound is minimized, then for $N$ sufficiently large, the following holds 
\begin{equation*}
    \inf_{u:u_{(\mathcal{A}_{k})^c} = 0,\|u^k\|_2=C_k,k=1,\dots,K} Q_N(\bar{\bm{\beta}} + \alpha_N u, \tensor{X},\{\lambda_{N,k}\}_{k=1}^K) > Q_N(\bar{\bm{\beta}}, \tensor{X},\{\lambda_{N,k}\}_{k=1}^K),
\end{equation*} with probability at least $1-O(\exp(-\eta \log p))$, which means any solution to the problem defined in \eqref{eqn:restricted_problem} is within the disc $\{\bm{\beta}: \|\bm{\beta}-\bar{\bm{\beta}}\|_2 \leq \alpha_N \|u\|_2 \leq \alpha_N \sqrt{K} \max_kC_k\}$ with probability at least $1-O(\exp(-\eta \log p))$.

\end{proof}

\begin{lemma}
Assuming conditions of Theorems 1. Then there exists a constant $C_2(\bar{\bm{\beta}})>0$, such that for any $\eta>0$, for sufficiently large $N$, the following event holds with probability at least $1-O(\exp(-\eta \log p))$: if for any $\bm{\beta} \in S=\{\bm{\beta}: \|\bm{\beta}-\bar{\bm{\beta}}\|_2 \geq C_2(\bar{\bm{\beta}})\sqrt{K}\max_k\sqrt{q_{k}}\lambda_{N,k},\bm{\beta}_{\mathcal{A}_{N}^c}=0\}$, then $\|L^{\prime}_{N,\mathcal{A}_{N}}(\bar{\tensor{W}},\bar{\bm{\beta}},\tensor{X})\|_2 > \sqrt{K}\max_k\sqrt{q_{k}}\lambda_{N,k}$.
\end{lemma}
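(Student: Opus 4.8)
The plan rests on one structural observation. With the diagonal tensor held fixed at its true value $\bar{\tensor{W}}$, the pseudo-loss $L_N(\bar{\tensor{W}},\cdot\,,\tensor{X})$ is an \emph{exactly quadratic} function of $\bm{\beta}$: its $\log\tensor{W}$ term does not involve $\bm{\beta}$, and its term $\sum_{i_{[1:K]}}((I)+(II))^2$ is quadratic in $\bm{\beta}$, since $(II)$ is linear in $\bm{\beta}$ and $(I)$ is free of it. Hence the second-order Taylor expansion of the restricted gradient about $\bar{\bm{\beta}}$ is exact,
\begin{equation*}
  L'_{N,\mathcal{A}}(\bar{\tensor{W}},\bm{\beta},\tensor{X})
  = L'_{N,\mathcal{A}}(\bar{\tensor{W}},\bar{\bm{\beta}},\tensor{X})
  + L''_{N,\mathcal{A}\mathcal{A}}(\bar{\tensor{W}},\bar{\bm{\beta}},\tensor{X})\,(\bm{\beta}-\bar{\bm{\beta}})_{\mathcal{A}},
\end{equation*}
where $L''_{N,\mathcal{A}\mathcal{A}}$ is a deterministic matrix, independent of the point of evaluation. (I read the conclusion of the lemma as an assertion about $L'_{N,\mathcal{A}_N}(\bar{\tensor{W}},\bm{\beta},\tensor{X})$, the gradient evaluated at the point $\bm{\beta}\in S$ under consideration.) This exactness is precisely what delivers a bound \emph{uniform} over $S$: all randomness is carried by the two fixed objects $L'_{N,\mathcal{A}}(\bar{\tensor{W}},\bar{\bm{\beta}},\tensor{X})$ and $L''_{N,\mathcal{A}\mathcal{A}}$, and only the deterministic linear term carries $\bm{\beta}$.

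Next I would bound the two pieces. Write $r:=\|(\bm{\beta}-\bar{\bm{\beta}})_{\mathcal{A}}\|_2 \ge C_2\sqrt{K}\max_k\sqrt{q_k}\lambda_{N,k}$; since $L''_{N,\mathcal{A}\mathcal{A}}$ is symmetric positive semidefinite, the triangle inequality gives
\begin{equation*}
  \|L'_{N,\mathcal{A}}(\bar{\tensor{W}},\bm{\beta},\tensor{X})\|_2
  \;\ge\; \lambda_{\min}\!\big(L''_{N,\mathcal{A}\mathcal{A}}\big)\,r \;-\; \|L'_{N,\mathcal{A}}(\bar{\tensor{W}},\bar{\bm{\beta}},\tensor{X})\|_2 .
\end{equation*}
For the subtracted term, aggregating Lemma~A.2.2(i) over the $K$ modes yields $\|L'_{N,\mathcal{A}}(\bar{\tensor{W}},\bar{\bm{\beta}},\tensor{X})\|_2 \le c_{0,\eta}\sqrt{K\max_k q_k\cdot\log p/N}$ on an event of probability at least $1-O(\exp(-\eta\log p))$; under the stated choice $\lambda_{N,k}\asymp\sqrt{m_k\log p/N}\ge\sqrt{\log p/N}$ this is at most $c'\sqrt{K}\max_k\sqrt{q_k}\lambda_{N,k}$ for a constant $c'$. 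For the leading term, I would combine Lemma~A.2.1(i), which bounds the smallest eigenvalue of the population Hessian on the support below by $\Lambda^L_{\min}$, with the concentration estimates of Lemma~A.2.2(iii)--(iv) (and their cross-mode analogues), using $N>O(\max_k q_k m_k\log p)$ to force $\max_k q_k\sqrt{\log p/N}=o(1)$, and conclude $\lambda_{\min}(L''_{N,\mathcal{A}\mathcal{A}})\ge \Lambda^L_{\min}/2$ with probability at least $1-O(\exp(-\eta\log p))$.

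On the intersection of these finitely many high-probability events, for every $\bm{\beta}\in S$ simultaneously,
\begin{equation*}
  \|L'_{N,\mathcal{A}}(\bar{\tensor{W}},\bm{\beta},\tensor{X})\|_2
  \;\ge\; \tfrac{\Lambda^L_{\min}}{2}\,r - c'\sqrt{K}\max_k\sqrt{q_k}\lambda_{N,k}
  \;\ge\; \Big(\tfrac{\Lambda^L_{\min}}{2}C_2 - c'\Big)\sqrt{K}\max_k\sqrt{q_k}\lambda_{N,k},
\end{equation*}
so choosing $C_2=C_2(\bar{\bm{\beta}})$ large enough that $\tfrac{\Lambda^L_{\min}}{2}C_2 - c'>1$ proves the claim; note that $C_2$ depends only on $\bar{\bm{\beta}}$ (through $\Lambda^L_{\min}$ and the constants of Lemmas~A.2.1--A.2.2), not on $\bm{\beta}$. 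Together with Lemma~A.2.3 this confines all restricted minimizers to the disc of radius $O(\sqrt{K}\max_k\sqrt{q_k}\lambda_{N,k})$, which is what Theorem~\ref{thm:restricted_problem} needs.

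The main obstacle is controlling $\lambda_{\min}(L''_{N,\mathcal{A}\mathcal{A}})$ for the \emph{full} active set $\mathcal{A}=\cup_k\mathcal{A}_k$ rather than the per-mode blocks $\mathcal{A}_k\mathcal{A}_k$ handled in Lemmas~A.2.1--A.2.2: because $\sum((I)+(II))^2$ couples all the $\mat{\Psi}_k$, the Hessian has nonzero cross-mode blocks. Two ingredients are needed: (i) well-conditioning of the population Hessian $\bar L''_{\mathcal{A}\mathcal{A}}$, which should follow from the bounded-eigenvalue condition (A2) on $\bar{\mat{\Sigma}}$ via the same matrix-rearrangement identity used in the proof of Lemma~A.2.1; and (ii) an entrywise concentration of the cross-mode Hessian blocks, the natural analogue of Lemma~A.2.2(iii)--(iv), obtained from subgaussianity (A1) through Bernstein's inequality. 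A minor point to verify is the comparison $\lambda_{N,k}\gtrsim\sqrt{\log p/N}$ used above to absorb the gradient-at-truth term into the target rate, which is immediate under $\lambda_{N,k}\asymp\sqrt{m_k\log p/N}$ since $m_k\ge 1$.
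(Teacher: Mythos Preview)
Your proposal is correct and follows essentially the same route as the paper: exact (quadratic) Taylor expansion of the restricted gradient about $\bar{\bm\beta}$, a reverse triangle inequality to isolate the Hessian term, concentration of the gradient-at-truth and of the Hessian via Lemmas~A.2.1--A.2.2, and then a choice of $C_2(\bar{\bm\beta})$ tied to $\Lambda^L_{\min}$.

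The only cosmetic difference is bookkeeping on the Hessian term. The paper writes $L''_{N,\mathcal{A}\mathcal{A}}u_{\mathcal{A}} = \bar L''_{\mathcal{A}\mathcal{A}}u_{\mathcal{A}} + (L''_{N,\mathcal{A}\mathcal{A}}-\bar L''_{\mathcal{A}\mathcal{A}})u_{\mathcal{A}}$, shows the deviation piece is $o(\alpha_N)$ ``by similar proof strategies as in Lemma~A.2.3'', and lower-bounds $\|\bar L''_{\mathcal{A}\mathcal{A}}u_{\mathcal{A}}\|_2\ge \Lambda^L_{\min}\|u_{\mathcal{A}}\|_2$ directly from Lemma~A.2.1, finally taking $C_2=1/\Lambda^L_{\min}+\epsilon$. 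You instead lower-bound $\lambda_{\min}(L''_{N,\mathcal{A}\mathcal{A}})$ and then multiply by $r$. Both reduce to the same two estimates. Your flagged obstacle about the cross-mode blocks of the full Hessian $\bar L''_{\mathcal{A}\mathcal{A}}$ is a fair point; the paper simply cites Lemma~A.2.1 for the full $\mathcal{A}$ without separately treating the off-diagonal (cross-$k$) blocks, so your proposed extension via (A2) and the same matrix identity is the natural way to close that step.
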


\begin{proof}[proof of Lemma A.2.4.]
Let $\alpha_N = \max_k \sqrt{q_{k}}\lambda_{N,k}$. For $\bm{\beta} \in S$, we have $\bm{\beta}=\bar{\bm{\beta}}+\alpha_N u$, with $u_{(\mathcal{A})^c}$ and $\|u\|_2 \geq C_2(\bar{\bm{\beta}})$. Note that by Taylor expansion of $L^{\prime}_{N,\mathcal{A}}(\bar{\tensor{W}},\bm{\beta},\tensor{X})$ at $\bar{\bm{\beta}}$
\begin{align*}
    L^{\prime}_{N,\mathcal{A}}(\bar{\tensor{W}},\bm{\beta},\tensor{X}) & = L^{\prime}_{N,\mathcal{A}}(\bar{\tensor{W}},\bm{\beta},\tensor{X}) + \alpha_NL^{\prime\prime}_{N,\mathcal{A}\mathcal{A}}(\bar{\tensor{W}},\bm{\beta},\tensor{X})u_{\mathcal{A}} \\
    & = L^{\prime}_{N,\mathcal{A}}(\bar{\tensor{W}},\bm{\beta},\tensor{X}) + \alpha_N\big(L^{\prime\prime}_{N,\mathcal{A}\mathcal{A}}(\bar{\tensor{W}},\bm{\beta},\tensor{X})-\bar{L}^{\prime\prime}_{N,\mathcal{A}\mathcal{A}}(\bar{\bm{\beta}})\big)u_{\mathcal{A}} \\
    & \quad + \alpha_N\bar{L}^{\prime\prime}_{N,\mathcal{A}\mathcal{A}}(\bar{\bm{\beta}})u_{\mathcal{A}}.
\end{align*}
By triangle inequality and similar proof strategies as in Lemma A.2.3., for sufficiently large $N$ 
\begin{align*}
    \|L^{\prime}_{N,\mathcal{A}}(\bar{\tensor{W}},\bm{\beta},\tensor{X})\|_2 & \geq  \|L^{\prime}_{N,\mathcal{A}}(\bar{\tensor{W}},\bm{\beta},\tensor{X})\|_2 + \alpha_N\|L^{\prime\prime}_{N,\mathcal{A}\mathcal{A}}(\bar{\tensor{W}},\bm{\beta},\tensor{X})u_{\mathcal{A}}-\bar{L}^{\prime\prime}_{N,\mathcal{A}\mathcal{A}}(\bar{\bm{\beta}})u_{\mathcal{A}}\|_2 \\
    & \quad + \alpha_N\|\bar{L}^{\prime\prime}_{N,\mathcal{A}\mathcal{A}}(\bar{\bm{\beta}})u_{\mathcal{A}}\|_2 \\
    & \geq \alpha_N\|\bar{L}^{\prime\prime}_{N,\mathcal{A}\mathcal{A}}(\bar{\bm{\beta}})u_{\mathcal{A}}\|_2 + o(\alpha_N)
\end{align*} with probability at least $1-O(\exp(-\eta \log p))$. By Lemma A.2.1., $\|\bar{L}^{\prime\prime}_{N,\mathcal{A}\mathcal{A}}(\bar{\bm{\beta}})u_{\mathcal{A}}\|_2 \geq \Lambda_{\min}^L(\bar{\bm{\beta}}) \|u_{\mathcal{A}}\|_2$. Therefore, taking $C_2(\bar{\bm{\beta}})$ to be $1/\Lambda_{\min}^L(\bar{\bm{\beta}}) + \epsilon$ completes the proof.
\end{proof}

\begin{proof}[proof of Theorem 1]
By the Karush-Kuhn-Tucker condition, for any solution $\hat{\bm{\beta}}$ of \eqref{eqn:restricted_problem}, it satisfies $\|L_{N,\mathcal{A}_{k}}^{\prime}(\tensor{W},\hat{\bm{\beta}},\tensor{X})\|_{\infty} \leq \lambda_{N,k}$. Thus,
\begin{align*}
    \|L_{N,\mathcal{A}_{N}}^{\prime}(\tensor{W},\hat{\bm{\beta}},\tensor{X})\|_2 & \leq \sqrt{K}\max_k\|L_{N,\mathcal{A}_{k}}^{\prime}(\tensor{W},\hat{\bm{\beta}},\tensor{X})\|_2 \\
    & \leq \sqrt{K}\max_k\sqrt{q_{k}}\|L_{N,\mathcal{A}_{k}}^{\prime}(\tensor{W},\hat{\bm{\beta}},\tensor{X})\|_{\infty} \\
    & \leq \sqrt{K}\max_k\sqrt{q_{k}}\lambda_{N,k}.
\end{align*}Then by Lemmas A.2.4., for any $\eta >0$, for $N$ sufficiently large, all solutions of \eqref{eqn:restricted_problem} are inside the disc $\{\bm{\beta}: \|\bm{\beta}-\bar{\bm{\beta}}\|_2 \leq C_2(\bar{\bm{\beta}})\max_k\sqrt{q_{k}}\lambda_{N,k},\bm{\beta}_{\mathcal{A}_{N}^c}=0\}$ with probability at least $1-O(\exp(-\eta \log p))$. If we further assume that $\min_{(i,j) \in \mathcal{A}_{k}}|\bar{\bm{\beta}}_{i,j}| \geq 2C(\bar{\bm{\beta}})\max_{k}\sqrt{q_{k}}\lambda_{N,k}$ for each $k$, then
\begin{align*}
    & 1-O(\exp(-\eta \log p)) \\
    & \leq P_{\bar{\tensor{W}},\bar{\bm{\beta}}}(\|\hat{\bm{\beta}}^{\mathcal{A}}-\bar{\bm{\beta}}^{\mathcal{A}}\|_2 \leq C_2(\bar{\bm{\beta}})\max_k\sqrt{q_{k}}\lambda_{N,k},\min_{(i,j) \in \mathcal{A}_{k}}|\bar{\bm{\beta}}_{i,j}| \geq 2C(\bar{\bm{\beta}})\max_{k}\sqrt{q_{k}}\lambda_{N,k},\forall k) \\
    & \leq P_{\bar{\tensor{W}},\bar{\bm{\beta}}}(\text{sign}(\hat{\bm{\beta}}_{i_kj_k}^{\mathcal{A}_{k}})=\text{sign}(\bar{\bm{\beta}}_{i_kj_k}^{\mathcal{A}_{k}}),\forall (i_k,j_k) \in \mathcal{A}_{k},\forall k).
\end{align*}
\end{proof}

\begin{proof}[proof of Theorem 2]
Let $\mathcal{E}_{N,k}=\{\text{sign}(\hat{\bm{\beta}}_{i_kj_k}^{\mathcal{A}_{k}})=\text{sign}(\bar{\bm{\beta}}_{i_kj_k}^{\mathcal{A}_{k}})\}$. Then by Theorem 1, $P_{\bar{\tensor{W}},\bar{\bm{\beta}}}(\mathcal{E}_{N,k}) \geq 1-O(\exp(-\eta \log p))$ for large $N$. On $\mathcal{E}_{N,k}$, by the KKT condition and the Taylor's expansion of $L_{N,\mathcal{A}_{k}}^{\prime}(\bar{\tensor{W}},\hat{\bm{\beta}}^{\mathcal{A}_{k}},\tensor{X})$ at $\bar{\bm{\beta}}^{\mathcal{A}_{k}}$
\begin{align*}
    - \lambda_{N,k} & \text{sign}(\bar{\bm{\beta}}^{\mathcal{A}_{k}}) \\
    & = L_{N,\mathcal{A}_{k}}^{\prime}(\bar{\tensor{W}},\hat{\bm{\beta}}^{\mathcal{A}_{k}},\tensor{X})\\
    & = L_{N,\mathcal{A}_{k}}^{\prime}(\bar{\tensor{W}},\bar{\bm{\beta}}^{\mathcal{A}_{k}},\tensor{X}) + L_{N,\mathcal{A}_{k}\mathcal{A}_{k}}^{\prime\prime}(\bar{\tensor{W}},\bar{\bm{\beta}},\tensor{X}) v_{N,k} \\
    & = \bar{L}^{\prime\prime}_{\mathcal{A}_{k}\mathcal{A}_{k}} v_{N,k}+ L_{N,\mathcal{A}_{k}}^{\prime}(\bar{\tensor{W}},\bar{\bm{\beta}}^{\mathcal{A}_{k}},\tensor{X}) + (L_{N,\mathcal{A}_{k}\mathcal{A}_{k}}^{\prime\prime}(\bar{\tensor{W}},\bar{\bm{\beta}},\tensor{X})-\bar{L}^{\prime\prime}_{\mathcal{A}_{k}\mathcal{A}_{k}}) v_{N,k},
\end{align*}where $v_{N,k}=\hat{\bm{\beta}}^{\mathcal{A}_{k}}-\bar{\bm{\beta}}^{\mathcal{A}_{k}}$. By rearranging the terms
\begin{equation}\label{eqn:thm2_exps1}
\begin{aligned}
    & v_{N,k} = \\
    & -\lambda_{N,k}[\bar{L}^{\prime\prime}_{\mathcal{A}_{k}\mathcal{A}_{k}}]^{-1}\text{sign}(\bar{\bm{\beta}}^{\mathcal{A}_{k}}) - [\bar{L}^{\prime\prime}_{\mathcal{A}_{k}\mathcal{A}_{k}}]^{-1}[L_{N,\mathcal{A}_{k}}^{\prime}(\bar{\tensor{W}},\bar{\bm{\beta}}^{\mathcal{A}_{k}},\tensor{X})+D_{N,\mathcal{A}_{k}\mathcal{A}_{k}}(\bar{\tensor{W}},\bar{\bm{\beta}}^{\mathcal{A}_{k}})v_{N,k}],
\end{aligned} 
\end{equation}where $D_{N,\mathcal{A}_{k}\mathcal{A}_{k}}=L_{N,\mathcal{A}_{k}\mathcal{A}_{k}}^{\prime\prime}(\bar{\tensor{W}},\bar{\bm{\beta}},\tensor{X})-\bar{L}^{\prime\prime}_{\mathcal{A}_{k}\mathcal{A}_{k}}$. Next, for fixed $(i,j) \in \mathcal{A}_{k}^c$, by expanding $L_{N,\mathcal{A}_{k}}^{\prime}(\bar{\tensor{W}},\hat{\bm{\beta}}^{\mathcal{A}_{k}},\tensor{X})$ at $\bar{\bm{\beta}}^{\mathcal{A}_{k}}$
\begin{equation}\label{eqn:thm2_exps2}
    L_{N,ij}^{\prime}(\bar{\tensor{W}},\hat{\bm{\beta}}^{\mathcal{A}_{k}},\tensor{X}) = L_{N,ij}^{\prime}(\bar{\tensor{W}},\bar{\bm{\beta}}^{\mathcal{A}_{k}},\tensor{X}) + L_{N,ij,\mathcal{A}_{k}}^{\prime\prime}(\bar{\tensor{W}},\bar{\bm{\beta}}^{\mathcal{A}_{k}},\tensor{X}) v_{N,k}. 
\end{equation} Then combining \eqref{eqn:thm2_exps1} and \eqref{eqn:thm2_exps2} we get
\begin{equation}\label{eqn:thm2_bouding}
\begin{aligned}
& L_{N,ij}^{\prime}(\bar{\tensor{W}},\hat{\bm{\beta}}^{\mathcal{A}_{k}},\tensor{X}) \\
& = -\lambda_{N,k}\bar{L}^{\prime\prime}_{ij,\mathcal{A}_{k}}(\bar{\bm{\beta}}^{\mathcal{A}_{k}})[\bar{L}^{\prime\prime}_{\mathcal{A}_{k}\mathcal{A}_{k}}]^{-1}\text{sign}(\bar{\bm{\beta}}^{\mathcal{A}_{k}}) - \bar{L}^{\prime\prime}_{ij,\mathcal{A}_{k}}(\bar{\bm{\beta}}^{\mathcal{A}_{k}})[\bar{L}^{\prime\prime}_{\mathcal{A}_{k}\mathcal{A}_{k}}]^{-1}L_{N,\mathcal{A}_{k}}^{\prime}(\bar{\tensor{W}},\bar{\bm{\beta}}^{\mathcal{A}_{k}},\tensor{X}) \\
& + [D_{N,ij,\mathcal{A}_{k}}(\bar{\tensor{W}},\bar{\bm{\beta}}^{\mathcal{A}_{k}})-\bar{L}^{\prime\prime}_{ij,\mathcal{A}_{k}}(\bar{\bm{\beta}}^{\mathcal{A}_{k}})[\bar{L}^{\prime\prime}_{\mathcal{A}_{k}\mathcal{A}_{k}}]^{-1}D_{N,\mathcal{A}_{k}\mathcal{A}_{k}}(\bar{\tensor{W}},\bar{\bm{\beta}}^{\mathcal{A}_{k}})] v_{N,k} \\
& + L_{N,ij}^{\prime}(\bar{\tensor{W}},\bar{\bm{\beta}}^{\mathcal{A}_{k}},\tensor{X}).
\end{aligned}
\end{equation} By the incoherence condition outlined in condition (A3), for any $(i,j) \in \mathcal{A}_{k}$,
\begin{equation*}
    |\bar{L}_{ij,\mathcal{A}_{k}}^{''}(\bar{\tensor{W}},\bar{\bm{\beta}})[\bar{L}_{\mathcal{A}_{k},\mathcal{A}_{k}}^{''}(\bar{\tensor{W}},\bar{\bm{\beta}})]^{-1} \text{sign}(\bar{\bm{\beta}}_{\mathcal{A}_{k}})| \leq \delta < 1.
\end{equation*}Thus, following straightforwardly (with the modification that we are considering each $\mathcal{A}_{k}$ instead of $\mathcal{A}$) from the proofs of Theorem 2 of \citet{peng2009partial}, the remaining terms in \eqref{eqn:thm2_bouding} can be shown to be all $o(\lambda_{N,k})$, and $\max_{(i,j) \in \mathcal{A}_{k}^c}|L_{N,ij}^{\prime}(\bar{\tensor{W}},\hat{\bm{\beta}}^{\mathcal{A}_{k}},\tensor{X})| < \lambda_{N,k}$ with probability at least $1-O(\exp(-\eta \log p))$ for sufficiently large $N$. Thus, it has been proved that for sufficiently large $N$, no wrong edge will be included for each true edge set $\mathcal{A}_{k}$ and hence, no wrong edge will be included in $\mathcal{A} = \cup_k \mathcal{A}_{k}$.
\end{proof}

\begin{proof}[proof of Theorem 3]
By Theorem 1 and Theorem 2, with probability tending to $1$, any solution of the restricted problem is also a solution of the original problem. On the other hand, by Theorem 2 and the KKT condition, with probability tending to $1$, any solution of the original problem is also a solution of the restricted problem. Therefore, Theorem 3 follows. 
\end{proof}

\section{Simulated Precision Matrix}
\label{supp:simulated_precision_matrix}
\begin{enumerate}
  \item \textbf{AR1($\rho$)}: The covariance matrix of the form $\mat{A} = (\rho^{|i-j|})_{ij}$ for $\rho \in (0,1)$.
  \item \textbf{Star-Block (SB):} A block-diagonal covariance matrix, where each block's precision matrix corresponds to a star-structured graph with $(\mat{\Psi}_k)_{ij} = 1$. Then, for $\rho \in (0,1)$, we have that $\mat{A}_{ij} = \rho$ if $(i,j) \in E$ and $\mat{A}_{ij}=\rho^2$ for $(i,j) \not \in E$, where $E$ is the corresponding edge set.
  \item \textbf{Erdos-Renyi random graph (ER):} The precision matrix is initialized at $\mat{A} = 0.25 \mat{I}$, and $d$ edges are randomly selected. For the selected edge $(i,j)$, we randomly choose $\psi \in [0.6, 0.8]$ and update $\mat{A}_{ij} = \mat{A}_{ji} \rightarrow \mat{A}_{ij} - \psi$ and $\mat{A}_{ii} \rightarrow \mat{A}_{ii} + \psi$, $\mat{A}_{jj} \rightarrow \mat{A}_{jj} + \psi$.
\end{enumerate}

\chapter{Appendix of Chapter III}
\label{app:sgpalm}
In this Appendix, 
\begin{itemize}
    \item[] Section~\ref{supp:pseudolik} provides detailed derivation of the log-pseudolikelihood function.
    \item[] Section~\ref{supp:bb-step-size} provides justifications for the Barzilai-Borwein step sizes implemented in Algorithm~\ref{alg:sg-palm}.
    \item[] Section~\ref{supp:proofs} provides detailed proofs of Theorems~\ref{thm:statistical} and \ref{thm:sg-palm-main}.    
    \item[] Section~\ref{supp:nonconvex} discusses extensions of Algorithm~\ref{alg:sg-palm} and its convergence properties to non-convex cases.
    \item[] Section~\ref{supp:additional_experiments} provides additional details of the solar flare experiments.
\end{itemize}

\section{Derivation of the Log-Pseudolikelihood}\label{supp:pseudolik}
By rewriting the Sylvester tensor equation defined in \eqref{eqn:sylvester} element-wise, we first observe that
\begin{equation}
\label{eqn:elementwise_sylvester}
\begin{aligned}
    & \left( \sum_{k=1}^K (\mat{\Psi}_k)_{i_k,i_k} \right) \tensor{X}_{i_{[1:K]}} \\
    & = -\sum_{k=1}^K \sum_{j_k \neq i_k} (\mat{\Psi}_k)_{i_k,j_k} \tensor{X}_{i_{[1:k]},j_k,i_{[k+1:K]}} + \tensor{T}_{i_{[1:K]}}.
\end{aligned}
\end{equation} 
Note that the left-hand side of \eqref{eqn:elementwise_sylvester} involves only the summation of the diagonals of the $\mat{\Psi}_k$'s and the right-hand side is composed of columns of $\mat\Psi_k$'s that exclude the diagonal terms. Equation \eqref{eqn:elementwise_sylvester} can be interpreted as an autogregressive model relating the $(i_1,\dots,i_K)$-th element of the data tensor (scaled by the sum of diagonals) to other elements in the fibers of the data tensor. The columns of $\mat{\Psi}_k$'s act as regression coefficients. The formulation in \eqref{eqn:elementwise_sylvester} naturally leads to a pseudolikelihood-based estimation procedure \citep{besag1977efficiency} for estimating $\mat\Omega$ (see also \citet{khare2015convex} for how this procedure applied to vector-variate Gaussian graphical model estimation). It is known that inference using pseudo-likelihood is consistent and enjoys the same $\sqrt{N}$ convergence rate as the MLE in general \citep{varin2011overview}. This procedure can also be more robust to model misspecification (e.g., non-Gaussianity) in the sense that it assumes \textit{only that the sub-models/conditional distributions (i.e., $\tensor{X}_i|\tensor{X}_{-i}$) are Gaussian}. Therefore, in practice, even if the data is not Gaussian, the Maximum Pseudolikelihood Estimation procedure is able to perform reasonably well. \citet{wang2020sylvester} also studied a different model misspecification scenario where the Kronecker product/sum and Sylvester structures are mismatched for SyGlasso.

From \eqref{eqn:elementwise_sylvester} we can define the sparse least-squares estimators for $\mat\Psi_k$'s as the solution of the following convex optimization problem:
\begin{equation*}
  \begin{aligned}
    & \min_{\substack{\mat{\Psi}_k \in \bbR^{d_k \times d_k}\\k=1,\dots K}} -N \sum_{i_1,\dots,i_K} \log \tensor{W}_{i_{[1:K]}} \\ 
    & \qquad + \frac{1}{2} \sum_{i_1,\dots,i_K} \|(I) + (II)\|_2^2 + \sum_{k=1}^K P_{\lambda_k}(\mat{\Psi}_k).
  \end{aligned} \
\end{equation*}
where $P_{\lambda_k}(\cdot)$ is a penalty function indexed by the tuning parameter $\lambda_k$ and 
\begin{align*}
  (I) & = \tensor{W}_{i_{[1:K]}}\tensor{X}_{i_{[1:K]}} \\
  (II) & = \sum_{k=1}^K \sum_{j_k \neq i_k} (\mat{\Psi}_k)_{i_k,j_k} \tensor{X}_{i_{[1:k]},j_k,i_{[k+1:K]}},
\end{align*}
with $\tensor{W}_{i_{[1:K]}} := \sum_{k=1}^K (\mat{\Psi}_k)_{i_k,i_k}$.

The optimization problem above can be put into the following matrix form:
\begin{equation*}
    \begin{aligned}
    \min_{\substack{\mat{\Psi}_k \in \bbR^{d_k \times d_k}\\ k=1,\dots K}} 
    & -\frac{N}{2} \log|(\text{diag}(\mat{\Psi}_1) \oplus \dots \oplus \text{diag}(\mat{\Psi}_K))^2| \\ \nonumber
    + & \frac{N}{2} \tr(\mat{S}(\mat{\Psi}_1 \oplus \dots \oplus \mat{\Psi}_K)^2) + \sum_{k=1}^K P_{\lambda_k}(\mat{\Psi}_k) \nonumber
    \end{aligned}
\end{equation*}
where $\mat{S} \in \bbR^{d \times d}$ is the sample covariance matrix, i.e., $\mat{S}=\frac{1}{N} \sum_{i=1}^N \vecto(\tensor{X}^i) \vecto(\tensor{X}^i)^T$. Note that this is equivalent to the negative log-pseudolikelihood function that approximates the $\ell_1$-penalized Gaussian negative log-likelihood in the log-determinant term by including only the Kronecker sum of the diagonal matrices instead of the Kronecker sum of the full matrices.

\section{The Barzilai-Borwein Step Size}\label{supp:bb-step-size}
The BB method has been proven to be very successful in solving nonlinear optimization problems. In this section we outline the key ideas behind the BB method, which is motivated by quasi-Newton methods. Suppose we want to solve the unconstrained minimization problem
\begin{equation*}
    \min_x f(x),
\end{equation*}
where $f$ is differentiable. A typical iteration of quasi-Newton methods for solving this problem is
\begin{equation*}
    x_{t+1} = x_{t} - B_t^{-1} \nabla f(x_t),
\end{equation*}
where $B_t$ is an approximation of the Hessian matrix of $f$ at the current iterate $x_t$. Here, $B_t$ must satisfy the so-called secant equation: $B_t s_t = y_t$, where $s_t = x_t - x_{t-1}$ and $y_t = \nabla f(x_t) - \nabla f(x_{t-1})$ for $t \geq 1$. It is noted that in to get $B_t^{-1}$ one needs to solve a linear system, which may be computationally expensive when $B_t$ is large and dense.

One way to alleviate this burden is to use the BB method, which replaces $B_t$ by a scalar matrix $(1/\eta_t)\mat{I}$. However, it is hard to choose a scalar $\eta_t$ such that the secant equation holds with $B_t = (1/\eta_t)\mat{I}$. Instead, one can find $\eta_t$ such that the residual of the secant equation, i.e., $\|(1/\eta_t)s_t - y_t\|_2^2$, is minimized, which leads to the following choice of $\eta_t$:
\begin{equation*}
    \eta_t = \frac{\|s_t\|_2^2}{s_t^Ty_t}.
\end{equation*}
Therefore, a typical iteration of the BB method for solving the original problem is
\begin{equation*}
    x_{t+1} = x_{t} - \eta_t \nabla f(x_t),
\end{equation*}
where $\eta_t$ is computed via the previous formula.

For convergence analysis, generalizations and variants of the BB method, we refer the interested readers to~\citet{raydan1993barzilai,raydan1997barzilai,dai2002r,fletcher2005barzilai} and references therein. BB method has been successfully applied for solving problems arising from emerging applications, such as compressed sensing~\citep{wright2009sparse}, sparse reconstruction~\citep{wen2010fast} and image processing~\citep{wang2007projected}.

\section{Proofs of Theorems}\label{supp:proofs}
\subsection{Proof of Theorem~\ref{thm:statistical}}\label{supp:thm_statistical}

We first state the regularity conditions needed for establishing convergence of the SG-PALM estimators $\{\hat{\mat\Psi}_k\}_{k=1}^K$ to their true value $\{\bar{\mat\Psi}_k\}_{k=1}^K$.

\noindent \textbf{(A1 - Subgaussianity)} The data $\tensor{X}^1,\dots,\tensor{X}^N$ are i.i.d subgaussian random tensors, that is, $\vecto(\tensor{X}^i) \sim \mat{x}$, where $\mat{x}$ is a subgaussian random vector in $\mathbb{R}^d$, i.e., there exist a constant $c>0$, such that for every $\mat{a} \in \mathbb{R}^d$, $\mathbb{E}e^{\mat{a}^T x} \leq e^{c\mat{a}^T \bar{\mat{\Sigma}} \mat{a}}$, and there exist $\rho_j > 0$ such that $\mathbb{E}e^{tx_j^2} \leq +\infty$ whenever $|t| < \rho_j$, for $1 \leq j \leq d$.

\noindent \textbf{(A2 - Bounded eigenvalues)} There exist constants $0 < \Lambda_{\min} \leq \Lambda_{\max} < \infty$, such that the minimum and maximum eigenvalues of $\mat{\Omega}$ are bounded with $\lambda_{\min}(\bar{\mat{\Omega}}) = (\sum_{k=1}^K \lambda_{\max}(\mat{\Psi}_k))^{-2} \geq \Lambda_{\min}$ and $\lambda_{\max}(\bar{\mat{\Omega}}) = (\sum_{k=1}^K \lambda_{\min}(\mat{\Psi}_k))^{-2} \leq \Lambda_{\max}$.

\noindent \textbf{(A3 - Incoherence condition)} There exists a constant $\delta < 1$ such that for $k=1,\dots,K$ and all $(i,j) \in \mathcal{A}_{k}$
\begin{equation*}
    |\bar{\mathcal{L}}_{ij,\mathcal{A}_{k}}^{''}(\bar{\mat{\Psi}})[\bar{\mathcal{L}}_{\mathcal{A}_{k},\mathcal{A}_{k}}^{''}(\bar{\mat{\Psi}})]^{-1} \text{sign}(\bar{\mat{\Psi}}_{\mathcal{A}_{k},\mathcal{A}_{k}})| \leq \delta,
\end{equation*} where for each $k$ and $1 \leq i < j \leq d_k$, $1 \leq k < l \leq d_k$,
\begin{equation*}
    \bar{\mathcal{L}}_{ij,kl}^{''}(\bar{\mat{\Psi}}) := E_{\bar{\mat{\Psi}}} \Bigg(\frac{\partial^2 \mathcal{L}(\mat{\Psi})}{\partial(\mat{\Psi}_k)_{i,j} \partial(\mat{\Psi}_k)_{k,l}}|_{\mat{\Psi}=\bar{\mat{\Psi}}} \Bigg),
\end{equation*}
and
\begin{equation*}
    \mathcal{L}(\mat\Psi)
    = -\frac{N}{2} \log | (\bigoplus_{k=1}^K \diag(\mat\Psi_k))^2|  + \frac{N}{2} \tr(\mat{S} \cdot (\bigoplus_{k=1}^K \mat\Psi_k)^2).
\end{equation*}

Given assumptions (A1-A3), the theorem follows from Theorem 3.3 in \citet{wang2020sylvester}.

\subsection{Proof of Theorem~\ref{thm:sg-palm-main}}\label{supp:thm_sg-palm-main}
We next turn to convergence of the iterates $\{\mat\Psi^t\}$ from SG-PALM to a global optimum of \eqref{eqn:objective}. The proof leverages recent results in the convergence of alternating minimization algorithms for non-strongly convex objective~\citep{bolte2014proximal,karimi2016linear,li2018calculus,zhang2020new}. We outline the proof strategy:
\begin{enumerate}
    \item We establish Lipschitz continuity of the blockwise gradient $\nabla_kH(\mat\Psi)$ for $k=1,\dots,K$.
    \item We show that the objective function $\mathcal{L}_{\mat\lambda}$ satisfies the Kurdyka - \L ojasiewicz (KL) property. Further, it has a KL exponent of $\frac{1}{2}$ (defined later in the proofs).
    \item The KL property (with exponent $\frac{1}{2}$) is equivalent to a generalized Error Bound (EB) condition, which enables us to establish linear iterative convergence of the objective function~\eqref{eqn:objective} to its global optimum.
\end{enumerate}

\begin{definition}[Subdifferentials]\label{def:subdiff}
Let $f: \bbR^d \rightarrow (-\infty,+\infty]$ be a proper and lower semicontinuous function. Its domain is defined by
\begin{equation*}
    \text{dom}f := \{x \in \bbR^d: f(x) < + \infty\}.
\end{equation*}
If we further assume that $f$ is convex, then the subdifferential of $f$ at $x \in \text{dom}f$ can be defined by
\begin{equation*}
    \partial f(x) := \{v \in \bbR^d: f(z) \geq f(x) + <v,z-x>, \forall z \in \bbR^d\}.
\end{equation*}
The elements of $\partial f(x)$ are called subgradients of $f$ at $x$.
\end{definition}

Denote the domain of $\partial f$ by $\text{dom}\partial f := \{x \in \bbR^d: \partial f(x) \neq \emptyset\}$. Then, if $f$ is proper, semicontinuous, convex, and $x \in \text{dom}f$, then $\partial f(x)$ is a nonempty closed convex set. In this case, we denote by $\partial^0 f(x)$ the unique least-norm element of $\partial f(x)$ for $x \in \text{dom}\partial f$, along with $\|\partial^0 f(x)\|=+\infty$ for $x \notin \text{dom}\partial f$. Points whose subdifferential contains $0$ are critical points, denoted by $\textbf{crit}f$. For convex $f$, $\textbf{crit}f=\argmin f$.

\begin{definition}[KL property]\label{def:kl}
Let $\Gamma_{c_2}$ stands for the class of functions $\phi:[0,c_2] \rightarrow \mathbb{R}_{+}$ for $c_2>0$ with the properties:
\begin{enumerate}[label=(\roman*)]
    \item $\phi$ is continuous on $[0,c_2]$;
    \item $\phi$ is smooth concave on $(0,c_2)$;
    \item $\phi(0)=0, \phi'(s)>0, \forall s \in (0,c_2)$.
\end{enumerate}
Further, for $x \in \bbR^d$ and any nonempty $Q \subset \bbR^d$, define the distance function $d(x,Q):=\inf_{y \in Q} \|x-y\|$. Then, a function $f$ is said to have the Kurdyka - \L ojasiewicz (KL) property at point $x_0$, if there exist $c_1 > 0$, a neighborhood $B$ of $x_0$, and $\phi \in \Gamma_{c_2}$ such that for all
\begin{equation*}
    x \in B(x_0,c_1) \cap \{x:f(x_0)<f(x)<f(x_0)+c_2\},
\end{equation*}
the following inequality holds
\begin{equation*}
    \phi'\Big(f(x)-f(x_0)\Big)\text{dist}(0,\partial f(x)) \geq 1.
\end{equation*}
If $f$ satisfies the KL property at each point of $\text{dom}\partial f$ then $f$ is called a KL function.
\end{definition}

We first present two lemmas that characterize key properties of the loss function.

\begin{lemma}[Blockwise Lipschitzness]\label{lemma:lip}
The function $H$ is convex and continuously differentiable on an open set containing $\text{dom} G$ and its gradient, is block-wise Lipschitz continuous with block Lipschitz constant $L_k>0$ for each $k$, namely for all $k=1,\dots,K$ and all $\mat\Psi_k, \mat\Psi_k' \in \mathbb{R}^{d_k \times d_k}$
\begin{equation*}
\begin{aligned}
    & \|\nabla_k H(\mat\Psi_{i < k},\mat\Psi_k,\mat\Psi_{i > k}) - \nabla_k H(\mat\Psi_{i < k},\mat\Psi_k',\mat\Psi_{i > k})\| \\
    & \leq L_k \|\mat\Psi_k - \mat\Psi_k'\|,
\end{aligned}
\end{equation*}
where $\nabla_k H$ denotes the gradient of $H$ with respect to $\mat\Psi_k$ with all remaining $\mat\Psi_i$, $i \neq k$ fixed. Further, the function $G_k$ for each $k=1,\dots,K$ is a proper lower semicontinuous (lsc) convex function.
\end{lemma}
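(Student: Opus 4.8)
\textbf{Proof proposal for Lemma~\ref{lemma:lip}.}

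The plan is to verify the three assertions in turn: properness and lower semicontinuity of each $G_k$, convexity and $C^1$-smoothness of $H$ on an open neighborhood of $\mathrm{dom}\, G$, and block-wise Lipschitz continuity of $\nabla_k H$. The first assertion is the easiest: for the $\ell_1$-regularized case $G_k(\bm\Psi_k) = \lambda_k \|\bm\Psi_k\|_{1,\text{off}}$ is a finite-valued seminorm, hence continuous (in particular lsc) and proper; in the general (possibly non-convex) case handled in the supplement one instead invokes the standard fact that the penalties listed (SCAD, MCP) are proper and lsc by construction. So the bulk of the work is on $H$.

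For convexity and differentiability of $H$, recall that $H(\bm\Psi) = -\frac{N}{2}\log\bigl|(\bigoplus_k \diag(\bm\Psi_k))^2\bigr| + \frac{N}{2}\tr\bigl(\mat S (\bigoplus_k \bm\Psi_k)^2\bigr)$. The trace term is a quadratic form in the entries of $(\bm\Psi_1,\dots,\bm\Psi_K)$: writing $\bigoplus_k \bm\Psi_k$ as a linear map of the collection $\bm\Psi$, the term equals $\frac{N}{2}\langle \bm\Psi, \mathcal{Q}\bm\Psi\rangle$ for a positive semidefinite operator $\mathcal{Q}$ (since $\mat S \succeq 0$ and $\tr(\mat S M^2) = \|\mat S^{1/2} M\|_F^2$ when $M$ is symmetric), hence convex, smooth, and globally Lipschitz-gradient with constant controlled by $\|\mat S\|_2$ and the structure of the Kronecker sum. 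The log-determinant term depends only on $\diag(\bm\Psi_k)$; on the set where every $\bigoplus_k \diag(\bm\Psi_k)$ has strictly positive entries (an open set containing $\mathrm{dom}\, G$ once we restrict to positive-definite iterates, which the algorithm maintains — cf.\ the discussion preceding the lemma and assumption (A2)), $-\log|\cdot|$ of a positive diagonal matrix is a finite sum of $-\log$ of positive reals, so it is $C^\infty$ and convex there. Summing two convex $C^1$ functions gives the claim. I would state the open set explicitly as $\{\bm\Psi : \min_i (\bigoplus_k \diag(\bm\Psi_k))_{ii} > 0\}$.

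For block-wise Lipschitzness, fix $k$ and freeze the other blocks. From the explicit gradient formula~\eqref{eqn:block-grad}, $\nabla_k H(\bm\Psi)$ splits into (i) the diagonal piece $\diag\bigl(\{\tr[(\diag((\bm\Psi_k)_{ii}) + \bigoplus_{j\neq k}\diag(\bm\Psi_j))^{-1}]\}_i\bigr)$, which is a function of $\diag(\bm\Psi_k)$ only, and (ii) the affine piece $\mat S_k\bm\Psi_k + \bm\Psi_k\mat S_k + 2\sum_{j\neq k}\mat S_{j,k}$. Piece (ii) is affine in $\bm\Psi_k$ with linear part $\bm\Psi_k \mapsto \mat S_k\bm\Psi_k + \bm\Psi_k\mat S_k$, which has operator norm at most $2\|\mat S_k\|_2$, so it contributes a Lipschitz constant $2\|\mat S_k\|_2$. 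Piece (i): each coordinate $t \mapsto \tr[(tI_{m_k} + D_j)^{-1}]$ with $D_j = \bigoplus_{j\neq k}\diag(\bm\Psi_j)$ fixed and positive, is a smooth function of the scalar $t = (\bm\Psi_k)_{ii}$ on $t > -\lambda_{\min}(D_j)$, with derivative $-\tr[(tI+D_j)^{-2}] = -\sum_\ell (t + d_\ell)^{-2}$, which is bounded in absolute value whenever $t + d_\ell \geq \varepsilon > 0$ for all $\ell$ — precisely the condition defining the open set above. Hence on that set piece (i) is Lipschitz in $\diag(\bm\Psi_k)$, hence in $\bm\Psi_k$. Adding the two pieces, $L_k$ can be taken as $2\|\mat S_k\|_2$ plus a term depending on $\varepsilon^{-2}$; I would remark that, as noted in the text after~\eqref{eqn:linesearch-cond}, the backtracking line search never needs to know $L_k$ explicitly, only that it is finite.

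The main obstacle is the log-determinant term's domain: $-\log|\cdot|$ is only finite and smooth where the diagonal Kronecker sum is strictly positive, so the ``open set containing $\mathrm{dom}\, G$'' claim requires care — one must argue that the PALM iterates (and hence the relevant level set) stay in $\{\bm\Psi : \bigoplus_k \diag(\bm\Psi_k) \succ 0\}$, which follows because the objective blows up to $+\infty$ as any diagonal Kronecker-sum entry approaches $0$, keeping the sublevel sets $\{\mathcal{L}_{\bm\lambda} \leq \mathcal{L}_{\bm\lambda}(\bm\Psi^{(0)})\}$ bounded away from that boundary. Once the domain is pinned down, every remaining estimate is a routine bound on $\|\mat S_k\|_2$ and on $\sum_\ell (t+d_\ell)^{-2}$ over the relevant compact set.
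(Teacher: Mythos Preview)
Your proposal is correct and follows essentially the same route as the paper: split $\nabla_k H$ into the affine piece $\mat S_k\bm\Psi_k + \bm\Psi_k\mat S_k + 2\sum_{j\neq k}\mat S_{j,k}$ (Lipschitz with constant $2\|\mat S_k\|$) and the log-determinant piece, then control the latter by bounding the diagonal Kronecker-sum entries away from zero on the relevant sublevel set. The only cosmetic differences are that the paper handles the log-det piece via a direct $c^{-1}-(c')^{-1}$ computation with Cauchy--Schwarz (you use a derivative/MVT bound, which is equally valid and arguably cleaner), and the paper invokes Theorem~3.1 of \citet{oh2014optimization} for the sublevel-set diagonal bounds where you sketch that argument yourself.
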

\begin{proof}
For simplicity of notation, in this and the following proofs we use $\mat\Psi$ (i.e., omitting the subscript) to denote the set $\{\mat\Psi_k\}_{k=1}^K$ or the $K$-tuple $(\mat\Psi_1,\dots,\mat\Psi_K)$ whenever there is no confusion. Recall the blockwise gradient of the smooth part of the objective function $H$ with respect to $\mat\Psi_k$, for each $k=1,\dots,K$, is given by
\begin{equation*}
\begin{aligned}
    \nabla_k H(\mat\Psi) &= \diag\Big(\Big[\tr\{(\diag((\mat\Psi_k))_{ii} + \bigoplus_{j \neq k}\diag(\mat\Psi_j))^{-1}\} \quad i=1:d_k \Big]\Big) \\
    & \quad + \mat{S}_k\mat\Psi_k + \mat\Psi_k\mat{S}_k + 2\sum_{j \neq k}\mat{S}_{j,k}.
\end{aligned}
\end{equation*}
Then for $\mat\Psi_k,\mat\Psi'_k$, 
\begin{equation*}
\begin{aligned}
    & \|\mat{S}_k\mat\Psi_k + \mat\Psi_k\mat{S}_k + 2\sum_{j \neq k}\mat{S}_{j,k} - (\mat{S}_k\mat\Psi'_k + \mat\Psi'_k\mat{S}_k + 2\sum_{j \neq k}\mat{S}_{j,k})\| \\
    & = \|\mat{S}_k\mat\Psi_k + \mat\Psi_k\mat{S}_k - \mat{S}_k\mat\Psi'_k - \mat\Psi'_k\mat{S}_k\| \\
    & \leq 2\|\mat{S}_k\| \|\mat\Psi_k - \mat\Psi'_k\|.
\end{aligned}
\end{equation*}
To prove Lipschitzness of the remaining parts, we consider the case of $K=2$ for simplicity of notations. The arguments easily carry over cases of $K>2$. In this case, denote $\mat{A}=(a_{ij}):=\mat\Psi_1$ and $\mat{B}=(b_{kl}):=\mat\Psi_2$. Let $f(\mat{A}):=\frac{\partial}{\partial \mat{A}}\log|\diag(\mat{A} \oplus \mat{B})|$, then
\begin{equation*}
    f(\mat{A}) - f(\mat{A}') = \diag \Big(\Big[ \sum_{i=1}^{m_2}(a_{jj}+b_{ii})^{-1} - \sum_{i=1}^{m_2}(a'_{jj}+b_{ii})^{-1} \quad j=1,\dots,m_1 \Big] \Big)
\end{equation*}
and
\begin{equation*}
    \begin{aligned}
        \|f(\mat{A}) - f(\mat{A}')\|_F &= \Bigg(\sum_{j=1}^{m_1}\Big(\sum_{i=1}^{m_2}(a_{jj}+b_{ii})^{-1} - \sum_{i=1}^{m_2}(a'_{jj}+b_{ii})^{-1}\Big)^2\Bigg)^{1/2} \\
        &\leq \Bigg(\sum_{j=1}^{m_1}m_2\sum_{i=1}^{m_2}\Big((a_{jj}+b_{ii})^{-1} - (a'_{jj}+b_{ii})^{-1}\Big)^2\Bigg)^{1/2} \\
        &= \Big(m_2\sum_{j=1}^{m_1}\sum_{i=1}^{m_2}(c_{ji}^{-1}-(c'_{ji})^{-1})^2\Big)^{1/2} \\
        &= \Big(m_2\sum_{j=1}^{m_1}\sum_{i=1}^{m_2}(c'_{ji})^{-2}(c'_{ji}-c_{ji})^2c_{ji}^{-2}\Big)^{1/2} \\
        &= \Big(m_2\sum_{j=1}^{m_1}(a_{jj}-a'_{jj})^2\sum_{i=1}^{m_2}(c'_{ji}c_{ji})^{-2}\Big)^{1/2} \\
        &\leq \Big(Cm_2\sum_{j=1}^{m_1}\sum_{i=1}^{m_2}(c'_{ji}c_{ji})^{-2}\Big)^{1/2} \|\mat{A} - \mat{A}'\|_F,
    \end{aligned}
\end{equation*}
where the first inequality is due to Cauchy-Schwartz inequality; the third line is due to $c_{ji}:=a_{jj}+b_{ii}$; and in the last inequality we upper-bound each $(a_{jj}-a'_{jj})^2$ by its maximum over all $j$, which is absorbed in  a constant $C$. Note that the first term in the last line above is finite as long as the summations of the diagonal elements of the factors $\mat{A}$ and $\mat{B}$ are finite, which is implied if the precision matrix $\mat{\Omega}$ defined by the Sylvester generating equation as $(\mat{A} \oplus \mat{B})^2$ has finite diagonal elements. This follows from Theorem 3.1 of \citet{oh2014optimization}, who proved that if a symmetric matrix $\mat\Omega$ satisfying $\mat\Omega \in \mathcal{C}_0$, where
\begin{equation*}
    \mathcal{C}_0 = \Big\{\mat\Omega| \mathcal{L}_{\mat\lambda}(\mat\Omega) \leq \mathcal{L}_{\mat\lambda}(\mat\Omega^{(0)})=M \Big\},
\end{equation*}
and $\mat\Omega^{(0)}$ is an arbitrary initial point with a finite function value $\mathcal{L}_{\mat\lambda}(\mat\Omega^{(0)}):=M$, the diagonal elements of $\mat\Omega$ are bounded above and below by constants which depend only on $M$, the regularization parameter $\mat\lambda$, and the sample covariance matrix $\mat{S}$. Therefore, we have
\begin{equation*}
    \|f(\mat{A}) - f(\mat{A}')\|_F \leq \Tilde{C}\|\mat{A}-\mat{A}'\|_F
\end{equation*}
for some constant $\Tilde{C} \in (0,+\infty)$. Similarly, we can establish such an inequality for $\mat{B}$, proving that the first term in $\nabla_k H$ is Lipschitz continuous.
\end{proof}

As a consequence of Lemma~\ref{lemma:lip}, the gradient of $H$, $\nabla H = (\nabla_1 H,\dots,\nabla_K H)$ is Lipschitz continuous on bounded subsets $\mathbb{B}_1 \times \cdots \times \mathbb{B}_K$ of $\mathbb{R}^{d_1 \times d_1} \times \cdots \times \mathbb{R}^{d_K \times d_K}$ with some constant $L>0$, such that for all $(\mat\Psi_k,\mat\Psi_k') \in \mathbb{B}_k \times \mathbb{B}_k$,
\begin{equation*}
\begin{aligned}
    & \|(\nabla_1 H(\mat\Psi_1,\mat\Psi_{i > 1})-\nabla_1 H(\mat\Psi_1',\mat\Psi_{i > 1}'),\dots,\\
    & \nabla_K H(\mat\Psi_{i < K}',\mat\Psi_K')-\nabla_K H(\mat\Psi_{i < K}',\mat\Psi_K'))\| \\ 
    &\leq L\|(\mat\Psi_1-\mat\Psi_1',\dots,\mat\Psi_K-\mat\Psi_K')\|,
\end{aligned}
\end{equation*}
and we have $L \leq \sum_{k=1}^K L_k$.

\begin{lemma}[KL property of $\mathcal{L}_{\mat\lambda}$]\label{lemma:kl_loss}
The objective function $\mathcal{L}_{\mat\lambda}(\mat\Psi)$ defined in~\eqref{eqn:objective} satisfies the KL property. Further, $\phi$ in this case can be chosen to have the form $\phi(s) = \alpha s^{1/2}$, where $\alpha$ is some positive real number. Functions satisfying the KL property with this particular choice of $\phi$ is said to have a KL exponent of $\frac{1}{2}$.
\end{lemma}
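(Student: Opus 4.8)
The plan is to exhibit $\mathcal{L}_{\bm\lambda}$, on each of its sublevel sets, as a convex function of the form ``strongly convex function composed with a linear map, plus a polyhedral function,'' and then to deduce the KL inequality with exponent $\tfrac12$ from a Luo--Tseng error bound together with the calculus of KL exponents of \citet{li2018calculus}.

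First I would record the decomposition $\mathcal{L}_{\bm\lambda}(\bm\Psi) = \Phi(\calA\bm\Psi) + P(\bm\Psi)$. Here $P(\bm\Psi):=\sum_{k=1}^K\lambda_k\|\bm\Psi_k\|_{1,\text{off}} = \sum_{k=1}^K\lambda_k\sum_{i\neq j}|(\bm\Psi_k)_{ij}|$ is a proper, closed, \emph{polyhedral} convex function; $\calA$ is the linear operator
\[
\calA\bm\Psi := \Bigl(\,\bigl(\textstyle\sum_{k=1}^K(\bm\Psi_k)_{i_ki_k}\bigr)_{i_1,\dots,i_K}\,,\; \vecto\bigl(\mat{S}^{1/2}\textstyle\bigoplus_{k=1}^K\bm\Psi_k\bigr)\Bigr)\in\bbR^{d}\times\bbR^{d^2},
\]
where the first block uses that $\bigoplus_k\diag(\bm\Psi_k)$ is the diagonal matrix with entries $w_i := \sum_k(\bm\Psi_k)_{i_ki_k}$; and $\Phi(w,M):= -N\sum_{i}\log w_i + \tfrac{N}{2}\|M\|_F^2$. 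The function $\Phi$ is separable and, restricted to any box $\{w:\,0<c\le w_i\le C\}\times\bbR^{d^2}$, is strongly convex with Lipschitz-continuous gradient, with both moduli controlled by $c$, $C$ and $N$.

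Next I would pass to a sublevel set $\calC_0 := \{\bm\Psi:\mathcal{L}_{\bm\lambda}(\bm\Psi)\le M\}$, which suffices for the convergence analysis since the SG-PALM iterates stay in one such set by the descent property. On $\calC_0$ the coordinates $w_i$ are bounded above and below by positive constants depending only on $M$, $\bm\lambda$ and $\mat{S}$ --- this is precisely the bound used in the proof of Lemma~\ref{lemma:lip}, which goes back to \citet[Theorem 3.1]{oh2014optimization}. Hence on $\calC_0$ the smooth part $\Phi\circ\calA$ is a convex $C^{1,1}$ function (as already noted in Lemma~\ref{lemma:lip}) obtained as a strongly convex function composed with a linear map. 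I would then invoke the Luo--Tseng error bound for such structured convex problems: writing $R(\bm\Psi) := \bm\Psi - \text{prox}_{P}\!\bigl(\bm\Psi - \nabla(\Phi\circ\calA)(\bm\Psi)\bigr)$ for the proximal-gradient residual, one has, on each sublevel set, constants $\epsilon,\tau>0$ with $\text{dist}(\bm\Psi,\argmin\mathcal{L}_{\bm\lambda})\le\tau\|R(\bm\Psi)\|$ whenever $\|R(\bm\Psi)\|\le\epsilon$. Finally I would use the equivalence, valid for proper closed convex functions, between this error bound on a set and the KL inequality holding there with $\phi(s)=\alpha s^{1/2}$ (\citep{li2018calculus}; see also \citep{karimi2016linear}); together with coercivity of $\mathcal{L}_{\bm\lambda}$ (its sublevel sets are compact, by the two-sided bounds on the $w_i$ and the growth of $P$, so $\argmin\mathcal{L}_{\bm\lambda}$ is nonempty), this shows $\mathcal{L}_{\bm\lambda}$ is a KL function with exponent $\tfrac12$.

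The main obstacle, and the step demanding the most care, is the logarithmic term: unlike the trace and $\ell_1$ parts, $-N\sum_i\log w_i$ is not piecewise linear-quadratic, so the ready-made ``convex piecewise linear-quadratic $\Rightarrow$ exponent $\tfrac12$'' results do not apply, and mere real-analyticity/subanalyticity of $\mathcal{L}_{\bm\lambda}$ only yields the KL property with \emph{some} exponent. The fix is exactly the restriction to $\calC_0$: the two-sided bounds on the $w_i$ make $-N\sum_i\log w_i$ locally strongly convex and $C^{1,1}$, after which the error-bound-to-exponent machinery applies. Along the way one must also verify the standing hypotheses of that machinery on $\calC_0$ --- Lipschitzness of $\nabla(\Phi\circ\calA)$, polyhedrality of $P$, nonemptiness of $\argmin\mathcal{L}_{\bm\lambda}$, and that the problem's implicit domain constraint $w_i>0$ is inactive on $\calC_0$.
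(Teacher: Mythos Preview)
Your proposal is correct and takes a genuinely different route from the paper. The paper proceeds piece by piece: it first shows that $\tr\bigl(\mat{S}(\bigoplus_k\bm\Psi_k)^2\bigr)+\sum_k\lambda_k\|\bm\Psi_k\|_{1,\text{off}}$ is KL with exponent $\tfrac12$ by invoking \citet{karimi2016linear} together with the composition and separable-sum calculus of \citet{li2018calculus}; it then handles $-\log\det\bigl(\bigoplus_k\diag(\bm\Psi_k)\bigr)$ via the spectral-function transfer principle of \citet{lourencco2019generalized} (the symmetric scalar function $-\sum_i\log\lambda_i$ is locally strongly convex, hence KL-$\tfrac12$, and this transfers to the corresponding spectral function); finally it combines the two pieces by another appeal to the KL calculus. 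Your argument instead packages the entire smooth part at once as $\Phi\circ\calA$ with $\Phi(w,M)=-N\sum_i\log w_i+\tfrac{N}{2}\|M\|_F^2$ and $\calA$ linear, observes that $\Phi$ is strongly convex with Lipschitz gradient on the box image of any sublevel set, and applies a single Luo--Tseng error bound for ``strongly convex composed with linear, plus polyhedral'' followed by the error-bound-to-KL equivalence of \citet{li2018calculus}. What your route buys is a clean treatment of the \emph{combination} step: the paper's final appeal to ``calculus rules'' to add the log-det piece to the trace-plus-$\ell_1$ piece is not obviously covered by the separable-sum theorem, since the two summands share the same variables; your single structural decomposition sidesteps that issue entirely. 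What the paper's route buys is modularity and avoidance of the domain gymnastics you correctly flag for the log barrier (the sublevel-set restriction, the implicit constraint $w_i>0$ being inactive, and an extension of $\Phi$ to meet the standing hypotheses of the error-bound theorem). Both arguments deliver exactly the inequality $\|\partial^0\mathcal{L}_{\bm\lambda}(\bm\Psi)\|\ge\alpha\sqrt{\mathcal{L}_{\bm\lambda}(\bm\Psi)-\min\mathcal{L}_{\bm\lambda}}$ on the iterates' sublevel set, which is precisely what Step~2 of the proof of Theorem~\ref{thm:sg-palm-main} uses.
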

\begin{proof}
This can be established in a few steps:
\begin{enumerate}
    \item It can be shown that the function (of $\mat{X}$) $\tr(\mat{S}\mat{X}^2) + \|\mat{X}\|_{1,\text{off}}$
    satisfies the KL property with exponent $\frac{1}{2}$~\citep{karimi2016linear}. We then apply the calculus rules of the KL exponent (compositions and separable summations) studied in~\citet{li2018calculus} to prove that $\tr(\mat{S}(\bigoplus_j\mat\Psi_j)^2)$ and $\sum_j\|\mat\Psi_j\|_{1,\text{off}}$ are also KL functions with exponent $\frac{1}{2}$. 
    \item The $-\log\det\Big(\bigoplus_j\diag(\mat\Psi_j)\Big)$ term can be shown to be KL with exponent $\frac{1}{2}$ using a transfer principle studied in~\citet{lourencco2019generalized}.
    \item Finally, using the calculus rules of KL exponent one more time, we combine the first two results and establish that $\mathcal{L}_{\mat\lambda}$ has KL exponent of $\frac{1}{2}$.
\end{enumerate}

 \citet{karimi2016linear} proved that the following function, parameterized by some symmetric matrix $\mat{X}$, satisfies the KL property with KL exponent $\frac{1}{2}$:
\begin{equation*}
    \tr(\mat{S}\mat{X}^2) + \|\mat{X}\|_{1,\text{off}} = \|\mat{A}\mat{X}\|_F^2 + \|\mat{X}\|_{1,\text{off}}
\end{equation*}
for $\mat{S}=\mat{A}\mat{A}^T$, even when $\mat{A}$ is not of full rank.   

We apply the calculus rules of the KL exponent studied in~\citet{li2018calculus} to prove that $\tr(\mat{S}(\bigoplus_j\mat\Psi_j)^2)$ and $\sum_j\|\mat\Psi_j\|_{1,\text{off}}$ are KL functions with exponent $\frac{1}{2}$. 
Particularly, we observe that $\tr\Big(\mat{S}(\bigoplus_j \mat\Psi_j)^2\Big)$ is the composition of functions $\mat{X} \rightarrow \tr(\mat{S}\mat{X})$ and $(\mat{X}_1,\dots,\mat{X}_K) \rightarrow \bigoplus_j \mat{X}_j$; and $\sum_j\|\mat\Psi_j\|_{1,\text{off}}$ is a separable block summation of functions $\mat{X}_j \rightarrow \|\mat{X}_j\|_{1,\text{off}}$. 

Thus, by Theorem 3.2. (exponent for composition of KL functions) in \citet{li2018calculus}, since the Kronecker sum operation is linear and hence continuously differentiable, the trace function is KL with exponent $\frac{1}{2}$, and the mapping $(\mat{X}_1,\dots,\mat{X}_K) \rightarrow \bigoplus_j \mat{X}_j$ is clearly one to one, the function $\tr(\mat{S}(\bigoplus_j\mat\Psi_j)^2)$ has the KL exponent of $\frac{1}{2}$. By Theorem 3.3. (exponent for block separable sums of KL functions) in \citet{li2018calculus}, since the function $\|\cdot\|_{1,\text{off}}$ is proper, closed, continuous on its domain, and is KL with exponent $\frac{1}{2}$, the function $\|\mat{X}_j\|_{1,\text{off}}$ is KL with an exponent of $\frac{1}{2}$.

It remains to prove that $-\log\det\Big(\bigoplus_j\diag(\mat\Psi_j)\Big)$ is also a KL function with an exponent of $\frac{1}{2}$. By Theorem 30 in \citet{lourencco2019generalized}, if we have $f:\mathbb{R}^r \rightarrow \mathbb{R}$ a symmetric function and $F:\mathcal{E} \rightarrow \mathbb{R}$ the corresponding spectral function, the followings hold
\begin{enumerate}[label=(\roman*)]
    \item $F$ satisfies the KL property at $\mat{X}$ iff $f$ satisfies the KL property at $\lambda(\mat{X})$, i.e., the eigenvalues of $\mat{X}$.
    \item $F$ satisfies the KL property with exponent $\alpha$ iff $f$ satisfies the KL property with exponent $\alpha$ at $\lambda(\mat{X})$.
\end{enumerate}
Here, take $f(\lambda(\mat{X})):=-\sum_{i=1}^r\log(\lambda_i(\mat{X}))$, and $F(\mat{X}):=-\log\det(\mat{X})$ the corresponding spectral function. Then, the function $f$ is symmetric since its value is invariant to permutations of its arguments, and it is a strictly convex function in its domain, so it satisfies the KL property with an exponent of $\frac{1}{2}$. Therefore, $F$ satisfies the KL property with the same KL exponent of $\frac{1}{2}$. Now, we apply the calculus rules for KL functions again. As both the Kronecker sum and the $\diag$ operators are linear, we conclude that $-\log\det\Big(\bigoplus_j\diag(\mat\Psi_j)\Big)$ is a KL function with an exponent of $\frac{1}{2}$.

Overall, we have that the negative log-pseudolikelihood function $\mathcal{L}(\mat\Psi)$ satisfies the KL property with an exponent of $\frac{1}{2}$.
\end{proof}

Now we are ready to prove Theorem~\ref{thm:sg-palm-main}. We follow \citet{zhang2020new} and divide the proof into three steps.

\paragraph{Step 1.} We obtain a sufficient decrease property for the loss function $\mathcal{L}$ in terms of the squared distance of two successive iterates:
\begin{equation}\label{eqn:sufficient_descent}
    \mathcal{L}(\mat\Psi^{(t)}) - \mathcal{L}(\mat\Psi^{(t+1)}) \geq \frac{L_{\min}}{2} \|\mat\Psi^{(t)} - \mat\Psi^{(t+1)}\|^2.
\end{equation}

Here and below, $\mat\Psi^{(t+1)}:=(\mat\Psi_1^{(t+1)},\dots,\mat\Psi_K^{(t+1)})$ and $L_{\min}:=\min_k L_k$. First note that at iteration $t$, the line search condition is satisfied for step size $\frac{1}{\eta_k^{(t)}} \geq L_k$, where $L_k$ is the Lipschitz constant for $\nabla_k H$. Further, it follows that for SG-PALM with backtracking one has for every $t \geq 0$ and each $k=1,\dots,K$,
\begin{equation*}
    \frac{1}{\eta_k^{(0)}} \leq \frac{1}{\eta_k^{(t)}} \leq c L_k,
\end{equation*}
where $c>0$ is the backtracking constant.

Then by Lemma 3.1 in \citet{shefi2016rate}, we get 
\begin{equation*}
    \begin{aligned}
        \mathcal{L}(\mat\Psi^{(t)}) - \mathcal{L}(\mat\Psi^{(t+1)}) &\geq \frac{1}{2\eta_{\min}^{(t+1)}} \|\mat\Psi^{(t)} - \mat\Psi^{(t+1)}\|^2 \\
        &\geq  \frac{L_{\min}}{2} \|\mat\Psi^{(t)} - \mat\Psi^{(t+1)}\|^2
    \end{aligned}
\end{equation*}
for $\eta_{\min}^{(t)}:=\min_k \eta_{k}^{(t)}$.

\paragraph{Step 2.} By Lemma~\ref{lemma:kl_loss}, $\mathcal{L}$ satisfies the KL property with an exponent of $\frac{1}{2}$. Then from Definition~\ref{def:kl}, this suggests that at $x=\mat\Psi^{t+1}$ and $f(x_0)=\min\mathcal{L}$
\begin{equation}\label{eqn:res-obj-EB}
    \|\partial^0\mathcal{L}(\mat\Psi^{t+1})\| \geq \alpha \sqrt{\mathcal{L}(\mat\Psi^{t+1}) - \min\mathcal{L}},
\end{equation}
where $\alpha>0$ is a fixed constant defined in Lemma~\ref{lemma:kl_loss}. This property is equivalent to the error bound condition, ($\partial^0 \mathcal{L}, \alpha, \Omega$)-(res-obj-EB), defined in Definition 5 in \citet{zhang2020new}, for $\Omega \subset \text{dom}\partial\mathcal{L}$. This is strictly weaker than strong convexity (see Section 4 in \citet{zhang2020new}).

At iteration $t+1$, there exists $\xi_k^{(t+1)} \in \partial G_k(\mat\Psi_k^{(t+1)})$ satisfying the optimality condition:
\begin{equation*}
    \nabla_k H(\mat\Psi_{i<k}^{(t+1)},\mat\Psi_{i \geq k}^{(t)}) + \frac{1}{\eta_k^{(t+1)}}(\mat\Psi_k^{(t+1)} - \mat\Psi_k^{(t)}) + \xi_k^{(t+1)} = 0.
\end{equation*}
Let $\xi^{(t+1)}:=(\xi_1^{(t+1)},\dots,\xi_K^{(t+1)})$. Then,
\begin{equation*}
    \nabla H(\mat\Psi^{(t+1)}) + \xi^{(t+1)} \in \partial\mathcal{L}(\mat\Psi^{(t+1)})
\end{equation*}
and hence the error bound condition becomes
\begin{equation*}
    \mathcal{L}(\mat\Psi^{(t+1)}) - \min\mathcal{L} \leq \frac{\|\partial^0\mathcal{L}(\mat\Psi^{(t+1)})\|^2}{\alpha^2} \leq \frac{\|\nabla H(\mat\Psi^{(t+1)}) + \xi^{(t+1)}\|^2}{\alpha^2}.
\end{equation*}
It follows that
\begin{equation*}
    \begin{aligned}
        & \|\nabla H(\mat\Psi^{(t+1)}) + \xi^{(t+1)}\|^2 \\
        & = \sum_{k=1}^K \|\nabla_k H(\mat\Psi^{(t+1)}) -\nabla_k H(\mat\Psi_{i<k}^{(t+1)},\mat\Psi_{i \geq k}^{(t)}) - \frac{1}{\eta_k^{(t+1)}}(\mat\Psi_k^{(t+1)} - \mat\Psi_k^{(t)})\|^2 \\
        &\leq \sum_{k=1}^K 2\|\nabla_k H(\mat\Psi^{(t+1)}) -\nabla_k H(\mat\Psi_{i<k}^{(t+1)},\mat\Psi_{i \geq k}^{(t)})\|^2 + \sum_{k=1}^K \frac{2}{(\eta_k^{(t+1)})^2}\|\mat\Psi_k^{(t+1)} - \mat\Psi_k^{(t)}\|^2 \\
        &\leq \sum_{k=1}^K 2\|\nabla H(\mat\Psi^{(t+1)}) -\nabla H(\mat\Psi_{i<k}^{(t+1)},\mat\Psi_{i \geq k}^{(t)})\|^2 + \sum_{k=1}^K \frac{2}{(\eta_k^{(t+1)})^2}\|\mat\Psi_k^{(t+1)} - \mat\Psi_k^{(t)}\|^2 \\
        &\leq \sum_{k=1}^K 2\Big(\sum_{j=1}^K\frac{1}{\eta_j^{(t+1)}}\Big)^2 \|\mat\Psi^{(t+1)}_{i \geq k} - \mat\Psi^{(t)}_{i \geq k}\|^2 + \sum_{k=1}^K \frac{2}{(\eta_k^{(t+1)})^2}\|\mat\Psi_k^{(t+1)} - \mat\Psi_k^{(t)}\|^2 \\
        &\leq \Bigg(2Kc^2\Big(\sum_{j=1}^K L_j\Big)^2 + 2c^2L_{\max}\Bigg)\|\mat\Psi^{(t+1)} - \mat\Psi^{(t)}\|^2.
    \end{aligned}
\end{equation*}
Therefore, we get
\begin{equation}\label{eqn:obj_gap}
    \mathcal{L}(\mat\Psi^{(t+1)}) - \min\mathcal{L} \leq \frac{\Bigg(2Kc^2\Big(\sum_{j=1}^K L_j\Big)^2 + 2c^2L_{\max}\Bigg)}{\alpha^2} \|\mat\Psi^{(t+1)} - \mat\Psi^{(t)}\|^2.
\end{equation}

\paragraph{Step 3.} Combining \eqref{eqn:sufficient_descent} and \eqref{eqn:obj_gap}, we have
\begin{equation*}
    \begin{aligned}
        \mathcal{L}(\mat\Psi^{(t)}) - \min\mathcal{L} &= \Big(\mathcal{L}(\mat\Psi^{(t)}) - \mathcal{L}(\mat\Psi^{(t+1)})\Big) + \Big(\mathcal{L}(\mat\Psi^{(t+1)}) - \min\mathcal{L}\Big) \\
        &\geq \frac{L_{\min}}{2} \|\mat\Psi^{(t)} - \mat\Psi^{(t+1)}\|^2 + \Big(\mathcal{L}(\mat\Psi^{(t+1)}) - \min\mathcal{L}\Big) \\
        &\geq \Bigg(\frac{\alpha^2L_{\min}}{4Kc^2(\sum_{j=1}^K L_j)^2 + 4c^2L_{\max}} + 1\Bigg) \Big(\mathcal{L}(\mat\Psi^{(t+1)}) - \min\mathcal{L}\Big). 
    \end{aligned}
\end{equation*}
This completes the proof.

\section{SG-PALM with Non-Convex Regularizers}\label{supp:nonconvex}
The estimation algorithm for non-convex regularizer is largely the same as Algorithm~\ref{alg:sg-palm}, except with an additional term added to the gradient term. Specifically, the updates are of the form
\begin{equation*}
    \mat\Psi_k^{(t+1)} = \text{prox}_{\eta^t_k\lambda_k}^{\|\cdot\|_{1,\text{off}}}\Big(\mat\Psi_k^t - \eta^t_k \nabla_k \bar{H}(\mat\Psi_{i < k}^{t+1},\mat\Psi_{i \geq k}^t)\Big),
\end{equation*}
where 
\begin{equation*}
    \bar{H}(\mat\Psi) = H(\mat\Psi) + \sum_{k=1}^K \sum_{i \neq j} \Big(g_{\lambda_k}([\mat\Psi_k]_{i,j})-\lambda_k|[\mat\Psi_k]_{i,j}|\Big).
\end{equation*}
Here, the formulation covers a range of non-convex regularizations. Particularly, the SCAD penalty~\citep{fan2001variable} with parameter $a>2$ is given by
\begin{equation*}
    g_\lambda(t) =
    \begin{cases}
    \lambda|t|, \quad \text{if} \quad |t|<\lambda \\
    -\frac{t^2-2a\lambda|t|+\lambda^2}{2(a-1)}, \quad \text{if} \quad \lambda < |t| < a\lambda \\
    \frac{(a+1)\lambda^2}{2}, \quad \text{if} \quad a\lambda < |t|,
    \end{cases}
\end{equation*}
which is linear for small $|t|$, constant for large $|t|$, and a transition between the two regimes for moderate $|t|$. 

The MCP penalty~\citep{zhang2010nearly} with parameter $a>0$ is given by
\begin{equation*}
    g_\lambda(t) = \text{sign}(t)\lambda\int_0^{|t|}\Big(1-\frac{z}{a\lambda}\Big)_+dz,
\end{equation*}
which gives a smoother transition between the approximately linear region and the constant region ($t>a\lambda$) as defined in SCAD.

The updates can also be written as
\begin{equation*}
    \mat\Psi_k^{(t+1)} = \text{prox}_{\eta^t_k\lambda_k}^{\|\cdot\|_{1,\text{off}}}\Bigg(\mat\Psi_k^t - \eta^t_k \nabla_k \Big(H(\mat\Psi_{i < k}^{t+1},\mat\Psi_{i \geq k}^t) + Q_{\lambda_k}'(\mat\Psi_k)\Big)\Bigg),
\end{equation*}
where $q_{\lambda}'(t):=\frac{d}{dt}(g_\lambda(t)-\lambda|t|)$ for $t \neq 0$ and $q_{\lambda}'(0)=0$ and $Q_\lambda'$ denotes $q_\lambda'$ applied elementwise to a matrix argument. These updates can be inserted into the framework of Algorithm~\ref{alg:sg-palm}. The details are summarized in Algorithm~\ref{alg:sg-palm-noncvx}.

\begin{algorithm}[!tbh]
\begin{algorithmic}
\caption{SG-PALM with non-convex regularizer}\label{alg:sg-palm-noncvx}
\REQUIRE Data tensor $\tensor{X}$, mode-$k$ Gram matrix $\mat{S}_k$, regularizing parameter $\lambda_k$, backtracking constant $c \in (0,1)$, initial step size $\eta_0$, initial iterate $\mat\Psi_k$ for each $k=1,\dots,K$.
\WHILE{not converged}
    \FOR{$k=1,\dots,K$}
        \STATE \textit{Line search:} Let $\eta^t_k$ be the largest element of $\{c^j \eta_{k,0}^t\}_{j=1,\dots}$ such that condition~\eqref{eqn:linesearch-cond} is satisfied for $\mat\Psi_k^{t+1} = \text{prox}_{\eta^t_k\lambda_k}^{\|\cdot\|_{1,\text{off}}}\Bigg(\mat\Psi_k^t - \eta^t_k \nabla_k \Big(H(\mat\Psi_{i < k}^{t+1},\mat\Psi_{i \geq k}^t) + Q_{\lambda_k}'(\mat\Psi_k)\Big)\Bigg)$.
        
        \STATE \textit{Update:} 
        
        $\mat\Psi_k^{t+1} \longleftarrow \text{prox}_{\eta^t_k\lambda_k}^{\|\cdot\|_{1,\text{off}}}\Bigg(\mat\Psi_k^t - \eta^t_k \nabla_k \Big(H(\mat\Psi_{i < k}^{t+1},\mat\Psi_{i \geq k}^t) + Q_{\lambda_k}'(\mat\Psi_k)\Big)\Bigg)$.
    \ENDFOR
    \STATE \textit{Next initial stepsize:} Compute Barzilai-Borwein stepsize $\eta_0^{t+1}=\min_k \eta^{t+1}_{k,0}$, where $\eta^{t+1}_{k,0}$ is computed via~\eqref{eqn:bb-step}.
\ENDWHILE
\ENSURE Final iterates $\{\mat\Psi_k\}_{k=1}^K$.
\end{algorithmic}
\end{algorithm}

\subsection{Convergence Property}
Consider a sequence of iterate $\{\mat{x}^t\}_{t \in \mathbb{N}}$ generated by a generic PALM algorithm for minimizing some objective function $f$. Specifically, assume 
\begin{itemize}
    \item[] $(\mathcal{H}_1)$ $\inf f > -\infty$.
    \item[] $(\mathcal{H}_2)$ The restriction of the function to its domain is a continuous function.
    \item[] $(\mathcal{H}_3)$ The function satisfies the KL property.
\end{itemize}

Then, as in Theorem 2 of \citet{attouch2009convergence}, if this objective function satisfying $(\mathcal{H}_1),(\mathcal{H}_2),(\mathcal{H}_3)$ in addition satisfies the KL property with
\begin{equation*}
    \phi(s) = \alpha s^{1-\theta},
\end{equation*}
where $\alpha>0$ and $\theta \in (0,1]$. Then, for $\mat{x}^{\ast}$ some critical point of $f$, the following estimations hold
\begin{enumerate}[label=(\roman*)]
    \item If $\theta=0$ then the sequence of iterates converges to $\mat{x}^{\ast}$ in a finite number of steps.
    \item If $\theta \in (0,\frac{1}{2}]$ then there exist $\omega>0$ and $\tau \in [0,1)$ such that $\|\mat{x}^t-\mat{x}^{\ast}\| \leq \omega \tau^t$.
    \item If $\theta \in (\frac{1}{2},1)$ then there exist $\omega>0$ such that $\|\mat{x}^t-\mat{x}^{\ast}\| \leq \omega t^{-\frac{1-\theta}{1\theta-1}}$.
\end{enumerate}

In the case of SG-PALM with non-convex regularizations, so long as the non-convex $\mathcal{L}$ satisfies the KL property with an exponent in $(0,\frac{1}{2}]$, the algorithm remains linearly convergent (to a critical point). We argue that this is true for SG-PALM with MCP or SCAD penalty. \citet{li2018calculus} showed that penalized least square problems with such penalty functions satisfy the KL property with an exponent of $\frac{1}{2}$. The proof strategy for the convex case can be easily adopted, incorporating the KL results for MCP and SCAD in \citet{li2018calculus}, to show that the new $\mathcal{L}$ still has KL exponent of $\frac{1}{2}$. Therefore, SG-PALM with MCP or SCAD penalty converges linearly in the sense outlined above.

\section{Additional Details of the Solar Flare Experiments}\label{supp:additional_experiments}
\subsection{HMI and AIA Data}
The Solar Dynamics Observatory (SDO)/Helioseismic \& Magnetic Imager (HMI) data characterize solar variability including the Sun's interior and the various components of magnetic activity; the SDO/Atmospheric Imaging Assembly (AIA) data contain a set of measurements of the solar atmosphere spectrum at various wavelengths. In general, HMI produces data that is particularly useful in determining the mechanisms of solar variability and how the physical processes inside the Sun that are related to surface magnetic field and activity. AIA contains structural information about solar flares, and the the high AIA pixel values are correlated with the flaring intensities. We are interested in examining if combination of multiple instruments enhances our understanding of the solar flares, comparing to the case of single instrument. Both HMI and AIA produce multi-band (or multi-channel) images, for this experiment we use all three channels of the HMI images and $9.4, 13.1, 17.1, 19.3$ nm wavelength channels of the AIA images. For a detailed descriptions of the instruments and all channels of the images, see \url{https://en.wikipedia.org/wiki/Solar_Dynamics_Observatory} and the references therein. Furthermore, for training and testing involved in this study, we used the data described in~\citep{galvez2019machine}, which are further pre-processed HMI and AIA imaging data for machine learning methods.

\subsection{Classification of Solar Flares/Active Regions (AR)}
The classification system for solar flares uses the letters A, B, C, M or X, according to the peak flux in watts per square metre ($W/m^2$) of X-rays with wavelengths $100$ to $800$ picometres ($1$ to $8$ angstroms), as measured at the Earth by the GOES spacecraft (\url{https://en.wikipedia.org/wiki/Solar_flare#Classification}). Here, A usually refers to a ``quite'' region, which means that the peak flux of that region is not high enough to be classified as a real flare; B usually refers a ``weak'' region, where the flare is not strong enough to have impact on spacecrafts, earth, etc; and M or X refers to a ``strong'' region that is the most detrimental. Differentiating between a weak and a strong flare/region ahead of time is a fundamental task in space physics and has recently attracted attentions from the machine learning community~\citep{chen2019identifying,jiao2019solar,sun2019interpreting}. In our study, we also focus on B and M/X flares and attempt to predict the videos that lead to either one of these two types of flares.

\subsection{Run Time Comparison}
We compare run times of the SG-PALM algorithm for estimating the precision matrix from the solar flare data with SyGlasso. Table~\ref{tab:solar_flare_run_time} illustrates that the SG-PALM algorithm converges faster in wallclock time. Note that in this real dataset, which is potentially non-Gaussian, the convergence behavior of the algorithms is different compare to synthetic examples. Nonetheless, SG-PALM enjoys an order of magnitude speed-up over SyGlasso.

\begin{table}[!tbh]
\centering
\caption{Run time (in seconds) comparisons between SyGlasso and SG-PALM on solar flare data for different regularization parameters. Note that the SG-PALM is an order of magnitude faster that SyGlasso.}
\label{tab:solar_flare_run_time}
\begin{tabular}{|c||c|c|}
 \multicolumn{3}{c}{} \\
 \hline
 \multirow{2}{*}{$\lambda$} & \textbf{SyGlasso} & \textbf{SG-PALM} \\
 \cline{2-3} 
 & \textbf{iter} \quad \textbf{sec} & \textbf{iter} \quad \textbf{sec} \\
 \hline
 $0.28$ & $47$ \quad $5772.1$ & $89$ \quad $583.7$ \\
 $0.41$ & $43$ \quad $5589.0$ & $86$ \quad $583.4$\\
 $0.54$ & $45$ \quad $5673.7$ & $85$ \quad $568.8$ \\
 $0.67$ & $42$ \quad $5433.0$ & $77$ \quad $522.6$ \\
 $0.79$ & $39$ \quad $4983.2$ & $82$ \quad $511.4$ \\
 $0.92$ & $40$ \quad $5031.9$ & $72$ \quad $498.0$ \\
 $1.05$ & $39$ \quad $4303.7$ & $76$ \quad $452.2$ \\
 $1.18$ & $41$ \quad $4234.7$ & $64$ \quad $437.6$ \\
 $1.30$ & $40$ \quad $4039.5$ & $58$ \quad $406.9$ \\
 $1.43$ & $35$ \quad $3830.7$ & $64$ \quad $364.9$ \\
 \hline
\end{tabular}
\end{table}

\subsection{Examples of Predicted Magnetogram Images}
Figure~\ref{fig:hmi_predicted_vs_real_img} depicts examples of the predicted HMI channels by SG-PALM. We observe that the proposed method was able to reasonably capture various components of the magnetic field and activity. Note that the spatial behaviors of the HMI components are quite different from those of AIA channels, that is, the structures tend to be less smooth and continuous (e.g., separated holes and bright spots) in HMI.

\begin{figure}[tbh!] 
\centering
\begin{tabular}{@{}c@{}}
    Predicted HMI examples - B vs. M/X \\
    \rotatebox{90}{\qquad AR B}
    \includegraphics[width=0.6\textwidth]{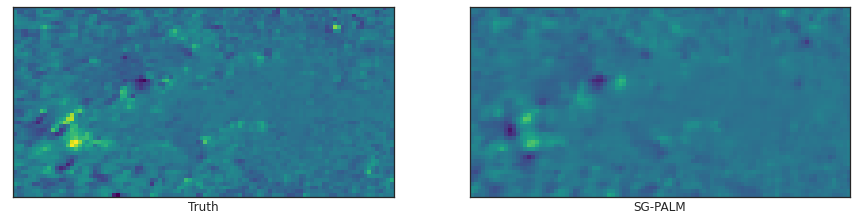}  \\
    \rotatebox{90}{\qquad  AR B}
    \includegraphics[width=0.6\textwidth]{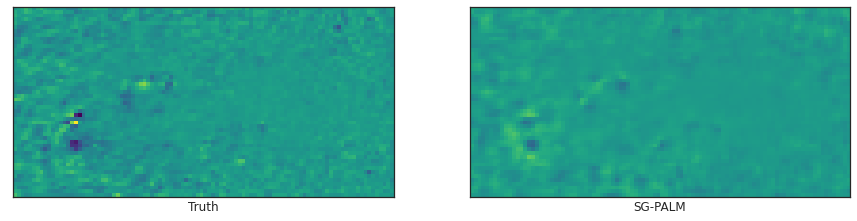} \\
    \rotatebox{90}{\qquad  AR B}
    \includegraphics[width=0.6\textwidth]{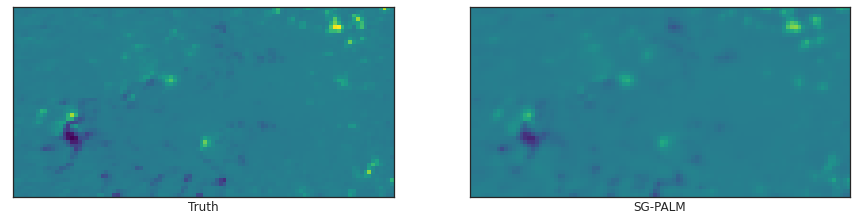} \\
    \rotatebox{90}{\quad  AR M/X}
    \includegraphics[width=0.6\textwidth]{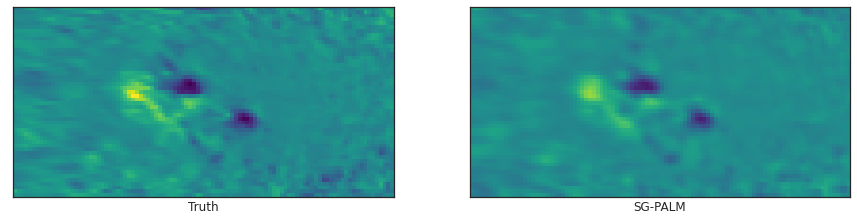} \\
    \rotatebox{90}{\quad  AR M/X}
    \includegraphics[width=0.6\textwidth]{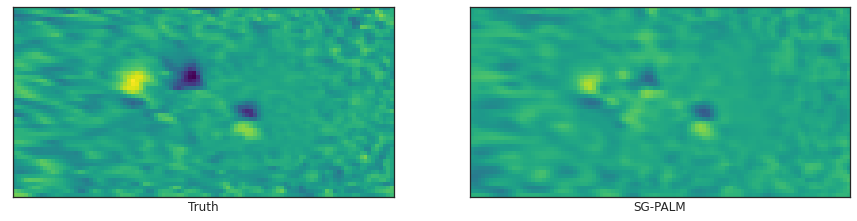} \\
    \rotatebox{90}{\quad  AR M/X}
    \includegraphics[width=0.6\textwidth]{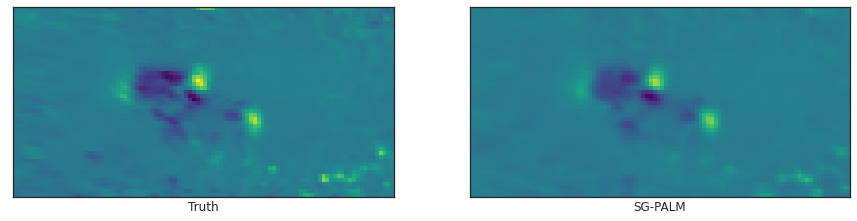}
\end{tabular}
\caption{Examples of one-hour ahead prediction of the first three channels (HMI components) of ending frames of $13$-frame videos, leading to B- (first three rows) and MX-class (last three rows) flares, produced by the SG-PALM, comparing to the real image (left column). Similarly to AIA predictions, linear forward predictors tend to underestimate the contrast ratio of the images. Nonetheless, the SG-PALM algorithm was able to both capture the spatial structures of the underlying magnetic fields. HMI images tend to be harder to predict, as indicated by the increased number and decreased degree of smoothness of features, signifying the underlying magnetic activity on the solar surface.}
\label{fig:hmi_predicted_vs_real_img}
\end{figure}

\subsection{Multi-instrument vs. Single Instrument Prediction}
To illustrate the advantages of multi-instrument analysis, we compare the NRMSEs between an AIA-only (i.e., last four channels of the dataset) and an HMI\&AIA (i.e., all seven channels of the dataset) study in predicting the last frames of $13$-frame AIA videos, for each flare class, respectively, using the proposed SG-PALM. The results are depicted in Figure~\ref{fig:hmi_vs_aia}, where the average, standard deviation, and range of the NRMSEs across pixels are also shown for each error image. By leveraging the cross-instrument correlation structure, there is a $0.5\%-1\%$ drop in the averaged error rates and a $2\%-4\%$ drop in the range of the errors.

\begin{figure}[tbh!] 
\centering
\begin{tabular}{@{}c@{}}
    \qquad Avg. NRMSE $=0.0379$ (w/ HMIs) \quad Avg. NRMSE $=0.0479$ (w/o HMIs) \\
    \rotatebox{90}{\qquad \qquad AR B}
    \includegraphics[width=0.75\textwidth]{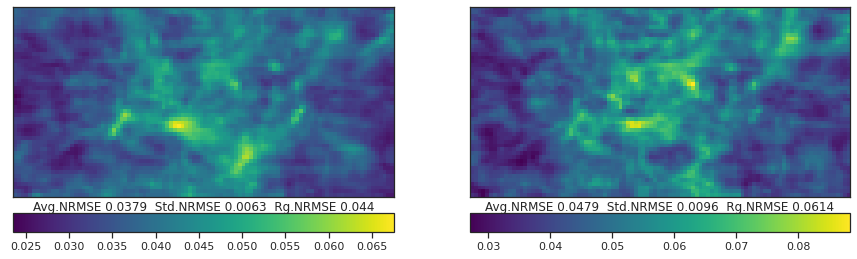}  \\
    \qquad Avg. NRMSE $=0.0620$ (w/ HMIs) \quad Avg. NRMSE $=0.0674$ (w/o HMIs) \\
    \rotatebox{90}{\qquad \qquad AR MX}
    \includegraphics[width=0.75\textwidth]{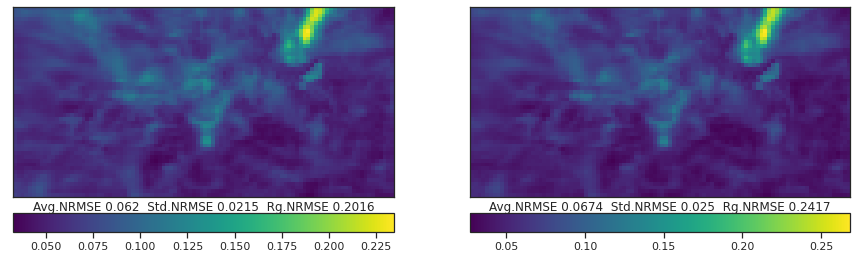} 
\end{tabular}
\caption{Comparison of the SG-PALM performance measured by NRMSE in predicting the AIA channels (i.e., last four channels) of the ending frame of $13$-frame videos leading to B- and MX-class solar flares, by using all HMI\&AIA channels (left column) and AIA-only channels (right column). The NRMSEs are computed by averaging across both testing samples and channels for each pixel. Note that there are improvements in both the averaged errors rates and the uncertainty in those errors (i.e., range of the errors) by including multi-instrument image channels.}
\label{fig:hmi_vs_aia}
\end{figure}

\subsection{Illustration of the Difficulty of Predictions for Two Flares Classes}
We demonstrate the difficulty of forward predictions of video frames. Figure~\ref{fig:mx_predicted_vs_prev_img} depicts two different channels of multiple frames from two videos leading to MX-class solar flares. Note that the current frame is the $13$th frame in the sequence that we are trying to predict. We observe that the prediction task is particularly difficult if there is a sudden transition of either the brightness or spatial structure of the frames near the end of the video. These sudden transitions are more frequent for MX flares than for B flares. In addition, as MX flares are generally considered as rare events (i.e., less frequent than B flares), it is harder for SG-PALM or related methods to learn a common correlation structures from training data.

On the other hand, typical image sequences leading to B flares exhibit much smoother transitions from frame to frame. As shown in Figure~\ref{fig:b_predicted_vs_prev_img}, the SG-PALM was able to produce remarkably good predictions of the current frames. 

\begin{figure}[tbh!] 
\centering
\begin{tabular}{@{}c@{}}
    Predicted examples - M/X \\
    \includegraphics[width=0.85\textwidth]{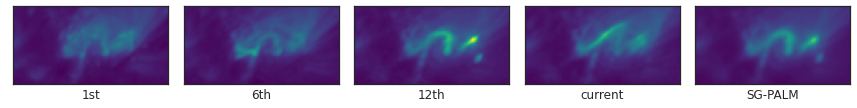}  \\
    \includegraphics[width=0.85\textwidth]{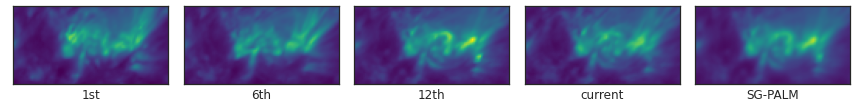} \\
    \includegraphics[width=0.85\textwidth]{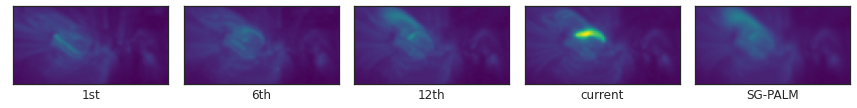} \\
    \includegraphics[width=0.85\textwidth]{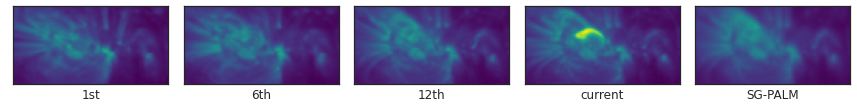}
\end{tabular}
\caption{Examples of frames at various timestamps of videos preceding the predictions of the last frames (last column) that lead to MX flares. Here, the first two rows correspond to the same video as the last two rows in Figure~\ref{fig:predicted_vs_real_img}. Note that the prediction tasks are difficult in these two extreme cases, where there are dramatic changes from the $12$th to the current ($13$th) frames.}
\label{fig:mx_predicted_vs_prev_img}
\end{figure}

\begin{figure}[tbh!] 
\centering
\begin{tabular}{@{}c@{}}
    Predicted examples - B  \\
    \includegraphics[width=0.85\textwidth]{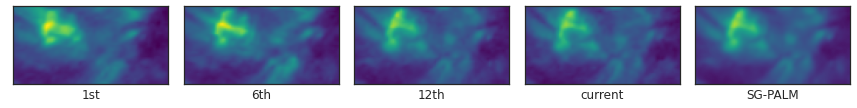} \\
    \includegraphics[width=0.85\textwidth]{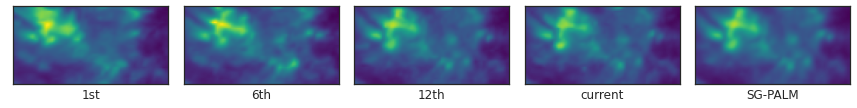} \\
    \includegraphics[width=0.85\textwidth]{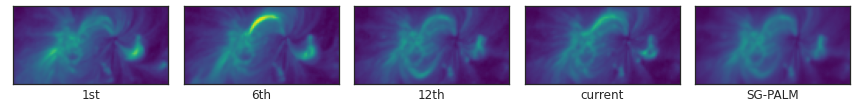} \\
    \includegraphics[width=0.85\textwidth]{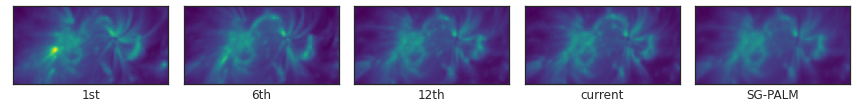}
\end{tabular}
\caption{Examples of frames at various timestamps of videos preceding the predictions of the last frames (last column) that lead to B flares. Here, the first two rows correspond to the same video as the first two rows in Figure~\ref{fig:predicted_vs_real_img}. Note that the prediction tasks are easier than those illustrated in Figure~\ref{fig:mx_predicted_vs_prev_img}, since the transitions near the end of the videos are much smoother.}
\label{fig:b_predicted_vs_prev_img}
\end{figure}

\subsection{Illustration of the Estimated Sylvester Generating Factors}
Figure~\ref{fig:psih_all} illustrates the patterns of the estimated Sylvester generating factors ($\mat\Psi_k$'s) for each flare class. Here, the videos from both classes appear to form Markov Random Fields, that is, each pixel only depends on its close neighbors in space and time given all other pixels. This is demonstrated by observing that the temporal or each of the spatial generating factor, which can be interpreted as conditional dependence graph for the corresponding mode, has its energies concentrate around the diagonal and decay as the nodes move far apart (in space or time).

The spatial patterns are similar for different flares. Although the exact spatial patterns are different from one frame to another, they always have their energies being concentrated at certain region (i.e., the brightest spot) that is usually close to the center of the images. This is due to the way how these images were curated and pre-processed before analysis. On the other hand, the temporal structures are quite different. Specifically, B flares tend to have longer range dependencies, as the frames leading to these types flares are smooth, which is consistent with results from the previous section.

\begin{figure}[tbh!] 
\centering
\begin{subfigure}[t]{0.47\linewidth}
\centering
\includegraphics[width=\textwidth]{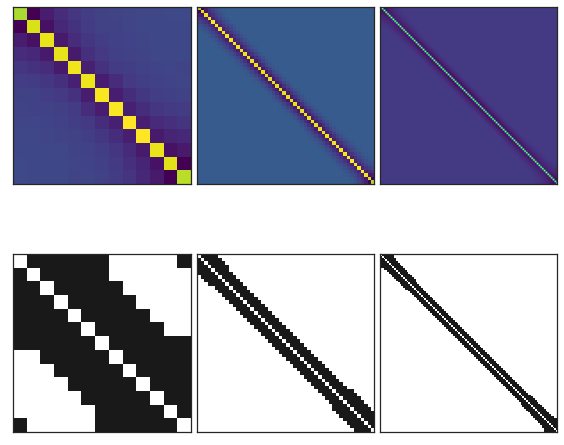}
\caption{Estimated precision matrices - B flares}
\end{subfigure}
\begin{subfigure}[t]{0.47\linewidth}
\centering
\includegraphics[width=\textwidth]{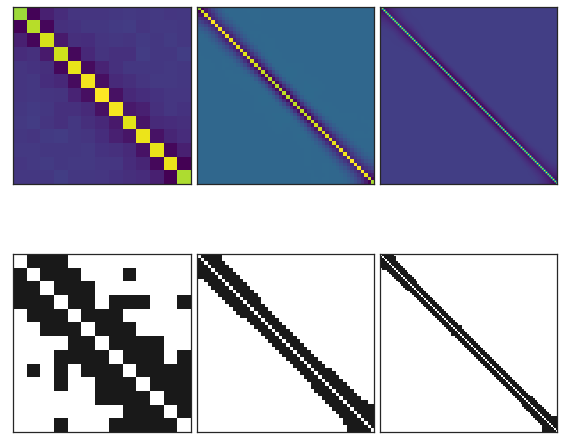}
\caption{Estimated precision matrices - M/X flares}
\end{subfigure}
\caption{Estimated spatial and two (longitude and latitude) temporal Sylvester generating factors for B and MX solar flares, along with their off-diagonal sparsity patterns (second row in each subplot). Both classes exhibit autoregressive dependence structures (across time or space). Note the significant difference in the temporal components, where the B flares exhibit longer range dependency. This is consistent with the smooth transition property of the corresponding videos as illustrated previously.}
\label{fig:psih_all}
\end{figure}

\chapter{Appendix of Chapter IV}
\label{app:enkf} 
In this Appendix, we discuss the blocked versions of the Possion-AR(1) and convection-diffusion processes.

\paragraph{Poisson-AR(1) Process.}
The first extension, which we call the Poisson-AR(1) process, imposes an autoregressive temporal model of order 1 on the source function $f$ in the Poisson equation (\ref{eq:Poisson}). Specifically, we say a sequence of discretized spatial observations $\{\bU^k \in \bbR^{d_1\times d_2}\}_k$ indexed by time step $k=1,\cdots,T$ is from a Poisson-AR(1) process if
\begin{align}
    &(\mat{A}_{d_1} \oplus \mat{A}_{d_2}) \vecto(\mat{U}^k) = \vecto(\mat{Z}^k), \\
    &\vecto(\mat{Z}^k) = a \vecto(\mat{Z}^{k-1}) + \vecto(\mat{W}^k),\quad |a|<1, \label{eq:ar1}
\end{align}
where $\{\mat{W}^k \in \bbR^{d_1\times d_2}\}_k$ is spatiotemporal white noise, i.e., $W_{i,j}^k \sim \mathcal{N}(0, \sigma^2_w)$, i.i.d.
Assuming $\mat{Z}^0 = \mathbf{0}$ and defining the $T$-by-$T$ matrix
\begin{equation*}
    \mat{B} = \begin{bmatrix}
        1 & -a &  &  \\
         & 1 & \ddots  &  \\
        & & \ddots & -a \\
        & & & 1
    \end{bmatrix},
\end{equation*}
the above linear system of equations can be written as $(\mat{A}_{d_1} \oplus \mat{A}_{d_2}) \mat{U} \mat{B} = \mat{W}$, or equivalently,
\begin{align}
    \left( \mat{B}^T \otimes (\mat{A}_{d_1} \oplus \mat{A}_{d_2}) \right) \vecto(\mat{U}) = \vecto(\mat{W}),
    \label{eq:poisson_ar_discrete}
\end{align}
where $\mat{U} = [\vecto(\mat{U}^1) \vecto(\mat{U}^2) \dots \vecto(\mat{U}^T)] \in \Reals^{d_1d_2 \times T}$ and $\mat{W}$ is defined likewise. The inverse covariance of $\mat{U}$, despite having a large size of $d_1d_2T \times d_1d_2T$, is sparse and has a mixed Kronecker sum and product structure.

\paragraph{Convection-diffusion Process.} 
The second time-varying extension of the Poisson PDE model (\ref{eq:Poisson}) is based on the convection-diffusion process~\cite{chandrasekhar1943stochastic}
\begin{equation}
    \pdv{u}{t} = \theta \sum_{i=1}^2 \pdv[2]{u}{x_i} - \epsilon \sum_{i=1}^2 \pdv{u}{x_i}.
\end{equation}
Here, $\theta > 0$ is the diffusivity; and $\epsilon \in \Reals$ is the convection velocity of the quantity along each coordinate. Note that for simplicity of discussion here, we assume these coefficients do not change with space and time (see, \citet{stocker2011introduction}, for example, for a detailed discussion). These equations are closely related to the Navier-Stokes equation commonly used in stochastic modeling for weather and climate prediction~\citep{chandrasekhar1943stochastic,stocker2011introduction}. Coupled with Maxwell's equations, these equations can be used to model magneto-hydrodynamics~\citep{roberts2006slow}, which characterize solar activities including flares.


A solution of Equation~\eqref{eqn:convec-diff} can be approximated similarly as in the Poisson equation case, through a finite difference approach. Denote the discrete spatial samples of $u(\mat{x},t)$ at time $t_k$ as a matrix $\mat{U}^k\in\bbR^{d_1 \times d_2}$. We obtain a discretized update propagating $u(\mat{x},t)$ in space and time, which locally satisfies
\begin{equation}
\begin{aligned}
    \frac{U_{i,j}^k - U_{i,j}^{k-1}}{\Delta t} = &\ \theta \left(\frac{U_{i+1,j}^k + U_{i-1,j}^k + U_{i,j+1}^k + U_{i,j-1}^k - 4U_{i,j}^k}{h^2}\right) \\
    &- \epsilon \left(\frac{U_{i+1,j}^k - U_{i-1,j}^k + U_{i,j+1}^k - U_{i,j-1}^k}{2h}\right),
\end{aligned}
\end{equation}
where $\Delta t = t_{k+1} - t_{k}$ is the time step and $h$ is the mesh step (spatial grid spacing). 
Similarly to the Poisson-AR(1) process, in the following, we consider a ``blocked'' version of the convection-diffusion process. 

We define the first-order and second-order discretized differential operators, denote by $\mat{D}$ and $\mat{A}$, respectively: 
\begin{equation*}
\mat{D} = 
    \begin{bmatrix}
    1   &     &       &   \\
    -1  &   1   & &   \\
        & \ddots& \ddots& \\
        &       &   -1  & 1
    \end{bmatrix}, \quad
\mat{A} = 
    \begin{bmatrix}
    2   &   -1  &       &   \\
    -1  &   2   & \ddots&   \\
        & \ddots& \ddots& -1\\
        &       &   -1  & 2
    \end{bmatrix}.
\end{equation*}
Then, Equation~\eqref{eqn:convec-diff-discrete} can be written as
\begin{equation}
\begin{aligned}
    \frac{1}{\Delta t}(\mat{D} \otimes \mat{I} \otimes \mat{I}) \vecto{\mat{U}} =& \frac{\theta}{h^2}(\mat{I} \otimes \mat{A} \otimes \mat{I} + \mat{I} \otimes \mat{I} \otimes \mat{A}) \vecto{\mat{U}} \\
    &- \frac{\epsilon}{2h} (\mat{I} \otimes \mat{D} \otimes \mat{I} + \mat{I} \otimes \mat{I} \otimes \mat{D}) \vecto{\mat{U}},
\end{aligned}
\end{equation}
where $\mat{U} = [\vecto(\mat{U}^1) \vecto(\mat{U}^2) \dots \vecto(\mat{U}^T)] \in \Reals^{d_1d_2 \times T}$. Assuming the process is driven by some white noise $\mat{W}$, similarly defined as in the Poisson-AR equation, the inverse covariance of $\mat{U}$ is again sparse and has a mixed Kronecker sum and product structure.

We consider a spatio-temporal process (2D space + time) on a $8 \times 8$ spatial grid, and generated instances of state trajectories, which we call true states, according to the Poisson-AR(1) and the convection-diffusion dynamics for $T = 50$ time steps. Several realizations of the true state variables are shown in Figure~\ref{fig:enkf_states} to illustrate how the states evolve over time under each model. 

We generated $N = 50$ independent realizations of random tensors of dimension $64 \times 50$ and estimated the state covariance / inverse covariance (with $K = 2$) using several sparse (multiway) inverse covariance estimation methods described in Section~\ref{sec:enkf-background} of Chapter~\ref{ch:enkf}, including Glasso, KPCA, 
Tlasso, 
TeraLasso, 
SG-PALM. Note that none of the above-mentioned models operate under the true generative processes (i.e., there is model mismatch with the data).
Here, the sparsity-regularized methods are all implemented with an $\ell_1$ penalty function, and the penalty parameters were selected similarly and guided by the theoretical results in Table~\ref{tab:enkf_guarantees}. For example, for SG-PALM, we use a penalty parameter of $\lambda_k = C \sqrt{\frac{d_k \log d}{N}}$ where $C$ is chosen by optimizing a normalized Frobenius norm error between the estimate and the truth, over a range of $\lambda$ values parameterized by $C$. For the KPCA alorithm, both the nuclear norm penalty parameter and the separation rank are selected by optimizing a normalized Frobenius norm error via grid search.

Summary of the estimation accuracy in terms of the recovery of the matrix entries measured normalized Frobenius norm error as well as the recovery of the sparsity patterns measured by Mathews Correlation Coefficient~\citep{matthews1975comparison} are reported in Table~\ref{tab:sythetic_perf}. In Figure~\ref{fig:poisson_inv_cov_struct_compare} and \ref{fig:convec_diff_inv_cov_struct_compare} we show the true and the estimated inverse covariance matrices obtained for all the methods except KPCA, under both the Poisson-AR (panel (a)) and the Convection-Diffusion processes (panel (b)). The inverse covariances under both generating processes admit structures with a mix of Kronecker sums and Kronecker products of sparse matrices. In both of the cases, the SG-PALM produces the estimates with the closest and richest structures, which we believe is due to the nature of the Sylvester graphical model that imposes a squared KS structure on the precision matrix. Tlasso has comparable performances and achieves the best matrix approximation error under the convection-diffusion generating process. This might be due to the fact that the KP model corresponds to an underlying spatio-temporal autoregressive process. TeraLasso seems to produce the biggest model mismatch as indicated by the MCC scores. Although Glasso works reasonable well given that it ignores any multiway structure, this also leads to an increased computational cost for the vector-variate estimating algorithm. In Figure~\ref{fig:cov_struct_compare}, we also show compare the true covariance matrix and the estimate obtained by KPCA. Here, although the KPCA model does not match the underlying generating process, the estimates were able to capture certain blocking patterns that similarly exist in the true covariance.

\begin{figure}[!tbh]
    \centering
    \includegraphics[width=0.3\textwidth]{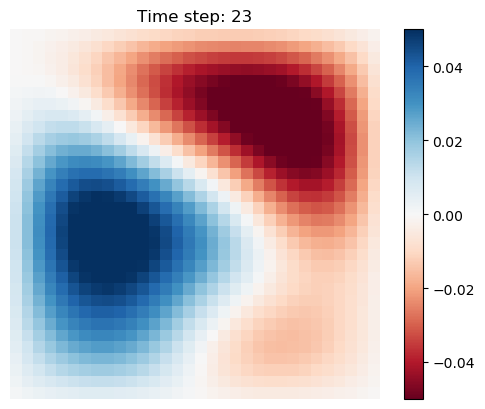}
    \includegraphics[width=0.3\textwidth]{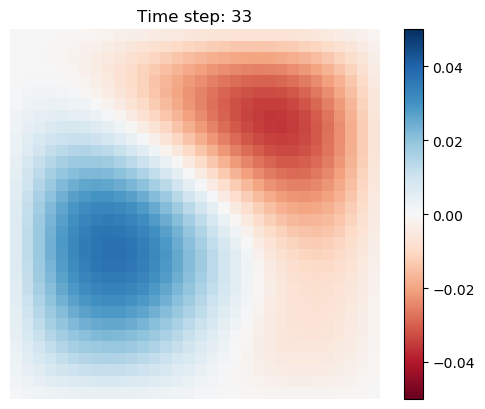}
    \includegraphics[width=0.3\textwidth]{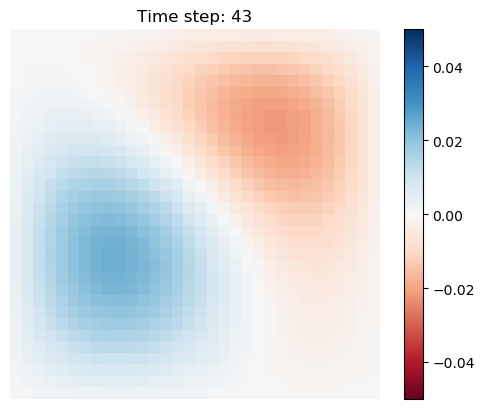}
    \includegraphics[width=0.3\textwidth]{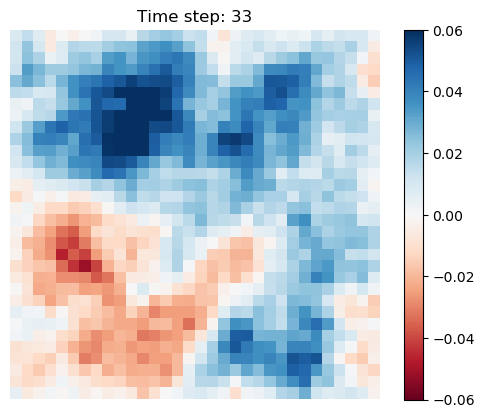}
    \includegraphics[width=0.3\textwidth]{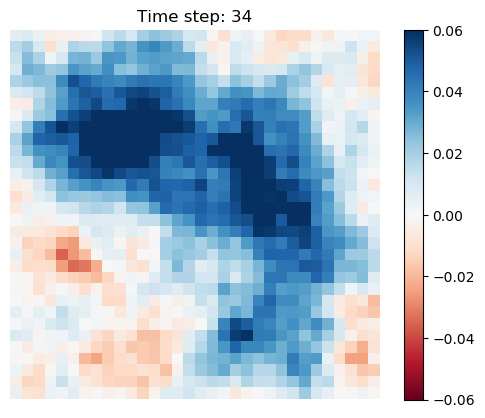}
    \includegraphics[width=0.3\textwidth]{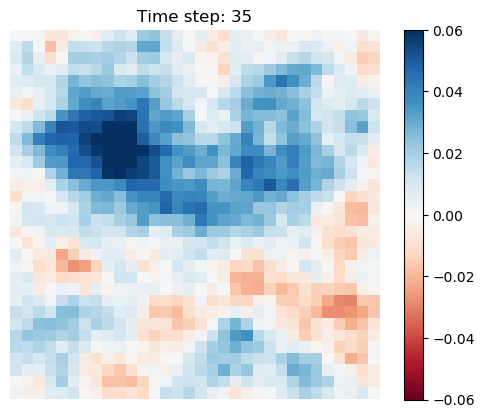}
    \caption{2D Convection-diffusion (top) and Poisson-AR(1) state variables at three different time steps.}
    \label{fig:enkf_states}
\end{figure}

\begin{figure}
    \centering   
    \begin{subfigure}{\textwidth}
    \centering 
    \includegraphics[width=0.8\textwidth]{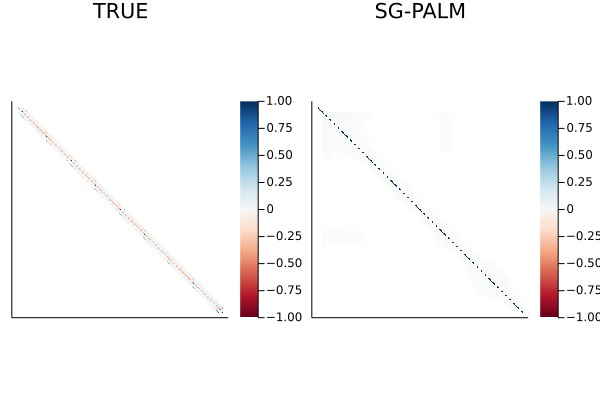}
    \caption{Poisson-AR inverse covariance (left) and the SG-PALM estimate (right).}
    \end{subfigure}
    \begin{subfigure}{\textwidth}
    \centering 
    \includegraphics[width=0.9\textwidth]{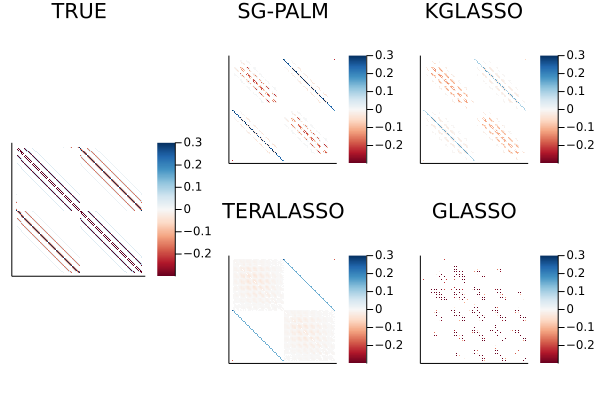}
    \caption{Zoomed-in (middle $128$ rows and columns) Poisson-AR inverse covariance structure (left) and the estimate obtained by SG-PALM, KGlasso, Glasso, TeraLasso (right, clockwise).}
    \end{subfigure}
    \caption{Inverse covariance structures for Poisson-AR(1) and its estimates. Here, white entries indicate zeros in the inverse covariance matrices. The zoomed-in plots show two temporal blocks (each of size $64 \times 64$) of spatial inverse correlation structures with the diagonal elements removed for clearer visualization. SG-PALM and the associated Sylvester graphical model produce the richest structures.
    }
    \label{fig:poisson_inv_cov_struct_compare}
\end{figure}

\begin{figure}
    \centering   
    \begin{subfigure}{\textwidth}
    \centering 
    \includegraphics[width=0.8\textwidth]{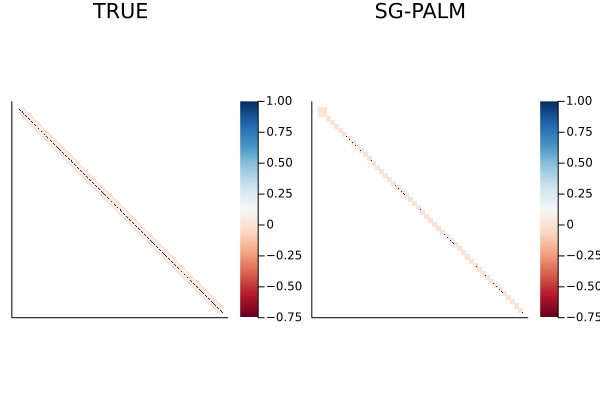}
    \caption{Convection-diffusion inverse covariance (left) and the SG-PALM estimate (right).}
    \end{subfigure}
    \begin{subfigure}{\textwidth}
    \centering 
    \includegraphics[width=0.9\textwidth]{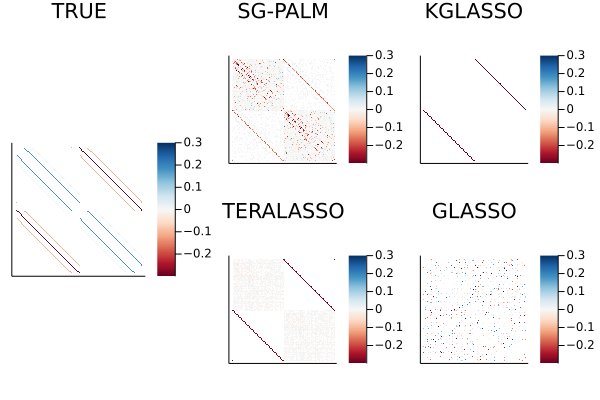}
    \caption{Zoomed-in (middle $128$ rows and columns) convection-diffusion inverse covariance (left) and the estimates by SG-PALM, KGlasso, Glasso, TeraLasso (right, clockwise).}
    \end{subfigure}
    \caption{Inverse covariance structures for the Convection-Diffusion and its estimates. Here, white entries indicate zeros in the inverse covariance matrices. The zoomed-in plots show two temporal blocks ($64 \times 64$) of spatial inverse correlation structures with the diagonal elements removed for clearer visualization. SG-PALM and the associated Sylvester graphical model produce the richest structures.
    }
    \label{fig:convec_diff_inv_cov_struct_compare}
\end{figure}

\begin{figure}
    \centering   
    \begin{subfigure}{\textwidth}
    \centering\includegraphics[width=0.8\textwidth]{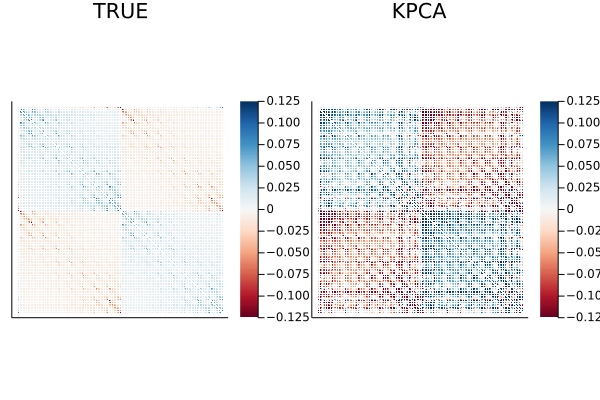}
    \caption{Poisson-AR covariance structure (left) and the estimate obtained by KPCA (right).}
    \end{subfigure}
    \begin{subfigure}{\textwidth}
    \centering\includegraphics[width=0.8\textwidth]{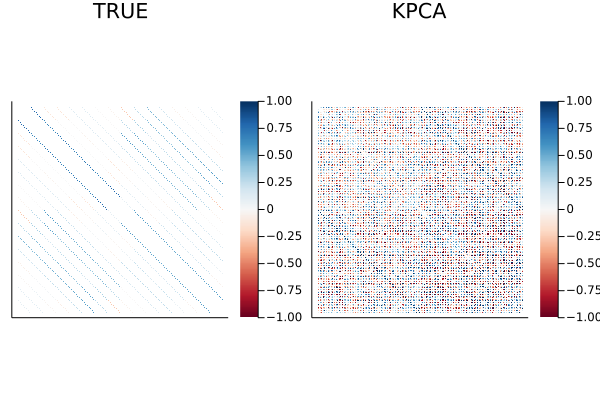}
    \caption{Convection-Diffusion covariance (left) and the estimate obtained by KPCA (right).}
    \end{subfigure}
    \caption{Visualizations of the middle $128$ rows and columns of the covariance structures for Poisson-AR(1) and Convection-Diffusion dynamics and their estimates, which show two temporal blocks of spatial correlation structures, each of size $64 \times 64$, with the diagonal elements removed for clearer visualization of the pattern. Here, white entries indicate zeros in the covariance matrices. Since the covariances are not sparse in general, all matrices are thresholded for clearer inspections of patterns.}
    \label{fig:cov_struct_compare}
\end{figure}

\begin{table}[!tbh]
\centering
\caption{Comparisons of performances measured by $\log(\|\widehat{\bSigma} - \bSigma\|_F \text{\textbackslash} \|\bSigma\|_F)$ for KPCA as well as $\log(\|\widehat{\bOmega} - \bOmega\|_F \text{\textbackslash} \|\bOmega\|_F)$ and the Mathews Correlation Coefficient (MCC) for SG-PALM, Tlasso, TeraLasso, Glasso. The MCC is a measure of the quality of sparsity recovery considered as a binary classification problem, where $\pm 1$ indicates perfect agreement or disagreement between the truth and the estimation. Here the Frobenius norm errors are included in the first row under each generating type while the MCCs are in the second row. Note that the best performers under each type/criteria are highlighted.}
\label{tab:sythetic_perf}
\begin{tabular}{|c|c||c|c|c|c|c|}
 \multicolumn{7}{c}{} \\
 \hline
 Type & Metric & \textbf{SG-PALM} & \textbf{KGlasso} & \textbf{TeraLasso} & \textbf{Glasso} & \textbf{KPCA}\\
 \cline{1-7} 
 \multirow{2}{*}{P-AR} & Fnorm & $\mathbf{-0.2622}$ & $1.1777$ & $0.6312$ & $0.9775$ & $0.3289$\\
 & MCC & $\mathbf{0.4300}$ & $0.3395$ & $0.2061$ & $0.0560$ & N/A\\
\hline
 \multirow{2}{*}{C-D} & Fnorm & $\mathbf{-0.0420}$ & $1.4919$ & $-0.0208$ & $2.2041$ & $0.0642$\\
 & MCC & $\mathbf{0.2122}$ & $0.1884$ & $0.2018$ & $0.0349$ & N/A\\
 \hline
\end{tabular}
\end{table}

Computational efficiencies of the various covariance/precision estimation algorithms are also vitally important in practice to facilitate real-time tracking of physical systems. Table~\ref{tab:blocked_enkf_runtime} shows the runtime of different covariance and inverse covariance estimation algorithms for the synthetic experiments. It shows that by recognizing and exploiting multiway structures in the data, sparse multiway inverse covariance estimation methods, TeraLasso, Tlasso, and SG-PALM significantly reduce the runtime complexity of Glasso that ignores such special multiway structures. Remark that KPCA runs considerably slower than other methods as it involves expensive singular value decomposition of a large-dimensional re-arranged sample covariance matrix of the data.

\begin{table}[tbh!]
\centering
\caption{Runtime (in seconds) of estimating spatio-temporal (inverse) covariance matrices of size $d \times 50$, where $d$ is varying, using various algorithms. Comparisons under various problem sizes (i.e., different $d$ and $N$) are shown. Note the sparse multiway precision models (SG-PALM, KGlasso, TeraLasso) are comparably fast and are all faster than Glasso (for large problems) and KPCA.}
\label{tab:blocked_enkf_runtime}
\begin{tabular}{|p{0.4cm}|p{0.5cm}||r|r|r|r|r|}
 \multicolumn{7}{c}{} \\
 \hline
 \multirow{2}{*}{$d$} & \multirow{2}{*}{$N$} & \textbf{Glasso} & \textbf{SG-PALM} & \textbf{TeraLasso} & \textbf{KGlasso} & \textbf{KronPCA}\\
 \cline{3-7} 
 && \textbf{sec} & \textbf{sec} & \textbf{sec} & \textbf{sec} & \textbf{sec} \\
 \hline
 \multirow{3}{*}{$8^2$} & $25$ &
$0.40 (0.20)$ & $0.46 (0.15)$ & $0.15 (0.35)$ & $0.65(0.11)$ & $37.22 (0.20)$ \\
 \cline{3-7}
 & $50$ & 
$0.48 (0.21)$ & $0.47 (0.08)$ & $0.22 (0.50)$ & $0.70 (0.10)$ & $38.22 (0.55)$ \\
 \cline{3-7}
 & $100$ & 
$0.76 (0.05)$ & $0.44 (0.13)$ & $0.26 (0.28)$ & $0.69 (0.30)$ & $39.09 (1.05)$\\
 \cline{1-7}
 \multirow{3}{*}{$16^2$} & $25$ &
$6.43(1.45)$ & $3.37 (1.09)$ & $5.38 (0.58)$ & $5.14 (1.99)$ & $495.47 (2.69)$ \\
 \cline{3-7}
 & $50$ & 
$9.12 (0.98)$ & $3.27 (1.81)$ & $4.62 (1.98)$ & $3.39 (2.00)$ & $516.64 (2.19)$ \\
 \cline{3-7}
 & $100$ & 
 $11.84 (2.01)$ & $4.85 (1.10)$ & $6.71 (0.72)$ & $5.67 (0.57)$ & $498.04 (4.01)$ \\
 \hline
\end{tabular}
\end{table}

\chapter{Appendix of Chapter V}
\label{app:gdtm} 
\section{Additional details of the Twitter latent Dirichlet allocation (T-LDA) algorithm}\label{supp:tlda}
The generation processes for a T-LDA and an LDA are illustrated side-by-side in Figure~\ref{fig:plate}. Here, the key differences exhibited in T-LDA are aggregation (pooling tweets from users) and regularization (restricting a tweet to be generated from only one topic). In our study, the aggregation is done by pooling tweets generated from the same day.

\begin{figure}[thb!]
\centering
    \begin{subfigure}{0.45\textwidth}
    \begin{tikzpicture}[x=1.5cm,y=1cm]
      \node[obs]                   (W)      {$w_{u,s,n}$}; %
      \node[latent, above=of W]    (Z)      {$z_{u,s}$}; %
      \node[latent, above=of Z]     (theta) {$\theta_u$};
      \node[latent, left=of W]    (beta)   {$\beta_k$};
    
      \node[const, above=of theta] (alpha)  {$\alpha$}; %
      \node[const, left=of beta] (eta) {$\eta$};
    
      \factor[left=of W]     {W-f}     {above:Multi} {} {} ; %
      \factor[above=of Z]     {Z-f}     {left:Multi} {} {} ; %
      \factor[above=of theta]   {theta-f}   {left:Dir} {} {} ; %
      \factor[left=of beta]   {beta-f}   {above:Dir} {} {} ; %
    
      \factoredge {Z} {W} {};
      \factoredge {alpha}  {theta-f} {theta}; %
      \factoredge {theta}  {Z-f} {Z}; %
      \factoredge {eta} {beta-f} {beta}; %
      \factoredge {beta}  {W-f} {W}; %
    
      \plate {plate1} {
        (W)
      } {$N$}
      \plate {plate2} { %
        (plate1)
        (W) %
        (Z)
      } {$S$}; %
      \plate {} { %
        (plate2) %
        (Z)%
        (theta)
      } {$U$} ; %
      \plate {} { %
        (beta) %
      } {$K$} ; %
    \end{tikzpicture}
    \end{subfigure}
    \begin{subfigure}{0.45\textwidth}
    \begin{tikzpicture}[x=1.5cm,y=1.3cm]
      \node[obs]                   (W)      {$w_{d,n}$}; %
      \node[latent, above=of W]    (Z)      {$z_{d,n}$}; %
      \node[latent, above=of Z]     (theta) {$\theta_d$};
      \node[latent, left=of W]    (beta)   {$\beta_k$};
      
      \node[const, above=of theta] (alpha)  {$\alpha$}; %
      \node[const, left=of beta] (eta) {$\eta$};
    
      \factor[left=of W]     {W-f}     {above:Multi} {} {} ; %
      \factor[above=of Z]     {Z-f}     {left:Multi} {} {} ; %
      \factor[above=of theta]   {theta-f}   {left:Dir} {} {} ; %
      \factor[left=of beta]   {beta-f}   {above:Dir} {} {} ; %

      \factoredge {Z} {W} {};
      \factoredge {alpha}  {theta-f} {theta}; %
      \factoredge {theta}  {Z-f} {Z}; %
      \factoredge {eta} {beta-f} {beta}; %
      \factoredge {beta}  {W-f} {W}; %
    
      \plate {plate1} { %
        (W) %
        (Z) %
      } {$N$}; %
      \plate {} { %
        (plate1) %
        (theta)%
      } {$D$} ; %
      \plate {} { %
        (beta) %
      } {$K$} ; %
    \end{tikzpicture} 
    \end{subfigure}
\caption{Plate notation comparison for the Twitter Latent Dirichlet Allocation (T-LDA) (left) and the standard Latent Dirichlet Allocation (LDA) (right) models. Here nodes are random variables; edges indicate dependence through probability distributions (e.g., Dirichlet or multinomial). Shaded nodes are observed; unshaded nodes are latent. Plates indicate replicated variables. Note that the T-LDA model aggregates tweets from each user into a document and constrains each tweet to be drawn from only one topic.}
\label{fig:plate}
\end{figure}
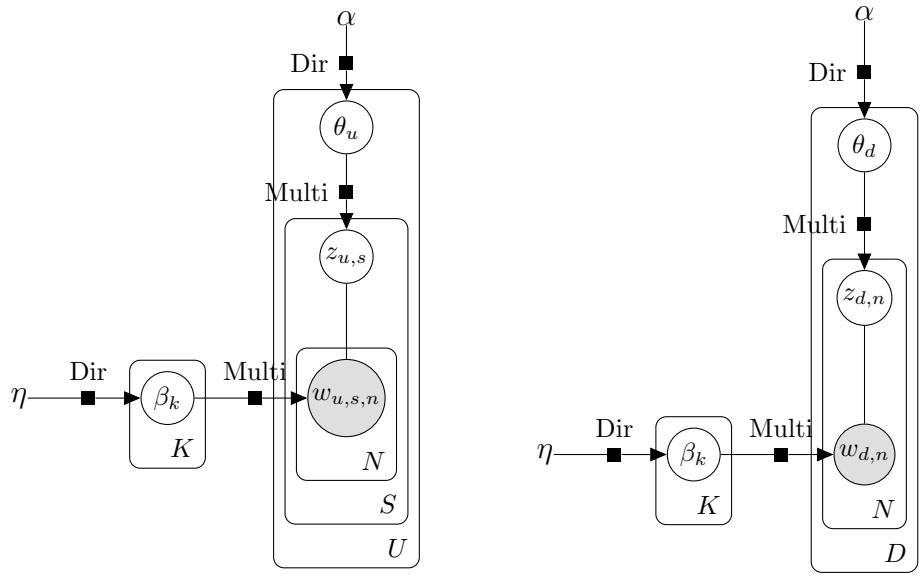

All numerical results presented in the article were produced with the following implementation details of the T-LDA algorithm: the collapsed Gibbs sampler has been run for $2000$ iterations with the first $1000$ samples discarded as burn-in. The latent variable $\beta$ is assumed to be symmetric Dirichlet with hyperparameter $\eta=0.01$ for all topics; and $\theta$ is assumed to be symmetric Dirichlet with hyperparameter $\alpha=0.5$ for all time stamps. Additionally, for weakly-supervised T-LDA implemented on the TalkLife data, additional weights are added to $\eta$ such that $\eta_n = 0.01 + w_n$ for each seed word $n$ and $\eta_n = 0.01$ otherwise, where the details of the weights can be found in Section~\ref{supp:talklife_data}.

\clearpage

\section{Summary statistics characterizing shortest paths}\label{supp:path-skips-summary}
To characterize the smoothness and continuity of the learned shortest paths of topics, we present summaries of the `skips' (days where there are no topics connected to either a topic immediately before or after the current timestamp) they made. Table~\ref{tab:path-skips-summary} depicts the number of skips and the length of the skips for four topic paths (see Appendix~\ref{supp:phate-dict} for details on the path names). We note that the length of a whole path (number of topics connected) could be different because 1) the different numbers of skips, and 2) the different time span as some topics appeared only for a certain time range (e.g., the wash hands topic). The lengths of those paths shown in the table are: COVID NEWS (presidential election), $70$; COVID (health care), $58$; STAY HOME (executive order), $59$; SANITIZING (wash hands) $19$. Clearly, longer paths could make longer skips. However, the paths remain fairly continuous (small numbers of short skips) during their time span. This is partly due to the corpora smoothing being applied--the topics learned at time $t$ should usually be very similar to those learned at nearby timestamps.

\begin{table}[!tbh]
\caption{Summary of the number of skips along with the length of those skips for four different topic paths. The paths are discovered by the shortest path algorithm using $10$-nearest neighbor weighted graph. Note that all paths exhibit small numbers of short-length skips.}
\label{tab:path-skips-summary}
\centering
\begin{tabular}{@{} *{5}{c} @{}}
\headercell{Path Name} & \multicolumn{4}{c@{}}{Days Skipped}\\
\cmidrule(l){2-5}
& 1 & 2 & 3 & 4 \\ 
\midrule
  COVID (health care)  & 10 & 0 &  1 &  1  \\
  COVID NEWS (presidential election)  & 3 & 1 & 2 & 2  \\
  SANITIZING (wash hands)  & 1 & 1 & 0 & 0  \\
  STAY HOME (executive order) & 4 & 0 & 0 & 0  \\
\end{tabular}
\end{table}

\clearpage

\section{Shortest path on MDS, ISOMAP, and PHATE}\label{supp:mds-isomap-phate-comparison}
We desire a low-dimensional embedding that preserves the trajectory structures of shortest paths, so that we can visualize and interpret any results computed using methods described in Section~\ref{sec:temporal}. Here we compare PHATE with MDS and ISOMAP. MDS does not take any local structural information into account when building the embedding; ISOMAP applies MDS using shortest path distances computed on neighborhood graphs; finally, PHATE applies MDS on potential distances computed on neighborhood graphs while striking a balance between local and global trajectory structures. Figure~\ref{fig:mds-isomap-phate} shows that MDS failed to identify any path between two points. ISOMAP identifies a cleaner structure but there are interrupting background points on the path. PHATE identifies a clean path that is also well separated from background points. The comparison also highlights the importance of working with neighborhood graphs, instead of the fully connected graph, when trying to identify local structures in data.

\begin{figure}[!tbh]
    \centering
    \begin{subfigure}{0.5\textwidth}
      \centering
      \includegraphics[width=\textwidth]{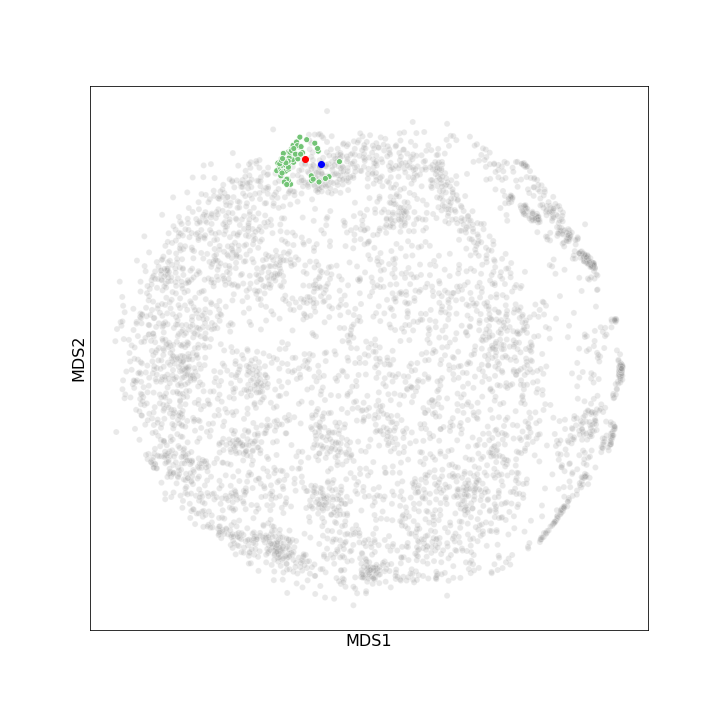}
    \end{subfigure}
    \hspace{-20pt}
    \begin{subfigure}{0.5\textwidth}
      \centering
      \includegraphics[width=\textwidth]{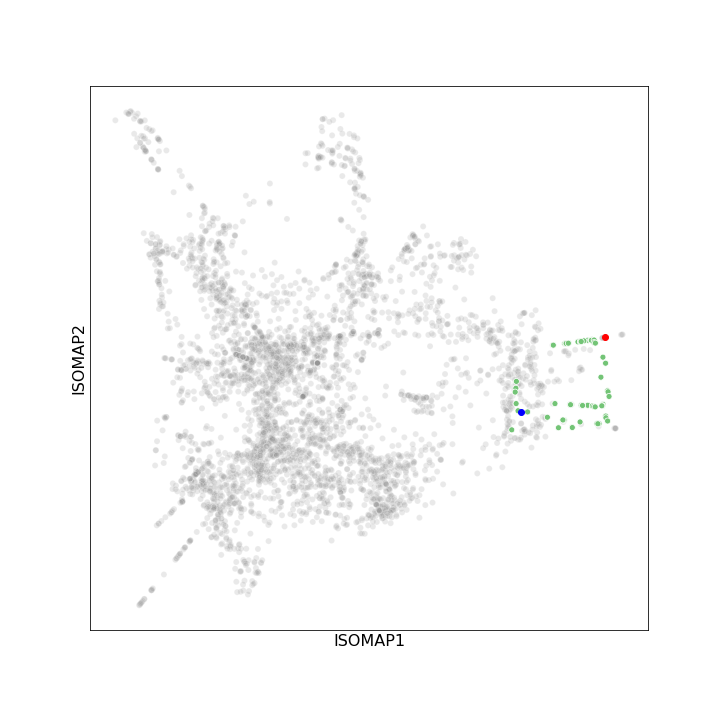}
    \end{subfigure}
    \vspace{-20pt}
    \begin{subfigure}{0.5\textwidth}
      \centering
      \includegraphics[width=\textwidth]{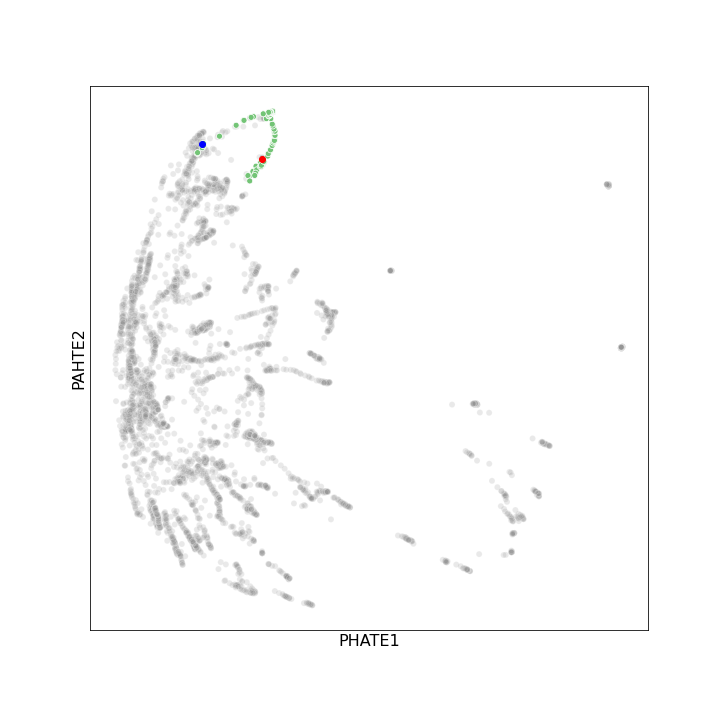}
    \end{subfigure}
    \caption{Multidimensional scaling (MDS), isometric feature mapping (ISOMAP), and potential of heat-diffusion for affinity-based transition embedding (PHATE) for the same set of word distributions. A shortest path computed on $10$ nearest neighbors graph is highlighted on each embedding with red and blue points indicating the starting and ending points of the path. Note that PHATE identifies the cleanest path connecting the red and blue points, with minimal background noises (grey points) included in between.}
    \label{fig:mds-isomap-phate}
\end{figure}

\clearpage

\section{Detailed descriptions of PHATE}\label{supp:phate-details}
Algorithm~\ref{alg:phate} outlines the steps for obtaining a low-dimensional embedding using PHATE with the Hellinger distance metric.

\begin{algorithm}
\begin{minipage}{0.9\linewidth}
\begin{algorithmic}[1]
\caption{PHATE with Hellinger distance}
\label{alg:phate}
\REQUIRE $N$ observations of some objects
    \STATE Compute pairwise Hellinger distance matrix (denoted as $D$) from all pairs of multinomial topic distributions (stored as columns in a matrix $X$).
    \STATE Compute $k$-nearest neighbor distance (denoted as $\epsilon_k(x)$) from each column of $X$.
    \STATE Compute local affinity matrix $K_{k,\alpha}$ from $D$ and $\epsilon_k$.
    \STATE Form a diffusion operator $P$, which is a Markov transition matrix computed by normalizing $K_{k,\alpha}$.
    \STATE Compute time scale via Von Neumann Entropy. The time scale is then used to diffuse $P$ to obtain $P^t$.
    \STATE Compute potential representation of the diffusion matrix as $U_t=-\log(P^t)$ and compute potential distance matrix $D_{U,t}$ from $U_t$.
    \STATE Apply MDS on $D_{U,t}$ to embed the data in lower dimension.
\ENSURE An $N \times L$ matrix that contains $L$-dimensional coordinates for each observation.
\end{algorithmic}
\end{minipage}
\end{algorithm}

In Algorithm~\ref{alg:phate} we use the Hellinger distance to compute $D$ and $\epsilon_k$. This ensures that PHATE is being used to perform dimension reduction on a statistical manifold~\citep{amari2012differential}.

The PHATE construction is based on computing local similarities between data points, and then diffusing through the data using a Markovian random-walk diffusion process to infer more global relations. The local similarities between points are computed by first computing pairwise distances and then transforming the distances into similarities, via a kernel named the $\alpha$-decaying kernel with locally adaptive bandwidth. It is defined as
\begin{equation}
    K_{k,\alpha}(x,y) = \frac{1}{2}\exp\Big(-\Big(\frac{\|x-y\|}{\epsilon_k(x)}\Big)^\alpha\Big) + \frac{1}{2}\exp\Big(-\Big(\frac{\|x-y\|}{\epsilon_k(y)}\Big)^\alpha\Big).
\end{equation}
Here the $k$-nearest neighbor distance $\epsilon_k$ is used to ensure that the bandwidth is locally adaptive and varies based on the local density of the data. The exponent $\alpha$ controls the rate of decay of the tails in the kernel $K_{k,\alpha}$. Setting $\alpha=2$ is equivalent to the use of a Gaussian kernel and choosing $\alpha>2$ results in lighter tails in the kernel. The kernel is then normalized by row-sums that results in a row-stochastic matrix $P=P_{k,\alpha}$ (the diffusion operator), which is used for following steps.

In Step 5, the diffusion operator is powered by a time scale $t$. In particular, for a data point $x$ and diffusion operator $P$, and let $\delta_x$ be the Dirac delta that is defined to be a row vector of length $N$ (length of the data) with a one at entry corresponding to $x$ and zero elsewhere. The $t$-step distribution of $x$ is the row in $P^t$ corresponding to $x$:
\begin{equation}
    p^t_x := \delta_xP^t = [P^t]_{(x,\cdot)}.
\end{equation}
This distribution captures multiscale (where $t$ serves as the scale) local neighborhoods of data points, where the local neighborhoods are explored by randomly walking or diffusing over the intrinsic manifold geometry of the data.  The scale parameter $t$ affects the embedding. It can be selected based on any prior knowledge of the data or, as proposed in~\citet{moon2019visualizing}, by quantifying the information in the powered diffusion operator with different values of $t$, via computing the Von Neumann Entropy~\citep{neumann2013mathematische,anand2011shannon} of the diffusion affinity, and choosing the one that explains the maximum amount of variability in the data.

Finally, a new type of distance, called the potential distance in~\citet{moon2019visualizing}, is recovered in the end from the powered diffusion operator, which is obtained by taking the negative log of the transition probabilities. This transforms these transition probabilities into the heat-potential context.

\clearpage

\section{Additional simulation studies for PHATE}\label{supp:phate-simulaiton}
To illustrate the idea of probability vectors on a sphere, in Figure~\ref{fig:hellinger-sphere-demo-3d} we present a simple example of a sphere in 3D and probability vectors (simulated as in Section~\ref{sec:phate}) lying on the sphere. The trajectories in this simulated example exhibit different progressive structures. In particular, the trajectory in dark blue evolves smoothly and remains roughly on the same path; the trajectory in brown exhibits a sharp turn in the direction at a certain position; finally, the trajectory in light blue behaves more chaotically and exhibits clustering structures. The PHATE embedding presented in Figure~\ref{fig:phate-hellinger-sphere} of Section~\ref{sec:phate} was able to uncover all these types of structures in low dimension.

\begin{figure}[!tbh]
    \centering
    \includegraphics[width=0.8\textwidth]{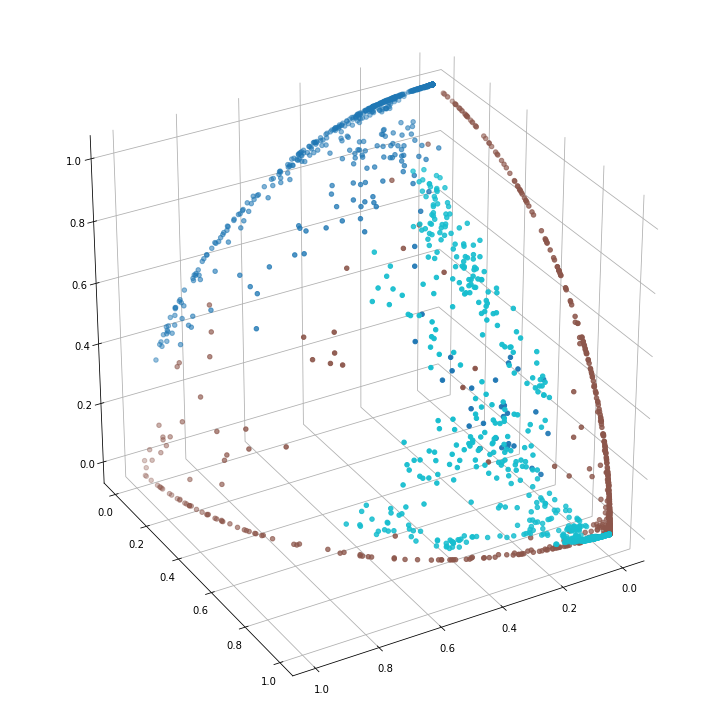}
    \caption{Three simulated trajectories of probability vectors on a sphere. Each color signifies a trajectory simulated using a specific $\sigma$ in the random-walk structure described in Section~\ref{sec:phate}. Here, three trajectories started at the same point exhibit different progressive structures: stable (dark blue), chaotic and clustering (light blue), and sharp transition (brown).}
    \label{fig:hellinger-sphere-demo-3d}
\end{figure}

To further demonstrate the advantage of PHATE over traditional methods for uncovering progressive structures, we present a similar example to that in~\citet{moon2019visualizing}, which uses artificial tree-structured data and compare principle component analysis (PCA), t-distributed stochastic neighbor embedding (t-SNE) and PHATE in constructing low-dimensional embedding. In particular, we generate tree-structured data with $10$ branches and $200$ dimensions, and each branch has length $300$. Thus, we have $3000$ observations of $200$-dimensional data, and the goal is to find a $2$-dimensional embedding for visualization. Figure~\ref{fig:pca-tsne-phate} shows the results of embedding for three different methods. PCA is good for finding an optimal linear transformation that gives the major axes of variation in the data. However, the underlying data structure in this case is nonlinear in which case PCA is not ideal. t-SNE is able to embed nonlinear data; however, it is optimized for cluster structure and as a result will destroy any continuous progression structure in the data. PHATE for this example separates the clusters and is able to clearly represent the trajectory structure of the data. Additionally, PHATE neatly captures the branching/splitting points of different trajectories. This feature is vital for our study of tweeting behaviors as we are interested in learning how different conversations converge to a similar one or diverge to different topics. 

\begin{figure}[!tbh]
    \centering
    \includegraphics[width=0.9\textwidth]{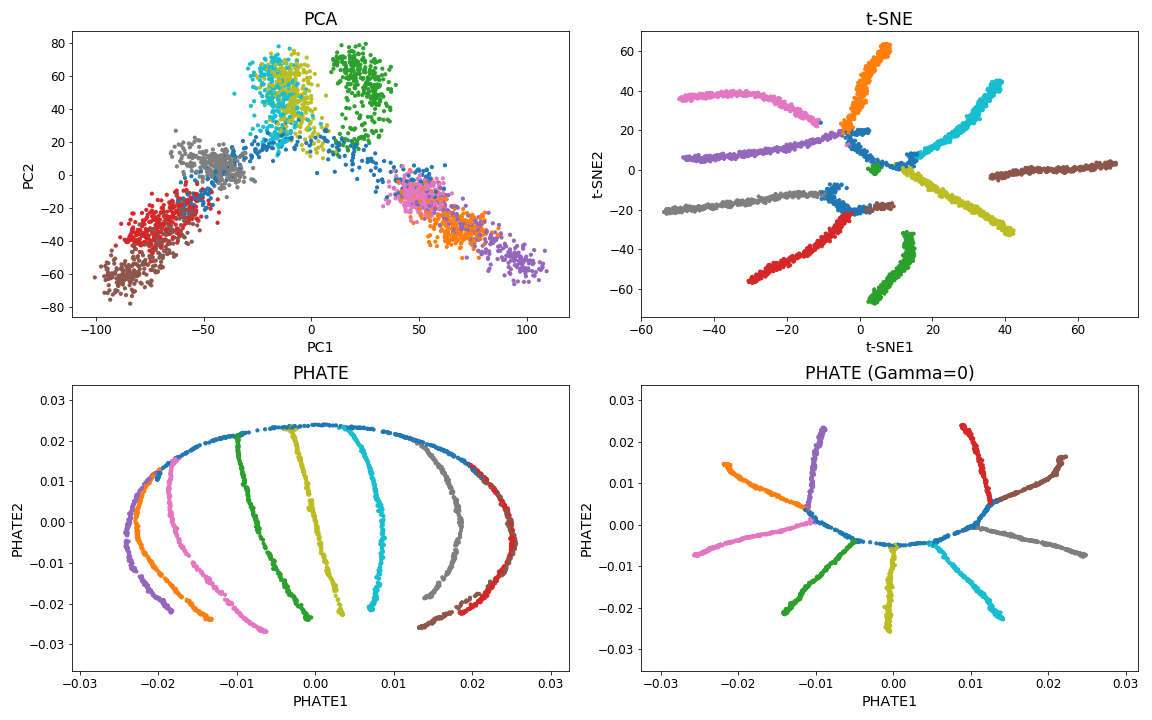}
    \caption{Comparison of principle component analysis (PCA), t-distributed stochastic neighbor embedding (t-SNE), and potential of heat-diffusion for affinity-based transition embedding (PHATE). Two versions of PHATE with different tuning parameters are illustrated. The data are $3000$ tree-structured observations with $10$ branches. Various branches are colored differently. Note that for this truly trajectory-based data, PHATE gives the clearest low-dimensional representation of the data.}
    \label{fig:pca-tsne-phate}
\end{figure}

Additionally, we also demonstrate that PHATE does not `create' spurious trajectories, although it does not preclude the existence of such structures. Here, $3000$ independent data points were simulated from a $3$-component (with weights $0.6,0.3,0.1$) $10$-dimensional Gaussian mixture model and transformed through softmax (i.e., $z_j \rightarrow \frac{\exp(z_j)}{\sum_{i=1}^{10} \exp(z_i)}, j=1,\dots,10$). Figure~\ref{fig:phate-random-no-trajectory} depicts $2$-dimensional embedding computed by PCA, t-SNE, uniform manifold approximation and projection (UMAP), and PHATE using Hellinger distance. Clearly, PHATE did not artificially `trajectorize' the data; t-SNE seems to perform the best in terms of clustering as it often tries to separate data as much as possible; UMAP separated the clusters well but generated artificial segments and trajectories in the embedding. 

\begin{figure}[!tbh]
    \centering
    \includegraphics[width=0.9\textwidth]{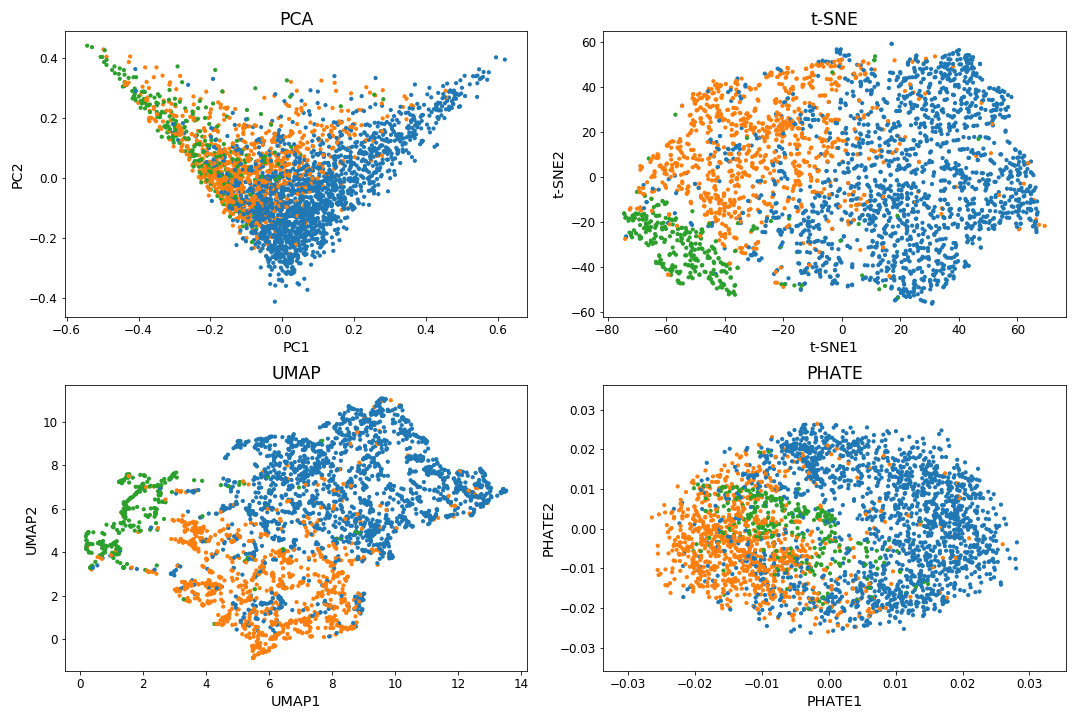}
    \caption{Comparison of principle component analysis (PCA), t-distributed stochastic neighbor embedding (t-SNE), uniform manifold approximation and projection (UMAP), and potential of heat-diffusion for affinity-based transition embedding (PHATE). Here $3,000$ independent data points were generated from a $3$-component (with weights $0.6,0.3,0.1$) $10$-dimensional Gaussian mixture model. Here, data were transformed via softmax to resemble a probability vector. Note that for this random nonstructured data, PHATE did not `create' spurious trajectories in the low-dimensional embedding.}
    \label{fig:phate-random-no-trajectory}
\end{figure}

Lastly, we compare PHATE (and other) embeddings using different distance metrics. In particular, we compute $2$-dimensional embeddings for the data generated in Section~\ref{sec:phate} using Euclidean and cosine distances/similarities. Figure~\ref{fig:phate-random-no-trajectory} depicts the results comparing PCA, t-SNE, UMAP, and PHATE. It shows that the Hellinger metric (for t-SNE, UMAP, and PHATE) outperforms the other two in terms of generating the clearest low-dimensional embedding that preserves the true data geometry. 

\begin{figure}[!tbh]
    \centering
    \begin{subfigure}{0.7\textwidth}
        \centering
        \includegraphics[width=\textwidth]{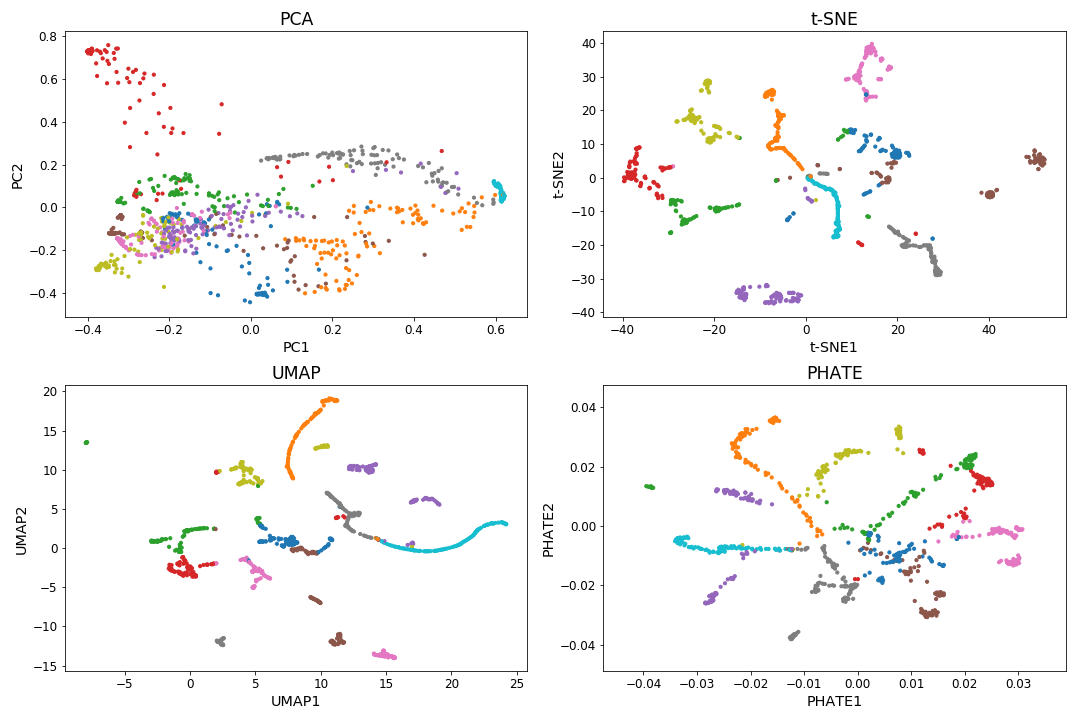}
        \caption{Embedding using Euclidean metric.}
    \end{subfigure}
    \begin{subfigure}{0.7\textwidth}
        \centering
        \includegraphics[width=\textwidth]{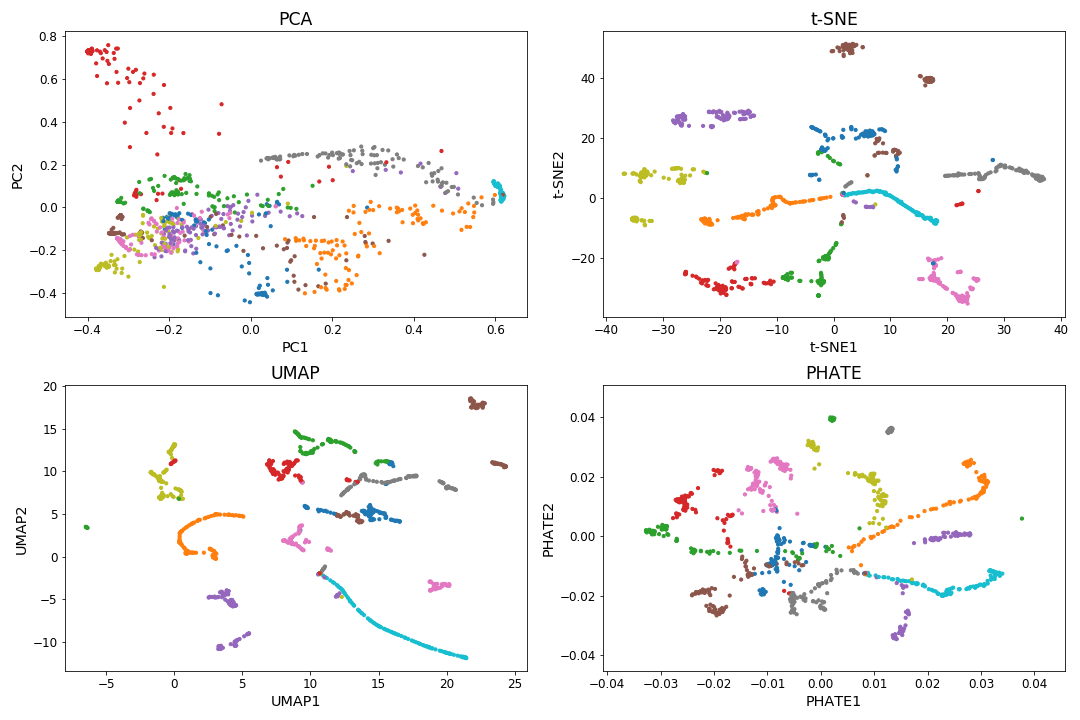}
        \caption{Embedding using Cosine metric.}
    \end{subfigure}
    \caption{Principle component analysis (PCA), t-distributed stochastic neighbor embedding (t-SNE), uniform manifold approximation and projection (UMAP), and potential of heat-diffusion for affinity-based transition embedding (PHATE) using Euclidean and cosine metrics. Here $10$ trajectories of $100$-dimensional probability vectors are generated, where the trajectories are colored differently. PHATE gives the clearest 2D representation of the inputs that preserves their high-dimensional progressive structures, regardless of the distance metric used. Comparing with Figure~\ref{fig:phate-hellinger-sphere}, the Hellinger metric outperforms the other two metrics in recovering the data geometry.}
    \label{phate-non-hellinger-demo}
\end{figure}

\clearpage

\section{Comparison with TopicFlow for topic trend mining}\label{supp:topicflow}
TopicFlow~\citep{malik2013topicflow} is an analysis framework for Twitter data over adjacent time slices, binned topic models, and alignment, which is an application of LDA to timestamped documents at independent time intervals and alignment of the resulting topics. The key differences between TopicFlow and the proposed framework are: 1) a different similarity measure between topics, that is, cosine similarity metric for TopicFlow; 2) a different mechanism for topic alignment and connection--TopicFlow connects every pair of adjacent topics that has similarity above a certain threshold. The advantages of Hellinger metric over other metrics for comparing/embedding word distributions have been made clear in the previous section. Here, we demonstrate the advantages of the proposed shortest path mechanism over TopicFlow for obtaining natural temporal evolution of topics.

We analyze a particular topic cluster--the presidential election cluster discussed in Section~\ref{sec:gdtm-results-twitter}--and compare the connections computed by the proposed shortest path algorithm and the TopicFlow algorithm. Here, for a fair and direct comparison, we fix the bins and the topic detection algorithms to be the same for both frameworks--using the smoothed temporal corpus and the T-LDA; the shortest path is performed on a $10$-nearest neighbor weighted graph and the TopicFlow is performed with a connection threshold of $0.2$. For the latter, we obtain a path by localizing the connection that has the largest cosine similarity at each pair of adjacent timestamp. For illustration, in Table~\ref{tab:path-comparison}, we highlight a time segment that exhibits differences between two paths. In particular, the shortest path skipped 3 days, March 23 to March 26, while the TopicFlow remain continuously connected. The top row of Figure~\ref{fig:path-comparison} depicts the top word clouds of topics at timestamps March 23, 24, 27, and May 15 on the TopicFlow path. It shows a sharp transition from a voting/election topic to general political topics and finally to a relatively nonpolitical topic. On the other hand, the shortest path automatically skipped the timestamps where these new topics emerged and maintained the major theme of the path, which is voting/election and later on general politics. This offers a more natural and much smoother transition. 

This comparison demonstrates particularly that the mechanism for topic trend discovery used by TopicFlow is restrictive as it potentially results in nonsmooth and nonintuitive transitions. Although one could tune the connection threshold, it increases the computational burden and there is no obvious objective (e.g., prediction score, loss, etc.) that could help with the tuning process.

\begin{table}[!tbh]
\caption{A portion of connected presidential election topics via the shortest path mechanism (left column) and the TopicFlow mechanism (right column). Here topics are indicated by their indices, e.g., $0-49$, at each timestamp (row index). \textit{NA} indicates that no connection has been made by the algorithm.}
\label{tab:path-comparison}
\[\begin{array}{c|cc}
& \textbf{SP topic index} & \textbf{TF topic index}\\
\hline
\textbf{Feb 15} & 37 & 37  \\
\textbf{\vdots} & \vdots & \vdots  \\
\textbf{Mar 23} & 1 & 1  \\
\textbf{Mar 24} & NA & 44  \\
\textbf{Mar 25} & NA & 27  \\
\textbf{Mar 26} & NA & 26  \\
\textbf{Mar 27} & 22 & 26  \\
\textbf{\vdots} & \vdots & \vdots  \\
\textbf{May 15} & 2 & 15  \\
\end{array}\]
\end{table}

\begin{figure}[!tbh]
\centering
\begin{subfigure}{.2\textwidth}
  \centering
  \includegraphics[width=\linewidth]{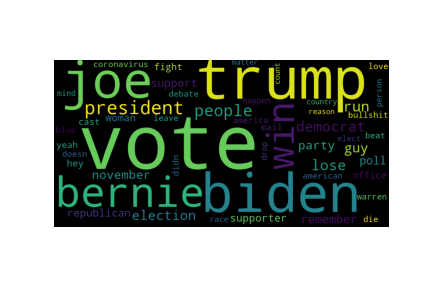}
\end{subfigure}%
{$\rightarrow$}%
\begin{subfigure}{.2\textwidth}
  \centering
  \includegraphics[width=\linewidth]{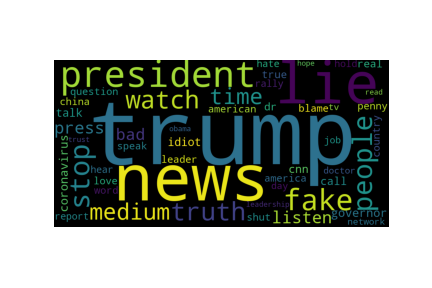}
\end{subfigure}
{$\rightarrow$}%
\begin{subfigure}{.2\textwidth}
  \centering
  \includegraphics[width=\linewidth]{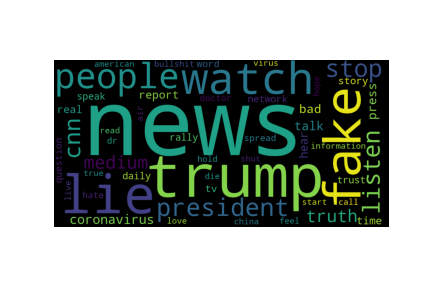}
\end{subfigure}
{$\rightarrow$}%
\begin{subfigure}{.2\textwidth}
  \centering
  \includegraphics[width=\linewidth]{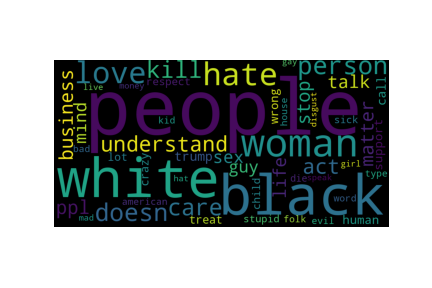}
\end{subfigure}
\\
\begin{subfigure}{.2\textwidth}
  \centering
  \includegraphics[width=\linewidth]{path_election_idx_38.png}
\end{subfigure}%
{$\rightarrow$}%
\begin{subfigure}{.2\textwidth}
  \centering
  \includegraphics[width=\linewidth]{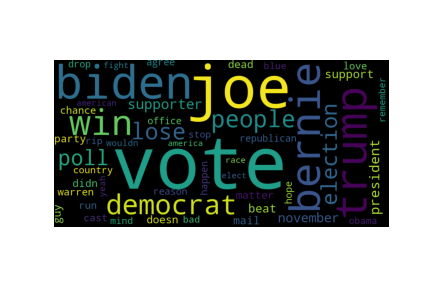}
\end{subfigure}
{$\rightarrow$}%
\begin{subfigure}{.2\textwidth}
  \centering
  \includegraphics[width=\linewidth]{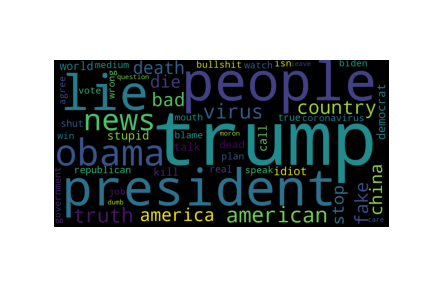}
\end{subfigure}
\caption{Top word clouds showing evolution of topics on the presidential election topic paths computed via the shortest path algorithm (bottom) and the TopicFlow (top) algorithm. The sample timestamps at which the topics are learned are March 23, 24, 27, and May 15 (top); March 23, 27, and May 15 (bottom). Note that the shortest path algorithm produces much smoother and more intuitive transitions among topics within a general theme.}
\label{fig:path-comparison}
\end{figure}

To further investigate the two different topic aligning methods, we fix the distance metric to be Hellinger, and compare the shortest path mechanism and the TopicFlow mechanism for the same set of topics. Table~\ref{tab:path-comparison-hellinger} depicts a similar pattern for the time range March 23 to 27, where the restrictive TopicFlow mechanism for topic connection exhibits a sharp transition as shown in Figure~\ref{fig:path-comparison}. Similar to Table~\ref{tab:path-comparison}, from February to March 23, the two paths are mostly the same. However, we observe that the two paths also exhibit similar topics near the end of the time period. This again demonstrates the superiority of Hellinger distance for measuring topic similarity.

\begin{table}[!tbh]
\caption{A portion of connected presidential election topics via the shortest path mechanism (left column) and the TopicFlow mechanism (right column) using the same distance metric (Hellinger). Here topics are indicated by their indices, e.g., $0-49$, at each timestamp timestamps (row index). \textit{NA} indicates that no connection has been made by the algorithm. Note that the restriction imposed by TopicFlow impacts the topic path similar (from March 23 to 27) to that in Table~\ref{tab:path-comparison}}
\label{tab:path-comparison-hellinger}
\[\begin{array}{c|cc}
& \textbf{SP topic index} & \textbf{TF topic index}\\
\hline
\textbf{Feb 15} & 37 & 37  \\
\textbf{\vdots} & \vdots & \vdots  \\
\textbf{Mar 23} & 1 & 1  \\
\textbf{Mar 24} & NA & 44  \\
\textbf{Mar 25} & NA & 27  \\
\textbf{Mar 26} & NA & 26  \\
\textbf{Mar 27} & 22 & 26  \\
\textbf{\vdots} & \vdots & \vdots  \\
\textbf{May 12} & 8 & 8  \\
\textbf{May 13} & 42 & 42  \\
\textbf{May 14} & 0 & 0  \\
\textbf{May 15} & 2 & 2  \\
\end{array}\]
\end{table}

\clearpage

\section{Volume plots of raw Twitter Decahose data}\label{supp:raw_volume}
Figure~\ref{fig:volume_geo} shows the Decahose Twitter volume plots before (top) and after (bottom) processing. Although Twitter officially claims the percentage of geotagged tweets to be around $1$-$2\%$ of the total tweets (\url{https://developer.twitter.com/en/docs/tutorials/Tweet-geo-metadata}), we found the percentage to much smaller. Note that there are several time points where the data is either incomplete (i.e., low volumes) or missing (i.e., $0$ volumes). 

\begin{figure}[!tbh]
    \centering
    \begin{subfigure}{0.8\textwidth}
    \centering
    \includegraphics[width=\textwidth]{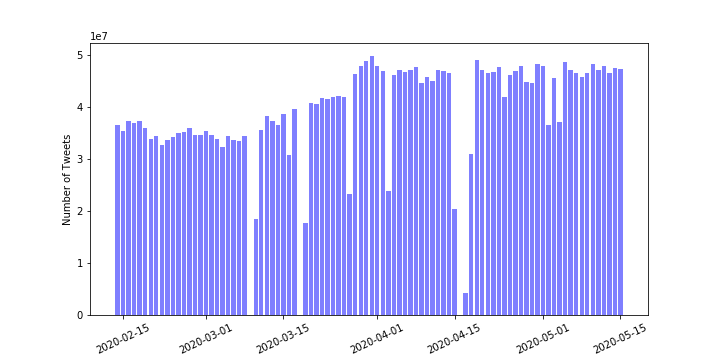}
    \caption{Raw Decahose tweets volume (on a scale of $10^7$ tweets) from Feb 15 to May 15.}
    \label{fig:volume_all}
    \end{subfigure}
    \begin{subfigure}{0.8\textwidth}
    \centering
    \includegraphics[width=\textwidth]{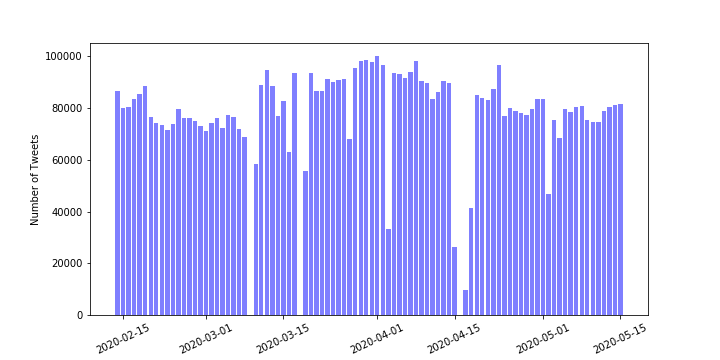}
    \caption{Geotagged U.S., non-retweet, English Decahose tweets volume from Feb 15 to May 15.}
    \label{fig:volume_geo}
    \end{subfigure}
    \caption{Volume of all and geotagged Decahose tweets for each day during the study period. The Decahose stream generates around $30-50$ million raw tweets and $50-100$ thousand geotagged English language tweets per day, except for several missing/incomplete cases with $0$ or abnormally small volumes.}
\end{figure}

\clearpage

\section{Sensitivity analysis for hyperparameters}\label{supp:sensitivity}
In this section, we perform sensitivity analyses for the hyperparameters $k$ and $\gamma$ in Algorithm~\ref{alg:procedure}, namely the number of neighbors in the nearest neighbor graph and the smoothing parameter for constructing new corpora. Further, we perform model selection for varying choices of $K$, the number of topics in T-LDA.

In Figure~\ref{fig:covid-short-path-neighbor}, the two shortest paths computed using neighborhood graphs of $k=8$ and $12$ are illustrated. For comparison, the same starting and ending topics as well as the two intermediate topics at the same time points as those in Figure~\ref{fig:covid-short-path} are used. It is clear from the word clouds that the shortest paths are not sensitive to the choice of $k$ in the neighborhood of $10$.

\begin{figure}[!tbh]
\centering
\begin{subfigure}{.2\textwidth}
  \centering
  \includegraphics[width=\linewidth]{covid_short_path_b.png}
\end{subfigure}%
{$\rightarrow$}%
\begin{subfigure}{.2\textwidth}
  \centering
  \includegraphics[width=\linewidth]{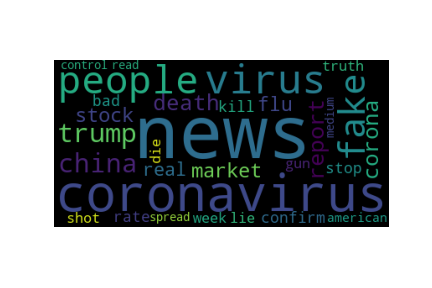}
\end{subfigure}
{$\rightarrow$}%
\begin{subfigure}{.2\textwidth}
  \centering
  \includegraphics[width=\linewidth]{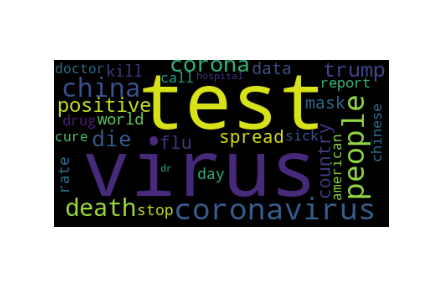}
\end{subfigure}
{$\rightarrow$}%
\begin{subfigure}{.2\textwidth}
  \centering
  \includegraphics[width=\linewidth]{covid_short_path_e.png}
\end{subfigure}
\\
\begin{subfigure}{.2\textwidth}
  \centering
  \includegraphics[width=\linewidth]{covid_short_path_b.png}
\end{subfigure}%
{$\rightarrow$}%
\begin{subfigure}{.2\textwidth}
  \centering
  \includegraphics[width=\linewidth]{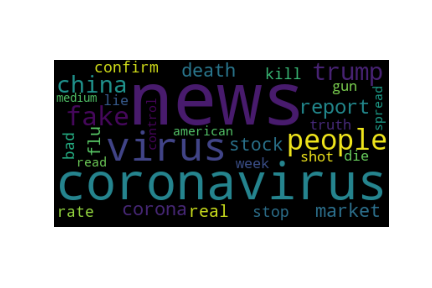}
\end{subfigure}
{$\rightarrow$}%
\begin{subfigure}{.2\textwidth}
  \centering
  \includegraphics[width=\linewidth]{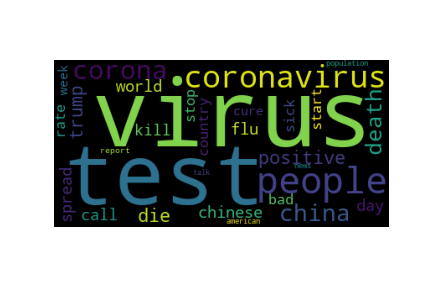}
\end{subfigure}
{$\rightarrow$}%
\begin{subfigure}{.2\textwidth}
  \centering
  \includegraphics[width=\linewidth]{covid_short_path_e.png}
\end{subfigure}
\caption{Evolution along the shortest paths of a COVID-19 topic on the first day to a COVID-19 health care focused topic on the last day illustrated as top word clouds. The paths are computed on a $8$- (top) and a $12$- (bottom) nearest neighbor graph. The middle two word clouds are illustrations of two of the topics on the paths at the same time points as those in Figure~\ref{fig:covid-short-path}. Note that the intermediate topics in both cases represent natural transformations from the beginning to the end topics, confirming that the shortest path is not sensitive to small perturbations of $k$ around $10$.}
\label{fig:covid-short-path-neighbor}
\end{figure}

Additionally, we quantify the similarities between any two shortest paths computed on different neighborhood graphs by computing the average Hellinger distance between topics (at the same time point) on the paths. Particularly, in Table \ref{tab:hdist-sensitivity-neighbor} we show the average Hellinger distances. For this particular cluster of topics, the average Hellinger distances are negligible and are stable across all pairs of different paths, which suggests that the shortest path is not sensitive to different $k$ in the neighborhood of $k=10$. 

\begin{table}[!tbh]
\caption{Average Hellinger distances between any two topics paths generated using various neighborhood parameters $k$ as the column/row indices. Examples are shown for the COVID (health care) topics. Note that the average Hellinger distances are identically $0$ across all pairs of paths, indicating that the shortest paths are stable under different choices of $k$.}
\label{tab:hdist-sensitivity-neighbor}
\[\begin{array}{c|ccc}
 & \textbf{8} & \textbf{10} & \textbf{12}\\
\hline
\textbf{8} & 0 & 0 & 0  \\
\textbf{10} & 0 & 0 & 0  \\
\textbf{12} & 0 & 0 & 0  \\
\end{array}\]
\end{table}

Figure~\ref{fig:volume_subsampled} shows the contributions (in terms of the number of tweets) from each document to the temporally smoothed corpus constructed for March $31$, using smoothing parameters of $0.65, 0.75, 0.85$. With $0.75$, the contents span the whole study period (Feb $15$ to May $15$) but concentrate on tweets within a month, centered at March $31$.

\begin{figure}[!thb]
    \centering
    \begin{subfigure}{0.8\textwidth}
        \centering
        \includegraphics[width=\textwidth]{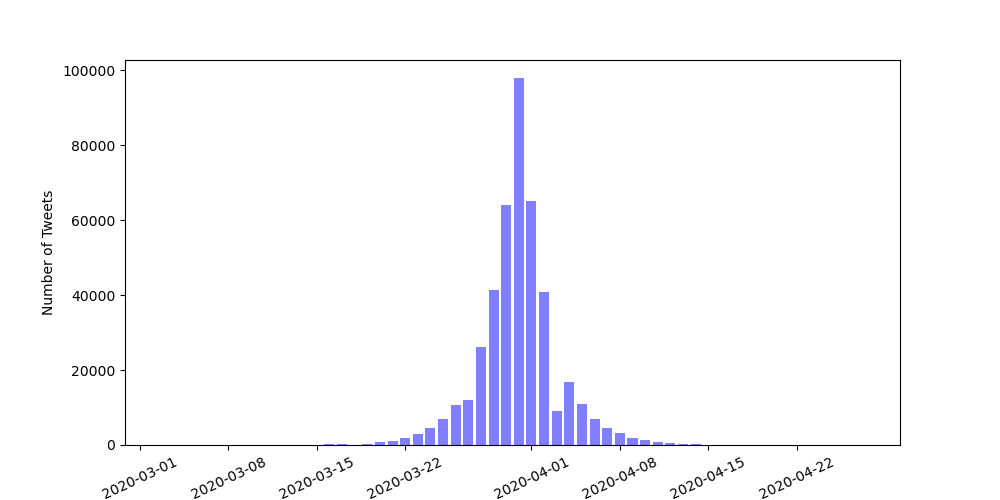}
    \end{subfigure}
    \begin{subfigure}{0.8\textwidth}
        \centering
        \includegraphics[width=\textwidth]{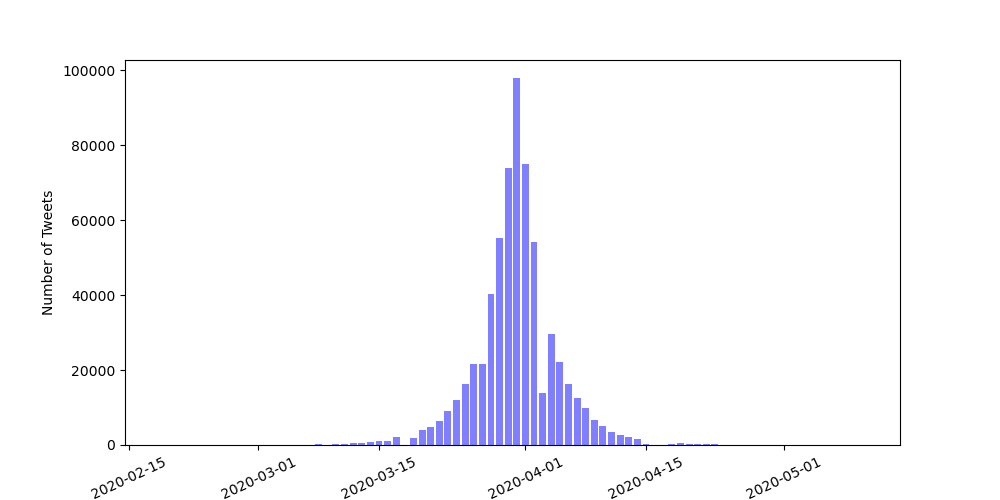}
    \end{subfigure}
    \begin{subfigure}{0.8\textwidth}
        \centering
        \includegraphics[width=\textwidth]{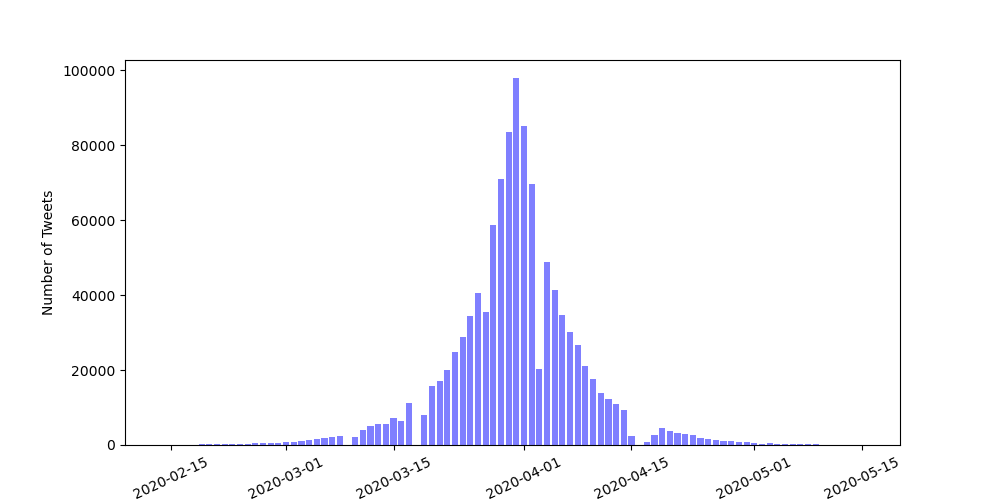}
    \end{subfigure}
    \caption{Contributions of tweet volume from various time points for temporally smoothed corpora. The examples are constructed for March $31$, using smoothing parameters $0.65, 0.75, 0.85$ (from top to bottom). Although the plots exhibit different resolutions and spans of the histograms, the shapes of the contribution distributions are similar in all cases. This illustrates robustness of the proposed method to the choice of smoothing parameters.}
    \label{fig:volume_subsampled}
\end{figure}

Moreover, Figure~\ref{fig:phate-overall-2d-0.65-0.85} shows the PHATE embedding of all topics learned by T-LDA, using corpus constructed with smoothing parameters $0.65$ and $0.85$. Here we highlight two clusters (COVID and COVID NEWS) and one shortest path (presidential election) similar to Figure~\ref{fig:phate-overall-2d}. Comparing the three PHATE plots, the overall structures are similar and the highlighted trajectories remain relatively stable (i.e., presidential election paths exhibit similar `U' shapes in all cases). Note that the `split-and-merge' behaviors within the COVID NEWS cluster are being captured in all cases as well. The only notable difference in the PHATE produced with different temporal smoothing is the length of the trajectories, with those in the embedding produced using smoothing parameter $0.85$ being the longest. This is reasonable because a larger smoothing parameter assumes a longer range temporal dependence structure of the data.

\begin{figure}[!tbh]
    \centering
    \includegraphics[width=0.5\textwidth]{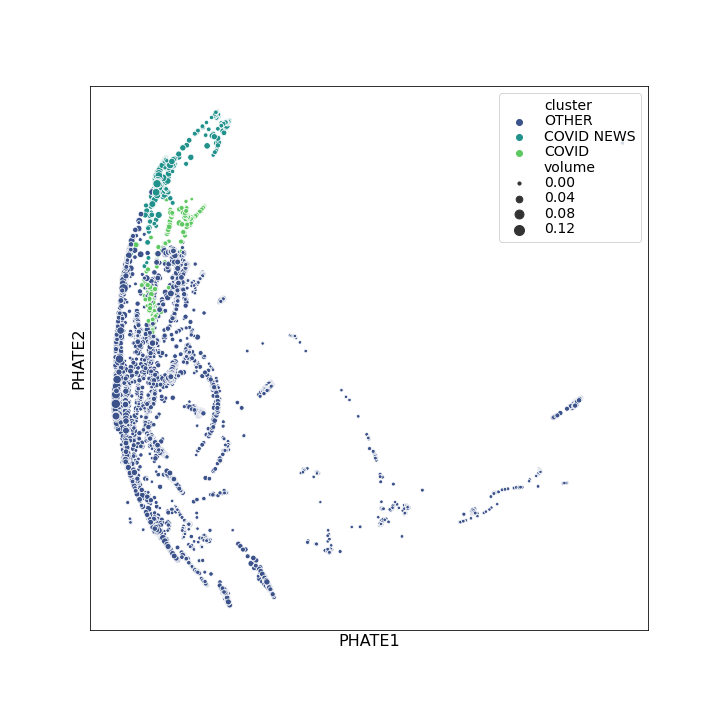}
    \includegraphics[width=0.5\textwidth]{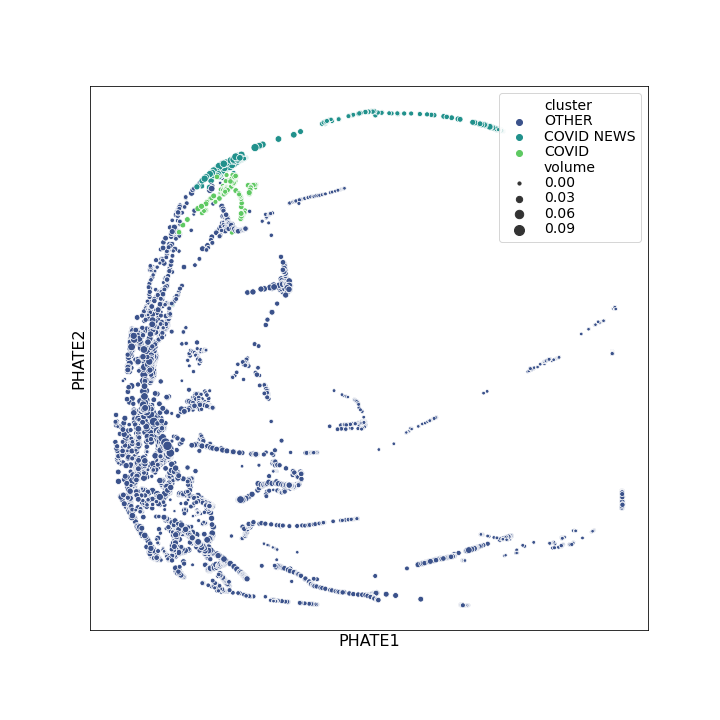}
    \caption{Potential of heat-diffusion for affinity-based transition embedding (PHATE) for all word distributions. The topics here are learned by T-LDA on tweet collections constructed with smoothing parameters $0.65$ (top) and $0.85$ (bottom). Here two clusters and one shortest path are highlighted for comparison with Figure~\ref{fig:phate-overall-2d}. Note that the overall structures as well as the trajectories for highlighted points are similar in all three cases, while the lengths of the trajectories are different, which are the result of different assumptions on the range of the temporal dependence (i.e., a smoothing using $0.85$ assumes longer range dependence by including more old tweets).}
    \label{fig:phate-overall-2d-0.65-0.85}
\end{figure}

Additionally, we quantify the similarities between any two shortest paths from different smoothed corpora by computing the average Hellinger distance of the topics on the paths. Particularly, in Table \ref{tab:hdist-sensitivity} we show the average Hellinger distances between any two paths computed under different smoothing conditions, for the COVID NEWS (presidential election) and COVID (health care) topics. In these two cases, the average Hellinger distances are around $0.35$ and are stable across all pairs, which suggests that the shortest paths of key topics of interest are not sensitive to different smoothing parameters. 

\begin{table}[!tbh]
\caption{Average Hellinger distances between any two topics paths generated from corpora with various smoothing parameters as the column/row indices. Examples are shown for the COVID NEWS (presidential election) and the COVID (health care) topics in the top and bottom tables, respectively. Note that the average Hellinger distances are both relatively small and stable in the sense that all pairwise distances are similar in magnitude, indicating that the shortest paths are stable under different choices of smoothing parameters.}
\label{tab:hdist-sensitivity}
\[\begin{array}{c|ccc}
 & \textbf{0.65} & \textbf{0.75} & \textbf{0.85}\\
\hline
\textbf{0.65} & 0 & 0.3520 & 0.3578  \\
\textbf{0.75} & 0.3520 & 0 & 0.3056  \\
\textbf{0.85} & 0.3578 & 0.3056 & 0  \\
\end{array}\]
\vspace{\baselineskip}
\[\begin{array}{c|ccc}
 & \textbf{0.65} & \textbf{0.75} & \textbf{0.85}\\
\hline
\textbf{0.65} & 0 & 0.3697& 0.4112  \\
\textbf{0.75} & 0.3697 & 0 & 0.3652  \\
\textbf{0.85} & 0.4112 & 0.3652 & 0  \\
\end{array}\]
\end{table}

Lastly, for the choice of the number of topics for T-LDA, we propose to compute a Bayesian Information Criteria (BIC) score at each timestamp defined as
\begin{equation*}
    -\text{log-likelihood} + \frac{C\log(D)}{2}
\end{equation*}
where the model complexity is computed by $C := Kp + (K-1)D$ with $p$ denoting the length of the vocabulary. The log-likelihood of the T-LDA model is defined as
\begin{equation*}
\begin{aligned}
     & \prod_{k=1}^K\text{Dirichlet}(\beta_k;\eta) \times \prod_{d=1}^D \text{Dirichlet}(\theta_d;\alpha) \\
     & \times \prod_{s=1}^{S_d}\text{Categorical}(z_{s,d};\theta_d) \times \prod_{n=1}^{N_s}\text{Categorical}(w_{n,s,d};\beta_{z_{s,d}}).   
\end{aligned}
\end{equation*}
Here, the categorical distribution is a special case of the multinomial distribution, in that it gives the probabilities of potential outcomes of a single drawing rather than multiple drawings; $S_d$ denotes the number of tweets in document $d$ ; and $N_s$ denotes the number of words in a tweet $s$. This criteria is similar to the topic model selection criteria proposed in \citet{taddy2012estimation}.

In Figure~\ref{fig:topic-num-selection-bic}, we show the computed scores across all timestamps for various choices of the numbers of topics. The model with $K=50$ consistently produces the lowest scores for the first half of the time range and is comparable to the model with $K=100$ for the second half.

\begin{figure}[!tbh]
    \centering
    \includegraphics[width=0.9\textwidth]{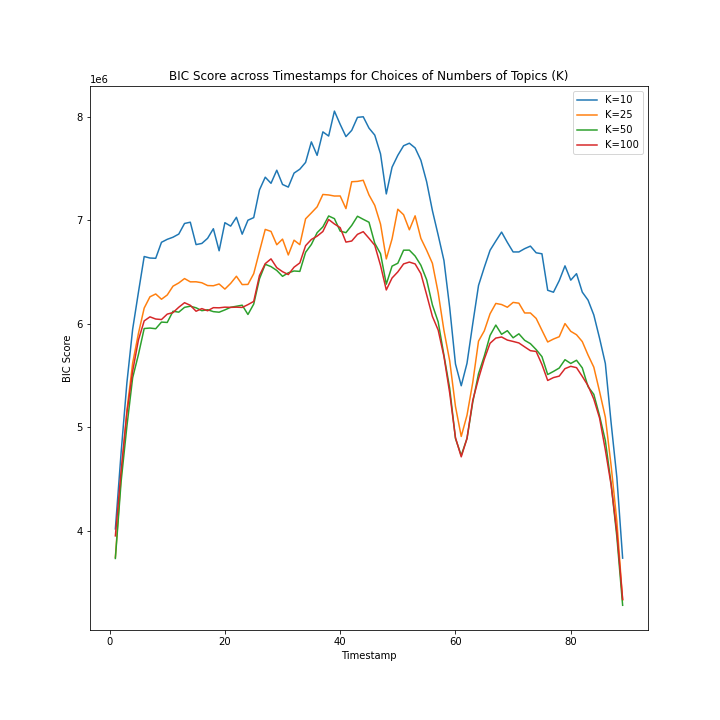}
    \caption{Bayesian information criteria (BIC) scores across timestamps for different choices of the numbers of topics.}
    \label{fig:topic-num-selection-bic}
\end{figure}

\clearpage

\section{PHATE dictionary of clusters and trajectories}\label{supp:phate-dict}
We explain the labeling of the PHATE plots for visualization and interpretation:
\begin{itemize}
    \item Colors signify clusters of topics. Clusters are computed by a hierarchical clustering algorithm using Hellinger distance between topics. Only selected COVID-19 topics are colored differently, and all others are grouped into a single color. Selected COVID-19 topics are: 
    \begin{itemize}
        \item COVID: topics where the top words are mostly general COVID-19 terms such as coronavirus, virus, covid, etc.
        \item COVID NEWS: topics where the top words are related to government officials or politicians discussing COVID-19 related issues. Typical top words include: Trump, government, news, covid, etc.
        \item SANITIZING: topics where the top words are mostly wash hands, sanitizing, virus, etc.
        \item STAY HOME: topics where the top words are mostly stay home, safe, covid, etc.
    \end{itemize}
    \item Sizes represent normalized number of tweets that is generated from each topic. 
    \item Shapes highlight selected COVID-19 related shortest paths computed on the neighborhood graph. Different shapes represent
    \begin{itemize}
        \item COVID (health care): a subset of topics in the COVID topic cluster that are all on a shortest path starting from a general COVID topic at the first time point and finishing at a health care focused COVID topic (e.g., testing, death).
        \item COVID (politics): a shortest path that starts from a topic that is second-closest in distance to the starting topic of the COVID (health care) and finishing at a politics focused COVID topic (e.g., president, news, etc.).
        \item COVID NEWS (presidential election): a subset of topics in the COVID NEWS cluster that are all closely related to presidential election and are on a shortest path starting from a election-related topic at the first time point.
        \item SANITIZING (wash hands): a subset of topics in the SANITIZING cluster that are on a shortest path starting from a topic related to washing hands due to COVID-19.
        \item STAY HOME (executive order): a subset of topics in the STAY HOME cluster that are on a shortest path starting from a topic related to stay home executive order due to COVID-19.
        \item General: topics that are not on selected shortest paths of interest.
    \end{itemize}
\end{itemize}

\clearpage

\section{Additional PHATE trajectories}\label{supp:linear-trajectories}
Figure~\ref{fig:linear-trajectories} shows two linear trajectories, the SANITIZING (wash hands) and the STAY HOME (executive order), on the PHATE embedding. In contrast to nonlinear trajectories presented in Figure~\ref{fig:phate-election-2d} and Figure~\ref{fig:phate-covid}, topics on linear trajectories exhibit no obvious deviation in terms of the top words.

\begin{figure}[!tbh]
    \centering
    \includegraphics[width=0.55\textwidth]{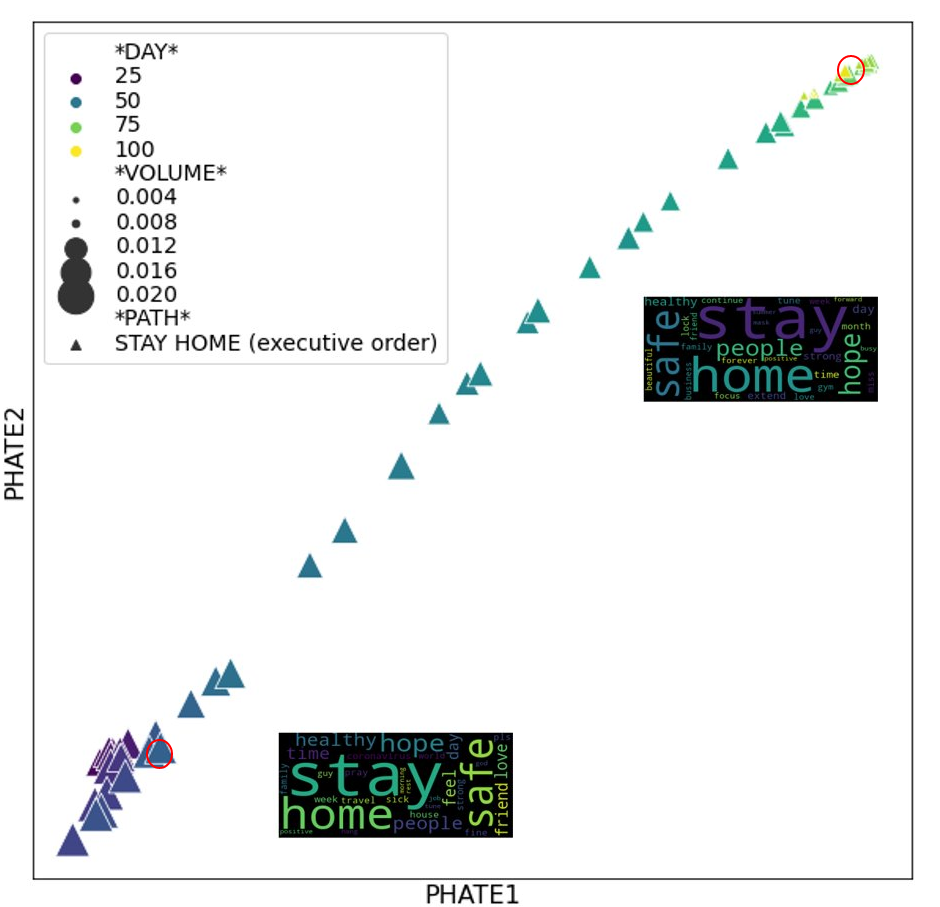}
    \includegraphics[width=0.55\textwidth]{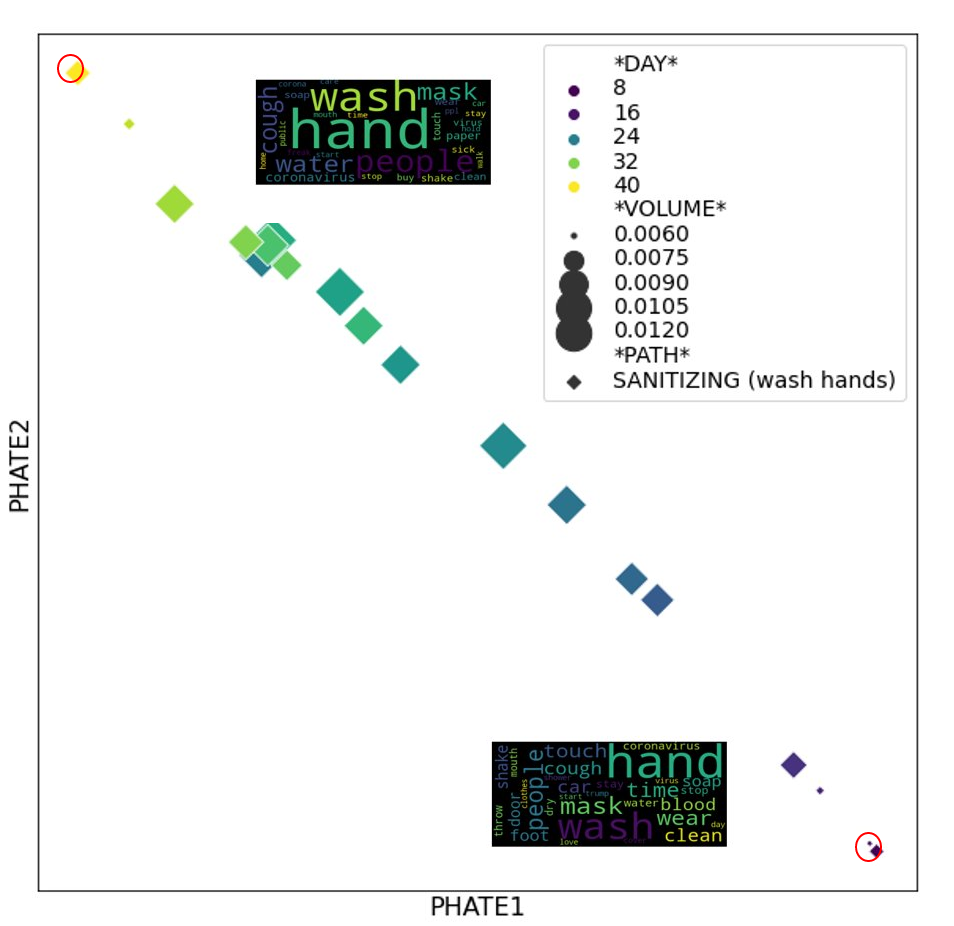}
    \caption{Potential of heat-diffusion for affinity-based transition embedding (PHATE) for subsets of topics lie on the executive order path (top) and the wash hands path (bottom). Colors and sizes of points highlight time and tweet volume, respectively. Here two word clouds containing top $30$ words in corresponding topics are shown for the time points highlighted by red circles in each path. Note that in both cases, the topic near the beginning of the study period is similar to that near the end of the study period. This shows the stability of topics on linear trajectories.}
    \label{fig:linear-trajectories}
\end{figure}

\clearpage

\section{State-level trend in Tweet proportions}\label{supp:state-level-spatial}
Here, we illustrate state-level variations in estimated tweet volumes generated by topics on the presidential election path, normalized by total tweet volume at each time point. From Figure~\ref{fig:states_election_path}, we see that although tweet proportions vary state by state, the overall trend is clear with peaks roughly correspond to the time points of key event highlighted. 

\begin{figure}[bh]
    \centering
    \includegraphics[width=0.8\textwidth]{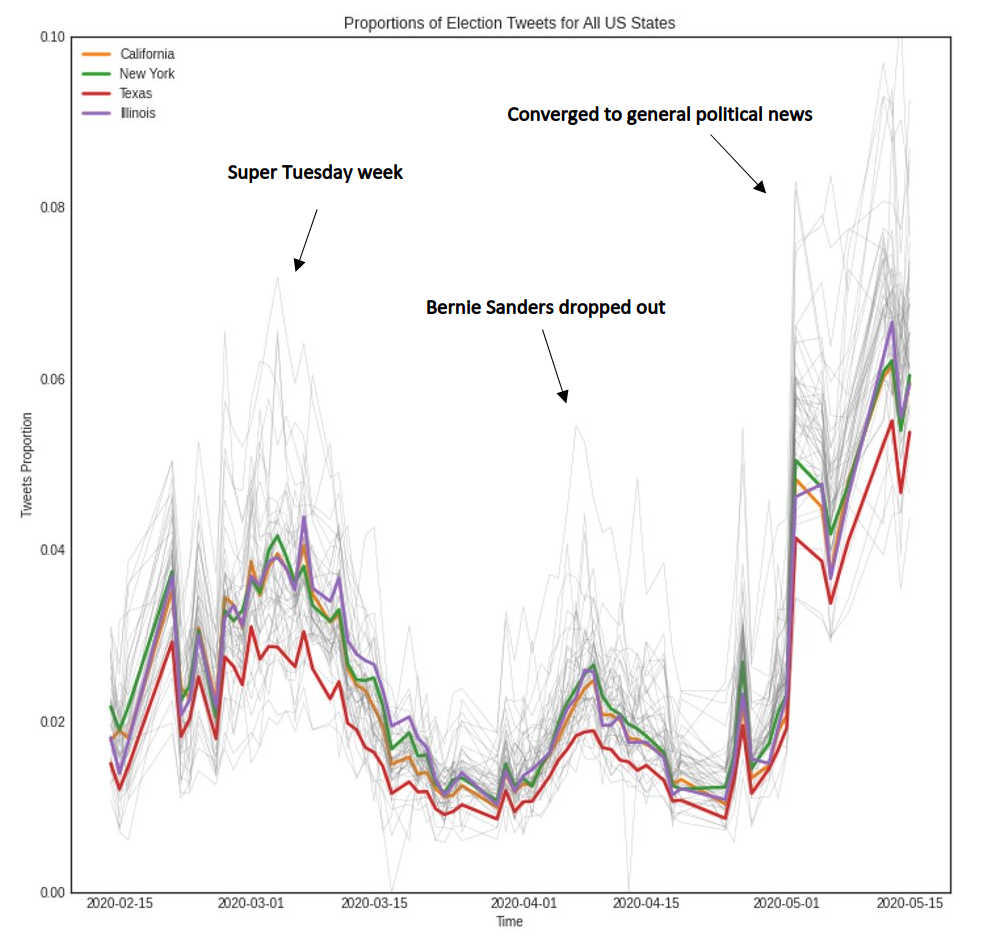}
    \caption{State-level spatial distribution of tweet proportions generated from all topics on the COVID NEWS (presidential election) path. California, New York, Texas, and Illinois are highlighted for illustration, while all other states are ll plotted in grey. Note that similar three events (annotated using texts) as in Figure~\ref{fig:phate-election-2d} correspond roughly to the three peaks in the time-course plot, indicating validations of the quality of the shortest path using real-world events.}
    \label{fig:states_election_path}
\end{figure}

For the COVID (health care) topic path, at the state level, tweet proportions follow global trends at the beginning of the study period in February and March but become chaotic starting in April. One possible explanation is that the COVID-19 pandemic in the United States started in several hot spots but quickly spread into other states, which then started to implement state-specific control measures. In addition, the overall new cases and death toll in the country reached a few record highs in April, starting with New York, which became an epicenter of the pandemic, with a record $12274$ new cases reported on April 4 (\url{https://en.wikipedia.org/wiki/COVID-19_pandemic_in_New_York_(state)}). This explains the difference in tweet proportions trend in New York, compared with the other three highlighted states.

\begin{figure}[!tbh]
    \centering
    \includegraphics[width=0.8\textwidth]{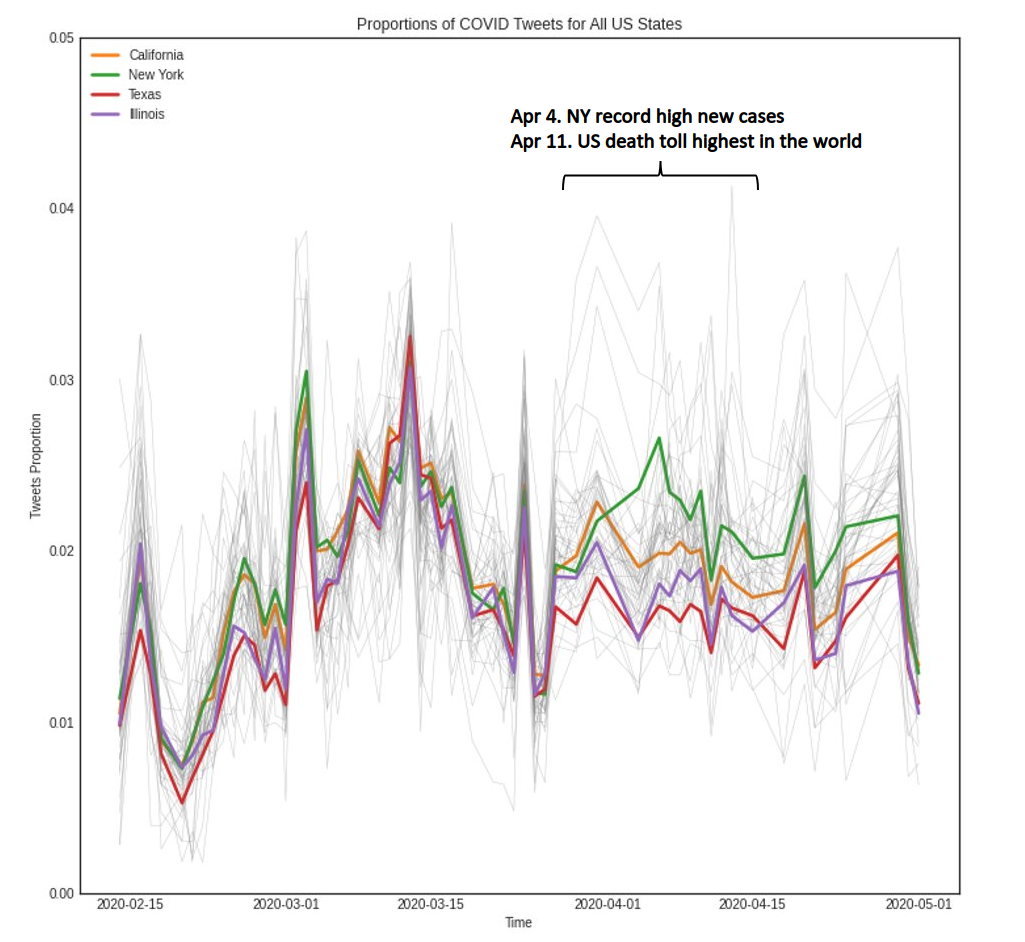}
    \caption{State-level spatial distribution of Tweet proportions generated from all topics on the COVID (health care) path. California, New York, Texas, and Illinois are highlighted for illustration, while all other states are plotted in grey. Note that a time period in April is annotated with relevant events explaining the surge in tweet proportions in many states. This validates the quality of this shortest path using real-world events.}
    \label{fig:states_covid_path}
\end{figure}

\clearpage

\section{Additional details of TalkLife data}\label{supp:talklife_data}

\begin{table}[!tbh]
\caption{Label names and corresponding percentage volume in all posts generated in 2019. Note the label ``Other'' indicates a post is not labeled by any other labels.}
\label{tab:label_volume}
\centering
\begin{tabularx}{\textwidth}{Xl}
\cline{1-2}
\textbf{Label }                                 & \textbf{Percentage volume}    \\
\cline{1-2}
Other                                  & 0.5840   \\
DepressedMoodSuspected                 & 0.0799  \\
AgitationOrIrritationSuspected         & 0.07421   \\
LonelinessSuspected                    & 0.0682  \\
FamilyIssuesSuspected                  & 0.0674  \\
BehavorialSymptomsSuspected            & 0.0618 \\
DistortedThinkingSuspected             & 0.0607  \\
AnxietyPanicFearSuspected              & 0.0566 \\
SelfHarmSuspectedTakeTwo               & 0.0506  \\
BodyImageEatingDisordersSuspected      & 0.0495  \\
SuicidalIdeationAndBehaviorSuspected   & 0.0433  \\
NumbnessEmptinessSuspected             & 0.0371  \\
NssiIdeationAndBehaviorSuspected       & 0.0327   \\
SelfHarmRelapseSuspected               & 0.0325  \\
TiredFatiguedLowEnergySuspected        & 0.0257 \\
MentalHealthTreatmentSuspected         & 0.0241  \\
CryingSuspected                        & 0.0229 \\
DeathOfOtherSuspected                  & 0.0201  \\
AlcoholAndSubstanceAbuseSuspected      & 0.0191   \\
HelplessnessHopelessnessSuspected      & 0.0163  \\
SelfHarmRemissionSuspected             & 0.0126   \\
EmotionalExhaustionSuspected           & 0.0110 \\
FinalTiredFatiguedLowEnergySuspected   & 0.0105 \\
FailureSuspected                       & 0.0094  \\
SongLyricsSuspected                    & 0.0078  \\
SuicidalPlanningSuspected              & 0.0078 \\
InpatientOutPatientMedicationSuspected & 0.0059  \\
EmptinessSuspected                     & 0.0054 \\
NumbnessSuspected                      & 0.0050 \\
SuicideAttemptSuspected                & 0.0017 \\
NssiUrgeSuspected                      & 0.0017\\
NauseaWithEatingDisorderSuspected      & 0.0015  \\
NauseaSuspected                        & 0.0012  \\
SelfHarmRemissionOrRelapseSuspected    & 0.0011 \\
\bottomrule
\end{tabularx}%
\end{table}

\begin{table}[!tbh]
\caption{Seed words and the associated weights that are used in the weakly-supervised LDA algorithm. Weights are computed as natural log of the volume (number of occurrences) of the corresponding word in the entire year of 2019, multiplied by a tune-able constant (equals $10$ here).}
\label{tab:seed_words}
\centering
\begin{tabularx}{\textwidth}{XXXX}
\cline{1-4}
\textbf{Word}       & \textbf{Weight}             & \textbf{Word}        & \textbf{Weight}              \\
\cline{1-4}
afraid     & 101.72 & listen     & 105.46 \\
anxiety    & 106.67 & live       & 113.41 \\
anymore    & 112.89 & lonely     & 108.59 \\
attempt    & 86.54  & long       & 111.18  \\
band       & 80.62 & lose       & 112.61 \\
body       & 104.47 & lyric      & 79.44 \\
clean      & 93.92  & medication & 86.72  \\
cry        & 112.81  & mental     & 101.90 \\
cut        & 104.78  & mom        & 109.04  \\
dad        & 103.89 & month      & 108.48 \\
dead       & 102.16 & numb       & 93.79  \\
death      & 98.71  & pain       & 110.16  \\
depression & 105.84 & parent     & 105.64 \\
die        & 114.56 & plan       & 97.73  \\
drink      & 99.31  & play       & 104.71 \\
drug       & 93.76 & pretty     & 103.28 \\
drunk      & 92.33  & relapse    & 81.82  \\
eat        & 107.72 & sad        & 111.77 \\
emptiness  & 77.17  & scar       & 106.90  \\
empty      & 95.84  & school     & 109.36 \\
end        & 113.06 & sick       & 103.66 \\
energy     & 93.67  & smoke      & 94.36  \\
exhaust    & 91.71    & song       & 102.18 \\
eye        & 104.10 & stop       & 114.58 \\
fail       & 96.32  & suicide    & 99.23  \\
failure    & 90.22 & tear       & 97.94 \\
family     & 110.45 & throw      & 97.05 \\
favorite   & 96.82  & tire       & 108.81 \\
feeling    & 112.08  & tired      & 101.14 \\
food       & 99.55  & ugly       & 100.59  \\
harm       & 96.07  & urge       & 86.28 \\
health     & 97.70 & vomit      & 71.62 \\
heart      & 112.15 & weak       & 92.34 \\
hospital   & 91.96  & week       & 107.86 \\
hurt       & 115.45 & worry      & 101.11 \\
kill       & 108.86 & write      & 101.68  \\
leave      & 115.83  &            &      \\ 
\bottomrule
\end{tabularx}%
\end{table}

\clearpage

\section{Additional details of clustering of TalkLife labels}\label{supp:label_cluster}

\begin{table}[!tbh]
\caption{Clustered labels.}
\label{tab:clustered_labels}
\centering
\resizebox{0.95\textwidth}{!}{%
\begin{tabularx}{\textwidth}{lX}
\cline{1-2}
\textbf{Cluster} & \textbf{Labels}                                                                                                        \\ \cline{1-2} 
1       & BehavorialSymptomsSuspected, CryingSuspected, DepressedMoodSuspected                                          \\
2       & InpatientOutPatientMedicationSuspected, MentalHealthTreatmentSuspected             \\
3       & EmptinessSuspected, NumbnessSuspected                                                                         \\
4 &
  AgitationOrIrritationSuspected, AlcoholAndSubstanceAbuseSuspected,
  AnxietyPanicFearSuspected,
  BodyImageEatingDisordersSuspected, DeathOfOtherSuspected, FamilyIssuesSuspected, NssiIdeationAndBehaviorSuspected, SelfHarmRelapseSuspected, SelfHarmRemissionSuspected, SelfHarmSuspectedTakeTwo, SuicidalIdeationAndBehaviorSuspected \\
5       & FinalTiredFatiguedLowEnergySuspected, TiredFatiguedLowEnergySuspected                                         \\
6       & NauseaSuspected, NauseaWithEatingDisorderSuspected                                                            \\
7       & SuicidalPlanningSuspected, SuicideAttemptSuspected                                                            \\
8       & DistortedThinkingSuspected, EmotionalExhaustionSuspected, FailureSuspected, HelplessnessHopelessnessSuspected \\
9       & NssiUrgeSuspected, SelfHarmRemissionOrRelapseSuspected, SongLyricsSuspected                                   \\
10      & LonelinessSuspected, NumbnessEmptinessSuspected             \\
\bottomrule
\end{tabularx}%
}
\end{table}

\begin{figure}[!tbh]
    \centering
    \includegraphics[width=\textwidth]{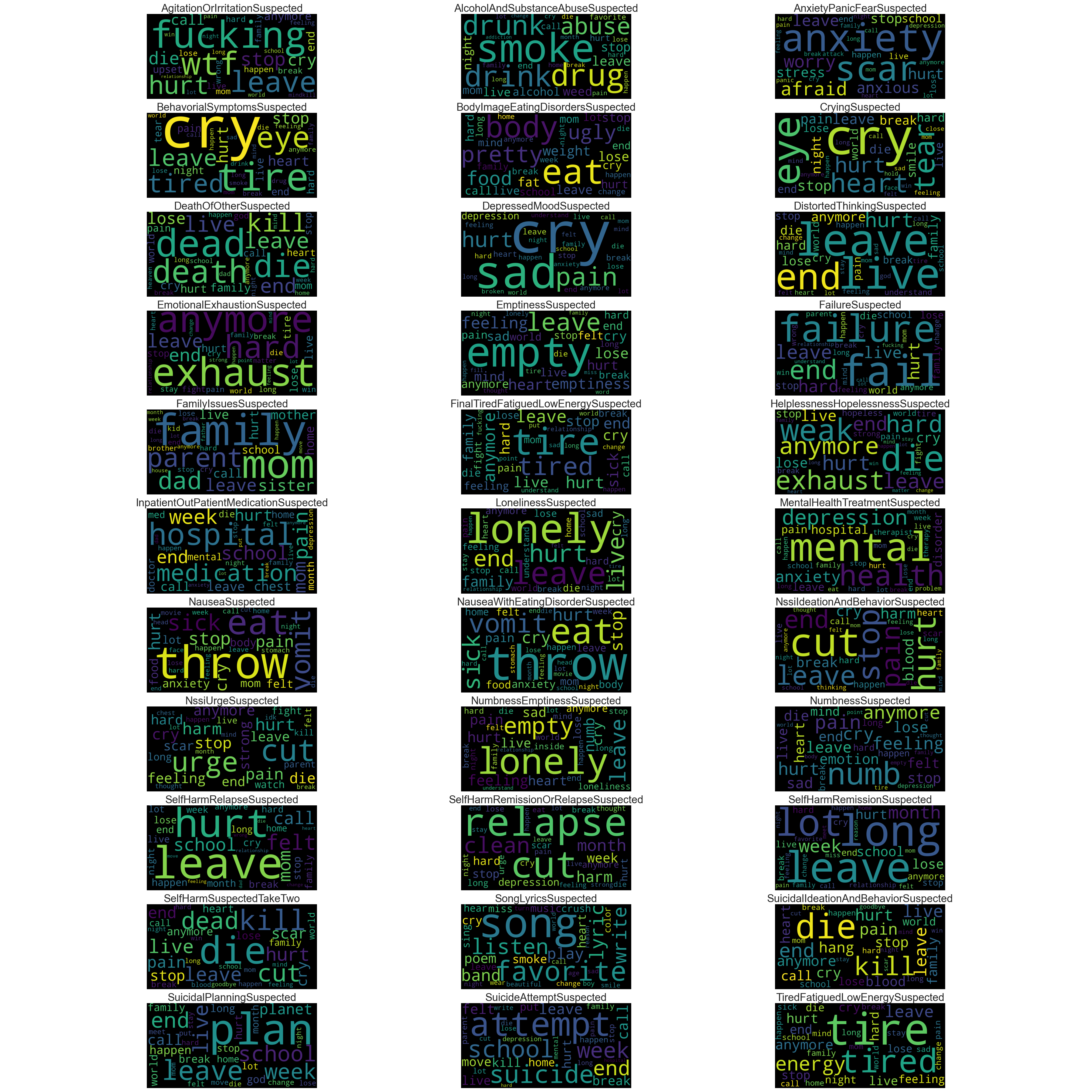}
    \caption{Top words visualization of the sparse word distributions of each label before clustering.}
    \label{fig:sparse_labels}
\end{figure}

\startbibliography
\begin{singlespace} 
\bibliography{thesis}   
\end{singlespace}


\end{document}